\documentclass[11pt]{article}

\usepackage[a4paper,margin=1.05in]{geometry}
\usepackage[T1]{fontenc}
\usepackage[utf8]{inputenc}
\usepackage{lmodern}
\usepackage{microtype}
\usepackage{amsmath,amssymb,amsfonts,amsthm,mathtools}
\usepackage{bbm}
\usepackage{enumitem}
\usepackage{graphicx}
\usepackage{kbordermatrix}
\usepackage{float}

\usepackage{placeins}   %
\usepackage{booktabs}
\usepackage{array}
\usepackage{xcolor}
\definecolor{Accent}{RGB}{32,74,135}
\definecolor{AccentB}{RGB}{140,81,10}
\newif\ifshowtodos
\showtodosfalse
\ifshowtodos
  \usepackage[colorinlistoftodos,textsize=small]{todonotes}
\else
  \usepackage[disable]{todonotes}
\fi

\usepackage{mdframed}
\usepackage{tikz-cd}
\usetikzlibrary{arrows.meta,positioning,backgrounds,fit}
\usepackage{bussproofs}
\usepackage{stmaryrd}
\usepackage[colorlinks=true,linkcolor=blue!50!black,citecolor=blue!50!black,urlcolor=blue!50!black]{hyperref}
\usepackage[nameinlink,capitalize]{cleveref}

\usepackage{aliascnt}

\theoremstyle{plain}
\newtheorem{theorem}{Theorem}[section]
\newtheorem*{informaltheorem}{Theorem}

\newaliascnt{proposition}{theorem}
\newtheorem{proposition}[proposition]{Proposition}
\aliascntresetthe{proposition}

\newaliascnt{lemma}{theorem}
\newtheorem{lemma}[lemma]{Lemma}
\aliascntresetthe{lemma}

\newaliascnt{corollary}{theorem}
\newtheorem{corollary}[corollary]{Corollary}
\aliascntresetthe{corollary}

\theoremstyle{definition}
\newaliascnt{definition}{theorem}
\newtheorem{definition}[definition]{Definition}
\aliascntresetthe{definition}

\newaliascnt{example}{theorem}
\newtheorem{example}[example]{Example}
\aliascntresetthe{example}

\theoremstyle{definition}
\newaliascnt{remark}{theorem}
\newtheorem{remark}[remark]{Remark}
\aliascntresetthe{remark}

\crefname{theorem}{Theorem}{Theorems}
\Crefname{theorem}{Theorem}{Theorems}
\crefname{proposition}{Proposition}{Propositions}
\Crefname{proposition}{Proposition}{Propositions}
\crefname{lemma}{Lemma}{Lemmas}
\Crefname{lemma}{Lemma}{Lemmas}
\crefname{corollary}{Corollary}{Corollaries}
\Crefname{corollary}{Corollary}{Corollaries}
\crefname{definition}{Definition}{Definitions}
\Crefname{definition}{Definition}{Definitions}
\crefname{example}{Example}{Examples}
\Crefname{example}{Example}{Examples}
\crefname{remark}{Remark}{Remarks}
\Crefname{remark}{Remark}{Remarks}

\newcommand{\DFA}{\mathrm{DFA}}
\newcommand{\code}{\texttt{code}}
\newcommand{\step}{\mathrm{step}}
\newcommand{\acc}{\mathrm{acc}}
\newcommand{\rej}{\mathrm{rej}}

\newcommand{\qacc}{q_{\mathrm{acc}}}
\newcommand{\qrej}{q_{\mathrm{rej}}}
\newcommand{\blank}{\square}
\newcommand{\E}{\mathbb E}
\newcommand{\Cov}{\operatorname{Cov}}
\newcommand{\Var}{\operatorname{Var}}
\newcommand{\simplex}{\Delta}
\newcommand{\rank}{\operatorname{rank}}
\newcommand{\id}{\mathrm{id}}
\newcommand{\one}{\mathbbm{1}}
\newcommand{\PSV}{\operatorname{PSV}}
\newcommand{\PSR}{\operatorname{PSR}}
\newcommand{\asym}{\operatorname{Asym}}

\makeatletter
\DeclareRobustCommand{\rvdots}{%
  \vbox{
    \baselineskip4\p@ \lineskiplimit\z@
    \kern-\p@
    \hbox{.}\hbox{.}\hbox{.}
  }}
\makeatother

\newcommand{\lto}{\longrightarrow}
\newcommand{\umappanel}[3][0.32]{%
  \begin{minipage}[t]{#1\textwidth}\centering
    \includegraphics[width=\linewidth]{figures/#3.png}\par\vspace{2pt}
    {\scriptsize #2}
  \end{minipage}}

\usetikzlibrary{calc,shapes.geometric,decorations.pathreplacing}
\usepackage{string-diagrams}
\tikzset{branch/.style={circle, fill, inner sep=0pt, minimum size=4.5pt}}
\usetikzlibrary{decorations.markings}
\tikzset{marrow/.style={
  decoration={markings, mark=at position 0.55 with {\pgfsetlinewidth{0.4pt}\arrow{Stealth}}},
  postaction={decorate}}}

\newcommand{\resolve}{\texttt{resolve}}
\newcommand{\Cfg}{\mathrm{Cfg}}

\title{Interpretability for Turing Machines}
\author{%
  Billy Snikkers$^\wedge$\thanks{$^\wedge$Equal contribution. Resolution. \texttt{billy@resolution.org}}
  \and Rumi Salazar$^\wedge$\thanks{$^\wedge$Equal contribution. University of Melbourne. \texttt{ruminawi.salazar@gmail.com}}
  \and Daniel Murfet\thanks{Resolution. \texttt{murfet@resolution.org}}
  \and Will Troiani\thanks{Resolution. \texttt{will@resolution.org}}%
}
\date{September 2026}

\begin{document}
\maketitle

\begin{abstract}
We show that susceptibilities, an interpretability technique developed for neural networks, can identify the presence of algorithmic structure in Turing machines by probing the local loss landscape of a learning problem for \emph{noisy} Turing machines introduced in \cite{murfet2025pas}. We prove that symmetries and path separation in the algorithm implemented by a Turing machine induce permutation symmetries and low-rank blocks in its susceptibility matrix. We study this empirically on a set of deterministic finite automata (DFAs) and demonstrate that algorithmic features can be recovered by principal component analysis and clustering methods in susceptibility space.
\end{abstract}

\begin{figure}[H]
\centering
\includegraphics[width=0.88\linewidth]{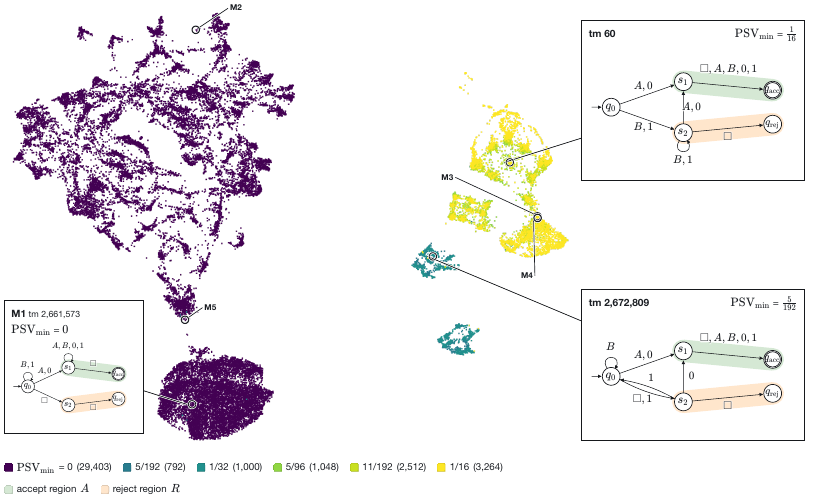}
\caption{\textbf{Turing machines in susceptibility
space.}  A UMAP embedding of the susceptibility matrices of
$38{,}019$ deterministic finite automata (DFAs) that agree with the language $\mathcal L_{\texttt{A}, \texttt{0}}$ of strings containing an $\texttt{A}$ or a $\texttt{0}$ on a finite set of test strings (\cref{sec:expt_task,sec:expt_setup}).
Each DFA is coloured by
$\PSV_{\min}$, which is zero when accepted and rejected inputs pass through disjoint sets of
intermediate states, and grows as they overlap.
The fully path-separated machines form the large purple cluster on the
left. Three example machines are shown of varying path-separability: in $M_1$ every accepted
input passes through $s_1$ only and every rejected input through $s_2$
only, while in the two machines on the right, accepted and rejected
inputs share intermediate states.}
\label{fig:psv_umap_intro}\label{fig:umap_main}
\end{figure}

\tableofcontents

\section{Introduction}\label{sec:introduction}

We study how the internal computational structure of Turing machines is encoded in the local geometry of a loss function which is zero at machines with a specified input-output behaviour, and how this structure may be discovered empirically using susceptibilities. That is, we study interpretability for Turing machines in the spirit of \cite{baker2025studyingsmalllanguagemodels}.

We consider Turing machines over a fixed tape alphabet $\Sigma$ and set of states $Q$, differing only in their transition functions. This allows us to identify the space of Turing machines with the following set
\begin{equation}
  W^\code := \prod_{(\sigma, q) \in \Sigma \times Q} \Sigma \times Q \times \{L, S, R\}.
\end{equation}
The code for a machine $M$ will be denoted $[M] \in W^\code$. If the entry indexed by $(\sigma, q)$ is $(\sigma',q',d)$ then on reading $\sigma$ in state $q$ the machine $M$ will write $\sigma'$, transition to state $q'$ and move in direction $d$ ($L$ for left, $R$ for right and $S$ for stay). We consider $W^\code$ as embedded into a larger space of \emph{noisy Turing machines}
\begin{equation}\label{eq:noisy_tm_space}
  W = \prod_{(\sigma,q)\in \Sigma\times Q}
      \Delta\Sigma\times\Delta Q\times\Delta\{L,S,R\},
\end{equation}
where $\Delta T$ is the set of probability distributions over a finite set $T$. A point of $W$ nearby $[M]$ specifies, for each $(\sigma, q)$, distributions over the symbol $\sigma'$ to write, state $q'$ to transition to and direction to move. We interpret these as \emph{error channels} by which a universal Turing machine simulating $M$ can misread the entries on the description tape of $M$.

Given such a point $w \in W$ and an input string $x$, the simulation in the presence of these errors produces a probability distribution over outputs or final states $y$ (in this paper we consider $y$ as a state) when we simulate the machine for a fixed number of steps $T$
\[
  p(y|x,w)=\Delta\step^{T}(x,w)_y.
\]
Here $\Delta\step$ denotes the function which propagates uncertainty about the description tape, as described above, to uncertainty about the configuration of the simulated machine after one step. We call this a \emph{smooth relaxation} of the step function of the universal Turing machine and recall the background from \cite{clift2020derivatives,clift2021geometryprogramsynthesis,murfet2025pas} in \cref{sec:smooth_cycle}.

The Turing machines we consider are \emph{deciders}: each has an accept state $\qacc$ and a reject state $\qrej$, and it decides membership of a language by ending its run on each input in one of the two. Fix a distribution $q(x)$ over input strings with finite support $I \subseteq \Sigma^*$, and a function $y = y(x)$ specifying the desired halting state on input $x$. Associated to this specification is the \emph{loss function}
\[
  H(w)=\E_{x\sim q}\bigl[h_x(w)^2\bigr],
  \qquad
  h_x(w)=1-p(y(x)|x,w)\,.
\]
We call a Turing machine $M$ a \emph{classical solution} if $M(x) = y(x)$ for all $x \in I$, by which we mean that after $T$ steps of execution on input $x$ the machine $M$ is in state $y(x)$. Then $H([M]) = 0$. With this notation we can restate the main goal of the paper: to study how the internal computational structure of a classical solution $M$ is encoded in the local geometry of $H$ near $[M]$.

It has been argued \cite{murfet2025pas} that the structure of a learned program is reflected in the local geometry of the loss that guides the synthesis of that program. The relevant meaning of local geometry is the following: given a finite dataset $D_n$ drawn from some true distribution, the posterior $p(w|D_n)$ concentrates, as $n$ grows, near the minima of the loss, and its asymptotic behaviour is governed by the geometry of the loss near those minima \cite{watanabe2009algebraic}. This perspective on Bayesian learning has been called \emph{structural Bayesianism} \cite{murfet2025pas}. The interpretability programme built on it \cite{baker2025studyingsmalllanguagemodels, gordon2025lang3} defends the hypothesis that the internal structure of a learned parameter is implicitly encoded in the loss near it. The problem of interpretability is then to make this structure explicit from the germ of the loss at the parameter under study. The present paper establishes instances of this hypothesis in a setting where internal structure is exactly known: the truth-model-prior triple of \cite{murfet2025pas} (restated in \cref{sec:bayes_setup}), whose parameters are noisy Turing machines.

In the regular case the geometric information in this germ reduces to its low-order part, since the value and the Hessian determine the germ up to diffeomorphism. In singular models this reduction fails and higher-order information becomes relevant. Our method of choice for probing it is the theory of susceptibilities, long studied in statistical mechanics \cite{yeomans1992statistical}, reformulated for neural networks in \cite{baker2025studyingsmalllanguagemodels}, and adapted here to noisy Turing machines. Each square on the description tape is a \emph{component} of the machine's parameter. Perturbing the input distribution towards an input $x$ gives a perturbed loss $H^\varepsilon$ and a Gibbs distribution $p_{\beta}^{\varepsilon} \propto \exp\{-\beta H^\varepsilon\}\,\varphi$ over $W$, and the \emph{susceptibility} (\cref{def:susceptibility}) of a machine $M$ at the input $x$ and a component $C$ is the linear response of a \emph{component observable} $\phi_C$ to this perturbation
\begin{equation}\label{eq:sus_intro}
  \chi_x^C([M]) := \frac{1}{\beta}\left.\frac{\partial}{\partial \varepsilon}\,
  \E_{p_{\beta}^{\varepsilon}} \bigl[\phi_C\bigr]\right|_{\varepsilon=0},
\end{equation}
where $[M]$ is the code of $M$ (a point of the space \eqref{eq:noisy_tm_space}) and $\varphi$ is a prior on that space.
Arranging the values $\chi_x^C([M])$ over all inputs and components gives the \emph{susceptibility matrix} $\chi([M])$.

This paper makes two main contributions. Both are theorems relating a property of the transitions defining $M$ to a corresponding property of $\chi([M])$. The first (\cref{sec:intro_rank_bound}) concerns how $M$ uses its intermediate states to decide. If the non-initial states are \emph{not} used to distinguish accepted and rejected strings (each acts as a pass-through to a constant final state), then certain blocks of $\chi([M])$ have rank $\leq 2$. The second (\cref{sec:intro_symmetry}) concerns symmetries: when $M$ is unchanged by a relabelling of the symbols and states in its transitions, $\chi([M])$ is fixed by the corresponding permutation of its rows and columns.

\subsection{Path separation and low-rank blocks}\label{sec:intro_rank_bound}

We pose a decision problem: with tape alphabet $\Sigma := \{\blank, \texttt{A}, \texttt{B}, \texttt{0}, \texttt{1}\}$, decide the language
\begin{equation}\label{eq:LA0}
  \mathcal L_{\texttt{A},\texttt{0}}
  :=
  \{x\in\{\texttt{A},\texttt{B}\}^*\cup\{\texttt{0},\texttt{1}\}^* : x \text{ contains an } \texttt{A} \text{ or a } \texttt{0}\}.
\end{equation}
Fixing a set of states $Q = \{q_0, s_1, s_2, \qacc, \qrej\}$, with $q_0$ the initial state, $\qacc$ the accept state, $\qrej$ the reject state, we study the deciders $M$, which may or may not make use of the auxiliary states $s_1, s_2$ to decide if $x \in \mathcal L_{\texttt{A},\texttt{0}}$. A ``usage'' of $s_i \in Q$ means an occurrence of $s_i$ in the sequence (or \emph{path}) of states visited by $M$ on an input $x$. We consider $s_i$ non-trivially used if there are $x,x'$ which both use it but ultimately end in different final states ($\qacc$ and $\qrej$), and thus we are interested in the overlap between the non-initial states seen during \emph{accepting runs} (ending in $\qacc$) and \emph{rejecting runs} (ending in $\qrej$).
\Cref{fig:sep_machines_intro} shows the transition graphs of two deciders $M_1$ and $M_4$ of $\mathcal L_{\texttt{A},\texttt{0}}$, both \emph{deterministic finite automata} (DFAs): read-only and always moving right, so that a transition is determined by its next state, $(\sigma, q) \mapsto q'$.  The machine $M_1$ has no such overlap (only accepting runs see $s_1$ and only rejecting runs see $s_2$). We call such machines \emph{path-separable} (\cref{def:path_separation_partition}). The machine $M_4$ is not path-separable; it uses both $s_1$ and $s_2$ for detecting an $\texttt{A}$ or $\texttt{0}$, so we cannot partition them by whether they see only accepted or only rejected strings.
\begin{figure}[H]
\centering
\definecolor{accreg}{RGB}{213,232,212}%
\definecolor{rejreg}{RGB}{255,230,204}%
\begin{minipage}[t]{0.5\linewidth}\centering
$M_1$\ (path-separable)\\[4pt]
\resizebox{0.94\linewidth}{!}{%
\begin{tikzpicture}[>=Stealth,
    st/.style={circle, draw, thick, fill=white, minimum size=0.95cm, inner sep=0pt},
    edge/.style={->, thick},
    used/.style={->, line width=2.0pt},
    every node/.append style={font=\small}]
  \node[st] (q0) at (0,0) {$q_0$};
  \node[st] (s1) at (3.6,1.7) {$s_1$};
  \node[st] (s2) at (3.6,-1.7) {$s_2$};
  \node[st, double] (qa) at (7.8,1.1) {$\qacc$};
  \node[st] (qr) at (7.8,-1.1) {$\qrej$};
  \coordinate (s1top) at (3.6,3.2);
  \coordinate (s2bot) at (3.6,-3.2);
  \begin{scope}[on background layer]
    \node[fill=accreg, rounded corners, fit=(s1)(qa)(s1top), inner sep=8pt] {};
    \node[fill=rejreg, rounded corners, fit=(s2)(qr)(s2bot), inner sep=8pt] {};
  \end{scope}
  \draw[edge] (q0) -- node[below left=-2pt] {$\blank$} (s2);
  \draw[edge] (s1) to[out=120,in=60,looseness=5] node[above] {$\texttt{A},\texttt{B},\texttt{0},\texttt{1}$} (s1);
  \draw[edge] (s2) to[out=240,in=300,looseness=5] node[below] {$\texttt{A},\texttt{B},\texttt{0},\texttt{1}$} (s2);
  \draw[edge] (s2) -- node[below=-1pt] {$\blank$} (qr);
  \draw[used] (q0) to[out=145,in=105,looseness=6] node[above left=-2pt] {$\textbf{\texttt{B}},\texttt{1}$} (q0);
  \draw[used] (q0) -- node[above left=-2pt] {$\textbf{\texttt{A}},\texttt{0}$} (s1);
  \draw[used] (s1) -- node[above=-1pt] {$\boldsymbol{\blank}$} (qa);
  \draw[->, thick] (-1.1,0) -- (q0);
\end{tikzpicture}}
\end{minipage}%
\begin{minipage}[t]{0.5\linewidth}\centering
$M_4$\ (not path-separable)\\[4pt]
\resizebox{0.94\linewidth}{!}{%
\begin{tikzpicture}[>=Stealth,
    st/.style={circle, draw, thick, fill=white, minimum size=0.95cm, inner sep=0pt},
    edge/.style={->, thick},
    used/.style={->, line width=2.0pt},
    every node/.append style={font=\small}]
  \node[st] (q0) at (0,0) {$q_0$};
  \node[st] (s1) at (3.6,1.7) {$s_1$};
  \node[st] (s2) at (3.6,-1.7) {$s_2$};
  \node[st, double] (qa) at (7.8,1.1) {$\qacc$};
  \node[st] (qr) at (7.8,-1.1) {$\qrej$};
  \begin{scope}[on background layer]
    \node[fill=accreg, rounded corners, fit=(s1)(qa), inner sep=9pt] {};
    \node[fill=rejreg, rounded corners, fit=(s2)(qr), inner sep=9pt] {};
  \end{scope}
  \draw[edge] (q0) -- node[above, pos=0.32] {$\texttt{A},\texttt{0}$} (qa);
  \draw[edge] (s2) to[bend right=15] node[right] {$\texttt{B},\texttt{1}$} (s1);
  \draw[edge] (s1) -- node[above=-1pt] {$\texttt{A},\texttt{0}$} (qa);
  \draw[edge] (s1) -- node[pos=0.7, below=-1pt] {$\blank$} (qr);
  \draw[edge] (s2) -- node[below=-1pt] {$\blank$} (qr);
  \draw[used] (q0) -- node[above left=-2pt] {$\textbf{\texttt{B}},\texttt{1}$} (s1);
  \draw[used] (s1) to[bend right=15] node[left] {$\textbf{\texttt{B}},\texttt{1}$} (s2);
  \draw[used] (s2) -- node[pos=0.42, above=-1pt] {$\textbf{\texttt{A}},\texttt{0}$} (qa);
  \draw[->, thick] (-1.1,0) -- (q0);
\end{tikzpicture}}
\end{minipage}\\[8pt]
\tikz{\fill[accreg] (0,0) rectangle (0.3,0.2);}~accept region $A$\qquad
\tikz{\fill[rejreg] (0,0) rectangle (0.3,0.2);}~reject region $R$\qquad
\tikz{\draw[line width=2pt] (0,0) -- (0.55,0);}~run on $\texttt{BBA}$
\caption{Two structurally different deciders of $\mathcal L_{\texttt{A},\texttt{0}}$ which are deterministic finite automata (DFAs; read-only and always move right), depicted by their transition graph on the set of states. An edge $q \to q'$ labelled $\sigma$ denotes the transition $(\sigma, q) \mapsto (\sigma, q', R)$. \textbf{Left:} the machine $M_1$ is path-separable (only accepted $x$ visit $s_1$ and only rejected visit $s_2$).  The shaded regions partition the non-initial states into an accept side $\{s_1, \qacc\}$ and a reject side $\{s_2, \qrej\}$, and runs of $M_1$ never enter the region of the opposite outcome.  \textbf{Right:} in $M_4$ the bold run of the accepted input $\texttt{BBA}$ passes through both regions.}
\label{fig:sep_machines_intro}
\end{figure}

Fix a partition $Q \setminus \{q_0\} = A \sqcup R$ of the non-initial states with
$\qacc \in A$ and $\qrej \in R$.  The inputs are
partitioned, $I = I_{\acc} \sqcup I_{\rej}$, into accepted and rejected inputs.
The component observables, indexed by $C = (\sigma, q)$, measure the loss over subspaces $\Delta Q$,
which vary the choice $q'$ in the transition $(\sigma, q) \mapsto q'$, and we partition them
as $\mathcal C_A \sqcup \mathcal C_R \sqcup \mathcal C_0$, according to $q \in A$, $q \in R$ or $q = q_0$. The susceptibility matrix therefore decomposes into blocks:
\begin{equation}\label{eq:sus_blocks}
  \chi([M]) = \kbordermatrix{
    & \mathcal C_A & \mathcal C_R & \mathcal C_0 \\
    I_{\acc} & X_{\acc, A} & X_{\acc, R} & X_{\acc, 0} \\
    I_{\rej} & X_{\rej, A} & X_{\rej, R} & X_{\rej, 0}
  }.
\end{equation}
\begin{informaltheorem}[Informal]
Let $M$ be a classical solution and let $A \sqcup R$ be a \emph{path separation partition} of $M$ (\cref{def:path_separation_partition}).  Then
\[
  \rank(X_{\acc,R})\le 2,
  \qquad
  \rank(X_{\rej,A})\le 2.
\]
See \cref{thm:rank_bound} for the formal statement (including conditions on $\Delta\step$).
\end{informaltheorem}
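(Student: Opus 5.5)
The plan is to reduce the susceptibility to a covariance and then show that, on the off-diagonal blocks, the dependence on the input $x$ passes through at most two $x$-independent functions. First I would use the standard linear-response identity for the perturbed Gibbs measure. Perturbing $q$ towards $x$ changes $H$ by $\varepsilon\,(h_x^2-H)$. Differentiating $\E_{p^\varepsilon_\beta}[\phi_C]$ at $\varepsilon=0$ then gives
\[
  \chi^C_x([M]) \;=\; -\Cov_{p_\beta}\bigl(\phi_C,\;h_x^2-H\bigr),
\]
with the covariance taken under the Gibbs measure localised near $[M]$. I would check this against \cref{def:susceptibility}. The terms involving $H$ contribute a column-wise constant, which is a rank-one shift and costs at most one of the two allowed dimensions. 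So it suffices to show that, for $x\in I_{\acc}$ and $C\in\mathcal C_R$, the functions $h_x^2$ restricted to their $w_R$-dependence span a space of dimension at most one (or two, including the constant shift). The case of $X_{\rej,A}$ is symmetric, exchanging $A\leftrightarrow R$ and $\qacc\leftrightarrow\qrej$.

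The core step is a structural lemma about $p(\qacc\mid x,w)$ for accepted $x$, using path separation and the conditions on $\Delta\step$. I would expand $\Delta\step^T(x,w)$ as a sum over paths in configuration space, each weighted by a product of transition probabilities $w_{(\sigma,q)}(q')$. By path separation, the classical run of $x$ stays in $\{q_0\}\cup A$. Any path that touches an $R$-component must therefore first leave the classical run through an error at an $A$- or $q_0$-component, landing in $R$. I would then decompose each such path at the first entry into $R$ and at its last exit from $R$. Under the stated conditions on $\Delta\step$, I expect to be able to assume the following: the sub-run inside $R$ is read-only and right-moving as for DFAs, and its fate (reaching $\qacc$ versus $\qrej$, or being absorbed) is governed by a quantity depending on $w_R$ only through a fixed finite set of functions. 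In the cleanest version, these are the total mass $G_{\rej}(w_R)$ routed to $\qrej$ and the mass $G_{\acc}(w_R)$ routed back to $\qacc$. This should give
\[
  h_x(w) \;=\; a_x(w_{\neg R}) + b_x(w_{\neg R})\,G_{\rej}(w_R) + c_x(w_{\neg R})\,G_{\acc}(w_R).
\]

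The main obstacle is making that factorisation honest. In general the in-$R$ sub-run depends on the remaining suffix of $x$ and on the entry point, so $G$ would depend on $x$. Here I would lean on the conditions on $\Delta\step$ in \cref{thm:rank_bound}, which I would read as restricting to first-order (single-error) contributions near $[M]$. Under that restriction, once a path has erred into $R$, no further error is counted. The classical behaviour of the classical solution $M$ from $R$ is then forced: path separation implies that every state in $R$ that $M$ can reach leads only to $\qrej$. Consequently the only $w_R$-dependence comes from the single error allowed inside $R$. I would attempt to show that its effect is a universal linear functional of $w_R$ weighted by an $x$-dependent coefficient, so that $b_x,c_x$ absorb all $x$-dependence.

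Finally I would substitute into the covariance. The product terms $b_x(w_{\neg R})G(w_R)$ do not immediately separate, because the Gibbs measure couples $w_R$ and $w_{\neg R}$. I would therefore work at the same order of approximation used for $\Delta\step$: replace $b_x, c_x$ by their values at $[M]$, which is justified to leading order as $w_{\neg R}\to[M]_{\neg R}$. This gives
\[
  X_{\acc,R} \;=\; \bigl(b_x([M])\bigr)_x\,\bigl(-\Cov(\phi_C,G_{\rej})\bigr)_C \;+\; \bigl(c_x([M])\bigr)_x\,\bigl(-\Cov(\phi_C,G_{\acc})\bigr)_C.
\]
Any residual contribution of the form $\Cov(\phi_C, f(w_{\neg R}))$ has to be checked to vanish, or to be absorbed, under the product prior $\varphi$. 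What remains is a sum of two outer products, hence of rank at most $2$.
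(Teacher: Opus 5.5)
Your first step is right: $\chi^C_x = -\Cov_{p_\beta}[\phi_C, h_x^2 - H]$, and the $H$-term contributes the column-wise constant $\Cov_{p_\beta}[\phi_C, H]$ times $(1)_x$. But from there you miss the idea the proof rests on, and the approximate argument you substitute does not close.

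The observable $\phi_C(w) = \delta(u-u^*)\,H(w)$ (\cref{def:phi_C}) is supported on the slice through $[M]$ along $C$. On that slice every component except $C$ is frozen at its classical value. Take $x \in I_{\acc}$ and $C = (\sigma_C, q_C, f) \in \mathcal C_R$. Path separation gives $q_t \notin R$ along the classical run, so the pair $(\tau_t(0), q_t)$ never equals $(\sigma_C, q_C)$. The hypothesis on $\Delta\step$ is unmatched-component invariance (\cref{def:unmatched_component_invariance}). It is an exact identity of cycle maps at deterministic configurations, not a single-error truncation, and it needs no read-only or right-moving structure. By induction it yields $h_x(u^*, v) = 0$ for all $v \in W_C$ (\cref{lem:mismatched_slice}). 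Hence $\langle \phi_C\, h_x^2\rangle = 0$ exactly, and
\[
  \chi^C_x([M]) = \langle \phi_C\rangle\,\langle h_x^2\rangle + \Cov_{p_\beta}[\phi_C, H].
\]
So every column of $X_{\acc,R}$ lies in the span of $(\langle h_x^2\rangle)_{x\in I_{\acc}}$ and $(1)_{x \in I_{\acc}}$. The $x$-dependence enters only through the global Gibbs average $\langle h_x^2\rangle$. No path expansion is needed, and no factorisation of $h_x$ in the $R$-coordinates.

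Your route cannot recover this, for three reasons.
\begin{itemize}
\item You decompose $h_x$, but the covariance involves $h_x^2$. Squaring brings in $G_{\rej}^2$, $G_{\acc}^2$, $G_{\rej}G_{\acc}$ and products of coefficients, so the $x$-dependence no longer passes through two fixed functions.
\item Even granting your final display, its two outer products plus the constant $H$-piece you set aside at the start give rank up to $3$, not $2$.
\item The leading-order step is degenerate. You observed that any path touching an $R$-component must first err at a non-$R$ component. So at $w_{\neg R} = [M]_{\neg R}$ no mass reaches $R$, and $h_x$ does not depend on $w_R$ there. Thus $b_x([M])\,G_{\rej} + c_x([M])\,G_{\acc}$ is constant in $w_R$. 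By bilinearity your displayed main term equals $-\Cov(\phi_C, \text{const}) = 0$. All the content is pushed into the residual $\Cov(\phi_C, f(w_{\neg R}))$ that you leave unchecked. Once you use that $\phi_C$ freezes every component but $C$, that residual is exactly where the paper's rank-one term comes from.
\end{itemize}
The theorem is an exact statement; by \cref{prop:standardisation_rank} it even holds exactly for the sampled estimates. An argument built on leading-order replacements of this kind cannot establish it.
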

In \cref{sec:expt_rank_stats} we compute these ranks numerically, on estimated susceptibilities of the classical DFA solutions of \cref{sec:expt_setup}, and find the estimates agree with the bound.  They also exhibit a converse: a DFA in this set is path-separable if and only if some partition $Q \setminus \{q_0\} = A \sqcup R$ has both $\rank(X_{\acc, R}) \le 2$ and $\rank(X_{\rej, A}) \le 2$.  \Cref{fig:psv_umap_intro} shows path separability organising the same solution set into clusters in susceptibility space, with $M_1$ (resp. $M_4$) of \cref{fig:sep_machines_intro} in the left path-separable cluster (resp. right non-path-separable cluster). Each DFA is coloured by $\PSV_{\min}(M)$, the minimum over partitions $A \sqcup R$ of the \emph{path separation violation} (\cref{def:psv}), which counts how often runs enter the wrong side. It is zero exactly for the path-separable machines, e.g.\ $\PSV_{\min}(M_1) = 0$ and $\PSV_{\min}(M_4) \neq 0$.

\subsection{Recoding symmetries and permutation symmetries}\label{sec:intro_symmetry}

Our second contribution concerns how symmetries of the data are
reflected in symmetries of the susceptibilities.  The language $\mathcal L_{\texttt{A},\texttt{0}}$ of
\eqref{eq:LA0} admits a symmetry in that the decision function $f: \{\texttt{A},\texttt{B}\}^* \cup \{\texttt{0},\texttt{1}\}^* \to \{\qacc, \qrej\}$, accepting strings in $\mathcal L_{\texttt{A},\texttt{0}}$, is
invariant under precomposition with the involution
$\texttt{A} \leftrightarrow \texttt{0},\, \texttt{B} \leftrightarrow \texttt{1}$ (denoted $\theta: \Sigma \to \Sigma$).  Within the set of Turing machines
accepting this language, there are those which reflect this symmetry, and
those which do not. For instance, the DFAs $M_1, M_4$
(\cref{fig:sep_machines_intro}) and $M_3$ (\cref{fig:recoding_action})
all reflect it. Exchanging $\texttt{A} \leftrightarrow \texttt{0}$ and
$\texttt{B} \leftrightarrow \texttt{1}$ throughout the transitions of $M_1$ and $M_4$ returns the
same transition graph, while for $M_3$ it returns the transition graph
with the roles of $s_1$ and $s_2$ interchanged.  Thus $M_1$ and $M_4$ carry the
recoding symmetry $(\theta, \id_Q)$, while $M_3$ carries
$(\theta, s_1{\leftrightarrow}s_2)$ (\cref{def:recoding_symmetry});
\cref{fig:recoding_action} depicts the exchanges acting on $M_3$.

\begin{figure}[H]
\centering
$
\underset{\textstyle M_3}{\vcenter{\hbox{\resizebox{0.39\linewidth}{!}{%
\begin{tikzpicture}[
    >=Stealth, thick,
    state/.style={circle, draw, thick, minimum size=1.1cm, inner sep=0pt},
    accept/.style={state},
    dispatch/.style={->, thick},
    trans/.style={->, thick},
    selfloop/.style={->, thick},
    every node/.append style={font=\small},
  ]
  \node[state] (q0) at (0,0) {$q_0$};
  \node[state] (s1) at (3.8,1.8) {$s_1$};
  \node[state]  (s2) at (3.8,-1.8) {$s_2$};
  \node[accept, double] (qa) at (8.5,1.2) {$\qacc$};
  \node[accept] (qr) at (8.5,-1.2) {$\qrej$};
  \draw[->, thick] (-1.2,0) -- (q0);
  \draw[dispatch] (q0) -- node[above left=-2pt] {$\texttt{B}$} (s1);
  \draw[dispatch] (q0) -- node[below left=-2pt] {$\texttt{1}$} (s2);
  \draw[trans] (q0) -- node[above=-1pt,pos=0.3] {$\texttt{A},\texttt{0}$} (qa);
  \draw[selfloop] (s1) to[out=120,in=60,looseness=5] node[above] {$\texttt{B},\texttt{1}$} (s1);
  \draw[trans] (s1) -- node[above=-1pt] {$\texttt{A}$} (qa);
  \draw[trans] (s1) -- node[pos=0.18,below left=-3pt] {$\blank$} (qr);
  \draw[selfloop] (s2) to[out=240,in=300,looseness=5] node[below] {$\texttt{B},\texttt{1}$} (s2);
  \draw[trans] (s2) -- node[below=-1pt] {$\blank$} (qr);
  \draw[trans] (s2) -- node[pos=0.18,above left=-3pt] {$\texttt{0}$} (qa);
\end{tikzpicture}}}}}
\quad
\begin{array}{c}
\xleftrightarrow{\ (\theta,\ \id_Q)\ } \\[8pt]
\xleftrightarrow{\ (\id_\Sigma,\ s_1 \leftrightarrow s_2)\ }
\\[14pt]
\begin{tikzpicture}[baseline=-0.5ex]
  \path[use as bounding box] (0,-0.3) rectangle (1.92,0.45);
  \draw[->, thick, >=Stealth]
    (0,0.14) .. controls (2.55,0.5) and (2.55,-0.5) .. (0,-0.14);
  \node[above=0.5pt] at (0.95,0.2)
    {\footnotesize $(\theta,\ s_1 \leftrightarrow s_2)$};
\end{tikzpicture}
\end{array}
\quad
\underset{\textstyle \overline{M_3}}{\vcenter{\hbox{\resizebox{0.39\linewidth}{!}{%
\begin{tikzpicture}[
    >=Stealth, thick,
    state/.style={circle, draw, thick, minimum size=1.1cm, inner sep=0pt},
    accept/.style={state},
    dispatch/.style={->, thick},
    trans/.style={->, thick},
    selfloop/.style={->, thick},
    every node/.append style={font=\small},
  ]
  \node[state] (q0) at (0,0) {$q_0$};
  \node[state] (s1) at (3.8,1.8) {$s_1$};
  \node[state]  (s2) at (3.8,-1.8) {$s_2$};
  \node[accept, double] (qa) at (8.5,1.2) {$\qacc$};
  \node[accept] (qr) at (8.5,-1.2) {$\qrej$};
  \draw[->, thick] (-1.2,0) -- (q0);
  \draw[dispatch] (q0) -- node[above left=-2pt] {$\texttt{1}$} (s1);
  \draw[dispatch] (q0) -- node[below left=-2pt] {$\texttt{B}$} (s2);
  \draw[trans] (q0) -- node[above=-1pt,pos=0.3] {$\texttt{A},\texttt{0}$} (qa);
  \draw[selfloop] (s1) to[out=120,in=60,looseness=5] node[above] {$\texttt{B},\texttt{1}$} (s1);
  \draw[trans] (s1) -- node[above=-1pt] {$\texttt{0}$} (qa);
  \draw[trans] (s1) -- node[pos=0.18,below left=-3pt] {$\blank$} (qr);
  \draw[selfloop] (s2) to[out=240,in=300,looseness=5] node[below] {$\texttt{B},\texttt{1}$} (s2);
  \draw[trans] (s2) -- node[below=-1pt] {$\blank$} (qr);
  \draw[trans] (s2) -- node[pos=0.18,above left=-3pt] {$\texttt{A}$} (qa);
\end{tikzpicture}}}}}
$
\caption{Exchanging the symbols $\texttt{A} \leftrightarrow \texttt{0}$ and
$\texttt{B} \leftrightarrow \texttt{1}$ throughout the transition table of $M_3$ (left)
produces the machine $\overline{M_3}$ (right); exchanging the states
$s_1 \leftrightarrow s_2$ instead produces the same machine.
Performing both exchanges therefore returns $M_3$ unchanged: this is
the symmetry $(\theta, s_1{\leftrightarrow}s_2)$ of $M_3$, made precise in
\cref{sec:recoding_symmetries}.}
\label{fig:recoding_action}
\end{figure}

The alphabet partition $\Sigma \setminus \{\blank\} = \{\texttt{A},\texttt{B}\} \sqcup \{\texttt{0},\texttt{1}\}$
partitions the inputs by their alphabet, $I = I_1 \sqcup I_2$.  It also
partitions the components: a component lies in $\mathcal C_1$,
$\mathcal C_2$ or $\mathcal C_\blank$ according to whether its symbol
$\sigma$ lies in $\{\texttt{A},\texttt{B}\}$, in $\{\texttt{0},\texttt{1}\}$ or is $\blank$
(\cref{def:alphabet_partition}).
The susceptibility matrix therefore decomposes into blocks:
\[
  \chi([M]) = \kbordermatrix{
    & \mathcal C_1 & \mathcal C_2 & \mathcal C_\blank \\
    I_1 & X_{1, 1} & X_{1, 2} & X_{1, \blank} \\
    I_2 & X_{2, 1} & X_{2, 2} & X_{2, \blank}
  }.
\]
\begin{informaltheorem}[Informal]
Let $M$ be a classical solution and suppose it has a recoding symmetry $(a, s)$.  Then, under invariance hypotheses on $\Delta \step$, the input distribution $q(x)$ and the prior $\varphi(w)$, the susceptibility matrix of $M$ carries the corresponding permutation symmetry: for every input $x$ and every component $(\sigma, q)$,
\[
  \chi_x^{(\sigma, q)}([M]) = \chi_{a(x)}^{(a(\sigma),\, s(q))}([M]).
\]
See \cref{thm:sus_symmetry} for the formal statement.
\end{informaltheorem}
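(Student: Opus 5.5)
The plan is to prove the informal symmetry statement by transporting the whole Gibbs-measure construction along the bijection of $W$ induced by the recoding $(a,s)$, and then checking that every ingredient of \eqref{eq:sus_intro} is equivariant. First I will define the action. A pair $(a,s)$ of permutations of $\Sigma$ and $Q$ (with $s$ fixing $q_0,\qacc,\qrej$) acts on $W$ by a map $\Phi_{(a,s)}$, given by
\[
  \Phi(w)_{(a(\sigma),s(q))} = \bigl(a_*w^{\Sigma}_{(\sigma,q)},\ s_*w^{Q}_{(\sigma,q)},\ w^{D}_{(\sigma,q)}\bigr),
\]
where $a_*,s_*$ are pushforwards of distributions. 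This is an affine bijection of the product of simplices, so it preserves Lebesgue measure and has Jacobian of absolute value $1$. On codes it restricts to the recoding of machines, and a recoding symmetry of $M$ means exactly $\Phi([M])=[M]$.

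Next, I will use the invariance hypothesis on $\Delta\step$: it is equivariant, so that $\Delta\step(a\cdot c,\Phi(w)) = (a,s)\cdot\Delta\step(c,w)$ on distributions over configurations. By induction on $T$ this gives $p(s(y)\mid a(x),\Phi(w)) = p(y\mid x,w)$. Since the final states are fixed by $s$, and since $M$ being a classical solution together with the invariance of the specification gives $y(a(x)) = y(x)$, I obtain $h_{a(x)}(\Phi(w)) = h_x(w)$. The invariance $q(a(x)) = q(x)$ then yields $H\circ\Phi = H$. The perturbed loss $H^\varepsilon$ towards $x$ satisfies $H^\varepsilon_{x}\circ\Phi^{-1} = H^\varepsilon_{a(x)}$. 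I will check this directly from the definition of the perturbation as a mixture of $q$ with a point mass or a reweighting at $x$. The component observables satisfy $\phi_{(a(\sigma),s(q))}\circ\Phi = \phi_{(\sigma,q)}$; this should be immediate, since $\phi_C$ measures the loss restricted to the $\Delta Q$ factor at $C$, and $\Phi$ maps that factor at $C$ to the factor at $(a(\sigma),s(q))$.

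Finally, I will change variables $w\mapsto\Phi(w)$ in both the numerator and the partition function of $\E_{p^{\varepsilon}_\beta}[\phi_C]$, using $\varphi\circ\Phi=\varphi$ and the unit Jacobian. This gives
\[
  \E_{p^{\varepsilon}_{\beta,x}}[\phi_{(\sigma,q)}] = \E_{p^{\varepsilon}_{\beta,a(x)}}[\phi_{(a(\sigma),s(q))}]
\]
for every $\varepsilon$. Differentiating at $\varepsilon=0$ and dividing by $\beta$ gives the claim. One subtlety needs attention here. If the Gibbs measure is localized near $[M]$, for instance by a neighbourhood cutoff, then the localizing region must be $\Phi$-stable. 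This is where $\Phi([M])=[M]$ is used: I would take the region to be $\Phi$-invariant, such as a ball in a permutation-invariant norm.

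I expect the main obstacle to be the equivariance of $\Delta\step$ and of the component observables. The precise definitions of the smooth relaxation (\cref{sec:smooth_cycle}) and of $\phi_C$, including whether $\phi_C$ involves the code $[M]$ itself as a base point, must match the action exactly, and the accept and reject states must be fixed. I would first verify the one-step identity on configurations, that is, tape contents, head position and state, by unfolding $\Delta\step$ as an expectation over independently sampled transition entries. Those samples are permuted consistently by $\Phi$, so the identity should follow.
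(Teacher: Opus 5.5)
Your proposal is correct. Its ingredients are the paper's:
\begin{itemize}
\item the recoding acts on $W$ by a measure-preserving bijection;
\item iterated equivariance of the cycle map gives $h_{a(x)}(g\cdot w)=h_x(w)$ (\cref{lem:model_equivariance});
\item invariance of $q$ gives $H(g\cdot w)=H(w)$ (\cref{prop:H_invariance});
\item the component observables satisfy $\phi_{g\cdot C}(g\cdot w)=\phi_C(w)$.
\end{itemize}

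The difference is in the final step. The paper passes to the covariance form $\chi_x^C=-\Cov_{p_\beta}[\phi_C,h_x^2-H]$ of \cref{rem:fdt}. It then needs only the single unperturbed Gibbs measure to be $g$-invariant, and reindexes the two observables inside the covariance.

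You stay with the defining derivative \eqref{eq:sus}. Your change of variables carries the whole perturbed family towards $x$ onto the family towards $a(x)$, via $H^\varepsilon_x\circ\Phi^{-1}=H^\varepsilon_{a(x)}$. This gives equality of the expectations for every $\varepsilon$, so equality of the $\varepsilon$-derivatives is immediate. Your route never invokes the fluctuation--dissipation identity, and uses nothing about the perturbation beyond this equivariance. The paper's route instead proves the statement directly for the covariance form, which is what the experiments estimate and what \cref{app:reparam,app:estimator_properties} reuse for the log-loss, renormalised and standardised variants.

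Two steps need more care than you give them.

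First, $\phi_{g\cdot C}\circ\Phi=\phi_C$ does not follow just from $\Phi$ carrying $W_C$ onto $W_{g\cdot C}$. By definition $\phi_C=\delta(u-u^*)H$ lives on the slice through $[M]$, and $\Phi$ sends that slice to the slice through $\Phi([M])$ along $g\cdot C$. So the hypothesis $\Phi([M])=[M]$ is used here, to match $u^*$ with the projection of $[M]$ onto $U_{g\cdot C}$. It is not needed only to make a localising region $\Phi$-stable; in the paper localisation sits in $\varphi$, whose invariance is simply a hypothesis.

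Second, you need no separate invariance of the specification. The symmetry $g\cdot[M]=[M]$ gives $q_t(a(x))=s(q_t(x))$ by induction on $t$. Since $M$ is a classical solution, this yields $y(a(x))=s(y(x))$ (\cref{cor:target_compat}). Feed this, rather than $y(a(x))=y(x)$, into your identity $p(s(y)\mid a(x),\Phi(w))=p(y\mid x,w)$. That argument works for any target map, and reduces to your version when the targets are $\qacc,\qrej$.
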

The symmetries of $M_1$ and $M_3$ both have $a = \theta$.  Since $x \mapsto \theta(x)$ exchanges $I_1$ with $I_2$ while $(\sigma, q)
\mapsto (\theta(\sigma), s(q))$ exchanges $\mathcal C_1$ with $\mathcal C_2$
and preserves
$\mathcal C_\blank$, this identifies $X_{1,1}$ with $X_{2,2}$, $X_{1,2}$ with
$X_{2,1}$, and $X_{1,\blank}$ with $X_{2,\blank}$, each up to this recoding
of rows and columns.

This is the fixed-point
case of an action: recoding \emph{any} machine permutes the entries of its
susceptibility matrix in the same way, so each recoding acts on the
set of machines in susceptibility space.
\Cref{fig:symmetry_pca_intro} shows this action in a principal component
projection, where it appears as a reflection. Recoded machines land at
mirrored positions, and the fixed-points $M_1$ and $M_3$ each sit
on the axis of their symmetry.

The colouring of \cref{fig:symmetry_pca_intro} provides visual evidence that the
reflection is the recoding, and not some other symmetry.
Each DFA $M$ is coloured by its \emph{asymmetry} $\asym(M; g)$ at the recoding
$g$ (\cref{def:asymmetry}), which measures differences in visited states at each time step with and without the recoding, averaged over $x \in I, t \in [0,T-1]$. It vanishes when $g$ is a symmetry of $M$ (\cref{prop:asym_zero}), and increases as $M$'s behaviour is ``more affected'' by $g$.  The recodings here are involutions, and therefore preserve asymmetry: $\asym(g \cdot M; g) =
\asym(M; g)$.  So if the reflection is the recoding, the colouring must be
mirror-symmetric about the axis, with the machines with $\asym=0$ on the
axis itself, and this is what we see in the figure. The colouring is a
proxy for a stronger pointwise statement: that the mirror image of
each $M$ lies close to $g \cdot M$. We depict individual recoding-is-reflection examples in the figure for $M_1, M_3, M_5$, and we extend this analysis to every $M$ in
\cref{sec:expt_symmetry} (\cref{fig:symmetry_dist}).

\begin{figure}[H]
\centering
\includegraphics[width=\linewidth]{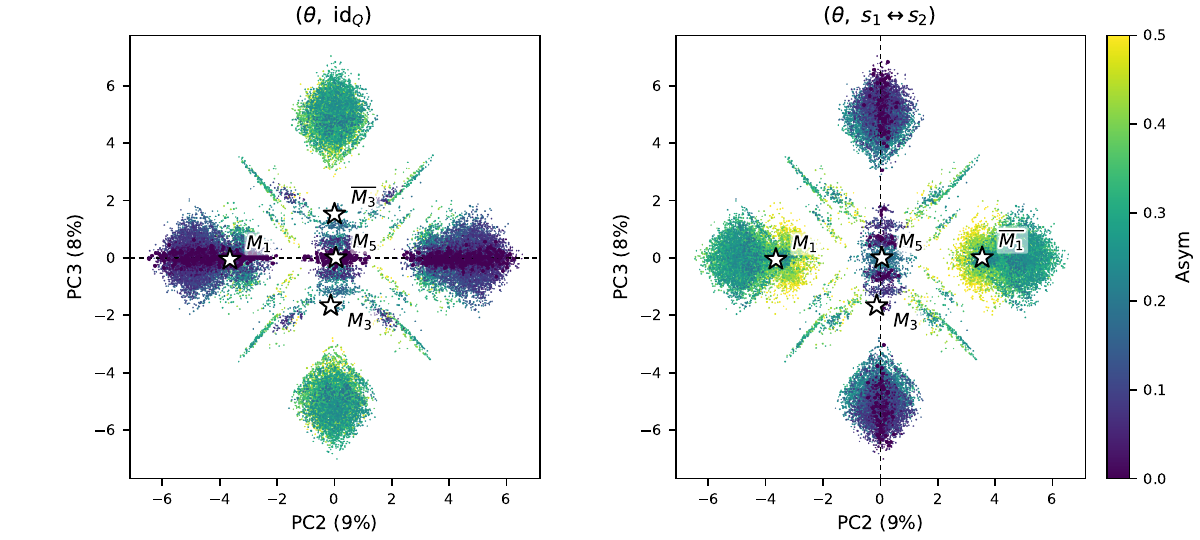}
\caption{\textbf{Recoding acts on the PCA by reflections.}
The $\mathrm{PC}_2 \times \mathrm{PC}_3$ plane of the PCA of the set of susceptibility matrices $\{\widehat{\psi}([M])\}$ of the solution DFAs (\cref{sec:expt_population_group}).
\textbf{Left:} The recoding $(\theta, \id_Q)$ swaps $\texttt{A} \leftrightarrow \texttt{0}$ and $\texttt{B} \leftrightarrow \texttt{1}$ in a machine's transitions, and this action becomes the reflection negating $\mathrm{PC}_3$, e.g. $M_3 \mapsto \overline{M_3}$. $M_1$ is invariant to $(\theta, \id_Q)$ and sits fixed on the horizontal axis. \textbf{Right:} $(\theta, s_1{\leftrightarrow}s_2)$ does the same but also swaps the states $s_1 \leftrightarrow s_2$, and becomes the reflection negating $\mathrm{PC}_2$, e.g. $M_1 \mapsto \overline{M_1}$. $M_3$ is invariant to $(\theta, s_1{\leftrightarrow}s_2)$ and sits fixed on the vertical axis. DFAs are coloured by their asymmetry $\asym(M; g)$ at each of the two recodings $g$, which measures changes in $M$'s behaviour on inputs (\cref{def:asymmetry}), and is $0$ on fixed points. Stars mark the DFAs $M_1$, $M_3$, and $M_5$, and the reflections $\overline{M_1}$, $\overline{M_3}$. The only machine fixed by both symmetries is $M_5$, which sits near the origin.}
\label{fig:symmetry_pca}\label{fig:symmetry_pca_intro}
\end{figure}

\subsection{Further contributions}\label{sec:intro_minor}

Alongside these main contributions
(\cref{thm:rank_bound,thm:sus_symmetry} and their associated
experiments), the paper makes two minor
contributions: (1) a circuit calculus for the noisy computations used
in \cite{murfet2025pas} as well as the present paper
(\cref{sec:circuits_body}), and (2) a minimal example of a singular setting
in which susceptibilities probe structure that the Hessian cannot
(\cref{app:absorbing_example}).

\subsection*{Acknowledgements}

We thank Chris Elliott for helpful comments and feedback, Rohan Hitchcock for assisting with sampling methods, and Max Adam for initial experimentation that inspired some of this work.  The majority of this work was carried out while three of the authors were at Timaeus.  This project is funded by the Advanced Research + Invention Agency (ARIA). Rumi Salazar was supported by the Melbourne Research Scholarship and by the Commonwealth through an Australian Government Research Training Program Scholarship [DOI: \url{https://doi.org/10.82133/C42F-K220}] during the completion of this project.

\section{Noisy Turing machines}\label{sec:setup}

In this section we recall the theoretical setting of noisy Turing machines within singular learning theory and susceptibilities.

\subsection{Pseudo-UTMs}\label{sec:pseudo_utm}

\begin{definition}\label{def:configuration}
Let $M$ be a Turing machine with alphabet $\Sigma$ (with blank symbol $\blank$) and set of states $Q$.  Write
\[
  \Sigma^{\mathbb Z, \blank} := \bigl\{\, g \colon \mathbb Z \to \Sigma \mid g(i) = \blank \text{ for all but finitely many } i \,\bigr\}
\]
where it is understood that the head is at index $0$.  A \emph{configuration of $M$} is a point of
\[
  \Cfg := \Sigma^{\mathbb Z, \blank} \times Q.
\]
\end{definition}

Turing machines over the fixed alphabet $\Sigma$ and set of states $Q$ differ
only in their transition functions, identifying the space of Turing machines
with the set
\begin{equation}\label{eq:W_code}
  W^\code := \prod_{(\sigma, q) \in \Sigma \times Q} \Sigma \times Q \times \{L, S, R\}.
\end{equation}
The code for a machine $M$ will be denoted $[M] \in W^\code$.  If the entry
indexed by $(\sigma, q)$ is $(\sigma', q', d)$ then on reading $\sigma$ in
state $q$ the machine $M$ will write $\sigma'$, transition to state $q'$ and
move in direction $d$ ($L$ for left, $R$ for right and $S$ for stay).

\begin{definition}\label{def:step}
The \emph{configuration map} $\step_{[M]} \colon \Cfg \to \Cfg$ of a machine $M$ acts on a configuration $(\tau, q) \in \Sigma^{\mathbb Z, \blank} \times Q$ through the transition on the head symbol $\tau(0)$ and state $q$.  Write $(\sigma', q', d) := [M](\tau(0), q)$ and let $\widetilde\tau$ be $\tau$ with the head square overwritten,
\[
  \widetilde\tau(i) := \begin{cases} \sigma', & i = 0, \\ \tau(i), & i \neq 0. \end{cases}
\]
For $s \in \mathbb Z$ let $\mathrm{Shift}_s \colon \Sigma^{\mathbb Z, \blank} \lto \Sigma^{\mathbb Z, \blank}$ be the function which on $g \in \Sigma^{\mathbb Z, \blank}$ and $i \in \mathbb Z$ is given by $\mathrm{Shift}_s(g)(i) := g(i + s)$.  With the head-relative shifts $s_L := -1$, $s_S := 0$, $s_R := +1$ and $d \in \{L, S, R\}$,
\[
  \step_{[M]}(\tau, q) := \bigl(\mathrm{Shift}_{s_d}(\widetilde\tau),\ q'\bigr).
\]
\Cref{fig:tm_config} illustrates the first step on an input configuration.
\end{definition}

\begin{figure}[H]
\centering
\includegraphics[width=\textwidth]{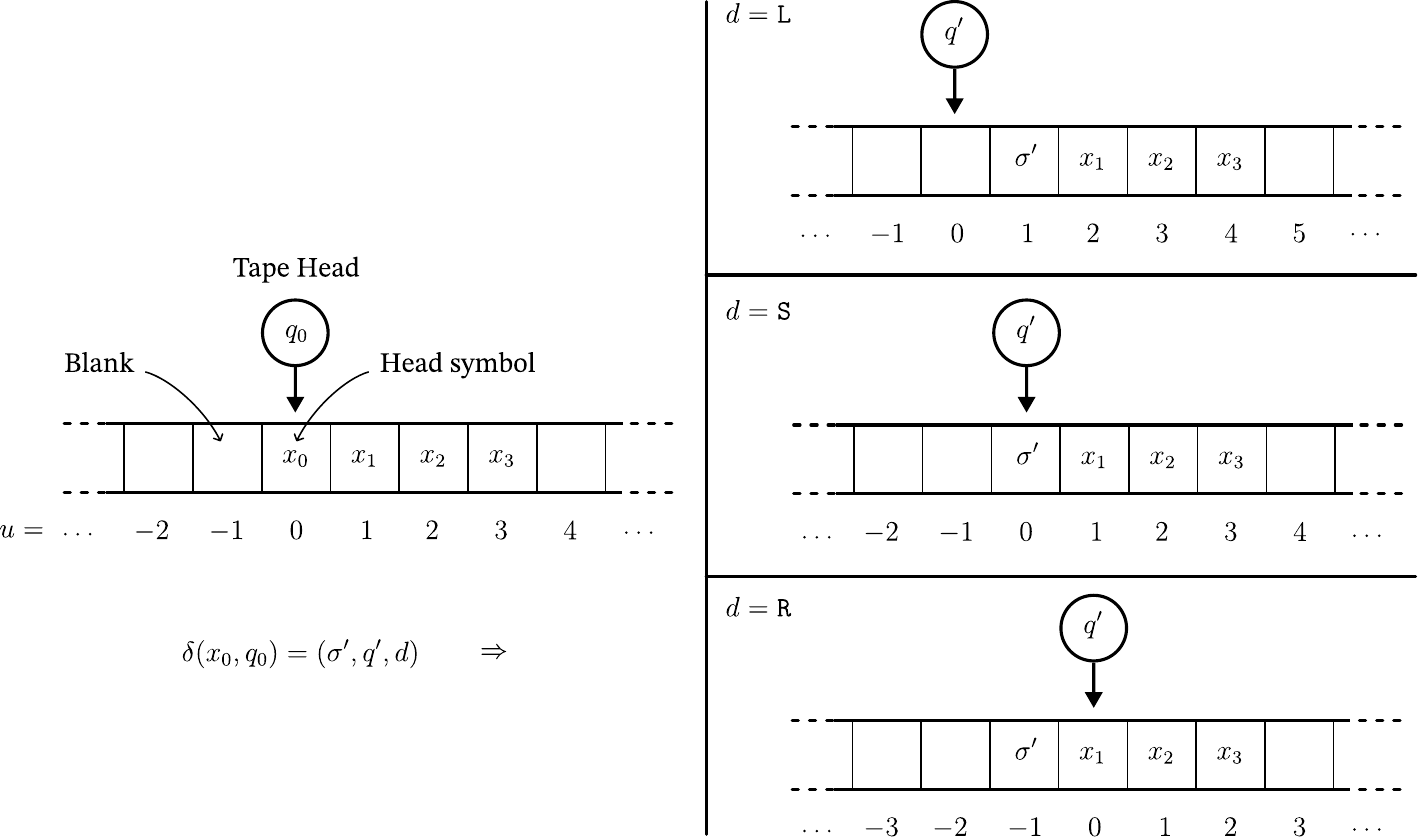}
\caption{The initial configuration (left) of a Turing machine with input $x = x_0 x_1 x_2 x_3 \in \Sigma^*$, and the next configuration (right) after the first transition $[M](x_0, q_0) = (\sigma', q', d)$, split into the three cases of the direction $d$. Note that the head is always over tape position $0$, so we do not additionally track a position $p \in \mathbb Z$ with the configuration data. It is natural to think of shifting the tape in the opposite direction, rather than shifting the head.}
\label{fig:tm_config}
\end{figure}

Fix an \emph{initial state} $q_0 \in Q$.  An input $x = x_0 \cdots x_{n-1}
\in \Sigma^*$ determines the \emph{initial configuration}
$(\tau_x, q_0) \in \Cfg$, with $\tau_x(i) := x_i$ for $0 \le i < n$ and
$\tau_x(i) := \blank$ otherwise.  For $T \in \mathbb N$ the
\emph{classical execution} is
\[
  \step^T \colon \Sigma^* \times W^\code \lto Q,
  \qquad
  \step^T(x, [M]) := \pi_Q\, \step_{[M]}^{\,T}(\tau_x, q_0),
\]
the state of $M$ after $T$ steps on input $x$, where
$\pi_Q \colon \Cfg \lto Q$ is the projection onto the state of the machine.

\begin{definition}\label{def:pseudo_utm}
A \emph{pseudo-UTM} for $(\Sigma, Q)$ consists of the following data:
\begin{itemize}[topsep=2pt,itemsep=2pt]
  \item a multi-tape Turing machine $\mathcal U$ with tape alphabet
    $\Sigma_{\mathrm{UTM}} \supseteq \Sigma \sqcup Q \sqcup \{L, S, R\}$
    and set of states $Q_{\mathrm{UTM}}$;
  \item a set $\{\text{working}, \text{state}\} \sqcup \mathrm{Aux}$ of tapes:
    a \emph{working tape} with alphabet $\Sigma$, a \emph{state tape} with
    alphabet $Q \sqcup \{\blank\}$, and \emph{auxiliary tapes}
    $\mathrm{Aux}$, whose squares are indexed by
    $\mathrm{Aux} \times \mathbb Z$;
  \item a \emph{code layout}
    $\ell \colon \Sigma \times Q \lto (\mathrm{Aux} \times \mathbb Z)^{3}$,
    sending each tuple $(\sigma, q)$ to the three auxiliary squares that
    carry its write-symbol, next-state, and move entries;
  \item \emph{fixed values}: the contents of every other auxiliary square,
    the initial UTM state, and the initial head positions, independent of
    the code and input.
\end{itemize}
This data determines the configuration space
\[
  \Cfg_{\mathcal U} = \underbrace{\Sigma^{\mathbb Z, \blank}}_{\text{working tape}} \times \underbrace{(Q \sqcup \{\blank\})^{\mathbb Z, \blank}}_{\text{state tape}} \times \underbrace{\bigl(\Sigma_{\mathrm{UTM}}^{\mathbb Z, \blank}\bigr)^{\mathrm{Aux}}}_{\text{auxiliary tapes}} \times \underbrace{Q_{\mathrm{UTM}}}_{\text{UTM state}}
\]
and the \emph{loading map}
\[
  \langle -\,;\,-\rangle \colon W^\code \times \Cfg \lto \Cfg_{\mathcal U},
\]
which sends a code $[M]$ and a configuration $\xi = (\tau, q)$ to the
configuration with the entries of $[M]$ on $\operatorname{Im}(\ell)$,
$\tau$ on the working tape, $q$ at index $0$ of the state tape ($\blank$
elsewhere), and the fixed values everywhere else (\cref{fig:loading}).  A
transition of $\mathcal U$ reads the squares under its heads and the UTM
state, overwrites them, and moves the tape heads, as in \cref{def:step}.
We require:
\begin{itemize}[topsep=2pt,itemsep=2pt]
  \item The $3\,|\Sigma|\,|Q|$ squares in the image of $\ell$ are pairwise
    distinct
  \item All but finitely many of the fixed auxiliary squares are blank
  \item $\mathcal U$ writes only symbols of $\Sigma$ to the working tape
    and of $Q \sqcup \{\blank\}$ to the state tape, so that a single
    transition is a map
    $\step_{\mathcal U} \colon \Cfg_{\mathcal U} \lto \Cfg_{\mathcal U}$
  \item The loading map is injective
  \item $\mathcal U$ has a \emph{period} $P > 0$:
    \begin{equation}\label{eq:simulates}
      (\step_{\mathcal U})^{P}\, \bigl\langle\, [M] \,;\, \xi \,\bigr\rangle
      =
      \bigl\langle\, [M] \,;\, \step_{[M]}(\xi) \,\bigr\rangle
      \qquad \text{for all } [M] \in W^\code,\ \xi \in \Cfg,
    \end{equation}
    that is, running $\mathcal U$ for $P$-steps simulates one step of $M$, and all auxiliary tapes are returned to the loading map configuration by the end of the period.
\end{itemize}
\end{definition}

In \cref{sec:lookup_utm} we define an explicit pseudo-UTM with two
auxiliary tapes, there called the \emph{description tape} and the
\emph{staging tape}.  To illustrate the general shape of the definition,
\cref{fig:loading} shows the loading map of that machine.  See that section
for the details.

\begin{figure}[H]
\centering
\resizebox{\textwidth}{!}{%
\begin{tikzpicture}[
  x=0.62cm, y=1.05cm,
  cell/.style={draw, minimum width=0.6cm, minimum height=0.6cm, inner sep=1pt, font=\scriptsize},
  scaf/.style={cell, fill=black!8},
  codecell/.style={cell, fill=blue!12},
  cfgcell/.style={cell, fill=green!16},
  fade1/.style={cell, draw=black!35, fill=black!5, opacity=0.55},
  fade2/.style={cell, draw=black!25, fill=black!4, opacity=0.28},
  gfade1/.style={cell, draw=green!45!black, fill=green!16, opacity=0.6},
  gfade2/.style={cell, draw=green!40!black, fill=green!16, opacity=0.4},
  lbl/.style={font=\scriptsize\itshape},
  hd/.style={-{Triangle[length=5pt,width=5pt]}, line width=1pt, red!65!black}
]
  \def\yD{4.5}
  \node[fade2] at (-3,\yD){}; \node[fade1] at (-2,\yD){};
  \node[scaf] at (-1,\yD){$X$};
  \node[scaf] at (0,\yD){$\sigma_1$}; \node[scaf] at (1,\yD){$q_1$};
  \node[codecell] at (2,\yD){$\sigma'_1$}; \node[codecell] at (3,\yD){$q'_1$}; \node[codecell] at (4,\yD){$d_1$};
  \node[scaf] at (5,\yD){$\sigma_2$}; \node[scaf] at (6,\yD){$q_2$};
  \node[codecell] at (7,\yD){$\sigma'_2$}; \node[codecell] at (8,\yD){$q'_2$}; \node[codecell] at (9,\yD){$d_2$};
  \node[scaf] at (10,\yD){$X$};
  \node[fade1] at (11,\yD){}; \node[fade2] at (12,\yD){};
  \node[lbl,anchor=west] at (12.8,\yD){description tape};
  \draw[decorate, decoration={brace, amplitude=4pt}, blue!55!black] (1.6,\yD+0.4) -- (4.4,\yD+0.4);
  \node[font=\scriptsize, blue!55!black] at (3,\yD+0.72) {$\ell(\sigma_1,q_1)$};
  \draw[decorate, decoration={brace, amplitude=4pt}, blue!55!black] (6.6,\yD+0.4) -- (9.4,\yD+0.4);
  \node[font=\scriptsize, blue!55!black] at (8,\yD+0.72) {$\ell(\sigma_2,q_2)$};
  \def\yS{3.2}
  \node[fade2] at (-3,\yS){}; \node[fade1] at (-2,\yS){};
  \node[scaf] at (-1,\yS){$X$}; \node[scaf] at (0,\yS){$X$}; \node[scaf] at (1,\yS){$X$};
  \node[fade1] at (2,\yS){}; \node[fade2] at (3,\yS){};
  \node[lbl,anchor=west] at (3.8,\yS){staging tape};
  \def\yQ{1.9}
  \node[fade2] at (-2,\yQ){}; \node[fade1] at (-1,\yQ){};
  \node[cfgcell] at (0,\yQ){$q$};
  \node[fade1] at (1,\yQ){}; \node[fade2] at (2,\yQ){};
  \node[lbl,anchor=west] at (2.8,\yQ){state tape};
  \def\yW{0.6}
  \node[gfade2] at (-3,\yW){$y_{-3}$}; \node[gfade1] at (-2,\yW){$y_{-2}$};
  \node[cfgcell] at (-1,\yW){$y_{-1}$}; \node[cfgcell] at (0,\yW){$y_0$}; \node[cfgcell] at (1,\yW){$y_1$};
  \node[gfade1] at (2,\yW){$y_2$}; \node[gfade2] at (3,\yW){$y_3$};
  \node[lbl,anchor=west] at (3.8,\yW){working tape};
  \def\yU{-0.7}
  \node[draw, circle, fill=black!8, font=\scriptsize, minimum size=0.7cm, inner sep=1pt] at (0,\yU) {$q_{\mathcal U}$};
  \node[lbl,anchor=west] at (2.0,\yU){UTM state};
  \node[red!65!black] at (0,\yD+0.44) {$\scriptstyle\blacktriangledown$};
  \node[red!65!black] at (0,\yS+0.44) {$\scriptstyle\blacktriangledown$};
  \node[red!65!black] at (0,\yQ+0.44) {$\scriptstyle\blacktriangledown$};
  \node[red!65!black] at (0,\yW+0.44) {$\scriptstyle\blacktriangledown$};
  \draw[decorate, decoration={brace, amplitude=5pt, mirror}] (-3.7,\yD+0.4) -- (-3.7,\yS-0.4);
  \node[font=\scriptsize\itshape, rotate=90] at (-4.25,{(\yD+\yS)/2}){auxiliary tapes};
  \node[font=\normalsize] at (-8.4,{(\yD+\yW)/2})
    {$\bigl({\color{blue!60!black}[M]},\ {\color{green!50!black}\vec y},\ {\color{green!50!black}q}\bigr)$};
  \node[font=\large] at (-5.5,{(\yD+\yW)/2}) {$\longmapsto$};
\end{tikzpicture}%
}
\caption{The loading map $\langle -\,;\,-\rangle$ for the lookup pseudo-UTM (\cref{sec:lookup_utm}).  The figure implicitly defines its code layout $\ell$.  The code ${\color{blue!60!black}[M]}$ (blue) occupies the squares $\ell(\sigma_j, q_j)$ of the description tape, and the configuration $({\color{green!50!black}\vec y}, {\color{green!50!black}q})$ (green) the working and state tapes.  Every other square (grey) is fixed by $\mathcal U$, independent of the code and the configuration.  The red triangles mark the initial positions of the four UTM heads, and the circle is the initial UTM state $q_{\mathcal U}$, fixed by $\mathcal U$.}
\label{fig:loading}
\end{figure}

\subsection{Smooth relaxation of a pseudo-UTM}\label{sec:smooth_cycle}

The \emph{parameter space} is the set of noisy Turing machines, originally
defined in \cite{clift2021geometryprogramsynthesis},
\begin{equation}\label{eq:W}
  W := \prod_{(\sigma, q) \in \Sigma \times Q} \Delta\Sigma \times \Delta Q \times \Delta\{L, S, R\},
\end{equation}
indexed by transition tuples, with the classical codes $W^\code$ at its vertices.

\begin{definition}\label{def:noisy_configuration}
Write $(\Delta\Sigma)^{\mathbb Z, \blank}$ for $\Sigma^{\mathbb Z, \blank}$ (\cref{def:configuration}) with $\Sigma$ and $\blank$ replaced by $\Delta\Sigma$ and the point mass $e_\blank$.  A \emph{noisy configuration} of a Turing machine is a point of
\[
  \Delta\Cfg := (\Delta\Sigma)^{\mathbb Z, \blank} \times \Delta Q.
\]
For a pseudo-UTM $\mathcal U$ (\cref{def:pseudo_utm}) we define
\[
  \Delta\Cfg_{\mathcal U} := (\Delta\Sigma)^{\mathbb Z, \blank} \times \bigl(\Delta(Q \sqcup \{\blank\})\bigr)^{\mathbb Z, \blank} \times \bigl((\Delta\Sigma_{\mathrm{UTM}})^{\mathbb Z, \blank}\bigr)^{\mathrm{Aux}} \times \Delta Q_{\mathrm{UTM}}.
\]
The inclusions $\Sigma, Q, \{L, S, R\} \hookrightarrow \Sigma_{\mathrm{UTM}}$ and $Q \hookrightarrow Q \sqcup \{\blank\}$ induce inclusions of the corresponding simplices.  The \emph{noisy loading map}
\[
  \langle -\,;\,-\rangle \colon W \times \Delta\Cfg \lto \Delta\Cfg_{\mathcal U}
\]
sends $(w, \xi)$ to the noisy configuration in which, for each $(\sigma, q) \in \Sigma \times Q$, the three squares $\ell(\sigma, q)$ carry the three components of $w_{(\sigma, q)}$, read through these inclusions; the working tape carries the tape component of $\xi$, and the state tape its state component at index $0$ ($e_\blank$ elsewhere); and every other square, and the UTM state, carry the point mass at the value fixed by $\mathcal U$ in \cref{def:pseudo_utm}.
\end{definition}

The noisy loading map is injective, since the working and state tapes recover $\xi$ and the squares of $\operatorname{Im}(\ell)$ recover $w$.  On the vertices it restricts to the loading map of \cref{def:pseudo_utm}.

\begin{definition}\label{def:utm_relaxation}
A \emph{smooth relaxation} of $\step_{\mathcal U}$ is a smooth map
\[
  \Delta\step_{\mathcal U} \colon \Delta\Cfg_{\mathcal U} \lto \Delta\Cfg_{\mathcal U}
\]
restricting to $\step_{\mathcal U}$ on the configurations (the vertices of $\Delta\Cfg_{\mathcal U}$).
\end{definition}

Both $\Delta\Cfg$ and $\Delta\Cfg_{\mathcal U}$ are increasing unions of the finite-dimensional products obtained by fixing all but finitely many tape squares to the point mass $e_\blank$.  A map between them is \emph{smooth} if it carries each such product smoothly into another.

\begin{definition}\label{def:periodic_relaxation}
Let $\mathcal U$ be a pseudo-UTM with period $P$ (\cref{def:pseudo_utm}).  A smooth relaxation $\Delta\step_{\mathcal U}$ of its step (\cref{def:utm_relaxation}) is \emph{periodic} if one period returns each configuration $\langle w ; \xi\rangle$ to a configuration $\langle w ; \xi'\rangle$ with the same code $w$, the new working- and state-tape part $\xi'$ depending smoothly on $(w, \xi)$.  Since $\langle - ; - \rangle$ is injective, $\xi'$ is unique, defining the smooth \emph{cycle map} $F \colon W \times \Delta\Cfg \to \Delta\Cfg$, $F(w, \xi) := \xi'$, for which the following diagram commutes:
\begin{equation}\label{eq:cycle_map}
\begin{tikzcd}[column sep=huge, row sep=large]
  W \times \Delta\Cfg \arrow[r, "\langle -\,;\,-\rangle"] \arrow[d, "{(\pi_W,\, F)}"'] & \Delta\Cfg_{\mathcal U} \arrow[d, "(\Delta\step_{\mathcal U})^{P}"] \\
  W \times \Delta\Cfg \arrow[r, "\langle -\,;\,-\rangle"'] & \Delta\Cfg_{\mathcal U}
\end{tikzcd}
\end{equation}
Here $(\pi_W, F) \colon (w, \xi) \mapsto (w, F(w, \xi))$.
The period $P$ is part of the pseudo-UTM \eqref{eq:simulates}; periodicity
is a condition on the smooth relaxation.  Write $F_w := F(w, -)$ for the cycle
map at a fixed code.
\end{definition}

\begin{definition}\label{def:model_utm}
Fix a periodic smooth relaxation, with cycle map $F$.  For $x \in \Sigma^*$ write
$e_{\tau_x} := (e_{\tau_x(i)})_{i \in \mathbb Z} \in (\Delta\Sigma)^{\mathbb Z, \blank}$
for the point-mass tape of the initial configuration, and
$\pi_{\Delta Q} \colon \Delta\Cfg \lto \Delta Q$ for the projection.  For
$T \in \mathbb N$ define
\[
  \Delta\step^T \colon \Sigma^* \times W \lto \Delta Q,
  \qquad
  \Delta\step^T(x, w) := \pi_{\Delta Q}\, F_w^{\,T}\bigl(e_{\tau_x},\, e_{q_0}\bigr).
\]
\end{definition}

\begin{remark}\label{rem:classical_restriction}
At a code $[M]$ and a classical configuration the relaxed step is the classical step, so \eqref{eq:cycle_map} reduces to \eqref{eq:simulates} and $F_{[M]} = \step_{[M]}$.  Hence $\Delta\step^T$ (\cref{def:model_utm}) is a smooth relaxation of the classical execution $\step^T$, the diagram
\begin{equation}\label{eq:smooth_relax_diagram}
\begin{tikzcd}[column sep=huge]
  \Sigma^* \times W \arrow[r, "\Delta\step^T"] & \Delta Q \\
  \Sigma^* \times W^\code \arrow[u, hook] \arrow[r, "\step^T"'] & Q \arrow[u, hook]
\end{tikzcd}
\end{equation}
commuting, with vertical maps the canonical inclusions.  The precise smooth relaxation we have in mind is given in \cref{app:model_details}.
\end{remark}

\subsection{The noisy Turing machine truth-model-prior triple}\label{sec:bayes_setup}

\begin{definition}\label{def:stat_model}
A \emph{truth-model-prior triple} \cite{watanabe2009algebraic} is $(q, p, \varphi)$ in which $q(x, y)$ is a probability density on $\mathcal X \times \mathcal Y$ (the \emph{truth}), $\{p(y|x, w) : w \in W\}$ is a family of conditional densities indexed by a compact parameter space $W \subseteq \mathbb R^d$ (the \emph{model}), and $\varphi$ is a probability density on $W$ (the \emph{prior}).
\end{definition}

Such a triple is often given alongside a loss function $L(w)$, minimised at values of $w$ for which $p$ predicts $q$ well.  We instantiate this for the noisy Turing machine.  The resulting triple was first defined in \cite{murfet2025pas}.  The data is a finite alphabet $\Sigma$ and set of states $Q$, a finite input set $I \subseteq \Sigma^*$, an input distribution $q$ on $I$ with full support ($q(x) > 0$ for every $x \in I$), a target map $y \colon I \to Q$, and a number of time steps $T \in \mathbb N$.  The parameter space is $W$ \eqref{eq:W}.

The model of \cref{def:stat_model} is
\begin{equation}\label{eq:abstract_model}
  p(y|x, w) := \Delta\step^T(x, w)_y,
\end{equation}
the relaxed execution $\Delta\step^T$ of \cref{def:model_utm}; the truth is $q(x, y) := q(x)\,\one[y = y(x)]$, deterministic on the target; the prior is a choice of density $\varphi$ on $W$, fixed where it is used (\cref{def:susceptibility}).

For a Turing machine $M$ with code $[M] \in W^\code$ and $T \in \mathbb N$, we write $M(x) := \step^T(x, [M]) \in Q$ for the final state of $M$ after executing for $T$ steps with input $x$.
\begin{definition}\label{def:classical_solution}
Given the data $(I, y, T)$, consisting of $I \subseteq \Sigma^*$ a finite set of \emph{inputs}, $y : I \to Q$ a function called the \emph{target map}, and a number of time steps $T \in \mathbb N$, we call $M$ a \emph{classical solution} if $M(x) = y(x)$ for all $x \in I$.
\end{definition}

\begin{remark}
We do \emph{not} consider a $T$-independent meaning of \emph{classical solution}. The set of classical solutions typically varies both with $I$ and $T$, so where they are not explicitly mentioned, they should still be understood as fixed. For classical solutions $M$ to exist, $T$ must be chosen sufficiently large so that $M$ has enough time to distinguish all $x \in I$ (usually $T \geq |x|$).
\end{remark}

By commutativity of \eqref{eq:smooth_relax_diagram}, $M$ is a classical solution if and only if $\Delta\step^T(x, [M])_{y(x)} = 1$ for all $x \in I$.

Fix a classical solution $M$ and a constant $\mu \in (0, 1)$.  Following \cite{murfet2025pas}, we bound the probabilities away from zero. This makes the loss defined below finite and smooth on all of $W$, with classical solutions among its critical points (\cref{rem:mu_shift}).  Project the truth and the model to the binary simplex $\Delta\{\mathrm{correct}, \mathrm{incorrect}\}$, where the truth is the point mass at $\mathrm{correct}$ and the model predicts $\mathrm{correct}$ with probability $p(y(x)|x, w)$, and mix each with the uniform distribution, with weight $\mu$:
\[
  q_\mu^{\mathbb Z_2}(\mathrm{correct} | x) := 1 - \tfrac{\mu}{2},
  \qquad
  p_\mu^{\mathbb Z_2}(\mathrm{correct} | x, w) := (1 - \mu)\, p(y(x) | x, w) + \tfrac{\mu}{2}.
\]
The shifted per-input cross-entropy and population loss are
\begin{equation}\label{eq:shifted_losses}
  \ell_x^\mu(w) := -\sum_z q_\mu^{\mathbb Z_2}(z | x)\, \log p_\mu^{\mathbb Z_2}(z | x, w), \qquad L^\mu(w) := \E_{x \sim q}[\ell_x^\mu(w)],
\end{equation}
and in the remainder of the paper we drop the $\mu$-superscript, treating $\mu$ as a fixed small positive parameter.  For the experiments, sampling at small $\mu$ is indistinguishable from sampling at $\mu = 0$: as $\mu \to 0$ the Gibbs distribution $p_\beta \propto e^{-\beta L}\varphi$ of \cref{sec:experiments} converges to its $\mu = 0$ counterpart, and the susceptibilities we estimate converge with it (\cref{prop:mu_zero_limit}).  The experiments in fact take $\mu = 0$ (\cref{rem:mu_shift}, \cref{sec:empirical_sus}).

\begin{definition}\label{def:error_probability}
The \emph{error probability} is
\begin{equation}\label{eq:hx}
\begin{aligned}
  h_x(w) &:= p\bigl(y \neq y(x)|x, w\bigr) \\
         &= 1 - p\bigl(y(x)|x, w\bigr).
\end{aligned}
\end{equation}
\end{definition}

The losses are expressed through it: $p_\mu^{\mathbb Z_2}(\mathrm{correct}|x, w) = 1 - \mu/2 - (1 - \mu)\,h_x(w)$, and the per-input loss depends on $w$ only through $h_x$.

\begin{remark}\label{rem:mu_shift}
Explicitly, $\ell_x = \kappa_\mu \circ h_x$, where
\[
  \kappa_\mu(h) := -\bigl(1 - \tfrac{\mu}{2}\bigr)\log\bigl(1 - \tfrac{\mu}{2} - (1 - \mu)\,h\bigr) - \tfrac{\mu}{2}\log\bigl(\tfrac{\mu}{2} + (1 - \mu)\,h\bigr)
\]
is smooth on $[0, 1]$ with $\kappa_\mu'(0) = 0$ and $\kappa_\mu''(0) > 0$.  Hence
\[
  \ell_x = \kappa_\mu(0) + \tfrac12 \kappa_\mu''(0)\, h_x^2 + O(h_x^3)
\]
near $h_x = 0$.  In particular $\nabla \ell_x = \kappa_\mu'(h_x)\, \nabla h_x$ vanishes wherever $h_x = 0$, so every parameter at which the model is exactly correct on every input, in particular the code $[M]$ of a classical solution, is a critical point of $L$, and near such a parameter the loss agrees with the squared error to leading order. Indeed $L - L([M])$ and $H$ are \emph{comparable} near $[M]$, each bounded above and below by a constant multiple of the other \cite[Lemma~2.11 and Appendix~B]{murfet2025pas}.  Without the $\mu$-shift ($\mu = 0$), $\kappa_0(h) = -\log(1 - h)$ has $\kappa_0'(0) = 1$, and neither smoothness nor criticality holds. However, neither property is used by the theorems of \cref{sec:theorems}, which is why the experiments can take $\mu = 0$ (\cref{sec:empirical_sus}, \cref{rem:log_loss_mu_zero}).
\end{remark}

\begin{definition}\label{def:squared_error_loss}
The \emph{squared-error loss} is
\begin{align*}
  H(w) &:= \sum_{x \in I} q(x)\, h_x(w)^2 \\
       &= \E_{x \sim q}\bigl[h_x(w)^2\bigr],
\end{align*}
with zero set $W_0 := \{w \in W : H(w) = 0\}$, the exact solutions.
\end{definition}

Since $M$ is a classical solution, $h_x([M]) = 0$ for all $x$, so $H([M]) = 0$ and $[M] \in W_0$.  For the smooth relaxation of \cref{app:model_details} the loss $H$ is a polynomial, and the germ $(H, [M])$ is the algebraic singularity of \cite{murfet2025pas}.  We use $H$ as the potential for the theory of \cref{sec:theorems}. The experiments (\cref{sec:empirical_sus}) use the log-loss instead, to which both theorems extend for every $\mu \ge 0$ (\cref{app:reparam}).  Where we work with $H$, the per-input squared error $h_x^2$ and its mean $H$ play the roles of $\ell_x$ and $L$.

\subsection{Components, observables, and susceptibilities}

\begin{definition}\label{def:component}
The set of \emph{components} is $\mathcal C := \Sigma \times Q \times \{\Sigma, Q, D\}$, indexing triples $C = (\sigma, q, f)$ where $f$ labels the simplex factor (symbol, state, or direction).  The parameter space decomposes as
\[
  W = \prod_{C \in \mathcal C} W_C, \qquad
  W_{(\sigma, q, \Sigma)} = \Delta\Sigma, \quad
  W_{(\sigma, q, Q)} = \Delta Q, \quad
  W_{(\sigma, q, D)} = \Delta\{L, S, R\}.
\]
We write $w_{(\sigma, q, f)}$ for the corresponding entry of a code $w \in W$. The three components $(\sigma, q, \Sigma)$, $(\sigma, q, Q)$, $(\sigma, q, D)$ are the entries of the transition tuple $(\sigma, q)$.  For $C \in \mathcal C$, write $W = U_C \times W_C$ with $U_C := \prod_{C' \neq C} W_{C'}$, and $w = (u, v)$ with $[M] = (u^*, v^*)$, so that $u^*$ is the projection of $[M]$ onto $U_C$.  The \emph{slice through $[M]$ along $C$} is the locus $\{w \in W : u = u^*\}$.
\end{definition}

\begin{definition}\label{def:phi_C}
The \emph{component observable} of $C \in \mathcal C$ is the distribution on $W$
\[
  \phi_C(w) := \delta(u - u^*) \bigl(H(w) - H([M])\bigr).
\]
\end{definition}

Since $H([M]) = 0$, $\phi_C(w) = \delta(u - u^*)\, H(w)$: the observable is supported on the slice through $[M]$ along $C$, and integrals against it are integrals over the slice,
\[
  \int_W \phi_C(w) g(w) \rho(w)\, dw = \int_{W_C} H(u^*, v) g(u^*, v) \rho(u^*, v)\, dv
\]
for a function $g$ and a density $\rho$ on $W$.

\begin{definition}\label{def:susceptibility}
Fix an inverse temperature $\beta > 0$ and a prior $\varphi$ on $W$.  Following \cite{baker2025studyingsmalllanguagemodels}, we define the \emph{Gibbs distribution}
\[
  p_{\beta}(w) := \frac{1}{Z_{\beta}}\exp\bigl\{-\beta H(w)\bigr\}\varphi(w), \qquad
  Z_{\beta} := \int_W \exp\bigl\{-\beta H(w)\bigr\}\varphi(w)\, dw.
\]
Fix $x \in I$ and perturb the input distribution towards $x$, $q_\varepsilon := (1 - \varepsilon)\,q + \varepsilon\,\delta_x$, giving the perturbed loss $H^\varepsilon := \sum_{x'} q_\varepsilon(x')\, h_{x'}^2 = (1 - \varepsilon) H + \varepsilon\, h_x^2$ and the Gibbs distribution $p_{\beta}^\varepsilon$ built from $H^\varepsilon$.  For $C \in \mathcal C$, the \emph{per-input component susceptibility of $M$ at component $C$ and input $x$} is the linear response of the component observable to this perturbation,
\begin{equation}\label{eq:sus}
  \chi_x^C([M]) := \frac{1}{\beta}\left.\frac{\partial}{\partial \varepsilon}\,\E_{p_{\beta}^\varepsilon}\bigl[ \phi_C \bigr]\right|_{\varepsilon=0}.
\end{equation}
The \emph{susceptibility matrix} is $\chi := (\chi_x^C([M]))_{x \in I,\, C \in \mathcal C}$.
\end{definition}

\begin{remark}\label{rem:fdt}
By \cite[Lemma~1]{baker2025studyingsmalllanguagemodels}, the derivative equals a covariance under the unperturbed Gibbs distribution,
\begin{equation}\label{eq:fdt}
  \chi_x^C([M]) = -\Cov_{p_{\beta}}\bigl[\phi_C,\, h_x^2 - H\bigr],
\end{equation}
and this is the primary form we work with for both the proofs and experiments of this paper.
See \cite{elliott2026susceptibilities} for a systematic account of susceptibilities as linear response in Bayesian learning, and of the covariance form as the fluctuation--dissipation counterpart of the derivative.
\end{remark}

\Cref{app:absorbing_example} works out a minimal example in which susceptibilities detect an error direction to which the Hessian is blind.

\section{Structures in Turing machines}\label{sec:structures}

As emphasised in \cite{murfet2025pas}, we anticipate the ``internal structure'' of programs to reflect patterns of correlations between degeneracies of the Turing machine code.  To this end, in lieu of a general definition we continue working with explicit examples of such structure, introducing two further ones in this section: \emph{path separation partitions} and \emph{recoding symmetries}.

\subsection{Path separation partitions}\label{sec:path_separation}

Loosely speaking, a Turing machine may reach a point in its computation where the outcome of the decision has been made, but superfluous computation continues until eventual termination at the result made inevitable at that point.  A \emph{path separation partition} formalises the special case in which that point is the initial state itself: the states of $Q \setminus \{q_0\}$ are partitioned as $A \sqcup R$, and beyond $q_0$ an accepting run visits only states in $A$, bouncing superfluously between them before reaching the accepting state $\qacc$ once and for all, while a rejecting run does likewise in $R$ before reaching $\qrej$.  Accepting inputs never visit $R$, and rejecting inputs never visit $A$.

Throughout this subsection $Q$ contains distinguished accept and reject states $\qacc, \qrej$, and we write
\[
  I_{\acc} := \{x \in I : y(x) = \qacc\}, \qquad I_{\rej} := \{x \in I : y(x) = \qrej\}
\]
for the accept- and reject-inputs.

\begin{definition}\label{def:path_separation_partition}
A \emph{path separation partition} of $M$ is a partition
$Q \setminus \{q_0\} = A \sqcup R$ with $\qacc \in A$, $\qrej \in R$,
such that for every $x \in I_{\acc}$ the classical run of $M$ on $x$ visits only states in $\{q_0\} \cup A$, and for every $x \in I_{\rej}$ it visits only states in $\{q_0\} \cup R$.
We call $M$ \emph{path-separable} if it admits a path separation
partition.
\end{definition}

\begin{definition}\label{def:psv}
Let $q_t(x)$ be the state of $M$ at step $t$ on input $x$, and let $\mathrm{Wrong}(x) := R$ if $x \in I_{\acc}$ and $\mathrm{Wrong}(x) := A$ if $x \in I_{\rej}$.  The \emph{path separation violation} of $M$ at a partition $Q \setminus \{q_0\} = A \sqcup R$ with $\qacc \in A$, $\qrej \in R$ is the $q$-weighted fraction of intermediate steps at which the run lies in the wrong region,
\[
  \PSV(M;\, A, R) := \frac{1}{T-1} \sum_{x \in I} q(x) \sum_{t=1}^{T-1} \one\!\bigl[ q_t(x) \in \mathrm{Wrong}(x) \bigr].
\]
The \emph{minimal path separation violation} of $M$ is the minimum
\[
  \PSV_{\min}(M) := \min_{A \sqcup R} \PSV(M;\, A, R)
\]
over such partitions.
\end{definition}

\begin{proposition}\label{prop:psv_zero}
Let $M$ be a classical solution.  Then $A \sqcup R$ is a path separation partition of $M$ if and only if $\PSV(M;\, A, R) = 0$.  In particular, $M$ is path-separable if and only if $\PSV_{\min}(M) = 0$.
\end{proposition}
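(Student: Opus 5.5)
The plan is to reduce both sides of the equivalence to the same pointwise condition on runs: for every $x \in I$ and every $t$, the state $q_t(x)$ is not in $\mathrm{Wrong}(x)$. Throughout I assume $T \ge 2$, so that the normalising factor $1/(T-1)$ in \cref{def:psv} makes sense.

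The first step is to rewrite the definition of a path separation partition in terms of the wrong region. Since $Q = \{q_0\} \sqcup A \sqcup R$, for $x \in I_{\acc}$ the condition ``the run visits only states in $\{q_0\} \cup A$'' says exactly that $q_t(x) \notin R = \mathrm{Wrong}(x)$ for all $0 \le t \le T$. Symmetrically, for $x \in I_{\rej}$ it says $q_t(x) \notin A = \mathrm{Wrong}(x)$. So $A \sqcup R$ is a path separation partition if and only if
\[
  q_t(x) \notin \mathrm{Wrong}(x) \qquad \text{for all } x \in I \text{ and all } 0 \le t \le T.
\]

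Next I handle the mismatch in time ranges: the PSV sum only runs over $1 \le t \le T-1$, while the condition above includes $t = 0$ and $t = T$. These endpoint times never violate the condition, and this is where the classical-solution hypothesis is used.
\begin{itemize}
  \item At $t = 0$ the state is $q_0$, which lies in neither $A$ nor $R$.
  \item At $t = T$ the state is $M(x) = y(x)$, because $M$ is a classical solution. This equals $\qacc \in A$ for $x \in I_{\acc}$ and $\qrej \in R$ for $x \in I_{\rej}$, so in either case it lies outside $\mathrm{Wrong}(x)$.
\end{itemize}
Hence the path separation condition is equivalent to the same condition restricted to $1 \le t \le T-1$.

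Finally I compare this with $\PSV(M; A, R) = 0$. The PSV is a sum of nonnegative terms $q(x)\,\one[q_t(x) \in \mathrm{Wrong}(x)]$, and the input distribution has full support, so $q(x) > 0$ for every $x \in I$. The sum therefore vanishes if and only if every indicator vanishes, which is exactly the restricted condition. This gives the first claim.

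The ``in particular'' statement follows by taking the minimum over the finitely many admissible partitions. Since $\PSV \ge 0$, we have $\PSV_{\min}(M) = 0$ if and only if some admissible partition attains $0$, that is, if and only if $M$ admits a path separation partition.

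There is no real obstacle in this argument. The only care needed is the endpoint bookkeeping in the second step, which is the one place the classical-solution hypothesis enters; without it the final state could land in the wrong region unseen by the PSV sum.
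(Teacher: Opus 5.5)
Your proof is correct and is essentially the paper's argument: nonnegativity of the summands and full support of $q$ reduce $\PSV(M; A, R) = 0$ to the pointwise condition $q_t(x) \notin \mathrm{Wrong}(x)$ for $1 \le t \le T-1$, and the classical-solution hypothesis handles the endpoints $t = 0$ and $t = T$. Your explicit treatment of the ``in particular'' clause and of the tacit assumption $T \ge 2$ are small additions that the paper leaves implicit.
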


\begin{proof}
The summands of $\PSV$ are non-negative and the input distribution has full support, so $\PSV(M; A, R) = 0$ if and only if $q_t(x) \notin \mathrm{Wrong}(x)$ for every $x \in I$ and every $1 \le t \le T - 1$.  Since $Q \setminus \{q_0\} = A \sqcup R$, a state is not in $\mathrm{Wrong}(x)$ if and only if it is $q_0$ or lies in the correct set $A, R$ depending on whether $x \in I_{\acc}$ or $x \in I_{\rej}$.  For $t = 0, T$: $q_0(x) = q_0$, and $q_T(x) = M(x) = y(x)$ because $M$ is a classical solution, which lies in $A$ for $x \in I_{\acc}$ and in $R$ for $x \in I_{\rej}$.  Hence $\PSV = 0$ holds if and only if every state visited by the run on $x$ lies in $\{q_0\} \cup A$ (accepting $x$) resp.\ $\{q_0\} \cup R$ (rejecting $x$), which is \cref{def:path_separation_partition}.
\end{proof}

\subsection{Recoding symmetries}\label{sec:recoding_symmetries}

In \cite{murfet2025pas} we considered the language $\mathcal{L}_{\texttt{A}}$ of strings containing an $\texttt{A}$ from the alphabet $\{\texttt{A}, \texttt{B}\}$.  A Turing machine $M_{\texttt{A}}$ for this language is only superficially different from a Turing machine $M_{\texttt{0}}$ for the language of strings containing a $\texttt{0}$ from the alphabet $\{\texttt{0}, \texttt{1}\}$.  These isomorphic machines can be placed in parallel to construct a machine $M_{\texttt{A}, \texttt{0}}$ deciding the language $\mathcal{L}_{\texttt{A}, \texttt{0}}$ of strings from $\{\texttt{A}, \texttt{B}\}^* \cup \{\texttt{0}, \texttt{1}\}^*$ containing either an $\texttt{A}$ or a $\texttt{0}$, by introducing a controller which first determines whether the input lies in $\{\texttt{A}, \texttt{B}\}^*$ or $\{\texttt{0}, \texttt{1}\}^*$ and then runs $M_{\texttt{A}}$ or $M_{\texttt{0}}$ accordingly.  Such isomorphic subroutines are the program structure we consider in this section, and we present the described $M_{\texttt{A}, \texttt{0}}$ machine as $M_3$ in \cref{tab:examples} and depict the symmetry in \cref{fig:recoding_action}.

\begin{definition}\label{def:recoding}
Given an alphabet $\Sigma$, set of states $Q$ and a set of inputs $I \subset \Sigma^\ast$, a \emph{recoding} of the data ($\Sigma$, $Q$, $I$) is a pair $g = (a, s)$ of bijections:
\begin{itemize}
    \item an \emph{alphabet map} $a: \Sigma \to \Sigma$ fixing the blank, $a(\blank) = \blank$, whose symbolwise extension $\Sigma^* \to \Sigma^*$, $\sigma_0 \dots \sigma_{n-1} \mapsto a(\sigma_0) \dots a(\sigma_{n-1})$, restricts to a map $I \to I$
    \item a state map $s\colon Q \to Q$ fixing $q_0, \qacc, \qrej$.
\end{itemize}
We typically infer the data $(\Sigma, Q, I)$ from context.  The condition that $I$ is closed under $a$ is what lets a recoding act on inputs, and the reindexing $x \mapsto g \cdot x$ in \cref{thm:sus_symmetry} depends on it.
\end{definition}

\begin{definition}\label{def:recoding_action}
The set of recodings form a subgroup of $\mathrm{Sym}(\Sigma) \times \mathrm{Sym}(Q)$, which acts on $W^\code$ by conjugation,
\[
  g \cdot [M] := (a \times s \times \id) \circ [M] \circ (a \times s)^{-1}.
\]
\end{definition}

\begin{definition}\label{def:recoding_symmetry}
The pair $g$ is a \emph{recoding symmetry} of $M$ (briefly, a \emph{symmetry} of $M$) if it fixes the code, $g \cdot [M] = [M]$. This says that the diagram
\[
  \begin{tikzcd}[column sep=large, row sep=large]
    \Sigma \times Q \arrow[r, "a \times s"] \arrow[d, "{[M]}"'] & \Sigma \times Q \arrow[d, "{[M]}"] \\
    \Sigma \times Q \times D \arrow[r, "a \times s \times \id"'] & \Sigma \times Q \times D
  \end{tikzcd}
\]
commutes.  The symmetries of $M$ form a group under composition, denoted
$\operatorname{Aut}(M)$.
\end{definition}

\begin{definition}\label{def:asymmetry}
Let $g = (a, s)$ be a recoding (\cref{def:recoding}), writing $a$ also for its symbolwise extension $a\colon I \lto I$.  The \emph{asymmetry} of $M$ at $(a, s)$ is
\[
  \asym(M;\, a, s) := \frac{1}{T-1} \sum_{x \in I} q(x) \sum_{t=1}^{T-1} \one\!\bigl[ q_t(a(x)) \neq s(q_t(x)) \bigr],
\]
where $q_t(x)$ is the state of $M$ at step $t$ on input $x$ (\cref{def:psv}).  We work with $\asym(M;\, a, s)$ at a fixed pair $(a, s)$ rather than minimising over $s$.
\end{definition}

\begin{proposition}\label{prop:asym_zero}
If $(a, s)$ is a symmetry of $M$, then $$\asym(M;\, a, s) = 0.$$
\end{proposition}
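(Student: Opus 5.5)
The plan is to show that a symmetry $g=(a,s)$ intertwines the classical configuration map, $\step_{[M]}\circ \tilde g = \tilde g\circ \step_{[M]}$. Here $\tilde g\colon \Cfg\to\Cfg$ acts on a configuration $(\tau,q)$ by $(a\circ\tau,\, s(q))$, applying $a$ pointwise to the tape and $s$ to the state. Iterating this from the initial configuration will give $q_t(a(x)) = s(q_t(x))$ for every $t$, so every indicator in the definition of $\asym$ vanishes.

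First I check the one-step intertwining. Take $(\tau,q)$ and put $(\sigma',q',d) := [M](\tau(0),q)$. The symmetry condition $g\cdot[M]=[M]$ is the commuting square of \cref{def:recoding_symmetry}, which reads
\[
[M](a(\sigma),s(q)) = (a(\sigma'),\, s(q'),\, d) \quad\text{whenever } [M](\sigma,q)=(\sigma',q',d).
\]
Applying $\step_{[M]}$ to $\tilde g(\tau,q)$, the head reads $a(\tau(0))$ in state $s(q)$. The machine therefore writes $a(\sigma')$, moves to $s(q')$, and moves in the same direction $d$. Overwriting the head square commutes with applying $a$ pointwise, since $\widetilde{a\circ\tau} = a\circ\widetilde\tau$ with the new head entry $a(\sigma')$. $\mathrm{Shift}_{s_d}$ also commutes with pointwise application of $a$. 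Finally $a(\blank)=\blank$ ensures that $a\circ\tau$ still lies in $\Sigma^{\mathbb Z,\blank}$. Together these give $\step_{[M]}(\tilde g(\tau,q)) = \tilde g(\step_{[M]}(\tau,q))$.

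Next I handle the initial configurations. By definition $\tau_{a(x)} = a\circ\tau_x$, again using $a(\blank)=\blank$ for the squares outside the input. Since $s(q_0)=q_0$, this gives $(\tau_{a(x)},q_0) = \tilde g(\tau_x,q_0)$. Because $I$ is closed under $a$ (\cref{def:recoding}), $a(x)\in I$, so $q_t(a(x))$ is defined. Induction on $t$ then gives
\[
\step_{[M]}^{t}(\tau_{a(x)},q_0) = \tilde g\,\step_{[M]}^{t}(\tau_x,q_0).
\]
Projecting to the state yields $q_t(a(x)) = s(q_t(x))$ for all $t$, and in particular for $1\le t\le T-1$. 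Hence $\asym(M;a,s)=0$.

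No step is a real obstacle. The only care needed is in the bookkeeping of the one-step intertwining: the blank must be fixed by $a$, the direction $d$ must be left unchanged by $g$, and the head-relative shift must commute with pointwise recoding.
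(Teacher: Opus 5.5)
Your proof is correct and is the same argument as the paper's, whose proof reads only ``By induction on $T$.'' You supply the details it omits: the one-step intertwining $\step_{[M]}\circ\tilde g=\tilde g\circ\step_{[M]}$ on full configurations (needed because a general machine writes and moves), and the identification $(\tau_{a(x)},q_0)=\tilde g(\tau_x,q_0)$ from $a(\blank)=\blank$ and $s(q_0)=q_0$.
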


\begin{proof}
By induction on $T$.
\end{proof}

\begin{corollary}\label{cor:target_compat}
If $M$ is a classical solution and $(a, s)$ is a symmetry of $M$, then $y(a(x)) = s(y(x))$ for all $x \in I$.
\end{corollary}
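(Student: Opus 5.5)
The plan is to compare the run of $M$ on $a(x)$ with the run on $x$ using the equivariance that \cref{prop:asym_zero} already gives. That proof establishes, by induction on $t$, the stronger configuration-level statement that the configuration of $M$ at step $t$ on input $a(x)$ is the image of its configuration on $x$ under $a$ applied symbolwise to the tape and $s$ applied to the state. We would first make this explicit. Write $\step_{[M]}^t(\tau_x, q_0) = (\tau, q)$. Since $a$ extends symbolwise and fixes the blank, the initial configurations satisfy $\tau_{a(x)} = a\circ\tau_x$, and $s(q_0) = q_0$, which is the base case.

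For the inductive step, suppose the configuration at time $t$ on $a(x)$ is $(a\circ\tau, s(q))$, and let $(\sigma', q', d) = [M](\tau(0), q)$. The symmetry identity $[M]\circ(a\times s) = (a\times s\times\id)\circ[M]$ from \cref{def:recoding_symmetry} gives
\[
[M](a(\tau(0)), s(q)) = (a(\sigma'), s(q'), d).
\]
Overwriting the head square and shifting by $s_d$ both commute with applying $a$ pointwise, so the next configuration on $a(x)$ is the image of the next configuration on $x$. This closes the induction: $\step^t(a(x), [M]) = s(\step^t(x, [M]))$ for every $t$.

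Taking $t = T$ gives $M(a(x)) = s(M(x))$. Because $a$ maps $I$ into $I$ (\cref{def:recoding}), $a(x)\in I$, so $M$ being a classical solution applies to both $x$ and $a(x)$:
\[
y(a(x)) = M(a(x)) = s(M(x)) = s(y(x)).
\]

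The only step needing care is checking that the shift and overwrite operations are equivariant under symbolwise $a$. Here it matters that the direction $d$ is left unchanged by the recoding and that $a(\blank) = \blank$, so the finitely-supported tapes of $\Sigma^{\mathbb Z,\blank}$ are preserved. Beyond that the argument is routine bookkeeping.
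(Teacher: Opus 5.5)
Your proof is correct and follows the paper's route: the paper likewise takes the induction from \cref{prop:asym_zero} at the halting step, $q_T(a(x)) = s(q_T(x))$, and then applies the classical-solution property to both $x$ and $a(x)$. You spell that induction out at the level of full configurations, which is what is needed since the tape must be tracked along with the state, and you explicitly note that $a(x) \in I$; the paper leaves both points implicit.
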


\begin{proof}
The induction of \cref{prop:asym_zero} holds at the halting step, $q_T(a(x)) = s(q_T(x))$, and since $M$ is a classical solution $q_T(x) = M(x) = y(x)$ and $q_T(a(x)) = y(a(x))$.
\end{proof}

\subsection{Examples}\label{sec:five_machines}

The framework of \cref{sec:path_separation,sec:recoding_symmetries} applies to general Turing machines.  In this subsection we present some examples in a particular family of $5$-state DFAs, all deciding the same finite-alphabet language.

\begin{definition}\label{def:DFA}
A \emph{deterministic finite automaton} ($\DFA$) is a Turing machine $M$ such that all transitions $[M](\sigma, q) = (\sigma', q', d)$ satisfy $\sigma' = \sigma$ (it is ``read-only'') and $d = R$ (it always moves right). Thus the data of a DFA $M$ is determined by its state-function $[M]_Q : \Sigma \times Q \to Q$.
\end{definition}

\begin{remark}
Since the tape is read-only and the head sees each symbol in timestep order, the sequence of configurations $(\tau_i, q_i) \in \Sigma^{\mathbb Z, \blank} \times Q$ encountered by executing a DFA $M$ on input $x = x_0 \dots x_{n-1}$ has no $M$-dependence in the tape configuration $\tau_i$, and is uniquely described by the recurrence $q_i := [M]_Q(x_{i-1}, q_{i-1})$ (with $q_0$ the initial state).
\end{remark}

\begin{definition}\label{def:LA0_language}
Fix $\Sigma := \{\blank, \texttt{A}, \texttt{B}, \texttt{0}, \texttt{1}\}$.  The \emph{language} $\mathcal L_{\texttt{A}, \texttt{0}} \subseteq \{\texttt{A}, \texttt{B}\}^* \cup \{\texttt{0}, \texttt{1}\}^*$ is
\[
  \mathcal L_{\texttt{A}, \texttt{0}} := \bigl\{\, x \in \{\texttt{A}, \texttt{B}\}^* \cup \{\texttt{0}, \texttt{1}\}^* \,:\, x \text{ contains an } \texttt{A} \text{ or a } \texttt{0}\,\bigr\}.
\]
\end{definition}

\begin{definition}\label{def:LA0_decider}
Fix $Q := \{q_0, s_1, s_2, \qacc, \qrej\}$. An \emph{$\mathcal L_{\texttt{A}, \texttt{0}}$-DFA} is a DFA with alphabet $\Sigma$ and states $Q$ that \emph{classically decides} $\mathcal L_{\texttt{A}, \texttt{0}}$: its run on an input $x \in \{\texttt{A}, \texttt{B}\}^* \cup \{\texttt{0}, \texttt{1}\}^*$ eventually reaches the state $y(x)$, where the target map $y$ encodes the indicator function
\[
  y(x) := \begin{cases} \qacc, & x \in \mathcal L_{\texttt{A}, \texttt{0}}, \\ \qrej, & x \notin \mathcal L_{\texttt{A}, \texttt{0}}. \end{cases}
\]
\end{definition}

\begin{definition}\label{def:LA0_symmetry_data}
The \emph{alphabet involution} $\theta: \Sigma \to \Sigma$ is $\theta(\texttt{A}) = \texttt{0}, \theta(\texttt{B}) = \texttt{1}$.  We consider two recodings (\cref{def:recoding}), both with alphabet map $\theta$:
\begin{itemize}[topsep=2pt,itemsep=2pt]
  \item $(\theta, \id_Q)$, with the identity state map;
  \item $(\theta, s_1 \leftrightarrow s_2)$, with the \emph{subroutine swap}.
\end{itemize}
\end{definition}

\begin{remark}\label{rem:klein_group}
The two recodings $(\theta, s_1 \leftrightarrow s_2)$ and $(\theta, \id_Q)$ generate a subgroup of $\mathrm{Sym}(\Sigma) \times \mathrm{Sym}(Q)$ isomorphic to the Klein four-group $\mathbb Z/2 \times \mathbb Z/2$, the other two recodings being the identity $(\id_\Sigma, \id_Q)$ and the product $(\id_\Sigma, s_1 \leftrightarrow s_2)$. We focus on the generators in the experiments section (\cref{sec:experiments}), as they fix non-trivial subsets of the set of solution DFAs we consider, whereas the product $(\id_\Sigma, s_1 \leftrightarrow s_2)$ fixes only $M_5$.
\end{remark}

\begin{definition}\label{def:alphabet_partition}
The \emph{alphabet partition} for $\mathcal L_{\texttt{A},\texttt{0}}$ is $\Sigma \setminus \{\blank\} = \Sigma_1 \sqcup \Sigma_2$ with $\Sigma_1 := \{\texttt{A}, \texttt{B}\}$ and $\Sigma_2 := \{\texttt{0}, \texttt{1}\}$, exchanged by the alphabet involution $\theta$.  It induces a partition of the inputs into \emph{input classes} $I_i := I \cap \Sigma_i^*$ (so $I = I_1 \sqcup I_2$, exchanged by $\theta$) and of the components into \emph{component classes}
\[
  \mathcal C_i := \{(\sigma, q, f) \in \mathcal C : \sigma \in \Sigma_i\}, \qquad i = 1, 2, \qquad
  \mathcal C_\blank := \{(\sigma, q, f) \in \mathcal C : \sigma = \blank\},
\]
exchanged ($\mathcal C_1 \leftrightarrow \mathcal C_2$) and preserved ($\mathcal C_\blank$) by the action of either recoding of \cref{def:LA0_symmetry_data}.
\end{definition}

\cref{tab:examples} presents five examples with varying path separation partition and recoding symmetry.

\begin{table}[p]
\centering
\setlength{\tabcolsep}{5pt}
\renewcommand{\arraystretch}{1.25}
\begin{tabular*}{\textwidth}{@{\extracolsep{\fill}}>{\raggedright\arraybackslash}m{1.8cm} >{\centering\arraybackslash}m{6.4cm} >{\raggedright\arraybackslash\small}m{5.3cm}@{}}
\toprule
 & diagram & measurements \\
\midrule
$M_1$ &
\resizebox{\linewidth}{!}{%
\begin{tikzpicture}[
    >=Stealth, thick,
    state/.style={circle, draw, thick, minimum size=1.1cm, inner sep=0pt},
    accept/.style={state},
    dispatch/.style={->, thick},
    trans/.style={->, thick},
    selfloop/.style={->, thick},
    every node/.append style={font=\small},
  ]
  \node[state] (q0) at (0,0) {$q_0$};
  \node[state] (s1) at (3.8,1.8) {$s_1$};
  \node[state]  (s2) at (3.8,-1.8) {$s_2$};
  \node[accept, double] (qa) at (8.5,1.2) {$\qacc$};
  \node[accept] (qr) at (8.5,-1.2) {$\qrej$};
  \draw[->, thick] (-1.2,0) -- (q0);
  \draw[selfloop] (q0) to[out=145,in=105,looseness=6] node[above left=-2pt] {$\texttt{B},\texttt{1}$} (q0);
  \draw[dispatch] (q0) -- node[above left=-2pt] {$\texttt{A},\texttt{0}$} (s1);
  \draw[dispatch] (q0) -- node[below left=-2pt] {$\blank$} (s2);
  \draw[selfloop] (s1) to[out=120,in=60,looseness=5] node[above] {$\texttt{A},\texttt{B},\texttt{0},\texttt{1}$} (s1);
  \draw[trans] (s1) -- node[above=-1pt] {$\blank$} (qa);
  \draw[selfloop] (s2) to[out=240,in=300,looseness=5] node[below] {$\texttt{A},\texttt{B},\texttt{0},\texttt{1}$} (s2);
  \draw[trans] (s2) -- node[below=-1pt] {$\blank$} (qr);
\end{tikzpicture}} &
$\PSV(M_1;\, \{\qacc,s_1\},\{\qrej,s_2\})=0$;\ $\asym(M_1;\, \theta,\, \id_Q)=0$ \\
\midrule
$M_2$ &
\resizebox{\linewidth}{!}{%
\begin{tikzpicture}[
    >=Stealth, thick,
    state/.style={circle, draw, thick, minimum size=1.1cm, inner sep=0pt},
    accept/.style={state},
    dispatch/.style={->, thick},
    trans/.style={->, thick},
    selfloop/.style={->, thick},
    every node/.append style={font=\small},
  ]
  \node[state] (q0) at (0,0) {$q_0$};
  \node[state] (s1) at (3.8,1.8) {$s_1$};
  \node[state]  (s2) at (3.8,-1.8) {$s_2$};
  \node[accept, double] (qa) at (8.5,1.2) {$\qacc$};
  \node[accept] (qr) at (8.5,-1.2) {$\qrej$};
  \draw[->, thick] (-1.2,0) -- (q0);
  \draw[selfloop] (q0) to[out=145,in=105,looseness=6] node[above left=-2pt] {$\texttt{B},\texttt{1}$} (q0);
  \draw[dispatch] (q0) -- node[above left=-2pt] {$\texttt{A},\texttt{0}$} (s1);
  \draw[trans] (q0) -- node[below=-1pt,pos=0.85] {$\blank$} (qr);
  \draw[selfloop] (s1) to[out=120,in=60,looseness=5] node[above] {$\texttt{A},\texttt{B},\texttt{0},\texttt{1}$} (s1);
  \draw[dispatch] (s1) -- node[left] {$\blank$} (s2);
  \draw[selfloop] (s2) to[out=240,in=300,looseness=5] node[below] {$\texttt{A},\texttt{B},\texttt{0},\texttt{1}$} (s2);
  \draw[trans] (s2) -- node[pos=0.85,above left=-3pt] {$\blank$} (qa);
\end{tikzpicture}} &
$\PSV(M_2;\, \{\qacc,s_1,s_2\},\{\qrej\})=0$;\ $\PSV(M_2;\, \{\qacc\},\{\qrej,s_1,s_2\})\neq 0$;\ $\asym(M_2;\, \theta,\, \id_Q)=0$ \\
\midrule
$M_3$ &
\resizebox{\linewidth}{!}{%
\begin{tikzpicture}[
    >=Stealth, thick,
    state/.style={circle, draw, thick, minimum size=1.1cm, inner sep=0pt},
    accept/.style={state},
    dispatch/.style={->, thick},
    trans/.style={->, thick},
    selfloop/.style={->, thick},
    every node/.append style={font=\small},
  ]
  \node[state] (q0) at (0,0) {$q_0$};
  \node[state] (s1) at (3.8,1.8) {$s_1$};
  \node[state]  (s2) at (3.8,-1.8) {$s_2$};
  \node[accept, double] (qa) at (8.5,1.2) {$\qacc$};
  \node[accept] (qr) at (8.5,-1.2) {$\qrej$};
  \draw[->, thick] (-1.2,0) -- (q0);
  \draw[dispatch] (q0) -- node[above left=-2pt] {$\texttt{B}$} (s1);
  \draw[dispatch] (q0) -- node[below left=-2pt] {$\texttt{1}$} (s2);
  \draw[trans] (q0) -- node[above=-1pt,pos=0.3] {$\texttt{A},\texttt{0}$} (qa);
  \draw[selfloop] (s1) to[out=120,in=60,looseness=5] node[above] {$\texttt{B},\texttt{1}$} (s1);
  \draw[trans] (s1) -- node[above=-1pt] {$\texttt{A}$} (qa);
  \draw[trans] (s1) -- node[pos=0.18,below left=-3pt] {$\blank$} (qr);
  \draw[selfloop] (s2) to[out=240,in=300,looseness=5] node[below] {$\texttt{B},\texttt{1}$} (s2);
  \draw[trans] (s2) -- node[below=-1pt] {$\blank$} (qr);
  \draw[trans] (s2) -- node[pos=0.18,above left=-3pt] {$\texttt{0}$} (qa);
\end{tikzpicture}} &
$\asym(M_3;\, \theta,\, s_1{\leftrightarrow}s_2)=0$;\ $\asym(M_3;\, \theta,\, \id_Q)\neq 0$ \\
\midrule
$M_4$ &
\resizebox{\linewidth}{!}{%
\begin{tikzpicture}[
    >=Stealth, thick,
    state/.style={circle, draw, thick, minimum size=1.1cm, inner sep=0pt},
    accept/.style={state},
    dispatch/.style={->, thick},
    trans/.style={->, thick},
    selfloop/.style={->, thick},
    every node/.append style={font=\small},
  ]
  \node[state] (q0) at (0,0) {$q_0$};
  \node[state] (s1) at (3.8,1.8) {$s_1$};
  \node[state]  (s2) at (3.8,-1.8) {$s_2$};
  \node[accept, double] (qa) at (8.5,1.2) {$\qacc$};
  \node[accept] (qr) at (8.5,-1.2) {$\qrej$};
  \draw[->, thick] (-1.2,0) -- (q0);
  \draw[dispatch] (q0) -- node[above left=-2pt] {$\texttt{B},\texttt{1}$} (s1);
  \draw[trans] (q0) -- node[above,pos=0.32] {$\texttt{A},\texttt{0}$} (qa);
  \draw[dispatch] (s1) to[bend right=15] node[left] {$\texttt{B},\texttt{1}$} (s2);
  \draw[dispatch] (s2) to[bend right=15] node[right] {$\texttt{B},\texttt{1}$} (s1);
  \draw[trans] (s1) -- node[above=-1pt] {$\texttt{A},\texttt{0}$} (qa);
  \draw[trans] (s1) -- node[pos=0.7,below=-1pt] {$\blank$} (qr);
  \draw[trans] (s2) -- node[below=-1pt] {$\blank$} (qr);
  \draw[trans] (s2) -- node[pos=0.42,above=-1pt] {$\texttt{A},\texttt{0}$} (qa);
\end{tikzpicture}} &
$\asym(M_4;\, \theta,\, \id_Q)=0$;\ $\asym(M_4;\, \theta,\, s_1{\leftrightarrow}s_2)\neq 0$ \\
\midrule
$M_5$ &
\resizebox{\linewidth}{!}{%
\begin{tikzpicture}[
    >=Stealth, thick,
    state/.style={circle, draw, thick, minimum size=1.1cm, inner sep=0pt},
    accept/.style={state},
    dispatch/.style={->, thick},
    trans/.style={->, thick},
    selfloop/.style={->, thick},
    every node/.append style={font=\small},
  ]
  \node[state] (q0) at (0,0) {$q_0$};
  \node[state] (s1) at (3.8,1.8) {$s_1$};
  \node[state]  (s2) at (3.8,-1.8) {$s_2$};
  \node[accept, double] (qa) at (8.5,1.2) {$\qacc$};
  \node[accept] (qr) at (8.5,-1.2) {$\qrej$};
  \draw[->, thick] (-1.2,0) -- (q0);
  \draw[selfloop] (q0) to[out=145,in=105,looseness=6] node[above left=-2pt] {$\texttt{B},\texttt{1}$} (q0);
  \draw[trans] (q0) -- node[above=-1pt,pos=0.5] {$\texttt{A},\texttt{0}$} (qa);
  \draw[trans] (q0) -- node[below=-1pt,pos=0.5] {$\blank$} (qr);
\end{tikzpicture}} &
$\PSV(M_5;\, A,R)=0$ for every partition;\newline $\asym(M_5;\, \theta,\id_Q)=0$;\newline $\asym(M_5;\, \theta,s_1 \leftrightarrow s_2)=0$ \\
\bottomrule
\end{tabular*}
\caption{The five running-example $\mathcal L_{\texttt{A},\texttt{0}}$-DFAs: state diagram with relevant $\PSV$ and $\asym$ values. Omitted transitions are understood to be a self-loop $(\sigma, q) \mapsto q$.}
\label{tab:examples}
\end{table}

\section{Theorems}\label{sec:theorems}

Each algorithmic structure of \cref{sec:structures} constrains the susceptibility matrix: path separation partitions bound the rank of the off-diagonal blocks (\cref{thm:rank_bound}), and a recoding symmetry makes the matrix equivariant (\cref{thm:sus_symmetry}).  The latter also endows the singularity germ of the loss with a group action (\cref{cor:germ_action}).  Each statement assumes a compatibility property of the smoothly-relaxed UTM (unmatched-component invariance, recoding equivariance), introduced where it is used.

\subsection{Rank bounds from path separation}
\label{sec:rank_bound}

Fix a partition $Q \setminus \{q_0\} = A \sqcup R$ with $\qacc \in A$, $\qrej \in R$, and partition the components by their state:
\[
  \mathcal C_A := \{(\sigma, q, f) \in \mathcal C : q \in A\}, \quad
  \mathcal C_R := \{(\sigma, q, f) \in \mathcal C : q \in R\}, \quad
  \mathcal C_0 := \{(\sigma, q, f) \in \mathcal C : q = q_0\}.
\]
Partition the susceptibility matrix into the corresponding blocks:
\[
  \chi = \kbordermatrix{
    & \mathcal C_A & \mathcal C_R & \mathcal C_0 \\
    I_{\acc} & X_{\acc, A} & X_{\acc, R} & X_{\acc, 0} \\
    I_{\rej} & X_{\rej, A} & X_{\rej, R} & X_{\rej, 0}
  }.
\]
The blocks $X_{\acc, R}$ and $X_{\rej, A}$, pairing each input class
with the opposite region, are the \emph{off-diagonal blocks}.
Let $\mathcal U$ be a pseudo-UTM with a periodic smooth relaxation and cycle map $F_w$.
For a classical code $[M] \in W^\code$, a component
$C \in \mathcal C$, and a point $v \in W_C$, write
\[
  [M]_{C \leftarrow v}
\]
for the noisy code obtained from $[M]$ by replacing only the $C$-coordinate by
$v$.  Abusing notation, for a deterministic simulated tape
$\tau \in \Sigma^{\mathbb Z,\blank}$ and state $q \in Q$ we write $\tau$ in
place of $(e_{\tau(i)})_{i \in \mathbb Z}$ and $q$ in place of $e_q$, so that
$(\tau, q)$ also denotes a vertex of $\Delta\Cfg$.

\begin{definition}\label{def:unmatched_component_invariance}
A periodic smooth relaxation of a pseudo-UTM $\mathcal U$, with cycle
map $F_w$, is \emph{unmatched-component invariant} if, for every
classical code $[M] \in W^\code$, component $C \in \mathcal C$
belonging to the tuple $(\sigma_C, q_C)$, every $v \in W_C$ and deterministic configuration $(\tau, q) \in \Cfg \subseteq \Delta \Cfg$,
\[
  (\tau(0), q) \neq (\sigma_C, q_C)
  \quad\Longrightarrow\quad
  F_{[M]_{C \leftarrow v}}(\tau, q)
  =
  F_{[M]}(\tau, q).
\]
\end{definition}
In words, when the (smoothly-relaxed) UTM is simulating the deterministic transition $[M](\tau(0), q) = (\sigma', q', d)$, noise can be injected arbitrarily into components outside of the squares for $\sigma', q', d,$ without affecting the result.

\begin{remark}\label{rem:mismatched_induction}
Let $(\tau_t,q_t)$ be the classical configuration of
$M$ on input $x$ after $t$ simulated steps.  Assume the cycle map $F_w$ is unmatched-component invariant.  It follows by
induction on $T$ that, for all $T \geq 1$, if
\[
  (\tau_t(0), q_t) \neq (\sigma_C, q_C)
  \qquad \text{for all } 0 \le t < T,
\]
then $F^{\,T}_{[M]_{C \leftarrow v}}(\tau_0, q_0) = (\tau_T, q_T)$.
\end{remark}

\begin{lemma}\label{lem:mismatched_slice}
Assume the model $p(y|x, w)$ is given with respect to a pseudo-UTM
whose cycle map is unmatched-component invariant (\cref{def:unmatched_component_invariance}), and let $M$ be a classical solution
with path separation partition $A \sqcup R$.  Let either
\begin{enumerate}[label=(\roman*),topsep=2pt,itemsep=1pt]
  \item $x \in I_{\acc}$ and $C \in \mathcal C_R$, or
  \item $x \in I_{\rej}$ and $C \in \mathcal C_A$.
\end{enumerate}
Then, writing $W = U_C \times W_C$ with $[M] = (u^*,v^*)$,
\[
  h_x(u^*,v) = 0
  \qquad
  \forall v \in W_C.
\]
\end{lemma}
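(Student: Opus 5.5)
The plan is to reduce the claim to \cref{rem:mismatched_induction}: show that the classical run of $M$ on $x$ never reads the tuple $(\sigma_C, q_C)$ to which $C$ belongs. Then perturbing the $C$-coordinate leaves the relaxed run exactly equal to the classical one, so the output is the point mass at $y(x)$. We treat case (i); case (ii) is symmetric, with the roles of $A$ and $R$ exchanged.

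First, let $x \in I_{\acc}$ and $C \in \mathcal C_R$, so that $q_C \in R$. Let $(\tau_t, q_t)$ be the classical configurations of $M$ on $x$ for $0 \le t \le T$. By \cref{def:path_separation_partition}, every visited state $q_t$ lies in $\{q_0\} \cup A$. Since $q_0 \notin R$ and $A \cap R = \emptyset$, we get $q_t \ne q_C$ for every $t$. Hence $(\tau_t(0), q_t) \neq (\sigma_C, q_C)$ for all $0 \le t < T$, whatever the head symbol is.

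Next, fix any $v \in W_C$. The point $(u^*, v)$ is exactly the noisy code $[M]_{C \leftarrow v}$. By \cref{rem:mismatched_induction}, which relies on unmatched-component invariance, we have
\[
F^{\,T}_{[M]_{C\leftarrow v}}(\tau_x, q_0) = (\tau_T, q_T),
\]
understood as a vertex of $\Delta\Cfg$. Projecting to $\Delta Q$ as in \cref{def:model_utm} gives $\Delta\step^T(x, (u^*, v)) = e_{q_T}$. Since $M$ is a classical solution, $q_T = M(x) = y(x)$. Therefore $p(y(x)\mid x, (u^*, v)) = 1$, and so $h_x(u^*, v) = 0$.

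The only delicate point is bookkeeping rather than a real obstacle. The initial noisy configuration $(e_{\tau_x}, e_{q_0})$ must match the deterministic configuration used in the remark, which holds by the abuse of notation fixed before \cref{def:unmatched_component_invariance}. The induction in the remark also needs every intermediate configuration to stay deterministic, so that unmatched-component invariance applies at each step. Both follow directly from the definitions. If the remark is not taken as given, we would prove it by a one-line induction: at step $t$, invariance gives $F_{[M]_{C\leftarrow v}}(\tau_t, q_t) = F_{[M]}(\tau_t, q_t)$, and by \cref{rem:classical_restriction} the right-hand side equals $\step_{[M]}(\tau_t, q_t) = (\tau_{t+1}, q_{t+1})$.
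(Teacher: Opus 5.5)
Your proposal is correct and follows the paper's proof essentially step for step. Path separation keeps every visited state out of the region containing $q_C$, so \cref{rem:mismatched_induction} gives $F^{\,T}_{[M]_{C\leftarrow v}}(\tau_0,q_0)=(\tau_T,q_T)$, and the classical-solution property then yields $p(y(x)|x,(u^*,v))=1$; your note on unfolding the induction via \cref{rem:classical_restriction} is an accurate supplement.
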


\begin{proof}
Consider case~(i); case~(ii) is symmetric.  Fix $x \in I_{\acc}$, $C \in \mathcal C_R$, and
$v \in W_C$, and note that $[M]_{C \leftarrow v} = (u^*,v)$.
Since $A \sqcup R$ is a path separation partition of $M$,
$q_t \not\in R$ for all $t$.  Writing $C = (\sigma_C, q_C, f_C)$, the condition $C \in \mathcal C_R$
means $q_C \in R$, and so by \cref{rem:mismatched_induction},
\[
  F_{[M]_{C \leftarrow v}}^{\,T}(\tau_0, q_0)
  =
  (\tau_T, q_T),
\]
and so $\Delta\step^T(x, [M]_{C \leftarrow v}) = q_T$.  Since
$M$ is a classical solution, $q_T = M(x) = y(x)$, so
\[
  p(y(x)|x,[M]_{C \leftarrow v})=1.
\]
By the definition of the error probability,
\[
  h_x(u^*,v)
  =
  h_x([M]_{C \leftarrow v})
  =
  1-p(y(x)|x,[M]_{C \leftarrow v})
  =
  0,
\]
proving the lemma.
\end{proof}

\begin{remark}\label{rem:srp_slice}
The smooth relaxations of the staged pseudo-UTM \cite[\S 5.1]{murfet2025pas} and of the lookup
pseudo-UTM of \cref{sec:lookup_utm} are both unmatched-component invariant
(\cref{lem:staged_lookup_unmatched_component_invariance}).
\end{remark}

\begin{theorem}\label{thm:rank_bound}
Assume the model $p(y|x, w)$ is given with respect to a pseudo-UTM
whose cycle map is unmatched-component invariant.  Let $M$ be a classical solution
with path separation partition $A \sqcup R$.  Then
\[
  \rank(X_{\acc, R}) \le 2, \qquad \rank(X_{\rej, A}) \le 2.
\]
\end{theorem}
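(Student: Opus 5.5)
We work with the covariance form of \cref{rem:fdt}: for $x \in I_{\acc}$ and $C \in \mathcal C_R$,
\[
  \chi_x^C([M]) = -\Cov_{p_\beta}\bigl[\phi_C,\, h_x^2 - H\bigr] = -\E_{p_\beta}\bigl[\phi_C\,(h_x^2 - H)\bigr] + \E_{p_\beta}[\phi_C]\,\E_{p_\beta}\bigl[h_x^2 - H\bigr].
\]
The goal is to show that each entry of $X_{\acc,R}$ has the form $a_C + b_C\, c_x$. Here $a_C, b_C$ depend only on the column, and $c_x$ depends only on the row. The block is then a sum of two rank-one matrices, $a\,\one^\top + c\, b^\top$ (viewing rows indexed by $x$), so its rank is at most $2$.

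First, expand the first term. Since $\phi_C$ is a distribution supported on the slice $\{u = u^*\}$,
\[
  \E_{p_\beta}\bigl[\phi_C\, g\bigr] = \frac{1}{Z_\beta}\int_{W_C} H(u^*,v)\, g(u^*,v)\, e^{-\beta H(u^*,v)}\varphi(u^*,v)\, dv
\]
for any function $g$. Take $g = h_x^2 - H$. By \cref{lem:mismatched_slice}(i), $h_x(u^*, v) = 0$ for every $v \in W_C$, so on the slice $g = -H(u^*,v)$, which is independent of $x$. Hence $\E_{p_\beta}[\phi_C(h_x^2 - H)] = -\E_{p_\beta}[\phi_C H] =: -a'_C$ depends only on $C$.

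Next, the second term. It factors as $b_C\, c_x$ with $b_C := \E_{p_\beta}[\phi_C]$ and $c_x := \E_{p_\beta}[h_x^2] - \E_{p_\beta}[H]$. Therefore $\chi_x^C([M]) = a'_C + b_C c_x$, giving $X_{\acc,R} = \one\, a'^\top + c\, b^\top$ and hence $\rank(X_{\acc,R}) \le 2$. The case $X_{\rej,A}$ is identical, using \cref{lem:mismatched_slice}(ii).

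The only delicate step is justifying the restriction of $h_x^2 - H$ to the slice. The integrand is a product of a smooth function with $\delta(u - u^*)$, so this amounts to checking that $h_x$ and $H$ are continuous on $W$, which holds since $\Delta\step^T$ is smooth. One should also confirm that $\Cov_{p_\beta}[\phi_C, \cdot\,]$ is interpreted with the same slice-integral convention as \cref{def:phi_C}. With that granted, the argument is essentially the vanishing of $h_x$ on the slice, supplied by \cref{lem:mismatched_slice}, together with a rank count.
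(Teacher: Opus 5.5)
Your proposal is correct and is essentially the paper's proof. Both arguments expand the covariance and use \cref{lem:mismatched_slice} to see that $\langle \phi_C\, h_x^2\rangle = 0$ (equivalently, that $h_x^2 - H = -H$ on the slice), concluding that each column of $X_{\acc,R}$ lies in the span of $(1)_{x}$ and $(\langle h_x^2\rangle)_{x}$. The only difference is grouping: the paper writes the entry as $\langle\phi_C\rangle\langle h_x^2\rangle + \Cov_{p_\beta}[\phi_C, H]$, whereas you write $\langle \phi_C H\rangle + \langle \phi_C\rangle\bigl(\langle h_x^2\rangle - \langle H\rangle\bigr)$, and these span the same two-dimensional space.
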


\begin{proof}
Consider $X_{\acc, R}$; the argument for $X_{\rej, A}$ is symmetric.  Write $\langle - \rangle$ for the expectation under the Gibbs distribution $p_{\beta}$.  For $(x, C) \in I_{\acc} \times \mathcal C_R$ we have:
\begin{align*}
  \chi_x^C([M])
  &= -\Cov_{p_{\beta}}\bigl[\phi_C,\, h_x^2 - H\bigr]
  && \text{(\cref{def:susceptibility})} \\
  &= \langle \phi_C \rangle\, \langle h_x^2 - H \rangle - \langle \phi_C\, (h_x^2 - H) \rangle \\
  &= \langle \phi_C \rangle\, \langle h_x^2 \rangle - \langle \phi_C \rangle\, \langle H \rangle - \langle \phi_C\, h_x^2 \rangle + \langle \phi_C\, H \rangle \\
  &= \langle \phi_C \rangle\, \langle h_x^2 \rangle - \langle \phi_C \rangle\, \langle H \rangle + \langle \phi_C\, H \rangle
  && \text{(\cref{lem:mismatched_slice}(i), \cref{def:phi_C})} \\
  &= \langle \phi_C \rangle\, \langle h_x^2 \rangle + \Cov_{p_{\beta}}\bigl[\phi_C,\, H\bigr].
\end{align*}
Each column of $X_{\acc, R}$ is therefore a linear combination of the two vectors $(\langle h_x^2 \rangle)_{x \in I_{\acc}}$ and $(1)_{x \in I_{\acc}}$, so $\rank(X_{\acc, R}) \le 2$.
\end{proof}

\subsection{Susceptibility symmetry from recoding symmetry}\label{sec:sus_symmetry}

In \cref{def:recoding_action}, we define the action of a recoding $g = (a: \Sigma \to \Sigma, s: Q \to Q)$ on codes $[M] \in W^\code$. We now extend this action notation $g \cdot (-)$ to all objects involving $\Sigma$ and $Q$ in the expected way. For a map $f \colon T \to T'$ of finite sets, the \emph{pushforward}
$f_* \colon \Delta T \to \Delta T'$ is the linear extension on the vertices $f_*(\sum_t \lambda_t \cdot e_t) := \sum_t \lambda_t \cdot e_{f(t)}$.

\begin{definition}\label{def:recoding_action_ext}
Let $g = (a,s)$ be a recoding. Define the following actions:
\begin{itemize}[topsep=2pt,itemsep=2pt]
\item On noisy Turing machines $w \in W$ (as a function $w:\Sigma \times Q \to \Delta\Sigma \times \Delta Q \times \Delta D$):
  \[
    g \cdot w := (a_* \times s_* \times \id) \circ w \circ (a \times s)^{-1},
  \]
\item On inputs $x = \sigma_1 \cdots \sigma_k \in I$:
  \[
    g \cdot x := a(\sigma_1) \cdots a(\sigma_k),
  \]
\item On states $r \in Q$: \[g \cdot r := s(r),\]
\item On components $C = (\sigma, q, f) \in \mathcal C$ (\cref{def:component}): \[g \cdot (\sigma, q, f) := (a(\sigma),\, s(q),\, f),\]
\item On noisy configurations $((\boldsymbol\sigma_i)_{i\in\mathbb Z},\mathbf q) \in \Delta\Cfg$ (\cref{def:noisy_configuration}):
  \[
    g \cdot \bigl((\boldsymbol\sigma_i)_{i\in\mathbb Z},\mathbf q\bigr)
    :=
    \bigl((a_*\boldsymbol\sigma_i)_{i\in\mathbb Z},\, s_*\mathbf q\bigr).
  \]
\end{itemize}
\end{definition}

\begin{remark}\label{rem:action_compat}
Reading the formula of \cref{def:recoding_action_ext} one factor at a time: the action of $g = (a, s)$ carries the simplex factor $W_{C}$ onto $W_{g \cdot C}$, acting within it by the pushforward of the corresponding value map ($a$ on $\Delta\Sigma$ factors, $s$ on $\Delta Q$ factors, the identity on $\Delta D$ factors).  In particular $w \mapsto g \cdot w$ is a measure-preserving bijection of $W$.  On the vertices $W^\code \subseteq W$ the pushforward of a point mass is the point mass at the image, $a_* e_\sigma = e_{a(\sigma)}$ and $s_* e_q = e_{s(q)}$, so the action restricts to the conjugation action of \cref{def:recoding_action}.  In particular a symmetry $g$ of $M$ fixes $[M]$, computed in $W$.
\end{remark}

\begin{definition}\label{def:recoding_equivariance}
A periodic smooth relaxation of a
pseudo-UTM $\mathcal U$, with cycle map $F_w$, is \emph{recoding
equivariant} if, for every recoding $g$, $w\in W$, and
noisy configuration $z\in\Delta\Cfg$,
\[
  g \cdot F_{w}(z)
  =
  F_{g \cdot w}\bigl(g \cdot z\bigr).
\]
In words, running one relaxed period and then recoding the
configuration agrees with recoding the code and the configuration first and then running one period.
\end{definition}

\begin{remark}\label{rem:equivariance_induction}
Assume the cycle map $F_w$ is recoding equivariant.  It
follows by induction on $T$ that, for all $T \geq 1$, every recoding
$g$, every noisy code
$w \in W$, and every noisy configuration $z \in \Delta\Cfg$,
\[
  g \cdot F_{w}^{\,T}(z)
  =
  F_{g \cdot w}^{\,T}\bigl(g \cdot z\bigr).
\]
\end{remark}

\begin{remark}\label{rem:equivariance_machines}
The lookup pseudo-UTM of \cref{sec:lookup_utm} is recoding
equivariant (\cref{cor:lookup_recoding_equivariance}).  The staged pseudo-UTM of
\cite[\S 5.1]{murfet2025pas} violates the condition because its smooth relaxation
depends on the order of the tuples on the description tape, which recodings permute.
\end{remark}

\begin{lemma}\label{lem:model_equivariance}
Let $g = (a, s)$ be a recoding for which the target map is
equivariant, $y(g \cdot x) = g \cdot y(x)$ for all $x \in I$.  Assume the
model $p(y|x, w)$ is given with respect to a pseudo-UTM whose cycle map
is recoding equivariant
(\cref{def:recoding_equivariance}).  Then
$h_{g \cdot x}(g \cdot w) = h_x(w)$ for every $x \in I$ and $w \in W$.
\end{lemma}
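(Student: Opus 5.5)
The plan is to unwind $h_{g\cdot x}(g\cdot w)$ through \cref{def:model_utm} and push the recoding through the iterated cycle map using \cref{rem:equivariance_induction}. By \cref{def:error_probability} and \eqref{eq:abstract_model},
\[
  h_{g\cdot x}(g\cdot w) = 1 - \Delta\step^T(g\cdot x,\, g\cdot w)_{y(g\cdot x)} = 1 - \bigl(\pi_{\Delta Q} F_{g\cdot w}^{\,T}(e_{\tau_{g\cdot x}}, e_{q_0})\bigr)_{y(g\cdot x)}.
\]
The first step is to identify the initial noisy configuration of $g\cdot x$ with the recoding of that of $x$:
\[
  (e_{\tau_{g\cdot x}}, e_{q_0}) = g\cdot (e_{\tau_x}, e_{q_0}).
\]
This holds because, for each tape square $i$, we have $a_* e_{\tau_x(i)} = e_{a(\tau_x(i))} = e_{\tau_{g\cdot x}(i)}$. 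This uses $a(\blank)=\blank$ for the squares outside the input, and $s_* e_{q_0} = e_{s(q_0)} = e_{q_0}$ since $s$ fixes $q_0$ (\cref{def:recoding}).

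Next, apply \cref{rem:equivariance_induction} with $z = (e_{\tau_x}, e_{q_0})$ to get
\[
  F_{g\cdot w}^{\,T}(g\cdot z) = g\cdot F_w^{\,T}(z).
\]
Since the action on $\Delta\Cfg$ acts on the state factor by $s_*$ (\cref{def:recoding_action_ext}), projection commutes with the action:
\[
  \pi_{\Delta Q}(g\cdot \xi) = s_*\,\pi_{\Delta Q}(\xi).
\]
Hence $\Delta\step^T(g\cdot x, g\cdot w) = s_*\,\Delta\step^T(x,w)$.

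Finally, use the pushforward identity $(s_*\mathbf q)_{s(r)} = \mathbf q_r$, valid because $s$ is a bijection, together with the hypothesis $y(g\cdot x) = s(y(x))$. This gives
\[
  \Delta\step^T(g\cdot x, g\cdot w)_{y(g\cdot x)} = \Delta\step^T(x,w)_{y(x)},
\]
and so $h_{g\cdot x}(g\cdot w) = h_x(w)$.

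The argument is essentially bookkeeping, and the only step needing care is the identification of initial configurations. There one must check that the blank convention of $(\Delta\Sigma)^{\mathbb Z,\blank}$ is preserved, which is exactly where $a(\blank)=\blank$ enters, and that $q_0$ is fixed. The need for $g\cdot x \in I$, so that $y(g\cdot x)$ is defined, is covered by the closure of $I$ under $a$ in \cref{def:recoding}.
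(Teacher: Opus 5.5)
Your proposal is correct and follows the paper's proof essentially step for step. You identify the initial configuration of $g\cdot x$ with the recoding of that of $x$, using $a(\blank)=\blank$ and $s(q_0)=q_0$; you push the recoding through $F^{\,T}$ via \cref{rem:equivariance_induction}; you project to the state factor, where the action is $s_*$; and you conclude with $(s_*\mathbf q)(s(r))=\mathbf q(r)$ together with $y(g\cdot x)=s(y(x))$.
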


\begin{proof}
With $g = (a, s)$, the claim reads $h_{a(x)}\bigl((a, s) \cdot w\bigr) = h_x(w)$.  Fix $x \in I$ and $w \in W$.  Let
$z_x \in \Delta\Cfg$ be the
configuration of the simulated machine with input $x$ on the working tape and simulated
state $e_{q_0}$.  Since $a(\blank) = \blank$ and $s(q_0) = q_0$, we have
$(a, s) \cdot z_x = z_{a(x)}$, and so by \cref{rem:equivariance_induction},
\[
  (a, s) \cdot F_{w}^{\,T}(z_x) = F_{(a, s) \cdot w}^{\,T}(z_{a(x)}).
\]
The cycle-map iterate read off at the state tape is the model
(\cref{def:model_utm}).  Since the configuration action is by $s_*$ on the state distribution,
\[
  s_*\, \pi_{\Delta Q} F_{w}^{\,T}(z_x)
  = \pi_{\Delta Q} F_{(a, s) \cdot w}^{\,T}(z_{a(x)}).
\]
Using $(s_* \mu)(s(r)) = \mu(r)$ (valid for any bijection
$s$) and $y(a(x)) = s(y(x))$,
\begin{align*}
  p(y(a(x))|a(x), (a, s) \cdot w)
  &= \pi_{\Delta Q}F_{(a, s) \cdot w}^{\,T}(z_{a(x)})(y(a(x))) \\
  &= \bigl(s_*\, \pi_{\Delta Q}F_{w}^{\,T}(z_x)\bigr)(s(y(x))) \\
  &= \pi_{\Delta Q}F_{w}^{\,T}(z_x)(y(x))
  = p(y(x)|x, w).
\end{align*}
Hence $h_{a(x)}\bigl((a, s) \cdot w\bigr) = 1 - p(y(a(x))|a(x), (a, s) \cdot w) = 1 - p(y(x)|x, w)
= h_x(w)$, as required.
\end{proof}

\begin{proposition}\label{prop:H_invariance}
Let $M$ be a classical solution and let $g = (a, s)$ be a symmetry of $M$ (\cref{def:recoding_symmetry}), assume the model $p(y|x, w)$ is given with respect to a pseudo-UTM whose cycle map is recoding equivariant (\cref{def:recoding_equivariance}), and suppose the input distribution is invariant, $q(g \cdot x) = q(x)$ for all $x \in I$.  Then
\[
  H(g \cdot w) = H(w) \qquad (w \in W).
\]
\end{proposition}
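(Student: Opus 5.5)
The plan is to reduce the invariance of $H$ to the per-input identity of \cref{lem:model_equivariance}, combined with a reindexing of the finite sum defining $H$.

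The first step is to check the hypothesis of \cref{lem:model_equivariance}, namely that the target map is equivariant: $y(g\cdot x) = g\cdot y(x)$. Since $M$ is a classical solution and $g$ is a symmetry of $M$, this is exactly \cref{cor:target_compat}, which gives $y(a(x)) = s(y(x))$ for all $x \in I$. With this and recoding equivariance of the cycle map, \cref{lem:model_equivariance} yields
\[
  h_{g\cdot x}(g\cdot w) = h_x(w) \qquad (x \in I,\ w \in W).
\]

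The second step is the reindexing. By \cref{def:recoding}, the map $x \mapsto g\cdot x$ sends $I$ into $I$. It is injective because $a$ is a bijection applied symbolwise and preserves length. Since $I$ is finite, it is therefore a bijection of $I$. We then compute
\[
  H(g\cdot w) = \sum_{x\in I} q(x)\, h_x(g\cdot w)^2 = \sum_{x\in I} q(g\cdot x)\, h_{g\cdot x}(g\cdot w)^2 = \sum_{x \in I} q(x)\, h_x(w)^2 = H(w).
\]
The second equality substitutes $x \mapsto g\cdot x$, which is legitimate because it is a bijection of $I$. The third uses the invariance $q(g\cdot x)=q(x)$ together with the identity obtained in the first step.

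No step here is a real obstacle. The only point needing care is the bijectivity of the input action on $I$, which is what justifies the change of summation index. Note also that $g\cdot w$ does not need $g$ to fix $w$: the symmetry hypothesis on $M$ enters only through the target compatibility given by \cref{cor:target_compat}.
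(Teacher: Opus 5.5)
Your proposal is correct and follows essentially the same route as the paper: \cref{cor:target_compat} supplies target equivariance, \cref{lem:model_equivariance} then gives per-input invariance of $h_x$, and the finite sum defining $H$ is reindexed using invariance of $q$. Your explicit argument that $x \mapsto g\cdot x$ is a bijection of $I$ (symbolwise injectivity of $a$ together with finiteness of $I$) makes precise a step the paper leaves implicit when it writes $a^{-1}(x)$.
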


\begin{proof}
$M$ being a classical solution and $g$ a symmetry of $M$, \cref{cor:target_compat} gives the target equivariance $y(g \cdot x) = g \cdot y(x)$, so \cref{lem:model_equivariance} applies and $h_x\bigl((a, s) \cdot w\bigr) = h_{a^{-1}(x)}(w)$.  Reindexing $x \mapsto a(x)$ and using that $q$ is invariant,
\begin{align*}
  H\bigl((a, s) \cdot w\bigr)
  &= \sum_x q(x)\, h_x\bigl((a, s) \cdot w\bigr)^2 \\
  &= \sum_x q(x)\, h_{a^{-1}(x)}(w)^2 \\
  &= \sum_x q(a(x))\, h_{x}(w)^2 \\
  &= H(w). \qedhere
\end{align*}
\end{proof}

\begin{theorem}\label{thm:sus_symmetry}
Let $M$ be a classical solution and let $g = (a, s)$ be a symmetry of $M$ (\cref{def:recoding_symmetry}).  Assume the model $p(y|x, w)$ is given with respect to a pseudo-UTM whose cycle map is recoding equivariant (\cref{def:recoding_equivariance}).  Suppose the input distribution is invariant, $q(g \cdot x) = q(x)$ for all $x \in I$, and the prior is invariant, $\varphi(g \cdot w) = \varphi(w)$ for all $w \in W$.  Then for every $x \in I$ and $C \in \mathcal C$,
\[
  \chi_{g \cdot x}^{g \cdot C}([M]) = \chi_x^C([M]).
\]
\end{theorem}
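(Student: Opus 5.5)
Because $[M]$ is fixed, the plan is to start from the covariance form \eqref{eq:fdt} of \cref{rem:fdt}, $\chi_x^C([M]) = -\Cov_{p_\beta}[\phi_C, h_x^2 - H]$, and push every ingredient through the change of variables $w \mapsto g\cdot w$. By \cref{rem:action_compat} this map is a measure-preserving bijection of $W$, and it sends the factor $W_C$ onto $W_{g\cdot C}$. Three invariances will be used:
\begin{itemize}
  \item \cref{prop:H_invariance} gives $H(g\cdot w)=H(w)$;
  \item the prior hypothesis gives $\varphi(g\cdot w)=\varphi(w)$;
  \item together these make the Gibbs density invariant, $p_\beta(g\cdot w)=p_\beta(w)$, with $Z_\beta$ unchanged.
\end{itemize}
Consequently $\E_{p_\beta}[f\circ g] = \E_{p_\beta}[f]$ for any integrable $f$ (or distribution paired against smooth functions). \cref{cor:target_compat} supplies the target equivariance, so \cref{lem:model_equivariance} applies and gives $h_{g\cdot x}(g\cdot w) = h_x(w)$.

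Next I will check how the observable transforms: $\phi_{g\cdot C}(g\cdot w) = \phi_C(w)$. Write $w=(u,v)\in U_C\times W_C$. Then $g\cdot w$ decomposes along $U_{g\cdot C}\times W_{g\cdot C}$ as $(g\cdot u, g\cdot v)$, where the action on $U_C \to U_{g\cdot C}$ permutes the remaining factors and pushes forward inside each. Since $g$ fixes $[M]$ (\cref{rem:action_compat}), the $U$-projection of $[M]$ along $g\cdot C$ is $g\cdot u^*$. Hence $\delta(g\cdot u - g\cdot u^*) = \delta(u-u^*)$, because the map is an affine bijection between products of simplices with unit Jacobian (pushforward by a bijection permutes coordinates). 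Combining this with $H(g\cdot w)=H(w)$ gives the identity. I expect this to be the only delicate point. The cleanest way to handle it is to avoid manipulating the delta symbolically and instead use the slice-integral formula after \cref{def:phi_C}: for test functions $G$,
\[
\int \phi_{g\cdot C}\, G\, \rho \;=\; \int_{W_{g\cdot C}} H\bigl(g\cdot u^*, v'\bigr)\, G\bigl(g\cdot u^*, v'\bigr)\, \rho\bigl(g\cdot u^*, v'\bigr)\, dv'.
\]
Substituting $v' = g\cdot v$, which is a measure-preserving bijection $W_C\to W_{g\cdot C}$, turns this into the slice integral along $C$ evaluated at $g\cdot(u^*,v)$.

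With these pieces the result follows by a direct computation. I will expand the covariance into the three expectations $\langle \phi_{g\cdot C}\, h_{g\cdot x}^2\rangle$, $\langle\phi_{g\cdot C}\, H\rangle$ and $\langle \phi_{g\cdot C}\rangle\langle h_{g\cdot x}^2 - H\rangle$. Each is then rewritten via the substitution $w = g\cdot w'$ using:
\begin{itemize}
  \item the slice formula together with $p_\beta\circ g = p_\beta$;
  \item $h_{g\cdot x}(g\cdot w') = h_x(w')$;
  \item $H(g\cdot w') = H(w')$.
\end{itemize}
After the substitution the three expectations become $\langle\phi_C\, h_x^2\rangle$, $\langle \phi_C\, H\rangle$ and $\langle\phi_C\rangle\langle h_x^2-H\rangle$ respectively. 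Therefore $\Cov[\phi_{g\cdot C}, h_{g\cdot x}^2 - H] = \Cov[\phi_C, h_x^2-H]$, which is exactly $\chi_{g\cdot x}^{g\cdot C}([M]) = \chi_x^C([M])$.
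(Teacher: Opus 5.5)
Your proposal is correct and follows the paper's proof essentially step for step. Both arguments combine invariance of $H$, $\varphi$ and hence $p_\beta$ under the measure-preserving bijection $w \mapsto g\cdot w$ with $h_{g\cdot x}(g\cdot w)=h_x(w)$ and $\phi_{g\cdot C}(g\cdot w)=\phi_C(w)$. Your use of the slice-integral formula is just a more careful justification of the paper's one-line identity $\delta(u'(g\cdot w)-u'^{\,*})=\delta(u-u^*)$, so it does not change the route.
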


\begin{proof}
Since $M$ is a classical solution and $g$ a symmetry of $M$, \cref{cor:target_compat} and \cref{lem:model_equivariance} give $h_{g \cdot x}(g \cdot w) = h_x(w)$ for all $x$ and $w$, and \cref{prop:H_invariance} gives $H(g \cdot w) = H(w)$ for all $w$.

By \cref{rem:action_compat}, $w \mapsto g \cdot w$ is a measure-preserving bijection of $W$ fixing $[M]$.  Write $P := p_{\beta}$ for the Gibbs distribution of \cref{def:susceptibility}.  Since $\varphi$ and $H$ are invariant, the change of variables $w = g \cdot w'$ leaves $P$ invariant, so $\Cov_P[f_1, f_2] = \Cov_P[f_1(g \cdot -),\, f_2(g \cdot -)]$ for all integrable $f_1, f_2$.

Write $W = U_C \times W_C$ with $w = (u, v)$ and $[M] = (u^*, v^*)$, and likewise $W = U_{g \cdot C} \times W_{g \cdot C}$ with $w = (u', v')$ and $[M] = (u'^{\,*}, v'^{\,*})$.  Since the action carries each factor $W_{C'}$ onto $W_{g \cdot C'}$ by a measure-preserving pushforward (\cref{rem:action_compat}), it restricts to a measure-preserving bijection $U_C \to U_{g \cdot C}$, and since it fixes $[M]$ this bijection carries $u^*$ to $u'^{\,*}$.  Hence $\delta(u'(g \cdot w) - u'^{\,*}) = \delta(u - u^*)$.  With the invariance of $H$ this gives $\phi_{g \cdot C}(g \cdot w) = \phi_{C}(w)$, and $h_{g \cdot x}(g \cdot w) = h_x(w)$ gives $(h_{g \cdot x}^2 - H)(g \cdot w) = (h_{x}^2 - H)(w)$.  Therefore
\begin{align*}
  \chi_{g \cdot x}^{g \cdot C}([M])
    &= -\Cov_P[\phi_{g \cdot C},\, h_{g \cdot x}^2 - H] \\
    &= -\Cov_P[\phi_{g \cdot C}(g \cdot -),\, (h_{g \cdot x}^2 - H)(g \cdot -)] \\
    &= -\Cov_P[\phi_{C},\, h_{x}^2 - H] \\
    &= \chi_{x}^{C}([M]). \qedhere
\end{align*}
\end{proof}

\begin{corollary}\label{cor:germ_action}
Under the hypotheses of \cref{prop:H_invariance}, the action restricts to bijections of $W_0$ fixing $[M]$, endowing the singularity germ $(H, [M])$ of \cref{def:squared_error_loss} with an action of the cyclic group $\langle g \rangle \le \operatorname{Aut}(M)$.  More generally, if every element of a subgroup $G \le \operatorname{Aut}(M)$ satisfies the hypotheses, the germ carries a $G$-action.
\end{corollary}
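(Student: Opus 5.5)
The plan is to read everything off \cref{prop:H_invariance} together with \cref{rem:action_compat}. Throughout, the map $w \mapsto g\cdot w$ is the bijection $W \to W$ of \cref{def:recoding_action_ext}. By \cref{rem:action_compat} it fixes $[M]$ (because $g$ is a symmetry of $M$), and by \cref{prop:H_invariance} it satisfies $H(g\cdot w)=H(w)$ for all $w\in W$. Since $g$ is a bijection of a finite set, $g^{n}=\id$ for some $n$. So the cyclic group $\langle g\rangle$ acts on $W$ through powers of this map, and the inverse of the map is again such a power.

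\textbf{Step 1: restriction to $W_0$.} Take $w\in W_0$. Then $H(g\cdot w)=H(w)=0$, so $g\cdot w\in W_0$. The inverse $g^{-1}=g^{n-1}$ also preserves $W_0$ by the same argument. Hence $g$ restricts to a bijection of $W_0$, and this bijection fixes $[M]\in W_0$.

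\textbf{Step 2: action on the germ.} The map $w\mapsto g\cdot w$ is smooth, indeed affine, because on each simplex factor it is a linear pushforward. It is therefore a diffeomorphism of the ambient space fixing $[M]$, and it satisfies $H\circ g=H$. Consequently it induces an automorphism of the germ $(H,[M])$: it maps neighbourhoods of $[M]$ to neighbourhoods of $[M]$ and preserves $H$ and its zero locus. The assignment $g^k\mapsto(w\mapsto g^k\cdot w)$ respects composition, because \cref{def:recoding_action_ext} is a group action; this amounts to checking that pushforward commutes with composition, $(a a')_*=a_*a'_*$. This gives an action of $\langle g\rangle\le\operatorname{Aut}(M)$ on the germ.

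\textbf{Step 3: a subgroup $G$.} Suppose every $h\in G$ satisfies the hypotheses of \cref{prop:H_invariance}. Then Steps 1 and 2 apply to each $h$ separately, and compatibility with composition follows from the group-action property just noted. This yields a $G$-action on the germ.

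The only point needing care is the functoriality check $(gg')\cdot w=g\cdot(g'\cdot w)$ for the extended action on $W$. I expect this to be routine: expand $(a_*\times s_*\times\id)\circ w\circ(a\times s)^{-1}$ and use $(a\circ a')_*=a_*\circ a'_*$.
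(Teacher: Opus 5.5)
Your proof is correct and follows the paper's argument: the bijection $w \mapsto g\cdot w$ fixes $[M]$ (\cref{rem:action_compat}) and preserves $H$ (\cref{prop:H_invariance}), so it restricts to $W_0$, and the group-action property of \cref{def:recoding_action_ext} gives the $\langle g\rangle$- and $G$-actions on the germ. Your extra remarks on smoothness of the action and the functoriality check $(gg')\cdot w = g\cdot(g'\cdot w)$ make explicit details that the paper's short proof takes for granted.
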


\begin{proof}
Since $w \mapsto g \cdot w$ is a bijection of $W$ fixing $[M]$ (\cref{rem:action_compat}) and $H$ is invariant (\cref{prop:H_invariance}), it maps $W_0$ to $W_0$, and the dot is an action (\cref{def:recoding_action_ext}).  Hence if $g$ has order $n$ in $\operatorname{Aut}(M)$, the germ carries a well-defined action of $\langle g \rangle \cong \mathbb Z/n$.  If every element of a subgroup $G \le \operatorname{Aut}(M)$ satisfies the hypotheses, the invariance of $H$ holds for each element and the germ carries a $G$-action.
\end{proof}

\section{Experiments}\label{sec:experiments}

We estimate the susceptibility matrix $\chi$ of DFAs that decide $\mathcal L_{\texttt{A},\texttt{0}}$ and measure the ranks and symmetries described in \cref{sec:theorems}, with the following objectives:

\begin{itemize}
    \item To verify that proven theoretical properties of the true susceptibility are reflected in estimated susceptibilities.
    \item To observe whether the theorems generalise further in the empirical context (e.g. to the converse of \cref{thm:rank_bound}).
\end{itemize}

We study the example DFAs of \cref{tab:examples} in \cref{sec:expt_pointwise}, and then extend to the whole set of solution DFAs in \cref{sec:expt_population_group}. In \cref{sec:expt_reading} we attempt to explain clusters of DFAs that emerge from susceptibility-space.

\subsection{The DFA task and parameter space}\label{sec:expt_task}

Recall the language
\[
  \mathcal L_{\texttt{A}, \texttt{0}} = \bigl\{\, x \in \{\texttt{A}, \texttt{B}\}^* \cup \{\texttt{0}, \texttt{1}\}^* \,:\, x \text{ contains an } \texttt{A} \text{ or a } \texttt{0}\,\bigr\}
\]
of \cref{def:LA0_language}.  Fix the input set of nonempty strings of
length at most three,
$I := \{x \in \{\texttt{A},\texttt{B}\}^* \cup \{\texttt{0},\texttt{1}\}^* : 1 \le |x| \le 3\}$, and a fixed
number of time steps $T = 5$.  The input distribution $q(x)$ on $I$ selects the alphabet class,
then a length in $\{1,2,3\}$, then a string of that class and length, each
uniformly, resulting in $q(x) = \tfrac{1}{12}, \tfrac{1}{24}, \tfrac{1}{48},$
depending on $|x|$.
We consider $\mathcal L_{\texttt{A},\texttt{0}}$-DFAs (\cref{def:LA0_decider}) with
the extra condition that for $q \in \{\qacc, \qrej\}$, the transition $[M](\sigma, q) = (\sigma', q', d)$ satisfies $q' = q$. We preserve these restrictions on transitions in the presence of noise, leaving only noise in the choice $q'$ when $q \in \{q_0, s_1, s_2\}$, and
hence define
\begin{equation}
W^{\DFA} := \prod_{(\sigma, q) \in \Sigma \times \{q_0, s_1, s_2\}} \Delta Q \quad \lhook\joinrel\longrightarrow \quad W
\end{equation}
This is the subspace of $W$ that fixes $W_{(\sigma, q, \Sigma)}$ to $e_\sigma \in\Delta \Sigma$, $W_{(\sigma, q, D)}$ to $e_R \in \Delta D$, and, when $q \in \{\qacc, \qrej\}$ is a terminal state, fixes $W_{(\sigma, q, Q)}$ to $e_q \in \Delta Q$.
The definitions of $p_{\beta}(w), \chi_x^C([M])$ from \cref{def:susceptibility} and the empirical counterparts in \cref{sec:empirical_sus}  and both theorems of
\cref{sec:theorems} follow verbatim, with $W^{\DFA}$ in place of $W$\footnote{The
equivariance of \cref{thm:sus_symmetry} transfers because the action of $(a, s)$
preserves terminal states.}. All susceptibilities are computed with respect to $W^\DFA$, and the components we choose for observables are exactly those $(\sigma, q)$-indexed subspaces $\Delta Q$ that make up the product $W^{\DFA}$. That is, we study susceptibilities with respect to component observables $\phi_{(\sigma, q, Q)}$, or simply $\phi_{(\sigma,q)}$, with $q \in \{q_0, s_1, s_2\}$. We continue writing $\mathcal C$ for the set of observables, which now denotes the subset $\Sigma \times \{q_0, s_1, s_2\}$ with $f = Q$ fixed.

\subsection{From theoretical to empirical susceptibilities}\label{sec:empirical_sus}
Our theoretical treatment concerned $\chi_x^C([M])$, built from the squared-error potential $H$.  The experiments use the \emph{log-loss} susceptibility, replacing $H$ and $h_x^2$ throughout \cref{def:phi_C,def:susceptibility} by the cross-entropy $L$ and $\ell_x$, with $\ell_x(w) := -\log p(y(x)|x, w)$ (the $\mu = 0$ case of \eqref{eq:shifted_losses}) and $L := \E_{x \sim q}[\ell_x]$.  Since $[M]$ is a classical solution $L([M]) = 0$, so $\phi_C^{L} = \delta(u - u^*)\, L$ and, by \cref{rem:fdt},
\[
  \chi_x^{C, L}([M]) = -\Cov_{p_{\beta}^{L}}\bigl[\phi_C^{L},\, \ell_x - L\bigr],
  \qquad
  p_{\beta}^{L} \propto \exp\{-\beta L\}\,\varphi.
\]
Both theorems of \cref{sec:theorems} hold verbatim for $\chi_x^{C, L}$, even at $\mu = 0$ (\cref{prop:signatures_general,rem:log_loss_mu_zero}).  Moreover, the renormalised susceptibility \eqref{eq:renorm_sus} that we estimate below converges, as $\mu \to 0$, to its value at $\mu = 0$ (\cref{prop:mu_zero_limit}).

\paragraph{Smooth relaxation.}  The model $p(y | x, w) = \Delta\step^T(x, w)_y$ (\cref{def:model_utm}) is fixed by the choice of pseudo-UTM whose step is relaxed (\cref{def:utm_relaxation}).  We use two, with cycle maps $F^{\mathrm{staged}}$ and $F^{\mathrm{lookup}}$ (\cref{def:circuit_cycle_map}) given in closed form by \cref{prop:cycle_factor}: the \emph{staged UTM} of \cite[Appendix~G]{clift2021geometryprogramsynthesis} (used in \cite[\S 5.1]{murfet2025pas}; $\nu = \nu^{\mathrm{staged}}$), and the \emph{lookup UTM} of \cref{sec:lookup_utm} ($\nu = \nu^{\mathrm{lookup}}$).  The cycle map of the lookup UTM is the naive probabilistic extension of the eval cycle circuit (\cref{cor:lookup_srp}), so on $W^\DFA$ its model is the matrix product of \cref{prop:dfa_matrix}.  The staged smooth relaxation depends on the order of the description tuples through $\nu^{\mathrm{staged}}$ \eqref{eq:decision_weights} and does not reduce to this form.

\paragraph{Localisation.}  To keep sampling near $[M]$ we localise the prior, following \cite{baker2025studyingsmalllanguagemodels} (see \cite[\S 6]{elliott2026susceptibilities}).  Recall from \cref{def:component} that $W = \prod_{C \in \mathcal C} W_C$ with $W_C = \Delta Z_C$ and $[M]$ a product of vertices $e_{[M]_C}$.  For a \emph{localisation strength} $\gamma \ge 0$ and a \emph{boundary parameter} $\alpha > 0$, the \emph{localising prior}, or \emph{localiser}, is the product of Dirichlet densities
\begin{equation}\label{eq:dirichlet_localiser}
  \varphi_{[M], \gamma, \alpha}(w) := \prod_{C \in \mathcal C} \operatorname{Dir}\bigl(w_C;\; \gamma\, e_{[M]_C} + \alpha\, \mathbf 1_{Z_C}\bigr),
\end{equation}
giving the \emph{local Gibbs distribution} \cite{bissiri2016general}
\begin{equation}\label{eq:local_gibbs}
  p\bigl(w;\, [M], \beta, \gamma, \alpha\bigr) \propto \exp\bigl\{ -\beta\, L(w) \bigr\} \varphi_{[M], \gamma, \alpha}(w).
\end{equation}
We use Dirichlet distributions at each simplex because the sampler we use, the stochastic gradient Riemannian Langevin dynamics (SGRLD) sampler of Patterson and Teh \cite{patterson2013sgrld}, is derived for Dirichlet priors on simplices (\cref{app:grld}).
Larger $\gamma$ confines sampling to a smaller neighbourhood of $[M]$, and, at our $\gamma \ge 1$, $\alpha < 1$ ensures that the log-gradients all point towards $[M]$ rather than an interior mode. These facts make this $(\gamma, \alpha)$-indexed family of distributions a natural choice for our $W$, and it is discussed further in \cref{rem:dirichlet_bayes}. The experiments use $(\gamma, \alpha) = (1, 0.01)$.

\paragraph{Estimating susceptibilities.}  Expanding the covariance $\chi_x^{C, L} = -\Cov_p[\phi_C^{L},\, \ell_x - L]$ gives
\[
  \chi_x^{C, L}
  = -\E_{p}\bigl[\phi_C^{L}\,(\ell_x - L)\bigr]
    + \E_{p}\bigl[\phi_C^{L}\bigr]\, \E_{p}[\ell_x - L].
\]
Since $\phi_C^{L} = \delta(u - u^*)\, L$ is supported on the slice $u = u^*$, the two $\phi_C^{L}$-expectations are integrals over that slice (\cref{def:phi_C}).  Write $Z$ for the partition function of the local Gibbs distribution \eqref{eq:local_gibbs}, and let
\[
  p_C(v) := \frac{1}{Z_C}\, e^{-\beta\, L(u^*, v)}\, \varphi_{[M], \gamma, \alpha}(u^*, v),
  \qquad
  Z_C := \int_{W_C} e^{-\beta\, L(u^*, v)}\, \varphi_{[M], \gamma, \alpha}(u^*, v)\, dv,
\]
be the \emph{restricted Gibbs distribution} on the slice and the \emph{component partition function}.
Then \cref{def:phi_C} gives $\E_{p}[\phi_C^{L}\, g] = \tfrac{Z_C}{Z}\, \E_{p_C}[L\, g]$ for a function $g$, so that
\begin{equation}\label{eq:sus_two_normalisers}
  \chi_x^{C, L}
  = \frac{Z_C}{Z}\Bigl( -\E_{p_C}\bigl[L\,(\ell_x - L)\bigr]
    + \E_{p_C}[L]\; \E_{p}[\ell_x - L] \Bigr).
\end{equation}
We estimate expectations by sampling both distributions with \emph{gradient Riemannian Langevin dynamics} (GRLD), a full-gradient variant of the SGRLD sampler of Patterson and Teh \cite{patterson2013sgrld} (\cref{app:grld}).  In practice we do not compute the ratio $Z_C / Z$,\footnote{For $\alpha < 1$ this ratio is in fact infinite, and the localised $\chi_x^{C, L}$ is itself ill-defined.  The divergence is a single constant in each column, and the renormalised susceptibility below is free of it, being defined by $p_C$- and $p$-expectations alone.  The standardised $\psi$ is unchanged by rescaling a column in any case, so it agrees with what $\chi_x^{C, L}$ would give whenever the latter is defined.  \Cref{rem:slice_infinity} does the bookkeeping.} and instead take as our targets the \emph{renormalised susceptibility}
\begin{equation}\label{eq:renorm_sus}
  \tilde\chi_x^C
  := -\E_{p_C}\!\bigl[L\,(\ell_x - L)\bigr]
     + \E_{p_C}[L]\; \E_{p}[\ell_x - L]
  = \frac{Z}{Z_C}\, \chi_x^{C, L}
\end{equation}
and, following \cite{baker2025studyingsmalllanguagemodels, gordon2025lang3}, the \emph{standardised susceptibility}
\begin{equation}\label{eq:standardised_sus}
  \psi_x^C := \frac{\tilde\chi_x^C - \overline{\tilde\chi^C}}{\operatorname{std}_x[\,\tilde\chi_x^C\,]},
  \qquad
  \overline{\tilde\chi^C} := \frac{1}{|I|}\sum_{x \in I} \tilde\chi_x^C.
\end{equation}
This standardisation cancels the constant factor $Z / Z_C$, avoiding the need to compute it \cite[\S 6.3]{elliott2026susceptibilities}.  For a column with zero variance the divisor in \eqref{eq:standardised_sus} is replaced by $1$, so the column is merely centred.  In particular a column that is identically zero stays zero.
From $r$ samples $\{v_t\}_{t=1}^{r}$ taken from $p_C$ and $r'$ samples $\{w_s\}_{s=1}^{r'}$ taken from $p$, our estimator of \eqref{eq:renorm_sus} replaces each expectation by the corresponding sample mean,
\begin{equation}\label{eq:two_chain_estimator}
\begin{aligned}
  \widehat{\tilde\chi}^{\,C}_x = & -\frac{1}{r} \sum_{t=1}^{r} L(u^*, v_t)\, \bigl[ \ell_x(u^*, v_t) - L(u^*, v_t) \bigr] \\
  & + \Bigl( \frac{1}{r} \sum_{t=1}^{r} L(u^*, v_t) \Bigr) \cdot \Bigl( \frac{1}{r'} \sum_{s=1}^{r'} \bigl[ \ell_x(w_s) - L(w_s) \bigr] \Bigr).
\end{aligned}
\end{equation}

Applying \eqref{eq:standardised_sus} to $\widehat{\tilde\chi}$ gives the estimate $\widehat{\psi}$ of $\psi$.  \Cref{lem:standardisation_equivariance,prop:standardisation_rank} verify that the conclusions of \cref{sec:theorems} survive renormalisation and standardisation.

\subsection{Block structure in estimated susceptibilities}\label{sec:expt_pointwise}

We compute the standardised susceptibilities $\widehat{\psi}([M]) \in \mathbb R^{I \times \mathcal C}$ at inverse temperature $\beta=30$ for each of the example DFAs $M_1, M_2, M_3, M_4, M_5$ of \cref{tab:examples}, and examine their block structure against the theorems of \cref{sec:theorems}. We use the smooth relaxation of the lookup UTM for the likelihood $p(y|x,w) := \Delta \step^T_{\mathrm{lookup}}(x,w)_y$, which satisfies all UTM hypotheses of \cref{sec:theorems}.

\begin{figure}[tbp]
\centering
\includegraphics[width=\linewidth]{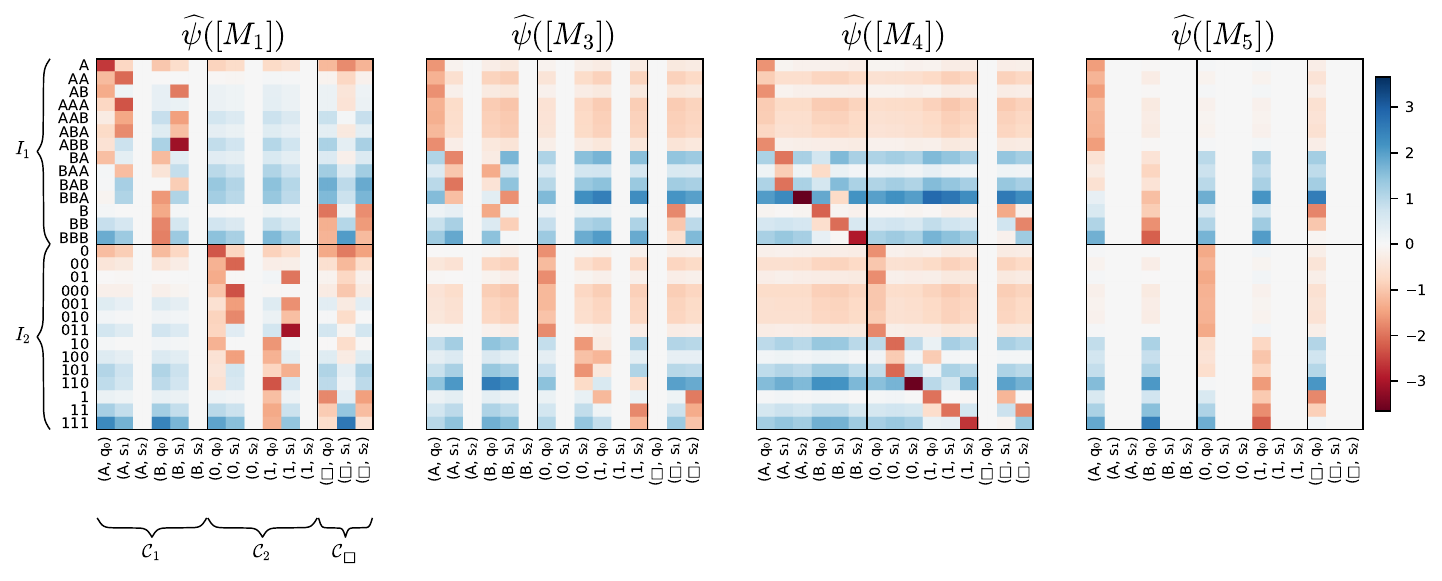}
\caption{\textbf{Susceptibilities of the DFAs $M_1, M_3, M_4, M_5$}. Rows are inputs $x \in I \subseteq \Sigma^*$ and columns are components $(\sigma, q)$, denoting the subspace $\Delta Q$ of distributions over the ``next-state'' behaviour of $[M]$ when seeing symbol $\sigma$ in state $q$. The value at row $x$, column $(\sigma, q)$ is the linear-response of the component observable $\phi_{(\sigma, q)}$ (averaged over $W^\DFA$) to a perturbation of the data distribution on $I$ that increases the probability of $x$ (\cref{def:susceptibility}).
Rows are grouped by the partition $x \in \{\texttt{A},\texttt{B}\}^* \sqcup \{\texttt{0},\texttt{1}\}^*$, and columns by the partition $\sigma \in \{\texttt{A},\texttt{B}\} \sqcup \{\texttt{0},\texttt{1}\} \sqcup \{\blank \}$, where $\sigma$ is the symbol being read by $M$. The susceptibilities are computed by sampling a neighbourhood of $M_i$ on the lookup UTM at inverse temperature $\beta = 30$.}
\label{fig:examples_matrix}
\end{figure}

\cref{fig:examples_matrix} shows the estimates $\widehat{\psi}([M_i])$, with rows grouped into input classes $I_1 = \{\texttt{A},\texttt{B}\}^{\leq 3}, I_2 := \{\texttt{0},\texttt{1}\}^{\leq 3}$ and columns into the component classes
\[
\begin{aligned}
  \mathcal C_1 &= \{(\sigma, q) \mid \sigma \in \{\texttt{A},\texttt{B}\}, q \in \{q_0, s_1, s_2\}\}, \\
  \mathcal C_2 &= \{(\sigma, q) \mid \sigma \in \{\texttt{0},\texttt{1}\}, q \in \{q_0, s_1, s_2\}\}, \\
  \mathcal C_\blank &= \{(\blank, q) \mid q \in \{q_0, s_1, s_2\}\},
\end{aligned}
\]
as in \cref{def:alphabet_partition}.
These groupings induce a block partition
\begin{equation}
  \widehat{\psi}([M]) = \kbordermatrix{
    & \mathcal C_1 & \mathcal C_2 & \mathcal C_\blank \\
    I_1 & X_{1, 1} & X_{1, 2} & X_{1, \blank} \\
    I_2 & X_{2, 1} & X_{2, 2} & X_{2, \blank}
  },
\end{equation}
and it is apparent in \cref{fig:examples_matrix} that the matrices for $M_1, M_4, M_5$ have near equality with the permuted matrix
\begin{equation}
  \begin{bmatrix}
    X_{2, 2} & X_{2, 1} & X_{2, \blank} \\
    X_{1, 2} & X_{1, 1} & X_{1, \blank}
  \end{bmatrix}
\end{equation}
which exchanges the row-blocks $(X_{1, j} \leftrightarrow X_{2, j})$ and the column-blocks $(X_{i, 1} \leftrightarrow X_{i, 2})$. This empirically measured (approximate) symmetry on $\widehat{\psi}([M_1]), \widehat{\psi}([M_4]), \widehat{\psi}([M_5])$ is a shadow of a symmetry intrinsic to $M_1, M_4, M_5$, namely that their transition function $\Sigma \times Q \to \Sigma \times Q \times D$ treats $\texttt{A}$ identically to $\texttt{0}$, and $\texttt{B}$ identically to $\texttt{1}$.
Recall the recodings $(\theta,\id_Q)$ and $(\theta, s_1 \leftrightarrow s_2)$ (\cref{def:LA0_symmetry_data}), where $\theta : \Sigma \to \Sigma$ is the alphabet involution, $\theta(\texttt{A}) = \texttt{0}, \theta(\texttt{B}) = \texttt{1}, \theta(\blank) = \blank$. A recoding $g = (a,s)$ acts on both inputs, $x \mapsto a(x_0) \dots a(x_{n-1})$, and components, $(\sigma, q) \mapsto (a(\sigma), s(q))$, (\cref{def:recoding_action_ext}), so it permutes the index set $I \times \mathcal C$ and acts on susceptibility space by the operator
\[
  (P_g X)^{C}_x := X^{\,g^{-1} \cdot C}_{\,g^{-1} \cdot x},
\]
making the assignment $g \mapsto P_g$ a representation of the Klein four-group of \cref{rem:klein_group} on susceptibility space (the recodings considered here are involutions, so $g^{-1} \cdot C = g \cdot C$). In this notation \cref{thm:sus_symmetry} reads: If $g \cdot M = M$ then $P_g\, \chi([M]) = \chi([M])$. Due to sampling noise, we want to measure the approximate equality $P_g \widehat{\psi}([M]) = \widehat{\psi}([M])$, and to this end we define the \emph{symmetry defect}.

\paragraph{Symmetry defect.} For each recoding $g$ of \cref{def:LA0_symmetry_data}, $P_g$ is a symmetric orthogonal involution, so $\mathbb R^{I \times \mathcal C}$ splits orthogonally into its $\pm 1$ eigenspaces,
\[
  \mathbb R^{I \times \mathcal C} = V^+_g \oplus V^-_g,
  \qquad
  V^\pm_g := \ker(P_g \mp \id),
\]
with orthogonal projections $P_\pm = \tfrac12(\id \pm P_g)$. The true susceptibility of a machine with the symmetry $g$ lies in the invariant subspace $V^+_g$, so we define the \emph{symmetry defect} of an estimate $\widehat{\psi}$ at $g$ as the relative squared size of its $V^-_g$-part,
\begin{equation}\label{eq:sym_defect}
  \mathrm{sd}_g(\widehat{\psi}) := \frac{\|P_- \widehat{\psi}\|_F^2}{\|\widehat{\psi}\|_F^2} = \frac{\bigl\|\tfrac12(\widehat{\psi} - P_g\widehat{\psi})\bigr\|_F^2}{\|\widehat{\psi}\|_F^2} \in [0, 1],
\end{equation}
its relative squared distance from $V^+_g$.  The defect is near zero when $\widehat{\psi} \approx P_g\widehat{\psi}$, i.e.\ when the paired blocks of \cref{fig:examples_matrix} are visually similar. \Cref{tab:sym_defect} reports $\mathrm{sd}_g(\widehat{\psi}([M]))$ at both recodings for the examples DFAs. In \cref{sec:expt_symmetry}, we extend this analysis to all classical solution DFAs.

\begin{table}[tbp]
\centering
\begin{tabular}{lcccc}
\toprule
 & \multicolumn{2}{c}{$(\theta, \id_Q)$} & \multicolumn{2}{c}{$(\theta, s_1{\leftrightarrow}s_2)$} \\
\cmidrule(lr){2-3}\cmidrule(lr){4-5}
 & $\asym$ & $\mathrm{sd}$ & $\asym$ & $\mathrm{sd}$ \\
\midrule
$M_1$ & $0$       & \textbf{0.040} & $19/48$ & $0.297$ \\
$M_2$ & $0$       & \textbf{0.012} & $1/2$   & $0.192$ \\
$M_3$ & $17/96$   & $0.253$        & $0$     & \textbf{0.016} \\
$M_4$ & $0$       & \textbf{0.011} & $17/96$ & $0.217$ \\
$M_5$ & $0$       & \textbf{0.005} & $0$     & \textbf{0.005} \\
\bottomrule
\end{tabular}
\caption{\textbf{Machine asymmetry vs symmetry defect}. The left column group is the symmetry that swaps the symbols $\texttt{A} \leftrightarrow \texttt{0}$ and $\texttt{B} \leftrightarrow \texttt{1}$, and the right column group is the symmetry that swaps those symbols \emph{and} the subroutine states $s_1 \leftrightarrow s_2$. A machine carrying the symmetry has $0$ in the $\asym$ column (\cref{prop:asym_zero}), and for the five machines shown the converse also holds, by direct inspection of their transition tables.  Where $\asym = 0$, the symmetry defect (sd) of the estimate is small (bold). }
\label{tab:sym_defect}
\end{table}

\paragraph{Inputs in the space of susceptibility vectors.} The rows $\widehat{\psi}_x([M]), x \in I$,  are estimated susceptibility vectors living in the tangent space $\mathbb R^{15}$ (they are linear responses of $\E_{p_\beta}[\phi_C]$ to a perturbation of the distribution $q(x)$). We look at the first two principal components of this embedding in \cref{fig:examples_pca}. We see the alphabet involution $x \leftrightarrow \theta(x)$ as a symmetry roughly aligned with the reflection $\mathrm{PC}_1 \mapsto -\mathrm{PC}_1$ in $M_5$ and $\mathrm{PC}_2 \mapsto -\mathrm{PC}_2$ in $M_3, M_4$. Furthermore, one can see in these examples that one of the two leading PCs appears to be correlated with the length $|x|$\footnote{For a more conclusive empirical analysis of the organisation of points in susceptibility space we would take a larger set of inputs $I$, but we put this aside for this paper.}.

\begin{figure}[tbp]
\centering
\includegraphics[width=\linewidth]{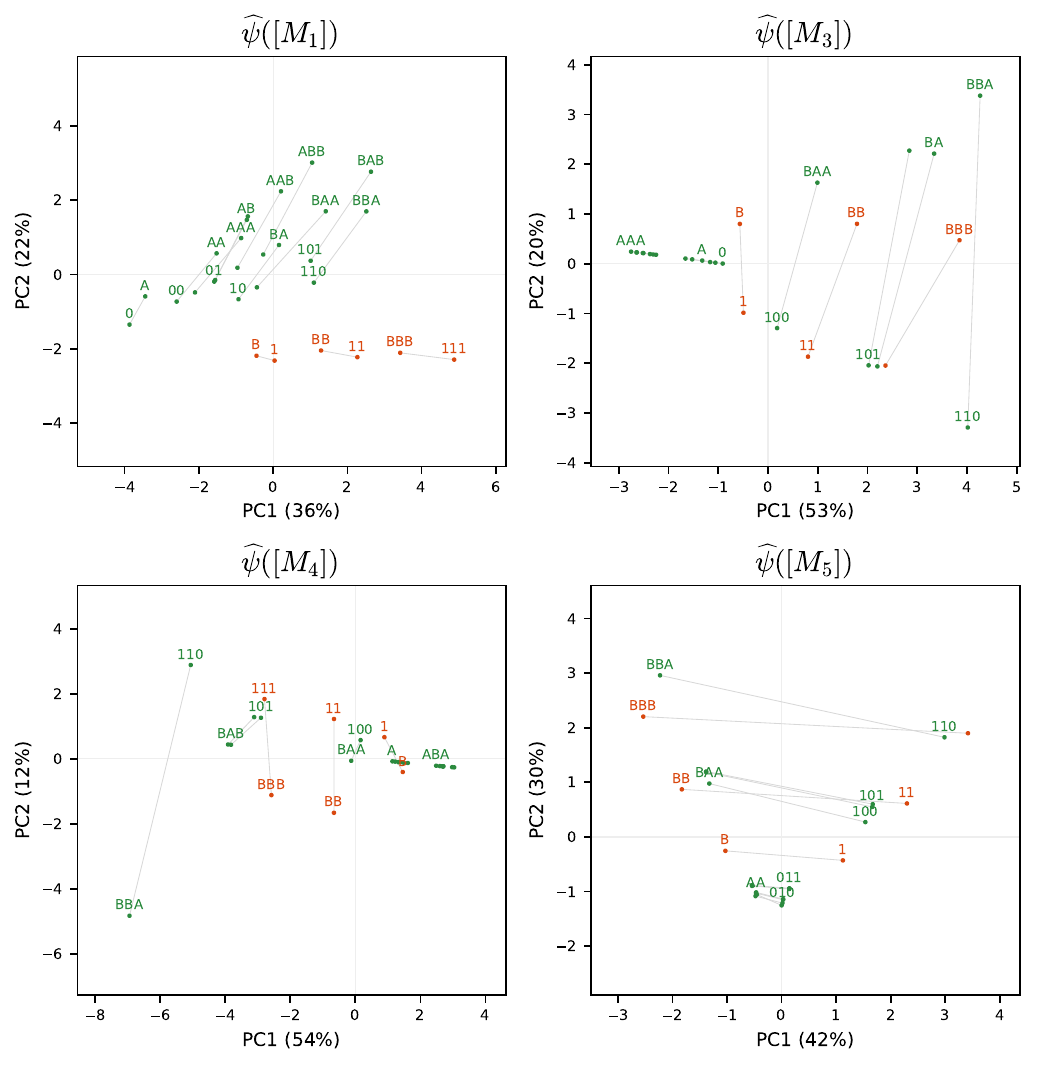}
\caption{\textbf{First two principal components of the inputs in susceptibility space.} Each point $x \in I$ is an input to the DFA $M_i$, and embeds into a space of linear response-vectors $\widehat{\psi}([M_i])_x \in \mathbb{R}^{15}$, valued across the 15 component observables we measure during sampling. The leading principal components show projections of this space along which there is the greatest variance between inputs.  Each panel is a separate PCA of its machine's $28$ rows, so components are not comparable across panels. Points are coloured as accepted inputs (green) and rejected inputs (orange). We observe each $M_i$ has a component correlated with the length $|x|$ (e.g. $\mathrm{PC}_1$ of $M_3$), and in some cases the involution $\texttt{A} \leftrightarrow \texttt{0}, \texttt{B} \leftrightarrow \texttt{1}$ is a reflection across this component (edges indicate the involution).}
\label{fig:examples_pca}
\end{figure}

\paragraph{Estimated block ranks and separation.}
Fix a pair $(A,R)$ forming a partition $Q = A \sqcup R \sqcup \{q_0\}$ of the set of states, with $\qacc \in A$ and $\qrej \in R$. This induces a partition on components $\mathcal C = \mathcal C_A \sqcup \mathcal C_R \sqcup \mathcal C_{0}$, where
\[
  \mathcal C_A := \{(\sigma, q) \in \mathcal C : q \in A\}, \quad
  \mathcal C_R := \{(\sigma, q) \in \mathcal C : q \in R\}, \quad
  \mathcal C_0 := \{(\sigma, q) \in \mathcal C : q = q_0\},
\]
and a block partition of $\widehat{\psi}([M_i])$,
\begin{equation}\label{eq:rank_blocks}
  \kbordermatrix{
    & \mathcal C_A & \mathcal C_R & \mathcal C_0 \\
    I_{\acc} & X_{\acc, A} & X_{\acc, R} & X_{\acc, 0} \\
    I_{\rej} & X_{\rej, A} & X_{\rej, R} & X_{\rej, 0}
  },
\end{equation}
where $I_{\acc} \sqcup I_{\rej}$ partitions $I$ into accepted and rejected inputs. We inspect ranks of the blocks $X_{\acc, R}, X_{\rej, A}$, which \cref{thm:rank_bound} proves to be $\leq 2$ when $(A, R)$ is a path separation partition of the DFA (\cref{def:path_separation_partition}). \Cref{fig:rank_story} illustrates the contrast between the low-rank blocks of $M_1$ and the high-rank blocks of $M_4$. These matrices are estimated from samples and therefore subject to noise. Since almost any perturbation of a low-rank matrix yields a full-rank one, the exact rank is uninformative, and so we instead compute the \emph{numerical rank}: the number of singular values exceeding a tolerance $\tau = \max(m, n)\,\varepsilon\,\sigma_1$, where $m \times n$ is the shape of the block, $\sigma_1$ is its largest singular value, and $\varepsilon$ is \texttt{float32} machine precision. \cref{tab:psv_rank} shows the computed numerical rank for each DFA $[M_i]$, at each such partition $(A,R)$.

\begin{figure}[tbp]
\centering
\definecolor{accreg}{RGB}{213,232,212}%
\definecolor{rejreg}{RGB}{255,230,204}%
\begin{minipage}[t]{0.5\linewidth}\centering
$M_1$\ (path-separable)\\[4pt]
\resizebox{0.94\linewidth}{!}{%
\begin{tikzpicture}[>=Stealth,
    st/.style={circle, draw, thick, fill=white, minimum size=0.95cm, inner sep=0pt},
    edge/.style={->, thick},
    used/.style={->, line width=2.0pt},
    every node/.append style={font=\small}]
  \node[st] (q0) at (0,0) {$q_0$};
  \node[st] (s1) at (3.6,1.7) {$s_1$};
  \node[st] (s2) at (3.6,-1.7) {$s_2$};
  \node[st, double] (qa) at (7.8,1.1) {$\qacc$};
  \node[st] (qr) at (7.8,-1.1) {$\qrej$};
  \coordinate (s1top) at (3.6,3.2);
  \coordinate (s2bot) at (3.6,-3.2);
  \begin{scope}[on background layer]
    \node[fill=accreg, rounded corners, fit=(s1)(qa)(s1top), inner sep=8pt] {};
    \node[fill=rejreg, rounded corners, fit=(s2)(qr)(s2bot), inner sep=8pt] {};
  \end{scope}
  \draw[edge] (q0) -- node[below left=-2pt] {$\blank$} (s2);
  \draw[edge] (s1) to[out=120,in=60,looseness=5] node[above] {$\texttt{A},\texttt{B},\texttt{0},\texttt{1}$} (s1);
  \draw[edge] (s2) to[out=240,in=300,looseness=5] node[below] {$\texttt{A},\texttt{B},\texttt{0},\texttt{1}$} (s2);
  \draw[edge] (s2) -- node[below=-1pt] {$\blank$} (qr);
  \draw[used] (q0) to[out=145,in=105,looseness=6] node[above left=-2pt] {$\textbf{\texttt{B}},\texttt{1}$} (q0);
  \draw[used] (q0) -- node[above left=-2pt] {$\textbf{\texttt{A}},\texttt{0}$} (s1);
  \draw[used] (s1) -- node[above=-1pt] {$\boldsymbol{\blank}$} (qa);
  \draw[->, thick] (-1.1,0) -- (q0);
\end{tikzpicture}}
\end{minipage}%
\begin{minipage}[t]{0.5\linewidth}\centering
$M_4$\ (not path-separable)\\[4pt]
\resizebox{0.94\linewidth}{!}{%
\begin{tikzpicture}[>=Stealth,
    st/.style={circle, draw, thick, fill=white, minimum size=0.95cm, inner sep=0pt},
    edge/.style={->, thick},
    used/.style={->, line width=2.0pt},
    every node/.append style={font=\small}]
  \node[st] (q0) at (0,0) {$q_0$};
  \node[st] (s1) at (3.6,1.7) {$s_1$};
  \node[st] (s2) at (3.6,-1.7) {$s_2$};
  \node[st, double] (qa) at (7.8,1.1) {$\qacc$};
  \node[st] (qr) at (7.8,-1.1) {$\qrej$};
  \begin{scope}[on background layer]
    \node[fill=accreg, rounded corners, fit=(s1)(qa), inner sep=9pt] {};
    \node[fill=rejreg, rounded corners, fit=(s2)(qr), inner sep=9pt] {};
  \end{scope}
  \draw[edge] (q0) -- node[above, pos=0.32] {$\texttt{A},\texttt{0}$} (qa);
  \draw[edge] (s2) to[bend right=15] node[right] {$\texttt{B},\texttt{1}$} (s1);
  \draw[edge] (s1) -- node[above=-1pt] {$\texttt{A},\texttt{0}$} (qa);
  \draw[edge] (s1) -- node[pos=0.7, below=-1pt] {$\blank$} (qr);
  \draw[edge] (s2) -- node[below=-1pt] {$\blank$} (qr);
  \draw[used] (q0) -- node[above left=-2pt] {$\textbf{\texttt{B}},\texttt{1}$} (s1);
  \draw[used] (s1) to[bend right=15] node[left] {$\textbf{\texttt{B}},\texttt{1}$} (s2);
  \draw[used] (s2) -- node[pos=0.42, above=-1pt] {$\textbf{\texttt{A}},\texttt{0}$} (qa);
  \draw[->, thick] (-1.1,0) -- (q0);
\end{tikzpicture}}
\end{minipage}\\[8pt]
\tikz{\fill[accreg] (0,0) rectangle (0.3,0.2);}~accept region $A$\qquad
\tikz{\fill[rejreg] (0,0) rectangle (0.3,0.2);}~reject region $R$\qquad
\tikz{\draw[line width=2pt] (0,0) -- (0.55,0);}~run on $\texttt{BBA}$
\includegraphics[width=\linewidth]{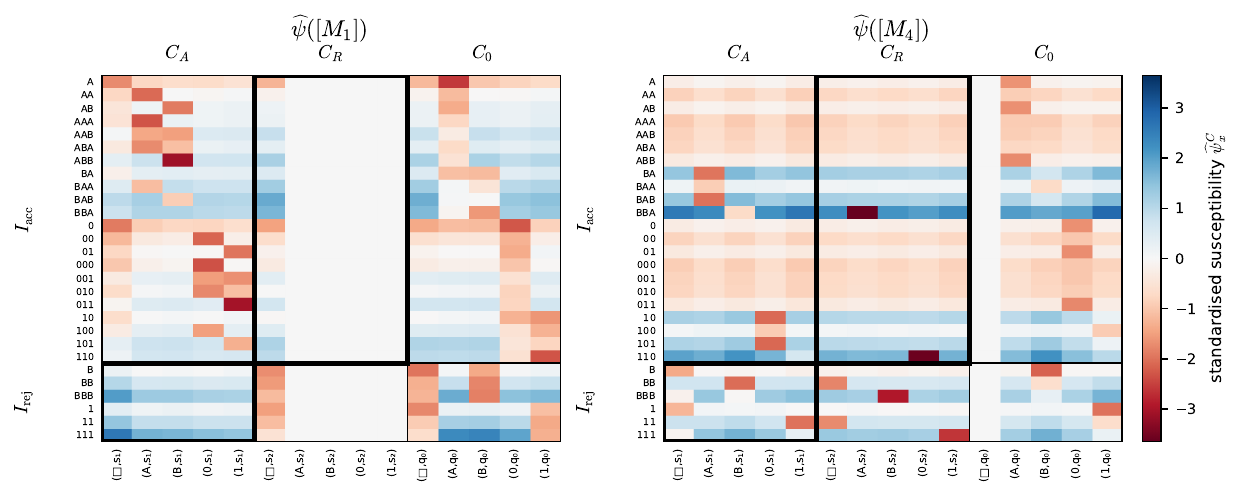}
\caption{\textbf{Blocks with low rank in path-separable DFAs vs high rank in non-path-separable DFAs}. An illustration of the rank bound of \cref{thm:rank_bound} on susceptibility matrices for $M_1$ and $M_4$. Since $M_1$ never visits region $A$ on rejected inputs, variations in $s_1$-transitions have no effect on the rejected input's loss, therefore the block $X_{\rej, A}$ (rows $I_{\rej}$, columns $\mathcal C_A$), has rank $2 \leq 2$, and since it never visits region $R$ on accepted inputs, the block $X_{\acc, R}$ (rows $I_{\acc}$, columns $\mathcal C_R$) has rank $1 \leq 2$ (both bounded by $2$ by \cref{thm:rank_bound}). $M_4$ does \emph{not} keep $I_{\rej}$ and $I_{\acc}$ separated along the regions $A, R$, and the corresponding blocks have ranks exceeding the bound: $\mathrm{rank}(X_{\rej, A}) = 5 > 2$ and $\mathrm{rank}(X_{\acc, R}) = 4 > 2$. Top: the transition diagrams of $M_1$ and $M_4$, with the regions shaded and the run of $\texttt{BBA}$ bolded (repeated from \cref{fig:sep_machines_intro}), juxtaposed above their susceptibility matrices.}
\label{fig:rank_story}
\end{figure}

\begin{table}[tbp]
\centering
\footnotesize
\setlength{\tabcolsep}{3pt}
\begin{tabular}{l*{4}{ccc}}
\toprule
$A$ & \multicolumn{3}{c}{$\{\qacc,s_1\}$} & \multicolumn{3}{c}{$\{\qacc,s_2\}$} & \multicolumn{3}{c}{$\{\qacc,s_1,s_2\}$} & \multicolumn{3}{c}{$\{\qacc\}$} \\
\cmidrule(lr){2-4}\cmidrule(lr){5-7}\cmidrule(lr){8-10}\cmidrule(lr){11-13}
 & $\PSV$ & $X_{\acc,R}$ & $X_{\rej,A}$ & $\PSV$ & $X_{\acc,R}$ & $X_{\rej,A}$ & $\PSV$ & $X_{\acc,R}$ & $X_{\rej,A}$ & $\PSV$ & $X_{\acc,R}$ & $X_{\rej,A}$ \\
\midrule
$M_1$ & \textbf{0} & \textbf{1} & \textbf{2} & $19/48$ & $5$ & $1$ & $7/96$ & $-$ & $3$ & $31/96$ & $6$ & $-$ \\
$M_2$ & $17/96$ & $1$ & $2$ & $31/96$ & $5$ & $1$ & \textbf{0} & $-$ & \textbf{2} & $1/2$ & $6$ & $-$ \\
$M_3$ & $17/192$ & $3$ & $3$ & $17/192$ & $3$ & $3$ & $11/96$ & $-$ & $6$ & $1/16$ & $6$ & $-$ \\
$M_4$ & $3/32$ & $4$ & $5$ & $1/12$ & $5$ & $5$ & $11/96$ & $-$ & $6$ & $1/16$ & $6$ & $-$ \\
$M_5$ & \textbf{0} & \textbf{0} & \textbf{0} & \textbf{0} & \textbf{0} & \textbf{0} & \textbf{0} & $-$ & \textbf{0} & \textbf{0} & \textbf{0} & $-$ \\
\bottomrule
\end{tabular}
\caption{\textbf{Path separation violations and off-diagonal numerical ranks}. Each row is one of the example DFAs $M_1, \dots, M_5$ of \cref{tab:examples}.  Each column group is a partition $Q = A \sqcup R \sqcup \{q_0\}$ with
$\qacc \in A$, $\qrej \in R$, named by $A$ (which determines $R$).  Each group gives the path separation violation $\PSV = \PSV(M; A, R)$
(\cref{def:psv}), which is zero when all accepted inputs only visit states $q \in A$ and all rejected inputs only visit states $q \in R$ (other than $q_0$). The next two columns in each group give the numerical ranks of the two off-diagonal blocks $X_{\acc, R}, X_{\rej, A}$ of $\widehat{\psi}([M])$, with ``$-$'' when the block is empty (no component observables).  Both ranks are at most $2$ wherever $\PSV = 0$ (bold), as \cref{thm:rank_bound} requires.}
\label{tab:psv_rank}
\end{table}

\FloatBarrier
\subsection{The solution set in susceptibility space}\label{sec:expt_population_group}\label{sec:expt_setup}

We extend the analysis to a solution set of DFAs for $\mathcal L_{\texttt{A}, \texttt{0}}$. Throughout the remaining sections we use the smooth relaxation of the \emph{staged} UTM\footnote{We have verified the following quantitative results of these sections for the lookup UTM at the same sampler settings: the biconditional of \cref{sec:expt_rank_stats} holds with no exceptions and an almost identical rank distribution, the symmetry-defect medians of \cref{fig:symmetry_dist} split the same way ($0.015$ for $\asym = 0$ against $0.183$ at $(\theta, \id_Q)$, and $0.016$ against $0.191$ at $(\theta, s_1{\leftrightarrow}s_2)$), and the $\mathrm{PC}_1$ correlations of \cref{fig:pca_clustering} are $-0.95$ and $+0.86$.
We switch to the staged UTM here as we have spent more time understanding its clustering behaviour than the lookup UTM.} for the likelihood function $p(y|x,w) := \Delta\step^T_{\mathrm{staged}}(x,w)_y$, which does \emph{not} satisfy recoding equivariance (\cref{def:recoding_equivariance}). Despite this theoretical gap, we find the same predicted permutation symmetries in the estimated susceptibilities, indicating some degree of stability to the choice of UTM in the empirical setting. We continue using inverse temperature $\beta=30$ and the localiser parameters $(\gamma, \alpha) = (1, 0.01)$ for sampling.

\paragraph{Setup.}

The data $\mathcal L_{\texttt{A}, \texttt{0}}, I \subseteq \Sigma^*$ and the fixed number of time steps $T = 5$ determine the set of classical solution DFAs
\begin{equation}\label{def:classical_solutions_L_A0}
	W^{\mathrm{sol}}_{\mathrm{all}} := W^\DFA \cap \{[M] \in W^\code \mid \forall x \in I, \step^T(x, [M]) = y(x) \},
\end{equation}
where $y(x) = \qacc$ if $x \in \mathcal L_{\texttt{A}, \texttt{0}}$, and $y(x) = \qrej$ otherwise.

\begin{remark}
	$\step^T(x, [M])$ and its smooth relaxation $\Delta \step^T(x, w)$ execute the (noisy) Turing machine for exactly $T=5$ steps, irrespective of the input $x$ or the (noisy) Turing machine being executed. Machines can still enter a terminal state ($\qacc$ or $\qrej$) at an earlier time step $t < T$, but will continue to run for the $T - t$ remaining time steps. Classically this makes no difference as we constrain terminal states to always self-loop, but is an important simplification for noisy execution, where the distribution over states at each time step can be a mix of terminal and non-terminal states. Furthermore, there is no constraint on the final-state of $M$ on strings $x \notin I$, including the empty string, mixed-alphabet strings (like $\texttt{0B1A}$), and strings $x$ with $|x| > 3$. This means solutions in $W^{\mathrm{sol}}_{\mathrm{all}}$ do not necessarily decide $\mathcal L_{\texttt{A}, \texttt{0}}$ in the typical sense\footnote{No DFA can decide all of $\mathcal L_{\texttt{A}, \texttt{0}}$ in 5 steps.}.
\end{remark}

With $|\Sigma| \cdot |\{q_0, s_1, s_2\}| = 5 \cdot 3 = 15$ free state transitions, there are $5^{15} = 30{,}517{,}578{,}125$ candidate DFAs, and exactly $|W^{\mathrm{sol}}_{\mathrm{all}}| = 18{,}980{,}499$ are solutions in the above sense (relative to $I$ and $T$). For a classical solution $M$ and an input $x = x_0 \dots x_{n-1} \in I$, the
run of $M$ on $x$ visits states $q_0(x) = q_0, q_1(x), \dots, q_T(x)$
(\cref{def:psv}), and the tape-head sees symbols $x_0, \dots, x_{T-1}$, with $x_t = \blank$ for $t \geq n$.
Write
\[
  U(M) := \bigl\{(x_t,\, q_t(x)) \mid x \in I,\ 0 \le t < T \bigr\}
\]
for the set of head-symbol-and-state pairs observed during the runs of $M$ on
$I$. A run on $I$ only ever consults transition entries at pairs of $U(M)$: two classical solutions that agree at every pair of $U(M)$ have identical runs on $I$, step by step, and hence the same behaviour and the same used pairs, while their entries at the remaining pairs are unconstrained. Setting every unused entry to preserve the current state therefore changes no run, and each solution is equivalent in this sense to a unique such representative. We work with these canonical representatives:
\begin{equation}\label{def:canonical_solutions}
  W^{\mathrm{sol}} := \bigl\{[M] \in W^{\mathrm{sol}}_{\mathrm{all}} \mid
    (\sigma,q) \notin U(M) \Rightarrow [M]_Q(\sigma, q) = q\bigr\}.
\end{equation}

This leaves $|W^{\mathrm{sol}}| = 38{,}019$ \emph{canonical solutions}, still far more than the handful of qualitatively distinct algorithms one would expect for the task $(\mathcal L_{\texttt{A},\texttt{0}}, I, T)$. The surplus reflects further algorithmic symmetries and redundancies of the task\footnote{There are perhaps more symmetries to study in $W^\mathrm{sol}_{\mathrm{all}}$, but computing and interpreting susceptibilities for all $18{,}980{,}499$ solutions would be impractical.}. To expose this organising structure we embed each machine into \emph{susceptibility space} $\mathbb R^{I \times \mathcal C} \cong \mathbb R^{28 \times 15}$ via the renormalised susceptibility and study its geometry.
\begin{equation}
\begin{tikzcd}[row sep=small]
    W^{\mathrm{sol}} \arrow[r] & \mathbb R^{I \times \mathcal C} \\
    {[M]} \arrow[r, mapsto] & \chi([M])
\end{tikzcd}
\end{equation}
Susceptibility space carries the Frobenius inner product $\langle X, Y\rangle_F = \operatorname{tr}(X^\top Y)=\sum_{x,C} X^C_x\,Y^C_x$, and we probe the solution set through the eigenvectors (\emph{eigenmatrices} henceforth) of its covariance operator
$$
\Gamma = \frac{1}{|W^{\mathrm{sol}}|}\sum_M \bigl(\chi([M])-\bar\chi\bigr)\otimes\bigl(\chi([M])-\bar\chi\bigr),
\qquad \bar\chi = \frac{1}{|W^{\mathrm{sol}}|}\sum_{M}\chi([M]),
$$
where $(u\otimes v)\,w:=\langle v,w\rangle_F\,u$. Its quadratic form is the variance of the solution set along a direction, $\langle u,\Gamma u\rangle_F=\operatorname{Var}_M\langle\chi([M]),u\rangle_F$ for $u\in\mathbb R^{I\times\mathcal C}$, so the leading eigenmatrices are the directions along which the canonical solutions vary most.

In experiments we flatten each estimated matrix $\widehat{\psi}([M])$ to a length-$420$ vector and run principal component analysis (PCA) on the resulting $38{,}019\times420$ data matrix.

\paragraph{Principal components.}

Computing standardised susceptibilities $\{\widehat{\psi}([M]) \mid [M] \in W^{\mathrm{sol}}\}$ at $\beta=30$ on the smooth relaxation of the staged UTM, we observe organisational structure along the leading principal components. \Cref{fig:pca_clustering} colours the $\mathrm{PC}_1 \times \mathrm{PC}_2$ plane of the pointwise PCA by $\PSV_{\min}$ and by \textit{expected halting time} \[\sum_{x \in I} q(x) \min \bigl\{1 \le t \le T \mid q_t(x) \in \{\qacc, \qrej\} \bigr\},\] the expectation under the input distribution $q(x)$ of the earliest time that the DFA reaches a terminal state (the set is nonempty for a classical solution, since $q_T(x) = y(x)$). Both are carried almost entirely by $\mathrm{PC}_1$ (correlations $-0.97$ and $0.84$ respectively). The path-separable machines ($\PSV_{\min} = 0$) form a clearly distinct cluster (sub-organised by expected halting time) and we observe this to be the most consistently-present partition of the space across all experiments we performed.

\begin{figure}[tbp]
\centering
\includegraphics[width=\linewidth]{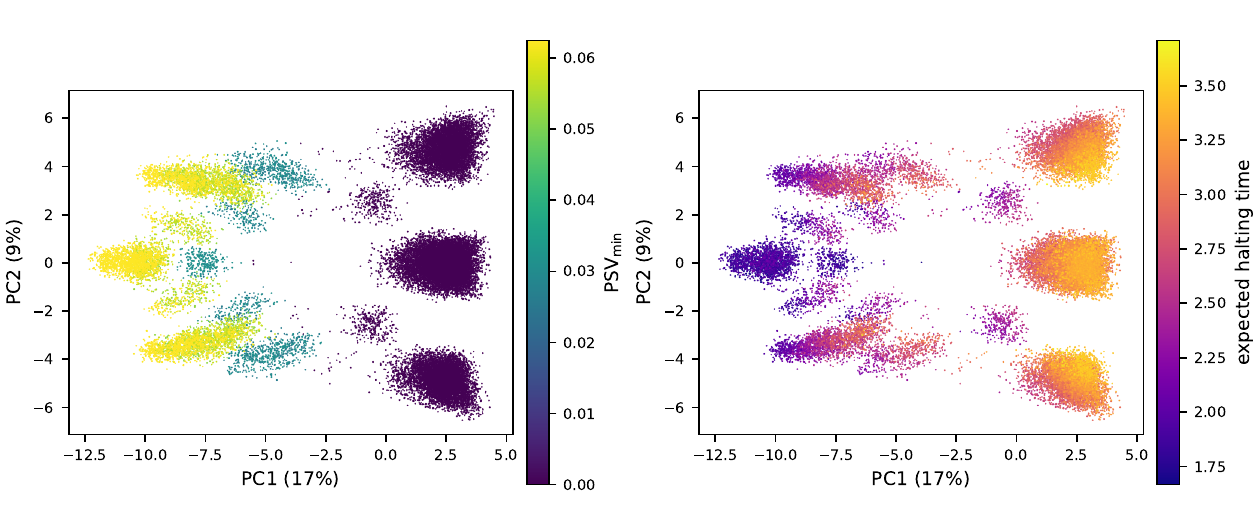}
\caption{\textbf{PC1 correlates with path-separability and halting time.}  The first two principal components of the set of susceptibility matrices $\{\widehat{\psi}([M]) \mid [M] \in W^{\mathrm{sol}}\}$. DFAs in the \textbf{left plot} are coloured by $\PSV_{\min}$, which is zero for path-separable DFAs, whose rejected and accepted inputs visit disjoint sets of intermediate states, and increases with violations of this condition. The DFAs in the \textbf{right plot} are coloured by \emph{expected halting time}, the time $M$ takes to enter the reject or accept state on input $x$ (subject to the input distribution $q(x)$).
}
\label{fig:pca_clustering}
\end{figure}

\paragraph{Recodings reflect or preserve principal components.}
\Cref{fig:symmetry_pca} shows the
$\mathrm{PC}_2 \times \mathrm{PC}_3$ plane, coloured by
$\asym(M;g)$ at each recoding $g$
(\cref{sec:recoding_symmetries}).  For a pseudo-UTM whose cycle map is
recoding equivariant (\cref{def:recoding_equivariance}), the
susceptibility matrix of $g \cdot M$ is the $P_g$-image of the
susceptibility matrix of $M$ (\cref{lem:localised_equivariance}).  Since
$W^{\mathrm{sol}}$ is closed under recoding (\cref{lem:sol_closure}), it follows that
$P_g \Gamma P_g^{-1} = \Gamma$ for the covariance operator $\Gamma$ of
\cref{sec:expt_setup}.  The two
recodings considered are involutions, so $P_g^{-1} = P_g$.  Every
eigenmatrix of $\Gamma$ with simple eigenvalue then lies in $V^+_g$ or
$V^-_g$, and each recoding either preserves or negates each principal
component.
The staged UTM behind these estimates is not recoding equivariant, so
these statements hold only approximately.  Measured directly,
$\|P_g \Gamma P_g - \Gamma\|_F / \|\Gamma\|_F = 0.012$ at
$(\theta, \id_Q)$ and $0.082$ at $(\theta, s_1{\leftrightarrow}s_2)$,
against $0.010$ and $0.009$ for the equivariant lookup UTM.
The symmetry defects \eqref{eq:sym_defect}
of the leading eigenmatrices
$(v_1, v_2, v_3)$ are
$(0.000,\, 0.000,\, 1.000)$ at $(\theta, \id_Q)$ and
$(0.000,\, 0.998,\, 0.001)$ at $(\theta, s_1{\leftrightarrow}s_2)$, and
the three eigenvalues are simple, with $\lambda_1/\lambda_2 = 1.82$ and
$\lambda_2/\lambda_3 = 1.10$.  A defect of $0$ places the eigenmatrix
in $V^+_g$, and a defect of $1$ in $V^-_g$.  Hence
$(\theta, \id_Q)$ negates $\mathrm{PC}_3$,
$(\theta, s_1{\leftrightarrow}s_2)$ negates $\mathrm{PC}_2$, their
product negates both, and all fix $\mathrm{PC}_1$.  The same pattern
continues down the spectrum (\cref{app:sd_spectrum}).  In the figure,
machines fixed by a recoding lie on the axis of its reflection
(dashed), and the running examples are starred.  $M_1$, fixed by
$(\theta, \id_Q)$, and $M_3$, fixed by
$(\theta, s_1{\leftrightarrow}s_2)$, sit on their respective axes.
$M_5$, the only machine among the $38{,}019$ fixed by both, sits at
the origin.  The recoded machines
$\overline{M_3} := (\theta, \id_Q) \cdot M_3$ and
$\overline{M_1} := (\theta, s_1{\leftrightarrow}s_2) \cdot M_1$ land at
the reflected positions.

\subsection{Symmetry statistics}\label{sec:expt_symmetry}

Recall the symmetry defect $\mathrm{sd}_g(\widehat{\psi}([M]))$ of \eqref{eq:sym_defect}. We find empirically, in \cref{fig:symmetry_dist}, that machines with $\asym(M; g) = 0$ have a lower symmetry defect at $g$ than those with $\asym(M; g) \neq 0$. We also measure directly the claim of \cref{fig:symmetry_pca} that reflecting a point $M$ in the $\mathrm{PC}_2 \times \mathrm{PC}_3$ plane across the $\mathrm{PC}_3$-axis (resp. $\mathrm{PC}_2$-axis) puts it near its recoded machine $\overline{M} := (\theta, \id_Q) \cdot M$ (resp. $(\theta, s_1{\leftrightarrow}s_2) \cdot M$). We take distances between the reflected $M$ and $\overline{M}$ with the $\mathrm{PC}_2 \times \mathrm{PC}_3$ subspace metric, and inspect the distribution of distances against the median random-pair distance $1.42$. The third plot of \cref{fig:symmetry_dist} shows these distance distributions concentrated at relatively small distances, with medians $0.06$ for $(\theta, \id_Q)$ and $0.07$ for $(\theta, s_1{\leftrightarrow}s_2)$.

\begin{figure}[H]
\centering
\includegraphics[width=\linewidth]{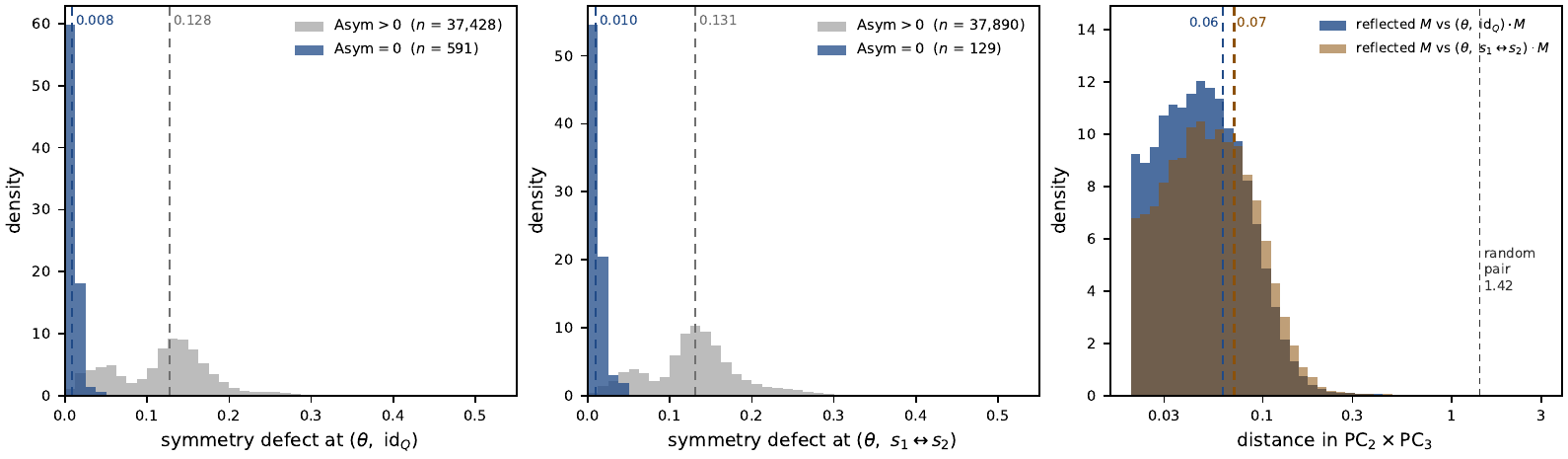}
\caption{\textbf{Algorithmic symmetry is measurable from estimated susceptibilities.}  Left, middle: distributions of the symmetry defect
$\mathrm{sd}_g(\widehat{\psi})$ \eqref{eq:sym_defect}
at the two symmetries $(\theta,\id_Q)$ and $(\theta, s_1{\leftrightarrow}s_2)$, split between $\asym = 0$ and $\asym > 0$. Dashed lines mark each group's median, and we see the median $\asym = 0$ machine has much lower defect than the median $\asym > 0$ machine.
Right: distributions of the distance, in the $\mathrm{PC}_2 \times \mathrm{PC}_3$ plane of
\cref{fig:symmetry_pca}, from a machine reflected across its symmetry's
fixed axis (negating $\mathrm{PC}_3$ at $(\theta, \id_Q)$ and
$\mathrm{PC}_2$ at $(\theta, s_1{\leftrightarrow}s_2)$) to the recoded machine
$g \cdot [M]$. Dashed
lines mark the median of each distribution and a baseline median-distance between
random pairs of machines.}
\label{fig:symmetry_dist}
\end{figure}

\subsection{Rank statistics}\label{sec:expt_rank_stats}

We test the rank bound of \cref{thm:rank_bound} on the susceptibility matrices of every DFA in $W^\mathrm{sol}$.
Recall that a partition $Q \setminus \{q_0\} = A \sqcup R$ with $\qacc \in A$ and $\qrej \in R$ determines two off-diagonal blocks $X_{\acc,R}$ and $X_{\rej,A}$ of $\chi$. In the present setting there are four such partitions:
\[
  \{\qacc, s_1\} \sqcup \{\qrej, s_2\},
  \quad
  \{\qacc, s_2\} \sqcup \{\qrej, s_1\},
  \quad
  \{\qacc, s_1, s_2\} \sqcup \{\qrej\},
  \quad
  \{\qacc\} \sqcup \{\qrej, s_1, s_2\}.
\]

\begin{definition}\label{def:psr}
The
\emph{path separation rank} of a matrix $X \in \mathbb R^{I \times \mathcal C}$, with off-diagonal blocks $X_{\acc, R}$, $X_{\rej, A}$ as in \eqref{eq:rank_blocks}, is
\[
  \PSR(X)
  :=
  \min_{(A, R)}
  \max\bigl\{\rank(X_{\acc, R}), \rank(X_{\rej, A})\bigr\},
\]
the minimum ranging over the four partitions. A block may be empty, e.g. $X_{\rej, A}$ with $A = \{\qacc\}$ which has no component observables, in which case we define $\rank = 0$.
\end{definition}

\begin{corollary}\label{cor:psr_bound}
Under the hypotheses of \cref{thm:rank_bound}: if $M$ is
path-separable (equivalently, if $\PSV_{\min}(M) = 0$,
\cref{prop:psv_zero}), then
$\PSR(\chi([M])) \le 2$.
\end{corollary}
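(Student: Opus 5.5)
The plan is to reduce the corollary directly to \cref{thm:rank_bound}, using the fact that $\PSR$ is a minimum over partitions and so is bounded by its value at any single admissible partition.

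First, assume $M$ is path-separable. By \cref{def:path_separation_partition} there is then a partition $Q \setminus \{q_0\} = A^\ast \sqcup R^\ast$ with $\qacc \in A^\ast$ and $\qrej \in R^\ast$ that is a path separation partition of $M$. By \cref{prop:psv_zero}, this hypothesis is equivalent to $\PSV_{\min}(M) = 0$, since the minimum of the non-negative quantity $\PSV(M; A, R)$ is attained at some partition and vanishes exactly at path separation partitions. This settles the parenthetical equivalence.

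Second, note that the partitions ranged over in \cref{def:psr} are precisely the partitions of $Q\setminus\{q_0\}$ separating $\qacc$ from $\qrej$. In the setting of \cref{sec:expt_rank_stats} these are the four listed ones, so $(A^\ast, R^\ast)$ is among them. Applying \cref{thm:rank_bound} at $(A^\ast, R^\ast)$ gives $\rank(X_{\acc,R^\ast}) \le 2$ and $\rank(X_{\rej,A^\ast}) \le 2$. The hypotheses of that theorem are exactly those assumed here: unmatched-component invariance, and $M$ a classical solution.

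One edge case needs care. If a block is empty (for instance $A^\ast = \{\qacc\}$ with the DFA component set), its rank is $0$ by the convention of \cref{def:psr}, which is consistent with the bound. Hence
\[
\PSR(\chi([M])) \le \max\{\rank(X_{\acc,R^\ast}), \rank(X_{\rej,A^\ast})\} \le 2.
\]

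The only potential obstacle is bookkeeping rather than mathematics. In the experimental setting the components are restricted to $W^\DFA$, with $\mathcal C = \Sigma\times\{q_0,s_1,s_2\}$ and $f=Q$, so I would check that \cref{thm:rank_bound} applies verbatim there. This holds as stated in \cref{sec:expt_task}: restricting the set of columns to a subset only removes columns from the off-diagonal block, and removing columns cannot increase rank.
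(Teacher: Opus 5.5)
Your argument is correct and is the paper's: at a path separation partition, \cref{thm:rank_bound} bounds both off-diagonal blocks by $2$, and $\PSR$, being a minimum over partitions, is at most that value. One correction to your last paragraph: passing to $W^{\DFA}$ does not merely delete columns, because restricting the parameter space changes the Gibbs distribution and hence every entry; the right justification is that the proof of \cref{thm:rank_bound} (via \cref{lem:mismatched_slice}) goes through verbatim on $W^{\DFA}$, as \cref{sec:expt_task} asserts.
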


\begin{proof}
At a path separation partition,
\cref{thm:rank_bound} bounds the rank of both off-diagonal
blocks by $2$.
\end{proof}

For each of the four partitions
$A \sqcup R$ and each machine we take the numerical ranks
(\cref{sec:expt_pointwise}) of the off-diagonal blocks $X_{\acc,R}$ and
$X_{\rej,A}$ of the standardised estimate $\widehat{\psi}$.
\Cref{thm:rank_bound} is satisfied without exception: every
machine with $\PSV(M;\, A, R) = 0$ has both off-diagonal blocks of rank at
most $2$, at all four partitions.  The empirical converse holds in the
non-empty blocks (\cref{fig:cluster_ranks}): at the partition
$s_1 \in A$, $s_2 \in R$ the block $X_{\acc,R}$ exceeds the bound for
$99.8\%$ of the machines with $\PSV(M;\, A, R) > 0$, while $X_{\rej,A}$
is uninformative there,
having rank $2$ for most machines whether or not $\PSV$ vanishes.  At the partition
$s_1, s_2 \in A$, a path separation partition for all but $8{,}666$
machines, the single
non-empty block $X_{\rej,A}$ exceeds the bound for all but two of
those $8{,}666$.  In terms of the
path separation rank, the solution set satisfies the exact
biconditional
\[
  \PSV_{\min}(M) = 0 \iff \PSR\bigl(\widehat{\psi}([M])\bigr) \le 2,
\]
with no exceptions among the $38{,}019$ machines
(\cref{fig:cluster_ranks_min}): the forward implication
is \cref{cor:psr_bound}, transferred to the estimates by
\cref{prop:standardisation_rank}; the converse an empirical fact of this
solution set.  Note that the two sides of the biconditional are computed
from disjoint data: $\PSV_{\min}$ from the classical trajectories,
the ranks from the sampled matrix.  The numerical-rank tolerance is
\texttt{float32} roundoff, far below the sampling error, so the computed
ranks are those of the estimated blocks, and the two directions of the
biconditional relate to the true blocks differently.  Where the bound
holds, noise plays no part: by \cref{prop:standardisation_rank} the
estimated blocks at a path separation partition satisfy it exactly, at
any number of samples, and across all such blocks the third singular
value sits at roundoff, $\sigma_3/\sigma_1 \le 7 \times 10^{-8}$.  Where
the bound is exceeded, the claim is weaker.  A rank above the bound is
a fact about the estimated block, and noise could in principle produce
it even when the true block satisfies the bound.  The data rule this
out in two ways.  In the exceeding blocks the third singular value is
large, not marginal: its median is $0.52\sigma_1$, and in $99\%$ of
them it is above $0.1\sigma_1$, far beyond the reach of sampling
noise.  And
noise does not inflate ranks generically in this data: at the first
partition the block $X_{\rej,A}$ stays at rank at most $2$ for most
machines with $\PSV > 0$, rather than the full rank generic noise
would produce (\cref{fig:cluster_ranks}).

\begin{figure}[H]
\centering
\includegraphics[width=\linewidth]{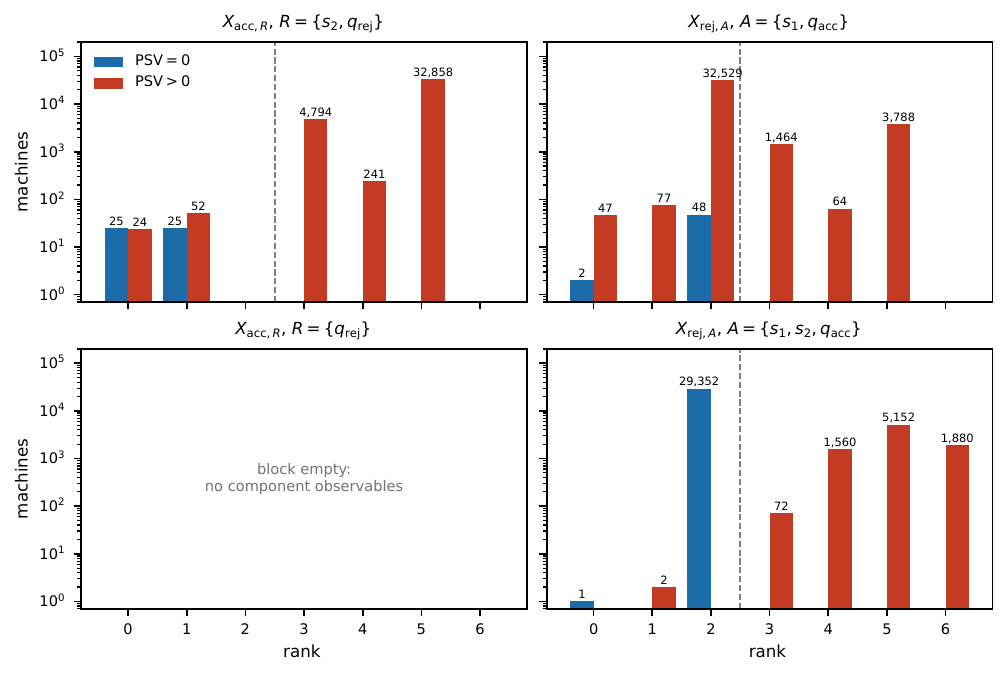}
\caption{\textbf{Machines with $\PSV = 0$ satisfy the rank bound, and
machines with $\PSV > 0$ exceed it.}  Numerical ranks of the off-diagonal blocks
$X_{\acc,R}$ and $X_{\rej,A}$ (\cref{def:psr}) at the
partitions $s_1 \in A$, $s_2 \in R$ (top) and $s_1, s_2 \in A$
(bottom), split by $\PSV(M;\, A, R) = 0$ versus $> 0$.  At
$s_1, s_2 \in A$ the block $X_{\acc,R}$ is empty: $R = \{\qrej\}$ has
no component observables.  The dashed line marks the bound
of \cref{thm:rank_bound}.}
\label{fig:cluster_ranks}
\end{figure}

\begin{figure}[H]
\centering
\includegraphics[width=0.5\linewidth]{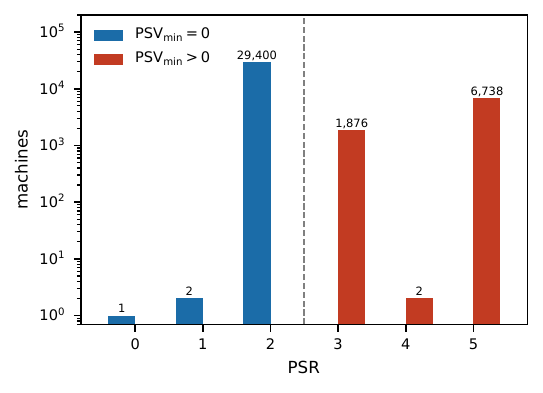}
\caption{\textbf{The path separation rank agrees with path separability.}  Distribution of the path separation rank $\PSR(\widehat{\psi}([M]))$
(\cref{def:psr}): the larger of the two off-diagonal block
ranks, minimised over the four partitions of the
non-initial states, split by $\PSV_{\min}(M) = 0$ versus $> 0$.  No machine
falls on the wrong side of the bound
of \cref{thm:rank_bound} (dashed line).}
\label{fig:cluster_ranks_min}
\end{figure}

\subsection{UMAP embeddings of susceptibilities}\label{sec:expt_population}

\paragraph{Kernels.}  For a susceptibility matrix $X \in \mathbb R^{I \times \mathcal C}$, rows indexed by inputs $x \in I$, its \emph{input kernel} $K(X)$ is the centred, normalised matrix of inner products between input rows,
\begin{equation}\label{eq:input_kernel}
  K(X) := \frac{J\, X X^{\mathsf T} J}{\lVert J\, X X^{\mathsf T} J\rVert_F},
  \qquad
  J := I_{|I|} - \tfrac{1}{|I|}\,\mathbf 1 \mathbf 1^{\mathsf T},
\end{equation}
an $|I| \times |I|$ matrix whose $(x, x')$ entry records how alike inputs $x$ and $x'$ respond across the components.  The \emph{centred kernel alignment} of two matrices $X, Y$ is the inner product of their input kernels,
\begin{equation}\label{eq:cka}
  \mathrm{CKA}(X, Y) := \langle K(X),\, K(Y)\rangle_F \in [0, 1]
  \qquad \text{\cite{kornblith2019similarity}}.
\end{equation}

\paragraph{UMAP.}
To assist further analysis of clustering in the space of Turing machines, we employ a dimensionality reduction technique called UMAP \cite{mcinnes2018umap}.
UMAP optimises a low-dimensional (2D) embedding of the data to reflect local structure in the data, by optimising a graph of pairwise distances. While a useful tool for visualising emergent structure, misleading artifacts in UMAPs are not uncommon, and so we lean on PCA and linear classifiers (like SVM) to prove which clusters are real. Throughout, we use the default UMAP parameters $\texttt{min\_dist=0.1}, \texttt{n\_neighbors=15}$, and use centred kernel alignment \eqref{eq:cka} for the distance metric.

The UMAP embedding of \cref{fig:umap_main} was computed from these susceptibilities (staged UTM, $\beta=30$), each point coloured by $\PSV_{\min}$. Again, as in the PCA plot \cref{fig:pca_clustering}, there is a clear separation of $\PSV_{\min} = 0$ (purple) from the rest of the solution set.  Linear classifiers recover the $\PSV_{\min}$ classes from the susceptibility data alone, so this organisation is not an artifact of the embedding (\cref{sec:expt_cluster_validation}). Colouring the same embedding by the path separation
rank of \cref{sec:expt_rank_stats} recovers the same organisation
(\cref{fig:psv_vs_psr}).

\begin{figure}[H]
\centering
\includegraphics[width=0.92\linewidth]{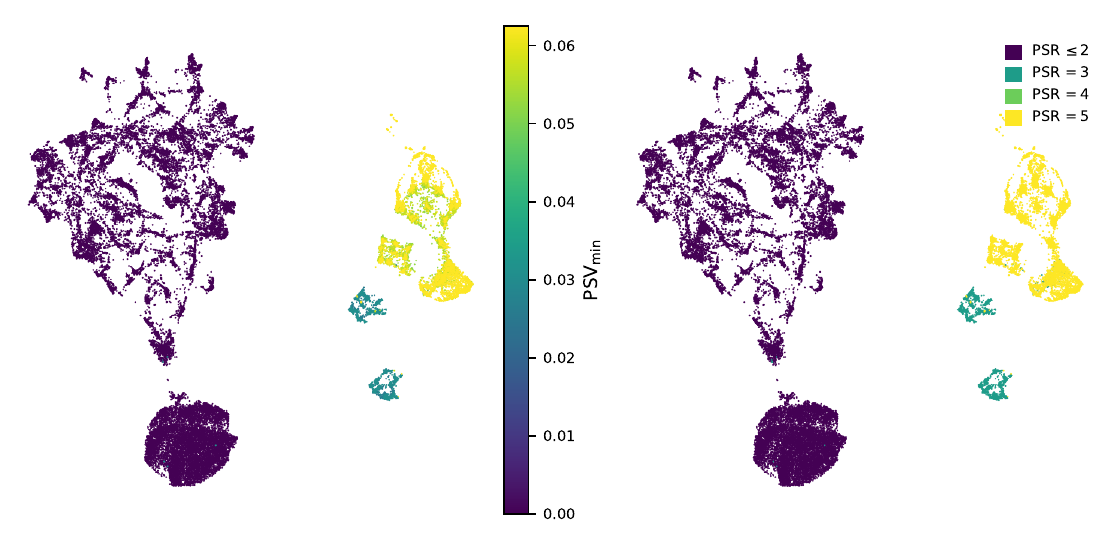}
\caption{\textbf{Path separation violation and path separation rank
agree cluster by cluster.}  The input-kernel embedding of
\cref{fig:umap_main} ($\beta = 30$),
coloured by the minimal path separation violation $\PSV_{\min}$
(\cref{def:psv}, left) and by the path separation rank $\PSR(\widehat{\psi}([M]))$
(\cref{def:psr}, right).}
\label{fig:psv_vs_psr}
\end{figure}

\subsection{Interpreting clusters}\label{sec:expt_reading}

Thus far, the theorems on block ranks and symmetries have explained broad patterns in the organisation of Turing machines in susceptibility space. We complement this by examining the clusters that emerge as we sweep the inverse temperature $\beta \in [1, 1000]$.

Through trial-and-error, we find a decision tree (\cref{fig:transition_tree}) on transitions
(a partition of the solution set $W^\mathrm{sol}$ according to conjunctions of transition rules), such that finer cluster separation is apparent when colouring by the leaves of the tree. At first approximation, major cluster separation (previously
described at $\PSV_{\min}$) is decided by whether $M$ loops at the
initial state on symbols $\texttt{B}$ and $\texttt{1}$. Checking whether $M$ loops on
$\texttt{B}$-only or $\texttt{1}$-only further distinguishes two clusters on the right labelled \textit{twin-B} and \textit{twin-1} (these have $\PSV_{\min} \in \{\tfrac{5}{192}, \tfrac{1}{32}\}$).
Machines that loop on neither symbol make up the remainder of the right major cluster labelled \textit{rest} (these have the highest $\PSV_{\min}$ and the highest $X_{\rej, A}$ and $X_{\acc, R}$ ranks).
On the left half of the tree, we separate the path-separable machines further by
checking whether both subroutine states accept on the blank symbol. Machines with this property form a disc that separates from the main body between $\beta=1$ and $\beta=30$. The finer sub-structure amongst machines that \emph{do not} have $s_1,s_2 \to \qacc$ at the blank symbol is not present in the clustering of $\beta \in \{1, 30\}$, but becomes apparent at $\beta=1000$.

\begin{figure}[H]
\centering
{\centering\resizebox{\textwidth}{!}{%
\begin{tikzpicture}[
  x=1.62cm, y=0.82cm,
  tnode/.style={draw, rounded corners=2pt, align=center, inner sep=2.5pt,
                font=\scriptsize, fill=black!4},
  leafname/.style={font=\tiny\itshape, align=center},
  yes/.style={font=\tiny, pos=0.55, auto=left, inner sep=1pt},
  no/.style={font=\tiny, pos=0.55, auto=right, inner sep=1pt},
  edge/.style={-, thin}]
  \node[tnode] (R)   at (3.75, 0)   {$(B,q_0)\to q_0 \,\wedge\, (1,q_0)\to q_0$};
  \node[tnode] (L2)  at (1.25,-1.2) {$(\blank,s_1)\to\qacc \,\wedge\, (\blank,s_2)\to\qacc$};
  \node[tnode] (R2)  at (6.25,-1.2) {$(B,q_0)\to q_0 \,\vee\, (1,q_0)\to q_0$};
  \node[tnode] (L3)  at (2.5, -2.4) {$(A,q_0)\to \bar s$};
  \node[tnode] (R3)  at (5.5, -2.4) {$(B,q_0)\to q_0$};
  \node[tnode] (L4a) at (1.5, -3.6) {$(0,q_0)\to \bar s$};
  \node[tnode] (L4b) at (3.5, -3.6) {$(0,q_0)\to \bar s$};
  \draw[edge] (R)  -- node[yes] {Y} (L2);
  \draw[edge] (R)  -- node[no]  {N} (R2);
  \draw[edge] (L2) -- node[no]  {N} (L3);
  \draw[edge] (R2) -- node[yes] {Y} (R3);
  \draw[edge] (L3) -- node[yes] {Y} (L4a);
  \draw[edge] (L3) -- node[no]  {N} (L4b);
  \node[draw=black!60, fill={rgb,255:red,214;green,39;blue,40}, minimum size=7pt, inner sep=0pt] (F0) at (0.0,-4.8) {};
  \node[leafname, anchor=north] at (0.0,-5.0) {blank-accept-both\\$9{,}752$};
  \draw[edge] (L2) -- node[yes] {Y} (F0);
  \node[draw=black!60, fill={rgb,255:red,31;green,119;blue,180}, minimum size=7pt, inner sep=0pt] (F1) at (1.0,-4.8) {};
  \node[leafname, anchor=north] at (1.0,-5.0) {detour-both\\$4{,}802$};
  \draw[edge] (L4a) -- node[yes] {Y} (F1);
  \node[draw=black!60, fill={rgb,255:red,255;green,127;blue,14}, minimum size=7pt, inner sep=0pt] (F2) at (2.0,-4.8) {};
  \node[leafname, anchor=north] at (2.0,-5.0) {detour-A\\$4{,}900$};
  \draw[edge] (L4a) -- node[no] {N} (F2);
  \node[draw=black!60, fill={rgb,255:red,148;green,103;blue,189}, minimum size=7pt, inner sep=0pt] (F3) at (3.0,-4.8) {};
  \node[leafname, anchor=north] at (3.0,-5.0) {detour-0\\$4{,}900$};
  \draw[edge] (L4b) -- node[yes] {Y} (F3);
  \node[draw=black!60, fill={rgb,255:red,140;green,86;blue,75}, minimum size=7pt, inner sep=0pt] (F4) at (4.0,-4.8) {};
  \node[leafname, anchor=north] at (4.0,-5.0) {no-detour\\$5{,}049$};
  \draw[edge] (L4b) -- node[no] {N} (F4);
  \node[draw=black!60, fill={rgb,255:red,227;green,119;blue,194}, minimum size=7pt, inner sep=0pt] (F5) at (5.0,-4.8) {};
  \node[leafname, anchor=north] at (5.0,-5.0) {twin-B\\$896$};
  \draw[edge] (R3) -- node[yes] {Y} (F5);
  \node[draw=black!60, fill={rgb,255:red,188;green,189;blue,34}, minimum size=7pt, inner sep=0pt] (F6) at (6.0,-4.8) {};
  \node[leafname, anchor=north] at (6.0,-5.0) {twin-1\\$896$};
  \draw[edge] (R3) -- node[no] {N} (F6);
  \node[draw=black!60, fill={rgb,255:red,211;green,211;blue,211}, minimum size=7pt, inner sep=0pt] (F7) at (7.0,-4.8) {};
  \node[leafname, anchor=north] at (7.0,-5.0) {rest\\$6{,}824$};
  \draw[edge] (R2) -- node[no] {N} (F7);
\end{tikzpicture}%
}\par}
\caption{\textbf{The transition decision tree.}  Each internal node
tests the indicated transition-table entries, edges answer yes (Y) or
no (N), and each leaf carries its colour, name, and machine count.
$\bar s$ denotes the subroutine state that does not accept on blank.}
\label{fig:transition_tree}
\end{figure}

\begin{figure}[H]
\centering
\umappanel{$\beta = 1$}{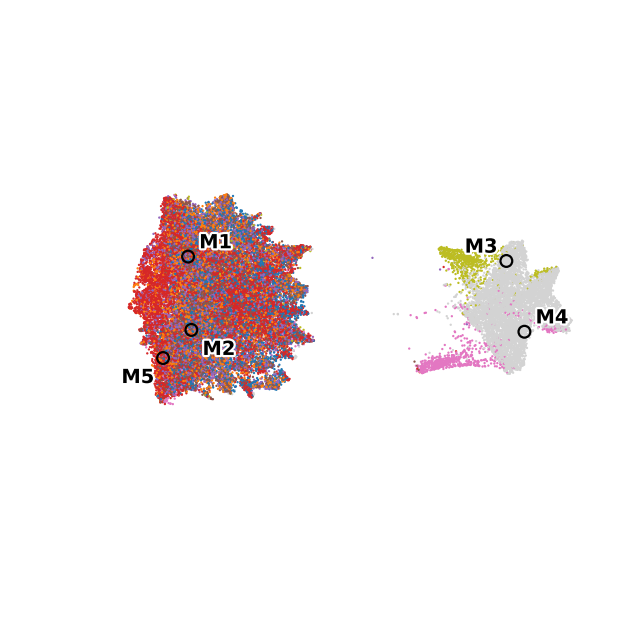}\hfill
\umappanel{$\beta = 30$}{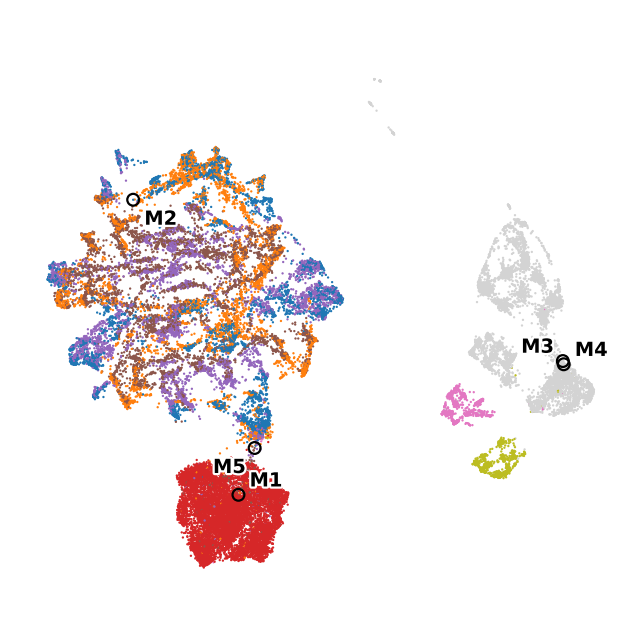}\hfill
\umappanel{$\beta = 1000$}{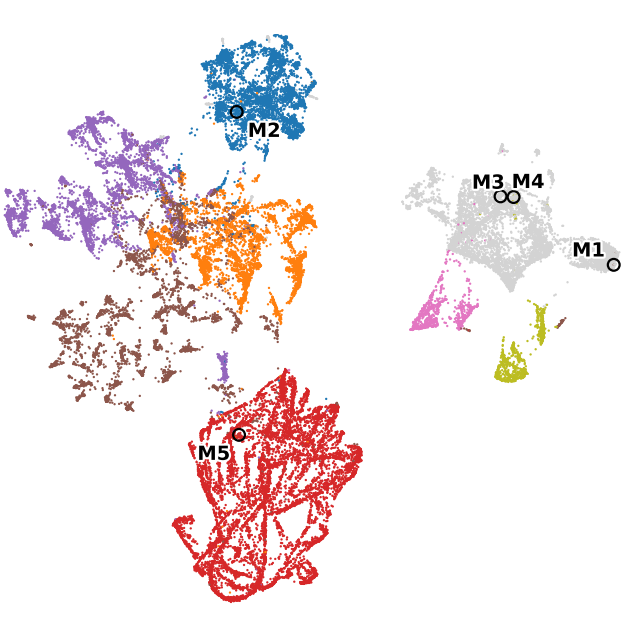}
\caption{\textbf{The input-kernel clusters are conjunctions of
transition-table facts.}  The aligned input-kernel embedding at three values of $\beta$,
coloured by the leaves of the transition decision tree of
\cref{fig:transition_tree}.  Labels mark the running examples
$M_1$--$M_5$.}
\label{fig:read_tree}
\end{figure}

Taken together, the experiments verify the two theorems of
\cref{sec:theorems} across the solution set of DFAs, and show that
further structure of the machines can be read off the susceptibility
geometry between them: the transition rules of
\cref{fig:read_tree}, the halting times of
\cref{fig:read_lat,fig:pca_clustering}, the asymmetries in \cref{fig:symmetry_dist} and the path separation
violations of
\cref{fig:umap_main} are all recovered from the estimated
susceptibilities alone.

\paragraph{Code and data availability.}
Code and data are archived at
\href{https://doi.org/10.5281/zenodo.22205895}{doi:10.5281/zenodo.22205895}:
the experiment runs as Zarr stores, the analysis library that computes the
susceptibilities, ranks, and symmetries, and the implementation of noisy
Turing machines and samplers (GRLD, NUTS, and the smooth relaxation), with a
loader and per-run description in the accompanying \texttt{README}.
The runs cover the susceptibility matrices, embeddings, rank statistics, and
labels for the $38{,}019$ canonical solutions, together with the
temperature-sweep and convergence runs.

\bibliographystyle{alpha}
\bibliography{references}

\newcommand{\etalchar}[1]{$^{#1}$}
\begin{thebibliography}{BWHM26}

\bibitem[BHW16]{bissiri2016general}
Pier~Giovanni Bissiri, Chris~C Holmes, and Stephen~G Walker.
\newblock A general framework for updating belief distributions.
\newblock {\em Journal of the Royal Statistical Society: Series B (Statistical
  Methodology)}, 78(5):1103--1130, 2016.

\bibitem[BWHM26]{baker2025studyingsmalllanguagemodels}
Garrett Baker, George Wang, Jesse Hoogland, and Daniel Murfet.
\newblock Structural inference: Interpreting small language models with
  susceptibilities.
\newblock In {\em Proceedings of The 14th International Conference on Learning
  Representations}, 2026.

\bibitem[CM18]{clift2020derivatives}
James Clift and Daniel Murfet.
\newblock Derivatives of {Turing} machines in {Linear Logic}.
\newblock arXiv preprint arXiv:1805.11813, 2018.

\bibitem[CMW21]{clift2021geometryprogramsynthesis}
James Clift, Daniel Murfet, and James Wallbridge.
\newblock Geometry of program synthesis.
\newblock arXiv preprint arXiv:2103.16080, 2021.

\bibitem[EM26]{elliott2026susceptibilities}
Chris Elliott and Daniel Murfet.
\newblock Susceptibilities and patterning: A primer on linear response in
  {Bayesian} learning.
\newblock arXiv preprint arXiv:2605.07980, 2026.

\bibitem[GBW{\etalchar{+}}26]{gordon2025lang3}
Andrew Gordon, Garrett Baker, George Wang, William Snell, Stan van Wingerden,
  and Daniel Murfet.
\newblock Towards spectroscopy: Susceptibility clusters in language models.
\newblock In {\em Proceedings of The 43rd International Conference on Machine
  Learning}, 2026.

\bibitem[HG14]{hoffman2014nuts}
Matthew~D Hoffman and Andrew Gelman.
\newblock The {No-U-Turn} sampler: Adaptively setting path lengths in
  {Hamiltonian Monte Carlo}.
\newblock {\em Journal of Machine Learning Research}, 15(1):1593--1623, 2014.

\bibitem[JS91]{joyal1991geometry}
Andr\'e Joyal and Ross Street.
\newblock The geometry of tensor calculus, {I}.
\newblock {\em Advances in Mathematics}, 88(1):55--112, 1991.

\bibitem[KNLH19]{kornblith2019similarity}
Simon Kornblith, Mohammad Norouzi, Honglak Lee, and Geoffrey Hinton.
\newblock Similarity of neural network representations revisited.
\newblock In {\em International Conference on Machine Learning}, pages
  3519--3529. PMLR, 2019.

\bibitem[MHM18]{mcinnes2018umap}
Leland McInnes, John Healy, and James Melville.
\newblock {UMAP}: Uniform manifold approximation and projection for dimension
  reduction.
\newblock arXiv preprint arXiv:1802.03426, 2018.

\bibitem[MT25]{murfet2025pas}
Daniel Murfet and Will Troiani.
\newblock Programs as singularities.
\newblock arXiv preprint arXiv:2504.08075, 2025.

\bibitem[PT13]{patterson2013sgrld}
Sam Patterson and Yee~Whye Teh.
\newblock Stochastic gradient {Riemannian Langevin} dynamics on the probability
  simplex.
\newblock In C.J. Burges, L.~Bottou, M.~Welling, Z.~Ghahramani, and K.Q.
  Weinberger, editors, {\em Advances in Neural Information Processing Systems},
  volume~26. Curran Associates, Inc., 2013.

\bibitem[Sel11]{selinger2011survey}
Peter Selinger.
\newblock A survey of graphical languages for monoidal categories.
\newblock In Bob Coecke, editor, {\em New Structures for Physics}, volume 813
  of {\em Lecture Notes in Physics}, pages 289--355. Springer, 2011.

\bibitem[Wat09]{watanabe2009algebraic}
Sumio Watanabe.
\newblock {\em Algebraic geometry and statistical learning theory}.
\newblock Cambridge University Press, 2009.

\bibitem[Xu21]{xu2021smoothrelaxationpreservingturing}
Adrian~K. Xu.
\newblock Smooth relaxation preserving {Turing} machines.
\newblock arXiv preprint arXiv:2106.00956, 2021.

\bibitem[Yeo92]{yeomans1992statistical}
Julia~M. Yeomans.
\newblock {\em Statistical Mechanics of Phase Transitions}.
\newblock Clarendon Press, Oxford, 1992.

\end{thebibliography}

\appendix
\crefalias{section}{appendix}
\crefalias{subsection}{appendix}
\crefalias{subsubsection}{appendix}

\section{Details of the model}\label{app:model_details}

This section records the concrete finite-set circuits behind the model
\eqref{eq:abstract_model}.  We use two pseudo-UTMs.  The staged machine is
imported from \cite[\S 5.1]{murfet2025pas}.  Its cycle map is not recoding
equivariant, which \cref{thm:sus_symmetry} assumes, so \cref{sec:lookup_utm}
introduces a second machine, the lookup pseudo-UTM, whose cycle map is
recoding equivariant (\cref{cor:lookup_recoding_equivariance}).  \cref{sec:circuits_body} sets up
circuits and their naive probabilistic extensions.  This circuit formalism
produces the same relaxations one gets by encoding the step function of a
UTM in differential linear logic and taking the Sweedler semantics, as is
done in \cite{murfet2025pas}.
\cref{sec:one_period_circuits} presents one period of each machine as a
circuit, together with a reference circuit $\texttt{cycle}_{\mathrm{eval}}$
that applies the transition table in a single step.  \cref{app:cycle_maps} takes the cycle
maps to be the naive probabilistic extensions of these circuits and computes them in closed
form.  \cref{app:cycle_symmetries} proves the invariance and equivariance
properties assumed by the theorems of the main text.

Throughout this section put $K := \Sigma \times Q$, write $D$ for the direction
set $\{L, S, R\}$ of \eqref{eq:W_code}, and identify $D$ with $\{-1,0,+1\}$
through the head-relative shifts $s_L = -1$, $s_S = 0$, $s_R = +1$ of
\cref{def:step}, writing $d$ for both a direction and its shift.  We identify a classical code with
\[
  c \in W^\code = \prod_{k\in K} \Sigma \times Q \times D,
  \qquad c_k=(c_{k,\Sigma},c_{k,Q},c_{k,D}),
\]
and write a noisy code as $w=(w_k)_{k\in K}$ with entries
$w_k=(w_{k,\Sigma},w_{k,Q},w_{k,D})\in\Delta\Sigma\times\Delta Q\times\Delta D$, so
that $w_{k,f} = w_{(\sigma,q,f)}$ for $k=(\sigma,q)$ in the notation of
\cref{def:component}.

\subsection{The lookup pseudo-UTM}\label{sec:lookup_utm}

The lookup pseudo-UTM is a four-tape machine with description, staging, state,
and working tapes.  The description tape lists exactly one tuple for each
of the $N=|K|$ input pairs, in the form
\[
  X\; \sigma_1 q_1 \sigma'_1 q'_1 d_1\;\cdots\;
  \sigma_N q_N \sigma'_N q'_N d_N\;X,
\]
so the inputs $(\sigma_j,q_j)$ are distinct.  The staging tape has three cells,
initially $XXX$, the state tape contains the simulated state, and the working
tape contains the simulated tape.

This is a pseudo-UTM in the sense of \cref{def:pseudo_utm}, with the
following data.  The auxiliary tapes are
$\mathrm{Aux} = \{\mathrm{description}, \mathrm{staging}\}$ and the tape
alphabet is $\Sigma_{\mathrm{UTM}} := \Sigma \sqcup Q \sqcup D \sqcup \{X\}$.
The code layout $\ell$ sends $(\sigma_j, q_j)$ to the three description
squares carrying $\sigma'_j$, $q'_j$, $d_j$ in the display above.  The set of
states is
\begin{align*}
  Q_{\mathrm{UTM}} :=\ &\{\mathrm{compInit},\ \mathrm{copySymbol},\
      \mathrm{updateSymbol},\ \mathrm{updateState},\ \mathrm{updateDir},\
      \mathrm{resetDescr}\}\\
  \sqcup\ &\{\mathrm{compSymbol},\ \mathrm{compState},\
      \neg\mathrm{compState},\ \neg\mathrm{copySymbol},\\
  &\phantom{\{}\ \,\neg\mathrm{copyState},\ \neg\mathrm{copyDir}\} \times K\\
  \sqcup\ &\{\mathrm{copyState}\} \times \Sigma
      \ \sqcup\ \{\mathrm{copyDir}\} \times \Sigma \times Q\\
  \sqcup\ &\{\mathrm{postCompSymbol},\ \mathrm{postCompState},\
      \mathrm{postCopySymbol},\\
  &\phantom{\{}\ \,\mathrm{postCopyState},\
      \mathrm{postCopyDir}\} \times \Sigma \times Q \times D,
\end{align*}
where the second factors are written as subscripts in \cref{fig:oi_utm}, so
$\mathrm{compSymbol}_{\sigma,q}$ denotes $(\mathrm{compSymbol}, (\sigma,q))$.  The
indexed families record values in the control state: the comparison and
no-match states carry the recorded pair, the copy states accumulate the
entries read so far, and the post-match states carry the copied triple.  The
transition function is \cref{fig:oi_utm}.  The fixed values are $X$ on the
delimiter and staging squares, the initial state $\mathrm{compInit}$, and
the initial head positions: the description head at the first tuple, the
staging head at its middle cell, and the state and working heads at index
$0$.

In one period (\cref{fig:oi_utm}) the machine records the current pair
$(\sigma,q)$ from the working and state tapes into the UTM state, scans the description tape,
and at the unique tuple with input $(\sigma,q)$ copies the output
$(\sigma',q',d)$ to the staging tape and also records it in the UTM state.  After this match, the machine walks
over the remaining tuples, reading the description tape only to detect the
closing $X$. The behaviour at every tuple is to rewrite all three staging squares, with $X$ before the match
and the recorded triple (in the UTM state) from the match on, so the staged values never depend
on what was previously staged. Once the end of the description tape is reached (marked by $X$), the update phase
reads the staged values one at a time, restoring each staging square to
$X$: it writes $\sigma'$ to the working tape, writes $q'$ to the state
tape, and moves the working head by $d$.  The description head then rewinds
to the opening $X$.  The update
phase also has no-op branches for a staging square still holding $X$
(\cref{fig:oi_utm}), but they are never taken: the description tape lists a
tuple for every pair, so some tuple always matches.  The schedule
has length
\[
  P=5N+1+3+5N+1=10N+5,
\]
the period of \cref{def:pseudo_utm} for this machine.

The staged pseudo-UTM of \cite[\S 5.1]{murfet2025pas} rereads the pair at
every tuple and overwrites the staged value on a later match, so its naive
probabilistic extension depends on the scan order.  The lookup machine reads
the pair once (into the UTM state), and thus its naive probabilistic extension does not (\cref{cor:lookup_srp}).

\begin{center}
\[
\begin{tikzcd}[row sep = large, column sep = small]
	{} && {\text{compInit}/\text{compSymbol}_{\sigma,q}} \\
	{\text{updateSymbol}} && {\text{compState}_{\sigma,q}} && {\neg\text{compState}_{\sigma,q}} \\
	{\text{updateState}} && {\text{copySymbol}} && {\neg\text{copySymbol}_{\sigma,q}} \\
	{\text{updateDir}} && {\text{copyState}_{\sigma'}} && {\neg\text{copyState}_{\sigma,q}} \\
	{\text{resetDescr}} && {\text{copyDir}_{\sigma',q'}} && {\neg\text{copyDir}_{\sigma,q}} \\
	{\textsl{go to resetDescr}} && {\textsl{go to postCompSymbol}_{\sigma',q',d}} && {\textsl{go to compSymbol}_{\sigma,q}} \\
	{\textsl{go to compInit}} && {} && {}
	\arrow["{\substack{X,b,c,u\\ LLSS}}"', from=1-3, to=2-1]
	\arrow["{\substack{\mathrm{desc}=\sigma\\ RLSS}}", from=1-3, to=2-3]
	\arrow["{\substack{\text{else}\\ RLSS}}", from=1-3, to=2-5]
	\arrow["{\substack{\mathrm{desc}=q\\ RSSS}}", from=2-3, to=3-3]
	\arrow["{\substack{\text{else}\\ RSSS}}", from=2-3, to=3-5]
	\arrow["{\substack{a,b,c,u\\ RSSS}}", from=2-5, to=3-5]
	\arrow["{\substack{\sigma',b,c,u\\ \text{write }\sigma',\sigma',c,u\\ RRSS}}", from=3-3, to=4-3]
	\arrow["{\substack{q',b,c,u\\ \text{write }q',q',c,u\\ RRSS}}", from=4-3, to=5-3]
	\arrow["{\substack{d,b,c,u\\ \text{write }d,d,c,u\\ RLSS}}", from=5-3, to=6-3]
	\arrow["{\substack{a,b,c,u\\ \text{write }a,X,c,u\\ RRSS}}", from=3-5, to=4-5]
	\arrow["{\substack{a,b,c,u\\ \text{write }a,X,c,u\\ RRSS}}", from=4-5, to=5-5]
	\arrow["{\substack{a,b,c,u\\ \text{write }a,X,c,u\\ RLSS}}", from=5-5, to=6-5]
	\arrow["{\substack{a,X,c,u\\ SRSS}}"', shift right=3, from=2-1, to=3-1]
	\arrow["{\substack{a,e,c,u\\ \text{write }a,X,c,e\\ SRSS}}", shift left=3, from=2-1, to=3-1]
	\arrow["{\substack{a,X,c,u\\ SRSS}}"', shift right=3, from=3-1, to=4-1]
	\arrow["{\substack{a,e,c,u\\ \text{write }a,X,e,u\\ SRSS}}", shift left=3, from=3-1, to=4-1]
	\arrow["{\substack{a,X,c,u\\ SLSS}}"', shift right=3, from=4-1, to=5-1]
	\arrow["{\substack{a,e,c,u\\ \text{write }a,X,c,u\\ SLSe}}", shift left=3, from=4-1, to=5-1]
	\arrow["{\substack{a,b,c,u\\ LSSS}}"', from=5-1, to=6-1]
	\arrow["{\substack{X,b,c,u\\ RSSS}}", bend left=80, looseness=1.6, from=5-1, to=7-1]
\end{tikzcd}
\]
\[
\begin{tikzcd}[row sep = large, column sep = huge]
	{\text{postCompSymbol}_{\sigma',q',d}} && {\textsl{go to updateSymbol}} \\
	{\text{postCompState}_{\sigma',q',d}} \\
	{\text{postCopySymbol}_{\sigma',q',d}} \\
	{\text{postCopyState}_{\sigma',q',d}} \\
	{\text{postCopyDir}_{\sigma',q',d}} \\
	{\textsl{go to postCompSymbol}_{\sigma',q',d}}
	\arrow["{\substack{X,b,c,u\\ LLSS}}", from=1-1, to=1-3]
	\arrow["{\substack{a,b,c,u\\ RLSS}}", from=1-1, to=2-1]
	\arrow["{\substack{a,b,c,u\\ RSSS}}", from=2-1, to=3-1]
	\arrow["{\substack{a,b,c,u\\ \text{write }a,\sigma',c,u\\ RRSS}}", from=3-1, to=4-1]
	\arrow["{\substack{a,b,c,u\\ \text{write }a,q',c,u\\ RRSS}}", from=4-1, to=5-1]
	\arrow["{\substack{a,b,c,u\\ \text{write }a,d,c,u\\ RLSS}}", from=5-1, to=6-1]
\end{tikzcd}
\]
\end{center}
\refstepcounter{figure}\label{fig:oi_utm}%
{\small\noindent Figure~\thefigure:\ \textbf{The lookup pseudo-UTM $\mathcal{U}$ (one simulated step).}  An arrow $q \to q'$ reads the four tape symbols (description, staging, state, working) indicated in the first line of its label, writes the symbols in the \emph{write} line (with no write line, it rewrites the read symbols), and moves the four heads per the four-letter move line, with $R, S, L$ standing for Right, Stay, Left.  The letter $e$ in the move line $SLSe$ moves the working head by the direction $e$ just read from the staging tape.  Here $a, c, u, e$ are generic symbols which are not $X$, and $b$ a generic symbol (which may be $X$).  On the comparison arrows only the description-tape test ($\mathrm{desc}=\sigma$, $\mathrm{desc}=q$) is shown, and `$\mathrm{else}$' is the default branch.  The merged node $\text{compInit}/\text{compSymbol}_{\sigma,q}$ performs the first comparison: $\text{compInit}$ reads the pair $(\sigma, q)$ from the working and state tapes once, records it in the UTM state, and tests the description tape against it.  Every later comparison ($\mathrm{desc}=\sigma$, $\mathrm{desc}=q$) tests against the recorded pair.  The left column is the update phase, whose paired arrows are the staged-value branch and the never-taken no-op branch for a staging square still holding $X$.  The second diagram is the post-match walk to the closing $X$, which re-stages the copied triple at every remaining tuple.  The slanted \emph{go to} nodes are jump targets, not UTM states.\par}

\subsection{Circuits}\label{sec:circuits_body}

We formalise the computation of the pseudo-UTMs as circuits.  The semantics
will be in the opposite category $\mathbf{CAlg}_k^{\mathrm{op}}$ of
commutative $k$-algebras over a given field $k$.

For a finite set $S$, let $kS$ be the free vector space with basis
$(e_s)_{s\in S}$, and let $x_s := e_s^*$ form the dual basis of $kS^*$.  Set
\[
  A_S := k[x_s \mid s\in S],
\]
the polynomial algebra on these coordinate functions.

\begin{definition}\label{def:circuit}
A \emph{type} is a finite set.  A \emph{gate} with input types
$S_1,\ldots,S_n$ and output types $R_1,\ldots,R_m$ is a tuple of functions
\[
  f_j\colon S_1\times\cdots\times S_n\lto R_j,
  \qquad j=1,\ldots,m,
\]
or equivalently a function
\[
  f=(f_1,\ldots,f_m)\colon
  S_1\times\cdots\times S_n\lto R_1\times\cdots\times R_m.
\]
A \emph{circuit} is a formal string diagram
\cite{joyal1991geometry, selinger2011survey} whose wires are labelled by
types and whose nodes are gates.
\end{definition}

Each gate $f$ has an associated string-diagram generator,
\[
\begin{tikzpicture}[font=\small, x=1cm, y=1cm, baseline=(F.center), >=Stealth, bundle/.style={line width=1.2pt}]
  \node[draw, circle, minimum size=3.4em] (F) at (0,0) {$f$};
  \coordinate (bot) at (0,-1.3);
  \coordinate (top) at (0,1.3);
  \draw[marrow] (F.220 |- bot) -- (F.220);
  \draw[marrow] (F.320 |- bot) -- (F.320);
  \draw[marrow] (F.140) -- (F.140 |- top);
  \draw[marrow] (F.40)  -- (F.40  |- top);
  \node[below,font=\scriptsize] at (F.220 |- bot) {$S_1$};
  \node[below,font=\scriptsize] at (F.320 |- bot) {$S_n$};
  \node[font=\scriptsize] at (0,-0.95) {$\cdots$};
  \node[above,font=\scriptsize] at (F.140 |- top) {$R_1$};
  \node[above,font=\scriptsize] at (F.40 |- top) {$R_m$};
  \node[font=\scriptsize] at (0,0.95) {$\cdots$};
  \node[draw, circle, minimum size=3.4em] (G) at (4.6,0) {$f$};
  \draw[bundle,marrow] (4.6,-1.3) -- (G.south);
  \draw[bundle,marrow] (G.north) -- (4.6,1.3);
  \node[below,font=\scriptsize] at (4.6,-1.3) {$S_1,\ldots,S_n$};
  \node[above,font=\scriptsize] at (4.6,1.3) {$R_1,\ldots,R_m$};
\end{tikzpicture}
\]
We draw a finite list of parallel wires, a \emph{bundle}, as a single thick
arrow labelled by the list of its types: the right-hand form is the same gate
with its inputs and outputs each gathered into a bundle.

We allow $n = 0$, the empty product being a one-point set: a gate with no
inputs is a constant $r \in R_1 \times \cdots \times R_m$, drawn with only its
output wires.  The \emph{blank
constant} is the gate with no inputs and one output of type $\Sigma$, valued
at the blank symbol $\blank$:
\[
\begin{tikzpicture}[font=\small, x=1cm, y=1cm, baseline=(B.center), >=Stealth]
  \node[draw, circle, fill=white, minimum size=2.4em] (B) at (0,0) {$\blank$};
  \draw[marrow] (B) -- (0,1.15);
  \node[above,font=\scriptsize] at (0,1.15) {$\Sigma$};
\end{tikzpicture}
\]
Dually, we allow $m = 0$: the \emph{counit} of type $R$ is the gate with one
input of type $R$ and no outputs, computing the unique function
$R \to \{\ast\}$.  It discards its input, and is drawn as a small hollow
circle terminating the wire:
\[
\begin{tikzpicture}[font=\small, x=1cm, y=1cm, baseline=(W.center), >=Stealth]
  \node[draw, circle, fill=white, inner sep=0pt, minimum size=6pt] (U) at (0,1.15) {};
  \coordinate (W) at (0,0.55);
  \draw[marrow] (0,0) -- (U);
  \node[below,font=\scriptsize] at (0,0) {$R$};
\end{tikzpicture}
\]

\begin{definition}\label{def:coordinate_interpretation}
The \emph{coordinate interpretation} of the gate $f$ is the morphism in
$\mathbf{CAlg}_k^{\mathrm{op}}$ determined by a morphism of $k$-algebras
\[
  f^*\colon A_{R_1}\otimes_k\cdots\otimes_k A_{R_m}
  \longrightarrow
  A_{S_1}\otimes_k\cdots\otimes_k A_{S_n}.
\]
A tensor product of polynomial algebras is again a polynomial algebra, on
the disjoint union of the variables:
\[
  A_{R_1}\otimes_k\cdots\otimes_k A_{R_m}
  \cong
  k\bigl[z^{(j)}_r \bigm| 1\le j\le m,\ r\in R_j\bigr],
\]
where $z^{(j)}_r$ denotes the coordinate function $x_r$ of the $j$-th
factor, and likewise
$A_{S_1}\otimes_k\cdots\otimes_k A_{S_n}
\cong k\bigl[x^{(i)}_s \bigm| 1\le i\le n,\ s\in S_i\bigr]$.
Under these identifications, $f^*$ is the morphism of polynomial algebras
determined by
\[
  f^*(z^{(j)}_{r}) =
  \sum_{f_j(s_1,\ldots,s_n)=r}\prod_i x^{(i)}_{s_i}.
\]
The \emph{coordinate interpretation} of a circuit is the morphism obtained
by composing the coordinate interpretations of its gates according to the
string diagram. That is, gates in sequence compose, gates side by side tensor, and a bare wire is interpreted as the identity.
\end{definition}

For a constant gate ($n = 0$, valued at $r$) the target algebra is the
empty tensor product, which is $k$.  The empty product of variables is
$1 \in k$, and the formula reads $f^*(z^{(j)}_{r'}) = \one[r' = r_j]$.
For the counit of type $R$ ($m = 0$) the source algebra is $k$, and $f^*$ is
the unit map $k \to A_R$ of the algebra.

The interpretation of a circuit is generally \emph{not} the
interpretation of the function it computes, taken as a single gate.

\begin{example}\label{ex:pnotp}
Let $\mathbb B = \{0, 1\}$ with the usual gates $\neg$ and $\wedge$ for
negation and conjunction respectively, and
let $\Delta \colon \mathbb B \lto \mathbb B^2$, $\Delta(s) = (s, s)$, be
the diagonal.  Consider the circuit
\begin{center}
\begin{tikzpicture}[font=\small, x=1cm, y=1cm, >=Stealth]
  \node[branch] (D) at (0,0) {};
  \node[draw, circle, minimum size=2.2em] (N) at (0.9,1.4) {$\neg$};
  \node[draw, circle, minimum size=2.2em] (A) at (0,2.8) {$\wedge$};
  \draw[marrow] (0,-0.8) -- (D.south);
  \draw[marrow] (D.140) to[out=90,in=-90] (A.220);
  \draw[marrow] (D.40) to[out=90,in=-90] (N.south);
  \draw[marrow] (N.north) to[out=90,in=-90] (A.320);
  \draw[marrow] (A.north) -- ++(0,0.55);
  \node[below,font=\scriptsize] at (0,-0.8) {$\mathbb B$};
  \node[above,font=\scriptsize] at ($(A.north)+(0,0.55)$) {$\mathbb B$};
\end{tikzpicture}
\end{center}
which, read bottom to top, is the composite
$\wedge \circ (\id \times \neg) \circ \Delta$ and computes
$s \mapsto s \wedge \neg s$, the constant $0$.  Its coordinate interpretation is the
composite of the coordinate interpretations of its gates, the underlying
algebra morphisms applying in the opposite order: writing
$A_{\mathbb B} = k[x_0, x_1]$ at the input and $k[z_0, z_1]$ at the output,
\[
  z_1
  \;\overset{\wedge^*}{\longmapsto}\; x^{(1)}_1 x^{(2)}_1
  \;\overset{\id^* \otimes \neg^*}{\longmapsto}\; x^{(1)}_1 x^{(2)}_0
  \;\overset{\Delta^*}{\longmapsto}\; x_1 x_0,
\]
and likewise $z_0 \mapsto x_0^2 + x_0 x_1 + x_1^2$.  The constant function
is itself a gate, with coordinate interpretation $z_1 \mapsto 0$,
$z_0 \mapsto x_0 + x_1$.
\end{example}

Applying the functor of $k$-points
$\operatorname{Hom}_{\mathbf{CAlg}_k}(-, k)$ to the coordinate semantics
gives a polynomial map of affine spaces
\[
  kS_1 \times \cdots \times kS_n
  \longrightarrow
  kR_1 \times \cdots \times kR_m.
\]
A $k$-point of $A_{S_i}$ is a tuple
$a^{(i)} = (a^{(i)}_s)_{s \in S_i} \in k^{S_i} = kS_i$ of values of the
coordinate functions $x_s$.  On $k$-points $f^*$ acts by precomposition,
sending $(a^{(1)}, \ldots, a^{(n)})$ to $(b^{(1)}, \ldots, b^{(m)})$, where
\[
  b^{(j)}_{r} =
  \sum_{f_j(s_1,\ldots,s_n)=r}\ \prod_{i=1}^{n} a^{(i)}_{s_i},
  \qquad j = 1, \ldots, m, \quad r \in R_j.
\]

The formula defining $f^*$, and its direction from outputs to inputs, have
a probabilistic reading: the morphism describes a random variable in terms
of the random variables it depends on.  For this discussion take
$k = \mathbb R$ and $m = 1$, so the gate is a single function
$f\colon S_1\times\cdots\times S_n\lto R$.

\begin{definition}\label{def:rv_type}
A \emph{random variable of type $S$}, written $X : S$, is a random variable
valued in the finite set $S$.  Its \emph{distribution} is the point
$P_X \in \Delta S \subseteq \mathbb{R}S$ whose
coordinates are
\[
  x_s(P_X) = P(X = s), \qquad s \in S.
\]
\end{definition}

\begin{definition}\label{def:independent_sampling}
Let $X_1 : S_1, \ldots, X_n : S_n$ be independent random variables.  The
random variable $f(X_1, \ldots, X_n) : R$ is obtained by drawing one sample
from each $X_i$ and applying $f$ to the results.
\end{definition}

By independence, the distribution of $f(X_1, \ldots, X_n)$ is
\begin{equation}\label{eq:sampling_dist}
  P\big(f(X_1,\ldots,X_n) = r\big)
  = \sum_{f(s_1,\ldots,s_n)=r}\prod_i P(X_i = s_i)
  = f^*(z_r)\big(P_{X_1},\ldots,P_{X_n}\big),
\end{equation}
the polynomial $f^*(z_r)$ evaluated at the coordinates of the input
distributions.  The independence of the $X_i$ appears as the products
$\prod_i x^{(i)}_{s_i}$.  That
the outputs of a gate remain probability distributions is visible
algebraically: $f^*$ sends
$\sum_{r} z_{r} \mapsto \prod_i \sum_{s} x^{(i)}_{s}$, which equals $1$ at
any tuple of probability distributions.

\begin{definition}\label{def:naive_extension}
Take $k = \mathbb R$.  The \emph{naive probabilistic extension} of a gate
$f$ is the restriction of the $\mathbb R$-points of its coordinate
interpretation to the simplices
$\Delta S \subseteq \mathbb{R}S$:
\begin{align*}
  \Delta f:\Delta S_1\times\cdots\times\Delta S_n
  &\longrightarrow
  \Delta R_1\times\cdots\times\Delta R_m,
  \\
  \Delta f(\mu_1,\ldots,\mu_n)_j(r)
  &=\sum_{f_j(s_1,\ldots,s_n)=r}\prod_i \mu_i(s_i).
\end{align*}
\end{definition}

For a constant gate ($n = 0$, valued at $r$) this reads
$\Delta f = (e_{r_1}, \ldots, e_{r_m})$, the point masses at the
components of $r$.  For the counit of type $R$ it is the unique map
$\Delta R \to \{\ast\}$, forgetting the distribution.

Naive probabilistic extensions compose, being restrictions of the composing
coordinate interpretations, so a string diagram has a naive probabilistic
extension, the composite of those of its gates.

\begin{example}\label{ex:pnotp_extension}
Restricted to the simplices, the two coordinate interpretations of \cref{ex:pnotp}
become maps $\Delta \mathbb B \lto \Delta \mathbb B$: the circuit's is
$z_1 \mapsto x_0 x_1$, $z_0 \mapsto x_0^2 + x_0 x_1 + x_1^2$, and the
constant's is $z_1 \mapsto 0$, $z_0 \mapsto x_0 + x_1$.  At the distribution
$(x_0, x_1) = (1 - p,\ p)$ they give
\begin{align*}
  \text{circuit:} \qquad & (z_0, z_1) = \bigl(1 - p(1 - p),\ p(1 - p)\bigr), \\
  \text{constant:} \qquad & (z_0, z_1) = (1,\ 0),
\end{align*}
agreeing exactly when $p(1 - p) = 0$: at the two vertices of
$\Delta \mathbb B$.
\end{example}

In the reading of \cref{def:independent_sampling}, each gate receives its
inputs as independent samples.  In \cref{ex:pnotp} both wires entering
$\wedge$ carry the same random variable $X$, the two outputs of the
diagonal, yet $\wedge$ assigns
$P(X = 1)\,P(\neg X = 1) = p(1 - p)$, forgetting that its inputs are
perfectly anticorrelated.  Hence the name: the naive probabilistic extension propagates the
beliefs of a naive Bayesian observer
\cite{clift2020derivatives, clift2021geometryprogramsynthesis}, who treats
every wire as independent of every other.

When $X_1 : S_1, \ldots, X_n : S_n$ are random variables of the respective types,
we annotate the gate $f$ by $X_1, \ldots, X_n$ and $R = R(X_1, \ldots, X_n)$:
\[
\begin{tikzpicture}[font=\small, x=1cm, y=1cm, baseline=(F.center), >=Stealth]
  \node[draw, circle, minimum size=3.4em] (F) at (0,0) {$f$};
  \coordinate (bot) at (0,-1.3);
  \draw[marrow] (F.220 |- bot) -- (F.220);
  \draw[marrow] (F.320 |- bot) -- (F.320);
  \draw[marrow] (F.90) -- ++(0,0.85);
  \node[below,font=\scriptsize] at (F.220 |- bot) {$X_1$};
  \node[below,font=\scriptsize] at (F.320 |- bot) {$X_n$};
  \node[font=\scriptsize] at (0,-0.95) {$\cdots$};
  \node[above,font=\scriptsize] at ($(F.90)+(0,0.85)$) {$R = R(X_1,\ldots,X_n)$};
\end{tikzpicture}
\]

Two gates recur in the construction below.

\begin{example}\label{ex:copy}
For a type $S$ and $n \ge 2$, the diagonal is the function
\[
  \Delta_n : S \longrightarrow S^n, \qquad \Delta_n(s) = (s, \ldots, s),
\]
with $\Delta := \Delta_2$.  In circuit diagrams a diagonal is drawn as a small
solid dot, its arity read off from the outgoing wires.  Its naive
probabilistic extension sends a distribution
$\mu \in \Delta S$ to $(\mu, \ldots, \mu)$: each output wire carries a copy of
its input, and downstream gates treat the copies as independent.  For
a bundle, the $\texttt{copy}$ circuit applies a diagonal to each wire:
\begin{center}
\begin{tikzpicture}[font=\small, x=1cm, y=1cm, >=Stealth, bundle/.style={line width=1.2pt}]
  \node[draw, fill=white, minimum width=1.6cm, minimum height=0.7cm] (C) at (0,0) {$\texttt{copy}$};
  \draw[bundle,marrow] (0,-1.3) -- (C.south);
  \draw[bundle,marrow] (-0.4,0.35) to[out=90,in=-90] (-1.1,1.3);
  \draw[bundle,marrow] (0.4,0.35) to[out=90,in=-90] (1.1,1.3);
  \node[below,font=\scriptsize] at (0,-1.3) {$S_1,\ldots,S_m$};
  \node[above,font=\scriptsize] at (-1.1,1.3) {$S_1,\ldots,S_m$};
  \node[above,font=\scriptsize] at (1.1,1.3) {$S_1,\ldots,S_m$};
\end{tikzpicture}
\end{center}
\end{example}

\begin{example}\label{ex:addrmux}
For a finite address set $J$ and output type $R$, the address multiplexer is the
function
\[
  \mathrm{mux}_J:R^J\times J\longrightarrow R,
  \qquad \mathrm{mux}_J((v^i)_i,j)=v^j ,
\]
together with the string diagram
\begin{center}
\begin{tikzpicture}[font=\small, x=1cm, y=1cm, >=Stealth]
  \node[draw, circle, minimum size=4em] (M) at (0,0) {$\mathrm{mux}_J$};
  \coordinate (bot) at (0,-1.3);
  \draw[marrow] (M.222 |- bot) -- (M.222);
  \draw[marrow] (M.278 |- bot) -- (M.278);
  \draw[marrow] (M.325 |- bot) -- (M.325);
  \draw[marrow] (M.90) -- ++(0,0.8);
  \node[below,font=\scriptsize] at (M.222 |- bot) {$R$};
  \node[below,font=\scriptsize] at (M.278 |- bot) {$R$};
  \node[below,font=\scriptsize] at (M.325 |- bot) {$J$};
  \node[font=\scriptsize] at ($(M.222 |- bot)!0.5!(M.278 |- bot)+(0,0.15)$) {$\cdots$};
  \node[above,font=\scriptsize] at ($(M.90)+(0,0.8)$) {$R$};
\end{tikzpicture}
\end{center}
Its naive probabilistic extension is
\[
  \Delta\mathrm{mux}_J : (\Delta R)^J \times \Delta J \longrightarrow \Delta R,
  \qquad \Delta\mathrm{mux}_J((\mu_i)_i,\alpha)=\sum_{i\in J}\alpha(i)\mu_i .
\]
Below it occurs with address sets $K$ and $D$, as $\mathrm{mux}_K$ and
$\mathrm{mux}_D$.
\end{example}

\subsection{The cycle circuits}\label{sec:one_period_circuits}

We consider three circuits, $\texttt{cycle}_{\bullet}$ with
$\bullet \in \{\mathrm{eval}, \mathrm{lookup}, \mathrm{staged}\}$, each computing a single step of a Turing machine with its code entering as an input bundle.  For
$\bullet \in \{\mathrm{lookup}, \mathrm{staged}\}$ the circuit presents one period of
the corresponding pseudo-UTM (\cref{sec:lookup_utm};
\cite[\S 5.1]{murfet2025pas}).  The description tape is left unchanged by a period
(\cref{sec:lookup_utm}), so it enters the circuit
as the code inputs of $\resolve_{\bullet}$.  The staging squares are
internal to the circuit.  For $\bullet = \mathrm{eval}$ there is
no pseudo-UTM: $\texttt{cycle}_{\mathrm{eval}}$ is the abstract circuit
applying the transition table in a single step, against which the pseudo-UTMs are
compared (\cref{cor:lookup_srp}).  All three are instances of the circuit depicted in
\cref{fig:cycle_circuit}: they share the fixed circuit $\texttt{update}$
(\cref{sec:update}) and differ only in $\resolve_{\bullet}$.

\begin{figure}[H]
\centering
\resizebox{0.85\textwidth}{!}{%
\begin{tikzpicture}[font=\small, x=1cm, y=1cm, >=Stealth, wire/.style={marrow}]
  \draw (-1.5,6.4) rectangle (10.6,7.7);
  \node at (5,7.05) {update};
  \draw (6.7,3.0) rectangle (13.4,4.3);
  \node at (10.05,3.65) {$\resolve_\bullet$};
  \node[font=\scriptsize] at (0,0)    {$\sigma_{-h}^{t}$};
  \node[font=\scriptsize] at (0.9,0)  {$\cdots$};
  \node[font=\scriptsize] at (1.8,0)  {$\sigma_{-1}^{t}$};
  \node[font=\scriptsize] at (2.7,0)  {$\sigma_0^{t}$};
  \node[font=\scriptsize] at (3.6,0)  {$\sigma_1^{t}$};
  \node[font=\scriptsize] at (4.5,0)  {$\cdots$};
  \node[font=\scriptsize] at (5.4,0)  {$\sigma_h^{t}$};
  \node[font=\scriptsize] at (7.4,0)  {$\sigma'_1$};
  \node[font=\scriptsize] at (8.2,0)  {$q'_1$};
  \node[font=\scriptsize] at (9.0,0)  {$d_1$};
  \node[font=\scriptsize] at (9.8,0)  {$\cdots$};
  \node[font=\scriptsize] at (10.6,0) {$\sigma'_N$};
  \node[font=\scriptsize] at (11.4,0) {$q'_N$};
  \node[font=\scriptsize] at (12.2,0) {$d_N$};
  \node[font=\scriptsize] at (13.0,0) {$q$};
  \draw[wire] (0,0.35)   -- (0,6.4);
  \draw[wire] (1.8,0.35) -- (1.8,6.4);
  \draw[wire] (3.6,0.35) -- (3.6,6.4);
  \draw[wire] (5.4,0.35) -- (5.4,6.4);
  \draw[wire] (2.7,0.35) to[out=90,in=270] (7.0,3.0);
  \draw[wire] (7.4,0.35)  -- (7.4,3.0);
  \draw[wire] (8.2,0.35)  -- (8.2,3.0);
  \draw[wire] (9.0,0.35)  -- (9.0,3.0);
  \draw[wire] (10.6,0.35) -- (10.6,3.0);
  \draw[wire] (11.4,0.35) -- (11.4,3.0);
  \draw[wire] (12.2,0.35) -- (12.2,3.0);
  \draw[wire] (13.0,0.35) -- (13.0,3.0);
  \draw[wire] (8,4.3)  -- (8,6.4);
  \draw[wire] (9,4.3)  -- (9,6.4);
  \node[left,font=\scriptsize]  at (8,5.35)  {$\sigma'$};
  \node[left,font=\scriptsize]  at (9,5.35)  {$d$};
  \draw[wire] (12,4.3) -- (12,8.2);
  \draw[wire] (-0.9,7.7) -- (-0.9,8.2);
  \draw[wire] (0,7.7)   -- (0,8.2);
  \draw[wire] (1.8,7.7) -- (1.8,8.2);
  \draw[wire] (2.7,7.7) -- (2.7,8.2);
  \draw[wire] (3.6,7.7) -- (3.6,8.2);
  \draw[wire] (5.4,7.7) -- (5.4,8.2);
  \draw[wire] (6.3,7.7) -- (6.3,8.2);
  \node[font=\scriptsize] at (-0.9,8.5) {$\sigma_{-h-1}^{t+1}$};
  \node[font=\scriptsize] at (0,8.5)   {$\sigma_{-h}^{t+1}$};
  \node[font=\scriptsize] at (0.9,8.5) {$\cdots$};
  \node[font=\scriptsize] at (1.8,8.5) {$\sigma_{-1}^{t+1}$};
  \node[font=\scriptsize] at (2.7,8.5) {$\sigma_0^{t+1}$};
  \node[font=\scriptsize] at (3.6,8.5) {$\sigma_1^{t+1}$};
  \node[font=\scriptsize] at (4.5,8.5) {$\cdots$};
  \node[font=\scriptsize] at (5.4,8.5) {$\sigma_{h}^{t+1}$};
  \node[font=\scriptsize] at (6.3,8.5) {$\sigma_{h+1}^{t+1}$};
  \node[font=\scriptsize] at (12,8.5)  {$q'$};
\end{tikzpicture}%
}
\caption{The cycle circuit $\texttt{cycle}_\bullet$, computing one step of the simulated machine: $\resolve_\bullet$ emits $(\sigma', q', d)$, $\texttt{update}$ applies $\sigma'$ and $d$ to the head-centred tape, and the next state $q'$ passes directly to the output.  Here $\sigma_i^{t}$ and $\sigma_i^{t+1}$ are the head-centred working-tape symbols before and after the step, $q$ and $q'$ the current and next state, $\sigma_j', q_j', d_j$ the transition-table entry (write symbol, next state, direction) of the $j$-th tuple of the code, and $\sigma', q', d$ the write symbol, next state, and direction produced by $\resolve_\bullet$.  The head symbol $\sigma_0^{t}$ enters only $\resolve_\bullet$ (\cref{def:update}).}
\label{fig:cycle_circuit}
\end{figure}

Composing successive periods realises successive steps of the simulated machine.
The code $w$ is read by every period, so it is duplicated by the $\texttt{copy}$ circuit (\cref{ex:copy}).
\cref{fig:cycle_compose} shows the composition for $h=0$ at times $t=0,1,2$, each
period expanded into its $\resolve$ and $\texttt{update}$ gates.

\begin{figure}[H]
\centering
\resizebox{0.78\textwidth}{!}{%
\begin{tikzpicture}[font=\footnotesize, x=1cm, y=1cm, >=Stealth, wire/.style={marrow},
    lab/.style={font=\scriptsize, anchor=east, inner sep=1.5pt}, bundle/.style={line width=2pt}]
  \draw (-1.6,3.6) rectangle (4.0,4.5);  \node at (1.0,4.05) {$\texttt{update}$};
  \draw (-2.6,9.4) rectangle (4.0,10.3); \node at (0.5,9.85) {$\texttt{update}$};
  \draw (-3.6,15.2) rectangle (4.0,16.1);\node at (0.0,15.65){$\texttt{update}$};
  \draw (4.3,1.0) rectangle (7.1,1.9);   \node at (5.7,1.45) {$\resolve_\bullet$};
  \draw (4.3,6.8) rectangle (7.1,7.7);   \node at (5.7,7.25) {$\resolve_\bullet$};
  \draw (4.3,12.6) rectangle (7.1,13.5); \node at (5.7,13.05){$\resolve_\bullet$};
  \node[draw, fill=white, minimum width=1.3cm, minimum height=0.7cm] (D0) at (8.5,1.45) {$\texttt{copy}$};
  \node[draw, fill=white, minimum width=1.3cm, minimum height=0.7cm] (D1) at (8.5,7.25) {$\texttt{copy}$};
  \node[below,font=\scriptsize] at (0,0.4)   {$\sigma_0^{0}$};
  \node[below,font=\scriptsize] at (6.5,0.4) {$q^{0}$};
  \node[below,font=\scriptsize] at (8.5,0.4) {$w$};
  \draw[bundle,marrow] (8.5,0.4) -- (D0.south);
  \draw[bundle,marrow] (D0.north) -- (D1.south);
  \draw[bundle] (D1.north) -- (8.5,13.05);
  \draw[bundle,marrow] (D0.west) -- (7.1,1.45);
  \draw[bundle,marrow] (D1.west) -- (7.1,7.25);
  \draw[bundle,marrow] (8.5,13.05) -- (7.1,13.05);
  \draw[wire] (0,0.4) to[out=90,in=270] (4.9,1.0);
  \draw[wire] (6.5,0.4) -- (6.5,1.0);
  \draw[wire] (4.9,1.9) to[out=90,in=-90] (2.2,3.6);
  \draw[wire] (5.7,1.9) to[out=90,in=-90] (3.6,3.6);
  \draw[wire] (6.5,1.9) -- (6.5,6.8) node[lab,pos=0.5]{$q^{1}$};
  \draw[wire] (0,4.5) to[out=90,in=270] (4.9,6.8);
  \draw[wire] (-1,4.5) -- (-1,9.4);
  \draw[wire] (1,4.5) -- (1,9.4);
  \draw[wire] (4.9,7.7) to[out=90,in=-90] (2.2,9.4);
  \draw[wire] (5.7,7.7) to[out=90,in=-90] (3.6,9.4);
  \draw[wire] (6.5,7.7) -- (6.5,12.6) node[lab,pos=0.5]{$q^{2}$};
  \node[lab] at (0,4.85)  {$\sigma_0^{1}$};
  \node[lab] at (-1,6.95) {$\sigma_{-1}^{1}$};
  \node[lab] at (1,6.95)  {$\sigma_{1}^{1}$};
  \draw[wire] (0,10.3) to[out=90,in=270] (4.9,12.6);
  \draw[wire] (-2,10.3) -- (-2,15.2);
  \draw[wire] (-1,10.3) -- (-1,15.2);
  \draw[wire] (1,10.3) -- (1,15.2);
  \draw[wire] (2,10.3) -- (2,15.2);
  \draw[wire] (4.9,13.5) to[out=90,in=-90] (2.2,15.2);
  \draw[wire] (5.7,13.5) to[out=90,in=-90] (3.6,15.2);
  \draw[wire] (6.5,13.5) -- (6.5,16.7);
  \node[lab] at (0,10.65)  {$\sigma_0^{2}$};
  \node[lab] at (-2,12.75) {$\sigma_{-2}^{2}$};
  \node[lab] at (-1,12.75) {$\sigma_{-1}^{2}$};
  \node[lab] at (1,12.75)  {$\sigma_{1}^{2}$};
  \node[lab] at (2,12.75)  {$\sigma_{2}^{2}$};
  \foreach \x in {-3,-2,-1,0,1,2,3} \draw[wire] (\x,16.1) -- (\x,16.7);
  \node[above,font=\scriptsize] at (-3,16.7) {$\sigma_{-3}^{3}$};
  \node[above,font=\scriptsize] at (-2,16.7) {$\sigma_{-2}^{3}$};
  \node[above,font=\scriptsize] at (-1,16.7) {$\sigma_{-1}^{3}$};
  \node[above,font=\scriptsize] at (0,16.7)  {$\sigma_{0}^{3}$};
  \node[above,font=\scriptsize] at (1,16.7)  {$\sigma_{1}^{3}$};
  \node[above,font=\scriptsize] at (2,16.7)  {$\sigma_{2}^{3}$};
  \node[above,font=\scriptsize] at (3,16.7)  {$\sigma_{3}^{3}$};
  \node[above,font=\scriptsize] at (6.5,16.7){$q^{3}$};
\end{tikzpicture}%
}
\caption{The three-fold composite of the cycle circuit $\texttt{cycle}_\bullet$ (\cref{fig:cycle_circuit}), shown for $h=0$ (meaning all initial tape positions $i$ with $|i| \geq 1$ are assumed to be blank) at times $t=0,1,2$.  Each period is drawn expanded into its $\resolve_\bullet$ and $\texttt{update}$ gates rather than as a single box.  The working tape grows from $\sigma_0^{0}$ to $\sigma_{-3}^{3},\ldots,\sigma_3^{3}$, the state is threaded $q^{0}\to q^{3}$ through the resolves, and the thick arrow is the code bus, duplicated into each period by $\texttt{copy}$ (\cref{ex:copy}).}
\label{fig:cycle_compose}
\end{figure}

\subsubsection{Resolve}\label{sec:resolve}

\begin{definition}\label{def:resolve}
Write $\Sigma_X := \Sigma \sqcup \{X\}$, $Q_X := Q \sqcup \{X\}$, and
$D_X := D \sqcup \{X\}$ for the \emph{staging types}, the types of the three
staging squares.  The resolve circuits share the type
\[
  \resolve_{\bullet}:W^\code \times\Sigma\times Q
  \longrightarrow \Sigma\times Q\times D,
  \qquad \bullet\in\{\mathrm{eval},\mathrm{lookup},\mathrm{staged}\}.
\]
On classical inputs the three
resolve circuits compute the same lookup,
\[
  \resolve_{\bullet}(c,\sigma,q)=c_{(\sigma,q)},
  \qquad c\in W^\code.
\]
\end{definition}

Let $\boldsymbol\sigma_0\in\Delta\Sigma$ be the head-symbol distribution and
$\mathbf q\in\Delta Q$ the state distribution.  Write $j=1,\ldots,N$ for the
description positions and $(\sigma_j,q_j)$ for the input of the tuple at
position $j$.  The map $j\mapsto(\sigma_j,q_j)$ is a bijection onto $K$
(\cref{sec:lookup_utm}), and we set $w_j := w_{(\sigma_j,q_j)}$.  Define the match
weight
\begin{equation}\label{eq:match_weights}
  \lambda_j:=\boldsymbol\sigma_0(\sigma_j)\mathbf q(q_j).
\end{equation}
Since the inputs exhaust $K$, $\sum_j\lambda_j=1$.

The staged pseudo-UTM implements resolve as the scan of \cref{fig:resolve_st}:
the staged triple is carried through one \emph{overwrite gate}
per tuple with the pair $(\sigma_j, q_j)$ hardwired, and is resolved at the
top:
\begin{align}
  &\mathrm{ovr}_j\colon \Sigma\times Q\times(\Sigma\times Q\times D)\times\Sigma_X\times Q_X\times D_X \lto \Sigma_X\times Q_X\times D_X,
  \notag\\
  &\mathrm{ovr}_j(\sigma, r, v, s)=
  \begin{cases}
    v,& (\sigma,r)=(\sigma_j,q_j),\\
    s,& \text{else},
  \end{cases}
  \label{eq:staged_gates}\\
  &\mathrm{res}_X\colon \Sigma\times Q\times\Sigma_X\times Q_X\times D_X \lto \Sigma\times Q\times D,
  \qquad
  \mathrm{res}_X(\sigma, r, s) = s\bigl[X \mapsto (\sigma, r, S)\bigr].
  \notag
\end{align}
Each stage compares a fresh copy of $(\sigma_0, q)$ against its hardwired
pair, so by \cref{def:naive_extension} its naive probabilistic extension is, field-wise, the
mixture
$\hat{\mathbf s}_j=\lambda_j\,w_j+(1-\lambda_j)\,\hat{\mathbf s}_{j-1}$, with
$\hat{\mathbf s}_0=e_X$ the naive probabilistic extension of the $X$-constant gates.

\begin{figure}[H]
\centering
\resizebox{0.6\textwidth}{!}{%
\begin{tikzpicture}[font=\scriptsize, x=1cm, y=1cm, >=Stealth,
   gate/.style={draw, fill=white, minimum width=3.6cm, minimum height=1.0cm, align=center, rounded corners=0.35cm},
   copygate/.style={circle, fill, inner sep=0pt, minimum size=4.5pt},
   copyb/.style={draw, fill=white, rounded corners=1pt, inner sep=1.5pt, font=\scriptsize},
   bundle/.style={line width=1.2pt}]
  \def\xxa{7.6}\def\xxb{8.8}\def\xxc{10.0}
  \def\xsig{0.6}\def\xq{1.5}\def\xW{2.6}
  \node[gate] (o1) at (8.8,1.6)  {$\mathrm{ovr}_1$};
  \node[gate] (oj) at (8.8,4.0)  {$\mathrm{ovr}_j$};
  \node[gate] (oN) at (8.8,6.4)  {$\mathrm{ovr}_N$};
  \node[gate] (rx) at (8.8,8.8)  {$\mathrm{res}_X$};
  \foreach \x in {\xxa,\xxb,\xxc}{
    \node[draw, circle, fill=white, minimum size=1.6em, font=\scriptsize] at (\x,-0.05) {$X$};
    \draw[marrow] (\x,0.3) -- (\x,1.1);
    \draw[marrow] (\x,2.1)  -- (\x,2.6);
    \draw[marrow] (\x,3.0)  -- (\x,3.5);
    \draw[marrow] (\x,4.5)  -- (\x,5.0);
    \draw[marrow] (\x,5.4)  -- (\x,5.9);
    \draw[marrow] (\x,6.9)  -- (\x,8.3);
    \draw[marrow] (\x,9.3)  -- (\x,10.5);
  }
  \node[above] at (\xxa,10.5) {$\sigma'$};
  \node[above] at (\xxb,10.5) {$q'$};
  \node[above] at (\xxc,10.5) {$d$};
  \node at (8.8,2.8) {$\vdots$};
  \node at (8.8,5.2) {$\vdots$};
  \node[copygate] (ds1) at (\xsig,1.9) {};
  \node[copygate] (dsj) at (\xsig,4.3) {};
  \node[copygate] (dsN) at (\xsig,6.7) {};
  \draw[marrow] (\xsig,-0.7) -- (ds1);
  \draw (ds1) -- (\xsig,2.6);
  \node at (\xsig,2.8) {$\vdots$};
  \draw[marrow] (\xsig,3.0) -- (dsj);
  \draw (dsj) -- (\xsig,5.0);
  \node at (\xsig,5.2) {$\vdots$};
  \draw[marrow] (\xsig,5.4) -- (dsN);
  \draw (dsN) -- (\xsig,9.0);
  \draw[marrow] (\xsig,9.0) -- (7.0,9.0);
  \draw[marrow] (dsN) -- (7.0,6.7);
  \draw[marrow] (ds1) -- (7.0,1.9);
  \draw[marrow] (dsj) -- (7.0,4.3);
  \node[copygate] (dq1) at (\xq,1.6) {};
  \node[copygate] (dqj) at (\xq,4.0) {};
  \node[copygate] (dqN) at (\xq,6.4) {};
  \draw[marrow] (\xq,-0.7) -- (dq1);
  \draw (dq1) -- (\xq,2.6);
  \node at (\xq,2.8) {$\vdots$};
  \draw[marrow] (\xq,3.0) -- (dqj);
  \draw (dqj) -- (\xq,5.0);
  \node at (\xq,5.2) {$\vdots$};
  \draw[marrow] (\xq,5.4) -- (dqN);
  \draw (dqN) -- (\xq,8.6);
  \draw[marrow] (\xq,8.6) -- (7.0,8.6);
  \draw[marrow] (dqN) -- (7.0,6.4);
  \draw[marrow] (dq1) -- (7.0,1.6);
  \draw[marrow] (dqj) -- (7.0,4.0);
  \draw[bundle,marrow] (-0.4,1.3) -- (7.0,1.3);
  \draw[bundle,marrow] (-0.4,3.7) -- (7.0,3.7);
  \draw[bundle,marrow] (-0.4,6.1) -- (7.0,6.1);
  \node[left] at (-0.4,1.3) {$w_1$};
  \node[left] at (-0.4,3.7) {$w_j$};
  \node[left] at (-0.4,6.1) {$w_N$};
  \node at (3.4,2.5) {$\vdots$};
  \node at (3.4,4.9) {$\vdots$};
  \node[below] at (\xsig,-0.75) {$\sigma_0$};
  \node[below] at (\xq,-0.75)   {$q$};
\end{tikzpicture}%
}
\caption{How the staged pseudo-UTM computes, from the description tape, the transition it will apply.  The staging triple starts unresolved at the $X$-constant gates and is threaded up through the stages.  Each $\mathrm{ovr}_j$ overwrites it with the code entry $w_j$ when $(\sigma_0, q)$ matches its hardwired pair $(\sigma_j, q_j)$, and $\mathrm{res}_X$ resolves a surviving $X$ to the no-op triple.  The head symbol $\sigma_0$ and the state $q$ are duplicated along chains of diagonals (\cref{ex:copy}), one copy per stage and the final copies into $\mathrm{res}_X$.  The code enters as its tuple sub-bundles.  A later match overwrites an earlier one (contrast \cref{fig:resolve_lk}).}
\label{fig:resolve_st}
\end{figure}

The lookup pseudo-UTM realises resolve as the scan of
\cref{fig:resolve_lk}.  At a tuple boundary the machine is in a state
$\mathrm{compSymbol}_{\kappa}$ before the match, recording the read pair
$\kappa$, and in a state $\mathrm{postCompSymbol}_{t}$ after it, recording
the copied triple $t$ (\cref{sec:lookup_utm}).  Abbreviating these families
as $\mathrm{comp}_\kappa$ and $\mathrm{post}_t$, the UTM-state wire has type
\[
  Q_{\mathcal U}
  := \{\mathrm{comp}_\kappa\}_{\kappa \in K}
  \sqcup \{\mathrm{post}_t\}_{t \in \Sigma \times Q \times D},
\]
a subset of $Q_{\mathrm{UTM}}$, and is threaded through one stage per tuple
by the gates
\begin{align}
  &\mathrm{compInit}\colon \Sigma\times Q \lto Q_{\mathcal U},
  &&\mathrm{compInit}(\sigma, q) = \mathrm{comp}_{(\sigma,q)},
  \notag\\
  &\mathrm{cmp}\colon Q_{\mathcal U}\times\Sigma\times Q\times(\Sigma\times Q\times D) \lto Q_{\mathcal U},
  &&\mathrm{cmp}(\rho, \sigma, q, v)=
  \begin{cases}
    \mathrm{post}_{v}, & \rho=\mathrm{comp}_{(\sigma,q)},\\
    \rho, & \text{else},
  \end{cases}
  \label{eq:lookup_gates}\\
  &\mathrm{export}\colon Q_{\mathcal U} \lto \Sigma\times Q\times D,
  &&\mathrm{export}(\rho) =
  \begin{cases}
    t, & \rho=\mathrm{post}_{t},\\
    (\sigma, q, S), & \rho=\mathrm{comp}_{(\sigma,q)},
  \end{cases}
  \notag
\end{align}
where the $j$-th $\mathrm{cmp}$ receives the input pair $(\sigma_j, q_j)$
of the $j$-th description block as constants, and its entry $w_j$ on wires.
On the unmatched branch $\mathrm{export}$ writes the no-op triple: write the
recorded symbol back, keep the recorded state, stay.  After each stage the UTM-state wire is exported, through a copy at all but
the last stage.  Every exported triple but the final one is discarded by a
counit.  Write $\Delta\resolve_{\mathrm{lookup}}$ for
the naive probabilistic extension of this circuit.

\begin{figure}[H]
\centering
\resizebox{0.5\textwidth}{!}{%
\begin{tikzpicture}[font=\scriptsize, x=1cm, y=1cm, >=Stealth,
   gate/.style={draw, fill=white, minimum width=1.9cm, minimum height=0.9cm, align=center, rounded corners=0.3cm},
   copygate/.style={circle, fill, inner sep=0pt, minimum size=4.5pt},
   counit/.style={circle, draw, fill=white, inner sep=0pt, minimum size=6pt},
   bundle/.style={line width=1.2pt}]
  \node[gate] (ci) at (3.4,0.6)  {$\mathrm{compInit}$};
  \node[gate] (c1) at (3.4,2.4)  {$\mathrm{cmp}$};
  \node[gate] (cj) at (3.4,5.0)  {$\mathrm{cmp}$};
  \node[gate] (cN) at (3.4,7.6)  {$\mathrm{cmp}$};
  \draw[marrow] (3.0,-0.8) -- (3.0,0.15);
  \draw[marrow] (3.8,-0.8) -- (3.8,0.15);
  \node[below] at (3.0,-0.8) {$\sigma_0$};
  \node[below] at (3.8,-0.8) {$q$};
  \draw[marrow] (3.4,1.05) -- (3.4,1.95);
  \node[left] at (3.4,1.5) {$Q_{\mathcal U}$};
  \node[copygate] (d1) at (3.4,3.3) {};
  \draw[marrow] (3.4,2.85) -- (d1);
  \draw (d1) -- (3.4,3.9);
  \node at (3.4,4.1) {$\vdots$};
  \draw[marrow] (3.4,4.3) -- (3.4,4.55);
  \node[copygate] (dj) at (3.4,5.9) {};
  \draw[marrow] (3.4,5.45) -- (dj);
  \draw (dj) -- (3.4,6.5);
  \node at (3.4,6.7) {$\vdots$};
  \draw[marrow] (3.4,6.9) -- (3.4,7.15);
  \draw[marrow] (3.4,8.05) to[out=90,in=180] (6.35,8.75);
  \draw[bundle,marrow] (0.6,2.4) -- (2.45,2.4);
  \draw[bundle,marrow] (0.6,5.0) -- (2.45,5.0);
  \draw[bundle,marrow] (0.6,7.6) -- (2.45,7.6);
  \node[left] at (0.6,2.4) {$w_1$};
  \node[left] at (0.6,5.0) {$w_j$};
  \node[left] at (0.6,7.6) {$w_N$};
  \node at (1.5,3.7) {$\vdots$};
  \node at (1.5,6.3) {$\vdots$};
  \node[gate] (e1) at (7.3,3.3)  {$\mathrm{export}$};
  \node[gate] (ej) at (7.3,5.9)  {$\mathrm{export}$};
  \node[gate] (eN) at (7.3,8.75) {$\mathrm{export}$};
  \draw[marrow] (d1) -- (6.35,3.3);
  \draw[marrow] (dj) -- (6.35,5.9);
  \node[counit] (u1a) at (6.85,4.45) {};
  \node[counit] (u1b) at (7.3,4.45)  {};
  \node[counit] (u1c) at (7.75,4.45) {};
  \draw[marrow] (6.85,3.75) -- (u1a);
  \draw[marrow] (7.3,3.75)  -- (u1b);
  \draw[marrow] (7.75,3.75) -- (u1c);
  \node[counit] (uja) at (6.85,7.05) {};
  \node[counit] (ujb) at (7.3,7.05)  {};
  \node[counit] (ujc) at (7.75,7.05) {};
  \draw[marrow] (6.85,6.35) -- (uja);
  \draw[marrow] (7.3,6.35)  -- (ujb);
  \draw[marrow] (7.75,6.35) -- (ujc);
  \node at (7.3,5.05) {$\vdots$};
  \node at (7.3,7.7) {$\vdots$};
  \draw[marrow] (6.85,9.2) -- (6.85,10.0);
  \draw[marrow] (7.3,9.2)  -- (7.3,10.0);
  \draw[marrow] (7.75,9.2) -- (7.75,10.0);
  \node[above] at (6.85,10.0) {$\sigma'$};
  \node[above] at (7.3,10.0)  {$q'$};
  \node[above] at (7.75,10.0) {$d$};
\end{tikzpicture}%
}
\caption{How the lookup pseudo-UTM computes, from the description tape, the transition it will apply.  The gate $\mathrm{compInit}$ records the read pair $(\sigma_0, q)$ on the UTM-state wire, the $j$-th $\mathrm{cmp}$ replaces it by the entry $w_j$ on a match, and each $\mathrm{export}$ emits the triple the UTM state determines.  The input pairs $(\sigma_j, q_j)$ enter as constants and are not drawn, so only the entry $w_j$ of the $j$-th description block enters stage $j$.  Hollow circles are counits.  Unlike the staged resolve (\cref{fig:resolve_st}), the exported triples are recomputed from the UTM state at every stage and never read back, so deleting the discarded exports leaves an equivalent circuit.}
\label{fig:resolve_lk}
\end{figure}

The eval resolve (\cref{fig:resolve_eval}) copies $\sigma_0$ and $q$
(\cref{ex:copy}) into one field gate per output, each taking the sub-bundle
$w_f := (w_{j,f})_j$ of its field:
\begin{equation}\label{eq:eval_gate}
  \mathrm{eval}_f\colon \Sigma\times Q\times f^{K} \lto f,
  \qquad
  \mathrm{eval}_f(\sigma, q, v)=v_{(\sigma,q)},
  \qquad f\in\{\Sigma, Q, D\},
\end{equation}
the address multiplexer $\mathrm{mux}_K$ of \cref{ex:addrmux} precomposed
with the pairing of $(\sigma, q)$ into an address in $K$.  By
\cref{def:naive_extension},
$\Delta\,\mathrm{eval}_f(\boldsymbol\sigma_0, \mathbf q, w_f)
 = \sum_j \lambda_j w_{j,f}$, so
\begin{equation}\label{eq:resolve_eval}
  \Delta\resolve_{\mathrm{eval}}(w,\boldsymbol\sigma_0,\mathbf q)
  =
  \Bigl(
    \sum_j\lambda_j w_{j,\Sigma},
    \sum_j\lambda_j w_{j,Q},
    \sum_j\lambda_j w_{j,D}
  \Bigr).
\end{equation}

\begin{figure}[H]
\centering
\resizebox{0.45\textwidth}{!}{%
\begin{tikzpicture}[font=\small, x=1cm, y=1cm, >=Stealth, bundle/.style={line width=1.2pt}]
  \node[branch] (Ds) at (2,2) {};
  \node[branch] (Dq) at (5,2) {};
  \node[draw, circle, fill=white, minimum size=4em] (Es) at (2,5.5) {$\mathrm{eval}_{\Sigma}$};
  \node[draw, circle, fill=white, minimum size=4em] (Et) at (5,5.5) {$\mathrm{eval}_{Q}$};
  \node[draw, circle, fill=white, minimum size=4em] (Ed) at (8,5.5) {$\mathrm{eval}_{D}$};
  \draw[marrow] (2,0.3) -- (Ds);
  \draw[marrow] (5,0.3) -- (Dq);
  \draw[marrow] (Ds.140) to[out=90,in=-90] (Es.220);
  \draw[marrow] (Ds.90)  to[out=90,in=-90] (Et.220);
  \draw[marrow] (Ds.40)  to[out=90,in=-90] (Ed.220);
  \draw[marrow] (Dq.140) to[out=90,in=-90] (Es.270);
  \draw[marrow] (Dq.90)  to[out=90,in=-90] (Et.270);
  \draw[marrow] (Dq.40)  to[out=90,in=-90] (Ed.270);
  \draw[bundle,marrow] (7.2,0.3) to[out=90,in=-90] (Es.320);
  \draw[bundle,marrow] (8.0,0.3) to[out=90,in=-90] (Et.320);
  \draw[bundle,marrow] (8.8,0.3) to[out=90,in=-90] (Ed.320);
  \draw[marrow] (Es) -- (2,7.6);
  \draw[marrow] (Et) -- (5,7.6);
  \draw[marrow] (Ed) -- (8,7.6);
  \node[below,font=\scriptsize] at (2,0.3) {$\sigma_0$};
  \node[below,font=\scriptsize] at (5,0.3) {$q$};
  \node[below,font=\scriptsize] at (7.2,0.3) {$w_\Sigma$};
  \node[below,font=\scriptsize] at (8.0,0.3) {$w_Q$};
  \node[below,font=\scriptsize] at (8.8,0.3) {$w_D$};
  \node[above,font=\scriptsize] at (2,7.6) {$\sigma'$};
  \node[above,font=\scriptsize] at (5,7.6) {$q'$};
  \node[above,font=\scriptsize] at (8,7.6) {$d$};
\end{tikzpicture}%
}
\caption{The transition-table lookup applied in a single step, with field gates \eqref{eq:eval_gate}.  Each $\mathrm{eval}_f$ selects the $f$-entry of the code tuple addressed by $(\sigma_0, q)$.  The head symbol $\sigma_0$ and the state $q$ are copied by diagonals $\Delta_3$.  The code is not copied: it enters as its field sub-bundles $w_\Sigma$, $w_Q$, $w_D$, since regrouping a bundle needs no gate.}
\label{fig:resolve_eval}
\end{figure}

\subsubsection{Update}\label{sec:update}

For a tape $\tau\in\Sigma^{\mathbb Z,\blank}$, write symbol $\sigma'$, and
direction $d\in D$, set
\[
  \widetilde\tau(i)=
  \begin{cases}
    \sigma',& i=0,\\
    \tau(i),& i\neq 0,
  \end{cases}
  \qquad
  \texttt{update}(\tau,\sigma',d)
  =\bigl(i\mapsto\widetilde\tau(i+d)\bigr).
\]
This is the tape part of the configuration map of \cref{def:step}.  In a
finite window it is an array of identical local gates, one tape multiplexer
per output cell, each reading that cell and its two neighbours, so the
output window is one square wider on each side.  Cells outside the window are supplied by the blank
constant (\cref{fig:update_circuit}, with the multiplexers drawn out cell by
cell in \cref{fig:shift_circuit}).

\begin{definition}\label{def:shift}
The tape-shift subcircuit is the family of multiplexers, one per output cell,
the $i$-th computing
\[
  \mathrm{mux}_D\bigl(\widetilde\tau(i-1),\widetilde\tau(i),\widetilde\tau(i+1),d\bigr)
  =\widetilde\tau(i+d),
\]
with blank constants on the boundary of any finite window, and with each
$\widetilde\tau(i)$ and the direction $d$ duplicated by diagonals
(\cref{ex:copy}), one copy per multiplexer.  Together with the overwrite it is
the tape part of the update function above.
\end{definition}

Through the copies, a perturbation of one input square reaches three output
squares, as in \cref{ex:pnotp}.

\begin{definition}\label{def:update}
The update circuit is the overwrite of the head square by $\sigma'$ followed by
the tape-shift subcircuit of \cref{def:shift}.  It computes the function
$\texttt{update}$ above.  Its naive
probabilistic extension sends the write-symbol and direction distributions
$(\boldsymbol\sigma',\mathbf d)$ and the tape distributions $(\boldsymbol\sigma_i)_i$ to
\begin{equation}\label{eq:update_formula}
  \boldsymbol\sigma'_i(\sigma)
  =
  \sum_{d\in D}\mathbf d(d)
  \bigl\{\one[i\neq -d]\,\boldsymbol\sigma_{i+d}(\sigma)
  +\one[i=-d]\,\boldsymbol\sigma'(\sigma)\bigr\}.
\end{equation}
\end{definition}

\begin{remark}\label{rem:window_compat}
Write $\texttt{cycle}_\bullet^{h}$ for the cycle circuit with tape inputs
$\sigma_{-h},\ldots,\sigma_{h}$ (\cref{fig:cycle_circuit}).  Feeding the two outermost tape inputs of
$\texttt{cycle}_\bullet^{h+1}$ from blank constants yields
$\texttt{cycle}_\bullet^{h}$ together with blank constants at the two
outermost outputs, since a multiplexer whose three tape inputs are blank
outputs blank at every direction.  The naive probabilistic extensions at successive windows
therefore agree on tape distributions supported in the smaller window, and
assemble to a single map on all of
$W \times (\Delta\Sigma)^{\mathbb Z,\blank} \times \Delta Q$.
\end{remark}

\begin{figure}[H]
\centering
\begin{tikzpicture}[font=\small, x=1.05cm, y=1.0cm, >=Stealth]
  \draw (-3.0,1.3) rectangle (3.0,2.1);
  \node at (0,1.7) {$\texttt{shift}$};
  \draw[marrow] (-2.9,2.1) -- (-2.9,2.5);
  \draw[marrow] (-1.0,2.1) -- (-1.0,2.5);
  \draw[marrow] (0,2.1)    -- (0,2.5);
  \draw[marrow] (1.0,2.1)  -- (1.0,2.5);
  \draw[marrow] (2.9,2.1)  -- (2.9,2.5);
  \node[above,font=\scriptsize] at (-2.9,2.5)  {$\sigma^{t+1}_{-h-1}$};
  \node[above,font=\scriptsize] at (-1.95,2.5) {$\cdots$};
  \node[above,font=\scriptsize] at (-1.0,2.5)  {$\sigma^{t+1}_{-1}$};
  \node[above,font=\scriptsize] at (0,2.5)     {$\sigma^{t+1}_{0}$};
  \node[above,font=\scriptsize] at (1.0,2.5)   {$\sigma^{t+1}_{1}$};
  \node[above,font=\scriptsize] at (1.95,2.5)  {$\cdots$};
  \node[above,font=\scriptsize] at (2.9,2.5)   {$\sigma^{t+1}_{h+1}$};
  \draw[marrow] (-2.2,-0.9) -- (-2.2,1.3);
  \draw[marrow] (-1.0,-0.9) -- (-1.0,1.3);
  \draw[marrow] (1.0,-0.9)  -- (1.0,1.3);
  \draw[marrow] (2.2,-0.9)  -- (2.2,1.3);
  \node[below,font=\scriptsize] at (-2.2,-0.9) {$\sigma^{t}_{-h}$};
  \node[below,font=\scriptsize] at (-1.6,-0.9) {$\cdots$};
  \node[below,font=\scriptsize] at (-1.0,-0.9) {$\sigma^{t}_{-1}$};
  \node[below,font=\scriptsize] at (1.0,-0.9)  {$\sigma^{t}_{1}$};
  \node[below,font=\scriptsize] at (1.6,-0.9)  {$\cdots$};
  \node[below,font=\scriptsize] at (2.2,-0.9)  {$\sigma^{t}_{h}$};
  \draw[marrow] (3.7,-0.9) -- (3.7,0.2) to[out=90,in=-90] (0,0.9) -- (0,1.3);
  \node[below,font=\scriptsize] at (3.7,-0.9) {$\sigma'$};
  \draw[marrow] (4.4,-0.9) .. controls (4.4,1.7) and (3.9,1.7) .. (3.0,1.7);
  \node[below,font=\scriptsize] at (4.4,-0.9) {$d$};
\end{tikzpicture}
\caption{The update circuit (\cref{def:update}).  The resolved write symbol $\sigma'$ takes the place of the head square, which is not itself an input ($\widetilde\tau(0):=\sigma'$).  The non-head squares pass through unchanged, and the overwritten register is shifted by the direction $d$ through the $\texttt{shift}$ subcircuit (\cref{def:shift}), drawn here as a black box and unwrapped in \cref{fig:shift_circuit}.  Its naive probabilistic extension is \eqref{eq:update_formula}.}
\label{fig:update_circuit}
\end{figure}

\begin{figure}[H]
\centering
\resizebox{0.95\textwidth}{!}{%
\begin{tikzpicture}[font=\small, x=1.0cm, y=1.05cm, >=Stealth,
   copyg/.style={circle, fill, inner sep=0pt, minimum size=4.5pt},
   blankc/.style={draw, circle, fill=white, minimum size=1.6em, inner sep=1pt, font=\scriptsize}]
  \node[draw, circle, fill=white, minimum size=3.0em] (cm3) at (-5.4,3.6) {$\mathrm{mux}_D$};
  \node[draw, circle, fill=white, minimum size=3.0em] (cm2) at (-3.6,3.6) {$\mathrm{mux}_D$};
  \node[draw, circle, fill=white, minimum size=3.0em] (cm1) at (-1.8,3.6) {$\mathrm{mux}_D$};
  \node[draw, circle, fill=white, minimum size=3.0em] (c0)  at (0,3.6)   {$\mathrm{mux}_D$};
  \node[draw, circle, fill=white, minimum size=3.0em] (cp1) at (1.8,3.6) {$\mathrm{mux}_D$};
  \node[draw, circle, fill=white, minimum size=3.0em] (cp2) at (3.6,3.6) {$\mathrm{mux}_D$};
  \node[draw, circle, fill=white, minimum size=3.0em] (cp3) at (5.4,3.6) {$\mathrm{mux}_D$};
  \node[copyg] (tm2) at (-3.6,0.55) {};
  \node[copyg] (tm1) at (-1.8,0.55) {};
  \node[copyg] (t0)  at (0,0.55)    {};
  \node[copyg] (tp1) at (1.8,0.55)  {};
  \node[copyg] (tp2) at (3.6,0.55)  {};
  \draw[marrow] (-3.6,-0.3) -- (tm2);
  \draw[marrow] (-1.8,-0.3) -- (tm1);
  \draw[marrow] (0,-0.3)    -- (t0);
  \draw[marrow] (1.8,-0.3)  -- (tp1);
  \draw[marrow] (3.6,-0.3)  -- (tp2);
  \node[below,font=\scriptsize] at (-3.6,-0.3) {$\widetilde\tau(-2)$};
  \node[below,font=\scriptsize] at (-1.8,-0.3) {$\widetilde\tau(-1)$};
  \node[below,font=\scriptsize] at (0,-0.3)    {$\widetilde\tau(0)$};
  \node[below,font=\scriptsize] at (1.8,-0.3)  {$\widetilde\tau(1)$};
  \node[below,font=\scriptsize] at (3.6,-0.3)  {$\widetilde\tau(2)$};
  \node[blankc] (bL) at (-5.2,0.75) {$\blank$};
  \node[copyg]  (cbL) at (-5.2,1.55) {};
  \draw[marrow] (bL) -- (cbL);
  \node[blankc] (bR) at (5.2,0.75) {$\blank$};
  \node[copyg]  (cbR) at (5.2,1.55) {};
  \draw[marrow] (bR) -- (cbR);
  \draw[marrow] (tm2.140) to[out=90,in=-90] (cm3.310);
  \draw[marrow] (tm2.90)  to[out=90,in=-90] (cm2.270);
  \draw[marrow] (tm2.40)  to[out=90,in=-90] (cm1.230);
  \draw[marrow] (tm1.140) to[out=90,in=-90] (cm2.310);
  \draw[marrow] (tm1.90)  to[out=90,in=-90] (cm1.270);
  \draw[marrow] (tm1.40)  to[out=90,in=-90] (c0.230);
  \draw[marrow] (t0.140)  to[out=90,in=-90] (cm1.310);
  \draw[marrow] (t0.90)   to[out=90,in=-90] (c0.270);
  \draw[marrow] (t0.40)   to[out=90,in=-90] (cp1.230);
  \draw[marrow] (tp1.140) to[out=90,in=-90] (c0.310);
  \draw[marrow] (tp1.90)  to[out=90,in=-90] (cp1.270);
  \draw[marrow] (tp1.40)  to[out=90,in=-90] (cp2.230);
  \draw[marrow] (tp2.140) to[out=90,in=-90] (cp1.310);
  \draw[marrow] (tp2.90)  to[out=90,in=-90] (cp2.270);
  \draw[marrow] (tp2.40)  to[out=90,in=-90] (cp3.230);
  \draw[marrow] (cbL.140) to[out=90,in=-90] (cm3.230);
  \draw[marrow] (cbL.90)  to[out=90,in=-90] (cm3.270);
  \draw[marrow] (cbL.40)  to[out=90,in=-90] (cm2.230);
  \draw[marrow] (cbR.140) to[out=90,in=-90] (cp2.310);
  \draw[marrow] (cbR.90)  to[out=90,in=-90] (cp3.270);
  \draw[marrow] (cbR.40)  to[out=90,in=-90] (cp3.310);
  \node[copyg] (dd7) at (6.9,2.5)  {};
  \node[copyg] (dd6) at (4.5,2.5)  {};
  \node[copyg] (dd5) at (2.7,2.5)  {};
  \node[copyg] (dd4) at (0.9,2.5)  {};
  \node[copyg] (dd3) at (-0.9,2.5) {};
  \node[copyg] (dd2) at (-2.7,2.5) {};
  \draw[marrow] (6.9,-0.3) -- (dd7);
  \node[below,font=\scriptsize] at (6.9,-0.3) {$d$};
  \draw[marrow] (dd7.west) -- (dd6.east);
  \draw[marrow] (dd6.west) -- (dd5.east);
  \draw[marrow] (dd5.west) -- (dd4.east);
  \draw[marrow] (dd4.west) -- (dd3.east);
  \draw[marrow] (dd3.west) -- (dd2.east);
  \foreach \ddot/\cell in {dd7/cp3, dd6/cp2, dd5/cp1, dd4/c0, dd3/cm1, dd2/cm2}{
    \draw[marrow] (\ddot.north) -- (\ddot.north |- \cell.east);
    \draw (\ddot.north |- \cell.east) -- (\cell.east);
  }
  \coordinate (kL) at (-4.5,2.5);
  \draw (dd2.west) -- (kL);
  \draw[marrow] (kL) -- (kL |- cm3.east);
  \draw (kL |- cm3.east) -- (cm3.east);
  \draw[marrow] (cm3.north) -- ++(0,0.5);
  \draw[marrow] (cm2.north) -- ++(0,0.5);
  \draw[marrow] (cm1.north) -- ++(0,0.5);
  \draw[marrow] (c0.north)  -- ++(0,0.5);
  \draw[marrow] (cp1.north) -- ++(0,0.5);
  \draw[marrow] (cp2.north) -- ++(0,0.5);
  \draw[marrow] (cp3.north) -- ++(0,0.5);
  \node[above,font=\scriptsize] at ($(cm3.north)+(0,0.5)$) {$\sigma^{t+1}_{-3}$};
  \node[above,font=\scriptsize] at ($(cm2.north)+(0,0.5)$) {$\sigma^{t+1}_{-2}$};
  \node[above,font=\scriptsize] at ($(cm1.north)+(0,0.5)$) {$\sigma^{t+1}_{-1}$};
  \node[above,font=\scriptsize] at ($(c0.north)+(0,0.5)$)  {$\sigma^{t+1}_{0}$};
  \node[above,font=\scriptsize] at ($(cp1.north)+(0,0.5)$) {$\sigma^{t+1}_{1}$};
  \node[above,font=\scriptsize] at ($(cp2.north)+(0,0.5)$) {$\sigma^{t+1}_{2}$};
  \node[above,font=\scriptsize] at ($(cp3.north)+(0,0.5)$) {$\sigma^{t+1}_{3}$};
\end{tikzpicture}%
}
\caption{The $\texttt{shift}$ subcircuit drawn out multiplexer by multiplexer, shown for window $h=2$: the $2h+1$ overwritten squares $\widetilde\tau(-2),\ldots,\widetilde\tau(2)$ (with $\widetilde\tau(0)=\sigma'$) feed $2h+3$ output squares, one $\mathrm{mux}_D$ per output cell (\cref{def:shift}).  Each square is duplicated by a diagonal $\Delta_3$ (\cref{ex:copy}) to the three cells that read it, and the direction $d$ is copied along a chain of diagonals, one copy entering each cell's east port.  Blank constants, likewise duplicated, supply the neighbours beyond the window, so the circuit is finite.}
\label{fig:shift_circuit}
\end{figure}

\subsection{The cycle maps in closed form}\label{app:cycle_maps}

Combining resolve and update, we define the cycle maps of the two machines
directly as circuits.

\begin{definition}\label{def:circuit_cycle_map}
For $\bullet \in \{\mathrm{staged}, \mathrm{lookup}\}$ the \emph{cycle map} is
\[
  F^{\bullet} := \Delta\texttt{cycle}_{\bullet}
  \colon W \times \Delta\Cfg \longrightarrow \Delta\Cfg,
\]
the single map assembled over window sizes in \cref{rem:window_compat}.
Either map can serve as the $F$ of \cref{def:model_utm}: each is
polynomial, hence smooth, and each restricts to one classical period of the
simulated machine by \cref{def:resolve} and \eqref{eq:update_formula}.
\end{definition}

\begin{remark}\label{rem:canonical_step}
The object one would ideally relax is the step function $\step_{\mathcal U}$
itself (\cref{def:utm_relaxation}), taking $F$ to be the cycle map induced
by one period of $P = 10N+5$ steps
(\cref{def:periodic_relaxation};
\cite{clift2021geometryprogramsynthesis}).
There is a canonical one-step circuit, in which every square and the UTM
state depend on all squares and the UTM state at the previous step, but
we do not draw it or prove that the naive probabilistic extension of its $P$-fold composite is
$F^{\bullet}$: it
has too many paths, and its equations are unhelpful to reason about.
Instead, the cycle circuits of \cref{sec:one_period_circuits} render only
the dependencies that survive across a period.  It can be checked, mechanically but at some length, that the cycle
circuits are faithful to the dependency structure of the machines, and
hence that $F^{\bullet}$ is the cycle map of the canonical relaxation;
we have verified the identification numerically.  The same identification, for the staged machine, underlies the direct
simulation of \cite[Appendix~I]{clift2021geometryprogramsynthesis}.
\end{remark}

\begin{remark}\label{rem:cd_extension}
At a classical code $w = [M]$ the entries $w_{j,f} = e_{[M]_{j,f}}$ are point masses and the resolve mixture \eqref{eq:resolve_eval} reduces, field-wise, to $\sum_{(\sigma, q)} \boldsymbol\sigma_0(\sigma)\,\mathbf q(q)\, e_{[M]_{(\sigma, q),f}}$, so $\Delta\texttt{cycle}_{\mathrm{eval}}([M], -)$ is the naive probabilistic extension of $\step_{[M]}$ in the sense of \cite{clift2020derivatives}.\footnote{This is no coincidence: if one expresses $\texttt{cycle}_{\mathrm{eval}}$ as a proof in linear logic in the obvious way, one can show that its cut with a classical code is cut-equivalent to \underline{relstep} from \cite{clift2020derivatives}.}  The family $\Delta\texttt{cycle}_{\mathrm{eval}}(w, -)$ over $w \in W$ extends that construction from a machine to the parameter space of noisy codes, and the canonical step circuit of \cref{rem:canonical_step} is the same construction applied to the classical machine $\mathcal U$ itself.
\end{remark}

The closed form of the cycle map is the only property of the machines used
in the main text.

\begin{proposition}\label{prop:cycle_factor}
For $\bullet\in\{\mathrm{staged},\mathrm{lookup}\}$ the cycle map sends
$z=((\boldsymbol\sigma_i)_{i\in\mathbb Z},\mathbf q)$ to
$F^{\bullet}_w(z)=((\boldsymbol\sigma'_i)_{i\in\mathbb Z},\mathbf q')$ as
follows.  With the match weights
$\lambda_j=\boldsymbol\sigma_0(\sigma_j)\,\mathbf q(q_j)$ of
\eqref{eq:match_weights}, the \emph{decision weights} of the machine are
\begin{equation}\label{eq:decision_weights}
  \nu^{\mathrm{lookup}}_j=\lambda_j,
  \qquad
  \nu^{\mathrm{staged}}_j=\lambda_j\prod_{l>j}(1-\lambda_l),
  \qquad
  \nu_X:=1-\sum_j\nu_j.
\end{equation}
The output state $\mathbf q'$, the resolved write symbol
$\boldsymbol\sigma'$, and the direction $\mathbf d$ are the mixtures
\begin{equation}\label{eq:resolved_mixtures}
  \boldsymbol\sigma'=\sum_j\nu_j w_{j,\Sigma}+\nu_X\boldsymbol\sigma_0,
  \qquad
  \mathbf q'=\sum_j\nu_j w_{j,Q}+\nu_X\mathbf q,
  \qquad
  \mathbf d=\sum_j\nu_j w_{j,D}+\nu_X e_0,
\end{equation}
with $\nu=\nu^{\bullet}$, and the output tape is
\begin{equation}\label{eq:cycle_map_formulas}
  \boldsymbol\sigma'_i(\sigma)
    =\sum_{d\in D}\mathbf d(d)
      \bigl\{\one[i\neq -d]\,\boldsymbol\sigma_{i+d}(\sigma)
      +\one[i=-d]\,\boldsymbol\sigma'(\sigma)\bigr\}
      \qquad(i\in\mathbb Z,\ \sigma\in\Sigma).
\end{equation}
\end{proposition}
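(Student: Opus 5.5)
The cycle map is $F^\bullet=\Delta\texttt{cycle}_\bullet$, the composite of $\Delta\resolve_\bullet$ and $\Delta\texttt{update}$ (\cref{fig:cycle_circuit}). Naive probabilistic extensions compose, so I will compute each piece separately and then substitute. The update part is already given: \cref{def:update} states that the naive probabilistic extension of $\texttt{update}$ is \eqref{eq:update_formula}. Substituting the resolved $(\boldsymbol\sigma',\mathbf d)$ gives \eqref{eq:cycle_map_formulas}, and the state output $\mathbf q'$ passes through unchanged. \Cref{rem:window_compat} ensures these finite-window formulas assemble over all $i\in\mathbb Z$. The whole content is therefore the claim that $\Delta\resolve_\bullet(w,\boldsymbol\sigma_0,\mathbf q)$ equals the three mixtures \eqref{eq:resolved_mixtures} with weights $\nu^\bullet$.

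\textbf{Staged resolve.} I will use the recursion already noted after \eqref{eq:staged_gates}. Since $\mathrm{ovr}_j$ receives fresh independent copies of $\boldsymbol\sigma_0,\mathbf q$, its naive extension is, field by field,
\[
\hat{\mathbf s}_j=\lambda_j w_j+(1-\lambda_j)\hat{\mathbf s}_{j-1},\qquad \hat{\mathbf s}_0=e_X.
\]
Unrolling by induction on $j$ gives
\[
\hat{\mathbf s}_N=\sum_j \lambda_j\prod_{l>j}(1-\lambda_l)\,w_j+\prod_l(1-\lambda_l)\,e_X.
\]
The mass on $X$ is $\prod_l(1-\lambda_l)$; this equals $1-\sum_j\nu_j$ by a telescoping identity, and that common value is $\nu_X$. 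Then $\mathrm{res}_X$ maps a staged symbol $s\neq X$ to itself and maps $X$ to $(\sigma_0,q,S)$ using fresh copies. Its naive extension therefore keeps the non-$X$ part and sends the $X$-mass $\nu_X$ to $\boldsymbol\sigma_0$, $\mathbf q$ and $e_0$ in the three fields.

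\textbf{Lookup resolve.} Here I track the distribution $\rho_j\in\Delta Q_{\mathcal U}$ on the UTM-state wire by induction on $j$. The inductive claim is
\[
\rho_j=\sum_\kappa \lambda_\kappa\,\one[\kappa\notin\{k_1,\dots,k_j\}]\,e_{\mathrm{comp}_\kappa}+\sum_{l\le j}\lambda_l\,\Delta\mathrm{post}(w_l),
\]
where $\lambda_\kappa=\boldsymbol\sigma_0(\sigma)\mathbf q(q)$ for $\kappa=(\sigma,q)$, $k_l=(\sigma_l,q_l)$, and $\Delta\mathrm{post}(w_l)$ is the pushforward of the product $w_{l,\Sigma}\otimes w_{l,Q}\otimes w_{l,D}$ under $t\mapsto\mathrm{post}_t$.

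The inductive step uses that $\mathrm{cmp}$ is linear in the $\rho$ input. On the $e_{\mathrm{comp}_{k_j}}$ part it replaces that mass with $\Delta\mathrm{post}(w_j)$, and it fixes every other part. Because the pairs $k_j$ exhaust $K$, we get $\rho_N=\sum_j\lambda_j\Delta\mathrm{post}(w_j)$. Applying $\Delta\mathrm{export}$ and taking each field marginal gives $\sum_j\lambda_jw_{j,f}$, with $\nu_X=0$ consistent with $\sum_j\lambda_j=1$.

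The discarded exports do not matter. Counits and copies extend naively to projections and diagonals, so removing the side branches does not change the final output.

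\textbf{Main obstacle.} The delicate point is the correlation bookkeeping in the lookup case. Because $\mathrm{compInit}$ reads $(\sigma_0,q)$ only once, the pair is carried on a single wire with joint law $\boldsymbol\sigma_0\otimes\mathbf q$. Checking $\rho_j$ against the gate formula of \cref{def:naive_extension} is what shows that a match at stage $j$ removes exactly the mass $\lambda_j$, with no squared factors of the kind seen in \cref{ex:pnotp}. In the staged case, by contrast, the repeated copies are precisely what produce the factors $\prod_{l>j}(1-\lambda_l)$. I will carry out this check carefully and then substitute into \eqref{eq:update_formula}.
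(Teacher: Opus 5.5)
Your proposal is correct and follows the paper's proof essentially step for step. You compose $\Delta\resolve_\bullet$ with the given $\Delta\texttt{update}$, then unroll the staged overwrite recursion; your telescoping identity $\prod_l(1-\lambda_l)=1-\sum_j\nu^{\mathrm{staged}}_j$ states explicitly what the paper uses implicitly. For the lookup machine you prune the counit branches and track the $Q_{\mathcal U}$-wire distribution by induction, and your closed-form invariant for $\rho_j$ is just the solved form of the paper's two-line recursion for $\mu_j$, relying on the same linearity of $\Delta\mathrm{cmp}$ in its state input.
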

\begin{proof}
$\texttt{cycle}_{\bullet}$ is $\resolve_{\bullet}$ followed by
$\texttt{update}$ (\cref{fig:cycle_circuit}), naive probabilistic
extensions compose, and the naive probabilistic extension of $\texttt{update}$ is the tape
multiplexer \eqref{eq:update_formula}, which is
\eqref{eq:cycle_map_formulas}.  It remains to compute the naive probabilistic extension of
$\resolve_{\bullet}$.

\emph{Staged.}  Write
$\hat{\mathbf s}_j \in \Delta\Sigma_X \times \Delta Q_X \times \Delta D_X$
for the distributions on the three carried wires after stage $j$
(\cref{fig:resolve_st}), so
$\hat{\mathbf s}_0 = (e_X, e_X, e_X)$, the naive probabilistic
extensions of the $X$-constant gates, and
\[
  \hat{\mathbf s}_j := \Delta\,\mathrm{ovr}_j\bigl(
    \boldsymbol\sigma_0,\, \mathbf q,\,
    w_{j,\Sigma} \otimes w_{j,Q} \otimes w_{j,D},\,
    \hat{\mathbf s}_{j-1}\bigr),
\]
the fresh copies of $\boldsymbol\sigma_0$ and $\mathbf q$ entering as
independent wires.  The gate $\mathrm{ovr}_j$ has three output types,
with $f$-component $v_f$ on a match and $s_f$ otherwise
\eqref{eq:staged_gates}, so \cref{def:naive_extension} splits over the
match event as in the $\mathrm{cmp}$ computation below: the matched
fibre $(\sigma, r) = (\sigma_j, q_j)$ carries weight $\lambda_j$ and
contributes $\lambda_j\, w_{j,f}$, the unmatched fibre carries
$1 - \lambda_j$ and contributes
$(1 - \lambda_j)\, \hat{\mathbf s}_{j-1,f}$, and the remaining wires
marginalise to $1$ in both cases:
\[
  \hat{\mathbf s}_{j,f}
  = \lambda_j\, w_{j,f} + (1 - \lambda_j)\, \hat{\mathbf s}_{j-1,f}.
\]
Unrolling from $\hat{\mathbf s}_0$ by induction, and using that the code
entries carry no $X$ mass,
\[
  \hat{\mathbf s}_{N,f}
  = \sum_j \lambda_j \prod_{l > j}(1 - \lambda_l)\, w_{j,f}
    \;+\; \prod_l (1 - \lambda_l)\, e_X
  = \sum_j \nu_j^{\mathrm{staged}}\, w_{j,f}
    + \nu_X^{\mathrm{staged}}\, e_X.
\]
Finally $\mathrm{res}_X$ has three output types, with $f$-component
$s_f$ when $s_f \neq X$ and the no-op value ($\sigma$, $r$, or $S$) when
$s_f = X$ \eqref{eq:staged_gates}.  It receives fresh copies of
$\boldsymbol\sigma_0$ and $\mathbf q$ and the carried wires
$\hat{\mathbf s}_N$, so for $f = \Sigma$, \cref{def:naive_extension}
splits over the value of $s_\Sigma$: either $s_\Sigma = \sigma' \neq X$
(weight $\hat{\mathbf s}_{N,\Sigma}(\sigma')$, the other wires
marginalising to $1$), or $s_\Sigma = X$ and the fresh copy reads
$\sigma = \sigma'$ (weight
$\hat{\mathbf s}_{N,\Sigma}(X)\, \boldsymbol\sigma_0(\sigma')
 = \nu_X\, \boldsymbol\sigma_0(\sigma')$).  By the unrolled form above,
$\hat{\mathbf s}_{N,\Sigma}(\sigma') = \sum_j \nu_j\, w_{j,\Sigma}(\sigma')$
for $\sigma' \neq X$, so
\[
  \Delta\,\mathrm{res}_X(\boldsymbol\sigma_0, \mathbf q, \hat{\mathbf s}_N)_\Sigma
  = \sum_j \nu_j\, w_{j,\Sigma} + \nu_X\, \boldsymbol\sigma_0,
\]
and likewise the $Q$- and $D$-components are
$\sum_j \nu_j w_{j,Q} + \nu_X \mathbf q$ and
$\sum_j \nu_j w_{j,D} + \nu_X e_0$: this is
\eqref{eq:resolved_mixtures} with $\nu = \nu^{\mathrm{staged}}$.

\emph{Lookup.}  The naive probabilistic extension of a counit is the unique map to a point, so
deleting the discarded exports, and the copies that feed them, does not
change the naive probabilistic extension of the circuit.  What remains is the chain of gates
\eqref{eq:lookup_gates}: $\mathrm{compInit}$, one $\mathrm{cmp}$ per
tuple, and the final $\mathrm{export}$.  Let
$\mu_j \in \Delta Q_{\mathcal U}$ be the distribution on the
$Q_{\mathcal U}$-wire after stage $j$,
\[
  \mu_0 := \Delta\,\mathrm{compInit}(\boldsymbol\sigma_0, \mathbf q),
  \qquad
  \mu_j := \Delta\,\mathrm{cmp}\bigl(\mu_{j-1},\, e_{\sigma_j},\, e_{q_j},\,
    w_{j,\Sigma} \otimes w_{j,Q} \otimes w_{j,D}\bigr),
\]
with the hardwired pair entering as point masses and the entry $w_j$ as
the product of its three field wires.  Since $\mathrm{compInit}$ is
injective, \cref{def:naive_extension} gives
\[
  \mu_0(\mathrm{comp}_\kappa)
  = \sum_{\mathrm{compInit}(\sigma, q) = \mathrm{comp}_\kappa}
      \boldsymbol\sigma_0(\sigma)\, \mathbf q(q)
  = (\boldsymbol\sigma_0 \otimes \mathbf q)(\kappa),
  \qquad
  \mu_0(\mathrm{post}_t) = 0,
\]
For $\mathrm{cmp}$, the point masses collapse the sums over the second
and third inputs to $\sigma_j$ and $q_j$, so \cref{def:naive_extension}
reads
\[
  \mu_j(\rho)
  = \sum_{\mathrm{cmp}(\rho', \sigma_j, q_j, v) = \rho}
      \mu_{j-1}(\rho')\,
      w_{j,\Sigma}(v_\Sigma)\, w_{j,Q}(v_Q)\, w_{j,D}(v_D).
\]
By \eqref{eq:lookup_gates}, for $\rho = \mathrm{comp}_\kappa$ the fibre
is $\rho' = \mathrm{comp}_\kappa$ with $\kappa \neq (\sigma_j, q_j)$ and
$v$ arbitrary (the sum over $v$ is $1$); for $\rho = \mathrm{post}_t$ it
is $\rho' = \mathrm{post}_t$ with $v$ arbitrary, together with
$\rho' = \mathrm{comp}_{(\sigma_j, q_j)}$ with $v = t$.  Hence
\begin{align*}
  \mu_j(\mathrm{comp}_\kappa)
    &= \one\bigl[\kappa \neq (\sigma_j, q_j)\bigr]\,
       \mu_{j-1}(\mathrm{comp}_\kappa),\\
  \mu_j(\mathrm{post}_t)
    &= \mu_{j-1}(\mathrm{post}_t)
       + \mu_{j-1}\bigl(\mathrm{comp}_{(\sigma_j, q_j)}\bigr)\,
         w_{j,\Sigma}(t_\Sigma)\, w_{j,Q}(t_Q)\, w_{j,D}(t_D),
\end{align*}
and by the first line the mass of $\mathrm{comp}_{(\sigma_j, q_j)}$ is
untouched by the earlier stages, so
\[
  \mu_{j-1}(\mathrm{comp}_{(\sigma_j, q_j)})
  = \mu_0(\mathrm{comp}_{(\sigma_j, q_j)})
  = \lambda_j:
\]
stage $j$ moves
exactly the match weight $\lambda_j$ onto post-states distributed as the
entry $w_j$.
Iterating the two lines from $\mu_0$: the inputs exhaust $K$, so every
$\kappa \in K$ equals $(\sigma_j, q_j)$ at exactly one stage and its
indicator kills the $\mathrm{comp}_\kappa$ mass there, while the
$\mathrm{post}$ masses accumulate one summand per stage,
\[
  \mu_N(\mathrm{comp}_\kappa) = 0,
  \qquad
  \mu_N(\mathrm{post}_t)
  = \sum_j \lambda_j\, w_{j,\Sigma}(t_\Sigma)\, w_{j,Q}(t_Q)\,
      w_{j,D}(t_D).
\]
Naive probabilistic extensions compose, so the naive probabilistic
extension of the remaining chain is $\Delta\,\mathrm{export}(\mu_N)$.
The gate $\mathrm{export}$ has three output types
(\cref{def:circuit}), $\mathrm{export} = (\mathrm{export}_\Sigma,
\mathrm{export}_Q, \mathrm{export}_D)$ with
$\mathrm{export}_f(\mathrm{post}_t) = t_f$, so
\cref{def:naive_extension} gives one output distribution per field,
$\Delta\,\mathrm{export}_f(\mu_N)(r)
 = \sum_{\mathrm{export}_f(\rho) = r} \mu_N(\rho)$, in which only the
$\mathrm{post}$ states contribute ($\mu_N$ has no $\mathrm{comp}$
mass).  For $f = \Sigma$,
\begin{align*}
  \Delta\,\mathrm{export}_\Sigma(\mu_N)(s)
  &= \sum_{t \,:\, t_\Sigma = s} \mu_N(\mathrm{post}_t)\\
  &= \sum_j \lambda_j\, w_{j,\Sigma}(s)
      \sum_{t_Q} w_{j,Q}(t_Q) \sum_{t_D} w_{j,D}(t_D)\\
  &= \sum_j \lambda_j\, w_{j,\Sigma}(s),
\end{align*}
each $w_{j,f}$ summing to $1$, and likewise for $Q$ and $D$.  The output
triple is therefore
\[\bigl(\sum_j \lambda_j w_{j,\Sigma},\ \sum_j \lambda_j w_{j,Q},\
 \sum_j \lambda_j w_{j,D}\bigr):\]
this is \eqref{eq:resolved_mixtures} with $\nu=\nu^{\mathrm{lookup}}$ and
$\nu_X=0$.
\end{proof}

\begin{corollary}\label{cor:lookup_srp}
We have
$F^{\mathrm{lookup}}_w=\Delta\texttt{cycle}_{\mathrm{eval}}(w,-)$: the cycle map of
the lookup machine is the naive probabilistic extension of the eval cycle
circuit.\footnote{On deterministic codes $w=[M]$, this is the property Xu calls \emph{smooth relaxation
preserving} \cite{xu2021smoothrelaxationpreservingturing}: simulating the
machine and then relaxing agrees with relaxing the machine's own step. So this corollary implies in particular that the lookup UTM is smooth relaxation preserving.}
\end{corollary}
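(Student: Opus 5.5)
The plan is to compare the two cycle maps through their closed forms, since both are naive probabilistic extensions of circuits that share the $\texttt{update}$ block. By \cref{fig:cycle_circuit}, $\texttt{cycle}_{\mathrm{eval}}$ and $\texttt{cycle}_{\mathrm{lookup}}$ are each $\resolve_\bullet$ followed by $\texttt{update}$, with the next state $q'$ passed straight to the output. Naive probabilistic extensions compose (\cref{def:naive_extension} and the remark after it). It therefore suffices to show that $\Delta\resolve_{\mathrm{lookup}}(w,\boldsymbol\sigma_0,\mathbf q)=\Delta\resolve_{\mathrm{eval}}(w,\boldsymbol\sigma_0,\mathbf q)$ for every $w\in W$ and every pair of distributions. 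The equality then persists after feeding both into the same naive extension of $\texttt{update}$, \eqref{eq:update_formula}. It also holds for every finite window $h$, so by \cref{rem:window_compat} it holds for the assembled map on $W\times\Delta\Cfg$.

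For the resolve step I would simply read off the two closed forms already computed. By \cref{prop:cycle_factor} with $\bullet=\mathrm{lookup}$, the decision weights are $\nu_j=\lambda_j$ and $\nu_X=1-\sum_j\lambda_j$. Since the description positions $j\mapsto(\sigma_j,q_j)$ biject onto $K$, we have $\sum_j\lambda_j=\sum_{(\sigma,q)}\boldsymbol\sigma_0(\sigma)\mathbf q(q)=1$, so $\nu_X=0$. This matches the last line of that proof, where the $\mathrm{comp}$ mass is fully drained. The resolved triple is therefore $\bigl(\sum_j\lambda_j w_{j,\Sigma},\sum_j\lambda_j w_{j,Q},\sum_j\lambda_j w_{j,D}\bigr)$, which is exactly \eqref{eq:resolve_eval}.

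Two points need care. First, the eval circuit copies $\boldsymbol\sigma_0$ and $\mathbf q$ to the three field gates, and naive extension treats the copies as independent. But each $\mathrm{eval}_f$ output depends on a single copy pair, and the outputs are only ever consumed as separate marginal wires. So no correlation between the three output wires is lost relative to the lookup circuit, whose $\mathrm{export}$ also yields three marginal distributions. Second, downstream in $\texttt{update}$ the write symbol and direction enter as separate wires, and $q'$ exits on its own wire. Agreement of the three marginals is therefore all that is needed. I expect this bookkeeping to be the only real obstacle. It amounts to confirming that the naive extension of the composite only sees the resolve outputs wire by wire, rather than as a joint distribution on $\Sigma\times Q\times D$, and this follows because a gate's naive extension factors over its output types (\cref{def:naive_extension}).
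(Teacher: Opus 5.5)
Your proposal is correct and follows essentially the paper's proof: read off $\nu^{\mathrm{lookup}}_j=\lambda_j$ and $\nu_X=0$ from \cref{prop:cycle_factor}, so that \eqref{eq:resolved_mixtures} coincides with \eqref{eq:resolve_eval}, and then conclude because both cycle circuits share $\texttt{update}$. Your added bookkeeping is sound and only spells out what the paper leaves implicit. Naive extensions act on multi-output gates by taking per-output marginals, and the head symbol enters only $\resolve$; the window compatibility follows from \cref{rem:window_compat}.
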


\begin{proof}
By \cref{prop:cycle_factor}, $\nu^{\mathrm{lookup}}_j=\lambda_j$ and
$\nu_X=0$, so \eqref{eq:resolved_mixtures} is \eqref{eq:resolve_eval}
and $\Delta\resolve_{\mathrm{lookup}}=\Delta\resolve_{\mathrm{eval}}$.
The two cycle circuits share $\texttt{update}$.
\end{proof}

\subsection{Invariance and equivariance of the cycle maps}\label{app:cycle_symmetries}

\begin{lemma}\label{lem:staged_lookup_unmatched_component_invariance}
The smooth relaxations of
the staged pseudo-UTM \cite[\S 5.1]{murfet2025pas} and of the lookup
pseudo-UTM of \cref{sec:lookup_utm} are unmatched-component invariant
(\cref{def:unmatched_component_invariance}).
\end{lemma}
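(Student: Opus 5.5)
The plan is to work entirely from the closed form of \cref{prop:cycle_factor}: by \cref{def:circuit_cycle_map} the cycle map of either machine is $F^\bullet$, so it suffices to show that for a classical code $[M]$, a component $C$ belonging to the tuple $(\sigma_C,q_C)$, any $v\in W_C$, and a deterministic configuration $(\tau,q)$ with $(\tau(0),q)\neq(\sigma_C,q_C)$, the outputs $F^\bullet_{[M]_{C\leftarrow v}}(\tau,q)$ and $F^\bullet_{[M]}(\tau,q)$ coincide. The two codes differ only in the single entry $w_{j_C,f_C}$, where $j_C$ is the unique description position with $(\sigma_{j_C},q_{j_C})=(\sigma_C,q_C)$. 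This position exists and is unique because $j\mapsto(\sigma_j,q_j)$ is a bijection onto $K$.

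First compute the match weights at a deterministic configuration. There $\boldsymbol\sigma_0=e_{\tau(0)}$ and $\mathbf q=e_q$, so \eqref{eq:match_weights} gives $\lambda_j=\one[(\sigma_j,q_j)=(\tau(0),q)]$. Exactly one index $j^*$ has $\lambda_{j^*}=1$, and by hypothesis $j^*\neq j_C$, so $\lambda_{j_C}=0$.

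Next, show that the decision weights satisfy $\nu_{j_C}=0$, and that no $\nu$ depends on $w$ at all. The second point is immediate: the $\lambda_j$, and hence all the $\nu_j$ and $\nu_X$ in \eqref{eq:decision_weights}, depend only on $(\boldsymbol\sigma_0,\mathbf q)$. For the lookup machine, $\nu_{j_C}=\lambda_{j_C}=0$. For the staged machine, $\nu_{j_C}=\lambda_{j_C}\prod_{l>j_C}(1-\lambda_l)=0$ as well. The staged case needs one extra check: the product factors could in principle vanish for other indices, but this affects only those indices' own weights, never the $w$-dependence, so it is harmless.

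It follows from \eqref{eq:resolved_mixtures} that $\boldsymbol\sigma'$, $\mathbf q'$ and $\mathbf d$ are linear combinations of the entries $w_{j,f}$ with coefficients $\nu_j$, plus $w$-independent terms. The modified entry $w_{j_C,f_C}$ enters only with coefficient $\nu_{j_C}=0$. Hence the resolved triple is the same for both codes. The output tape \eqref{eq:cycle_map_formulas} depends on $w$ only through $(\boldsymbol\sigma',\mathbf d)$ and the input tape, so it agrees too. This gives equality of $F^\bullet$ on the two codes.

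The main obstacle is not algebraic but a matter of justification. For the lookup machine, the cycle map is literally $F^{\mathrm{lookup}}$. For the staged machine, the statement refers to the smooth relaxation of \cite[\S 5.1]{murfet2025pas}, and we must invoke the identification of $F^{\mathrm{staged}}$ with its cycle map. I would take this identification from \cref{def:circuit_cycle_map} and \cref{rem:canonical_step} as the definition used in this paper, and state that explicitly. With that in place the argument above is uniform in $\bullet$.
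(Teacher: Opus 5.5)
Your proposal is correct and matches the paper's proof: at a deterministic configuration the match weights are code-independent indicators with $\lambda_{j_C}=0$, so both decision-weight rules give $\nu_{j_C}=0$, and the replaced entry drops out of \eqref{eq:resolved_mixtures} and hence out of \eqref{eq:cycle_map_formulas}. The paper also notes that $\nu_{j_*}=1$ and all other weights, including $\nu_X$, vanish, since every $\lambda_l$ with $l\neq j_*$ is zero; this makes your caveat about the staged product factors possibly vanishing unnecessary.
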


\begin{proof}
Let $C$ be a component of the tuple at position $j_C$, and evaluate one period
at a deterministic simulated configuration with read pair
$(\sigma,q)\neq(\sigma_{j_C},q_{j_C})$.  The match weights
\eqref{eq:match_weights} depend only on the read pair, not on the code, so
they are unchanged by the replacement $[M]_{C \leftarrow v}$: there is a unique
position $j_*$ with input $(\sigma,q)$, and $\lambda_{j_*}=1$, $\lambda_j=0$
for $j\neq j_*$.  Both decision-weight rules in \eqref{eq:decision_weights}
then give $\nu_{j_*}=1$ and all other weights, including $\nu_X$, equal to
zero.  The replaced entry enters the mixtures
\eqref{eq:resolved_mixtures} only through terms with coefficient
$\nu_{j_C}=0$, so $F_{[M]_{C \leftarrow v}}=F_{[M]}$ at this configuration.
\end{proof}

\begin{corollary}\label{cor:lookup_recoding_equivariance}
The lookup pseudo-UTM is recoding equivariant
(\cref{def:recoding_equivariance}).
\end{corollary}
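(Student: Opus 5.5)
By \cref{cor:lookup_srp}, $F^{\mathrm{lookup}}_w = \Delta\texttt{cycle}_{\mathrm{eval}}(w,-)$. So it suffices to check the identity $g \cdot F_w(z) = F_{g\cdot w}(g\cdot z)$ directly on the closed form of \cref{prop:cycle_factor} with $\nu = \nu^{\mathrm{lookup}} = \lambda$ and $\nu_X = 0$. Fix a recoding $g=(a,s)$, a noisy code $w$, and $z = ((\boldsymbol\sigma_i)_i,\mathbf q)$. Write $g\cdot z = ((a_*\boldsymbol\sigma_i)_i, s_*\mathbf q)$ and let $w' := g\cdot w$. By \cref{def:recoding_action_ext}, for $k=(\sigma,q)\in K$,
\[
  w'_{(a(\sigma),s(q)),\Sigma} = a_* w_{k,\Sigma},\qquad
  w'_{(a(\sigma),s(q)),Q} = s_* w_{k,Q},\qquad
  w'_{(a(\sigma),s(q)),D} = w_{k,D}.
\]

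The first step is the match weights. Since the lookup decision weights do not depend on the order of tuples, I index them by $k\in K$ rather than by description position. At $(w', g\cdot z)$ the weight of the tuple $(a(\sigma),s(q))$ is
\[
  (a_*\boldsymbol\sigma_0)(a(\sigma))\,(s_*\mathbf q)(s(q)) = \boldsymbol\sigma_0(\sigma)\,\mathbf q(q) = \lambda_{(\sigma,q)},
\]
using bijectivity of $a$ and $s$. So recoding just relabels the weights through the bijection $a\times s$ of $K$.

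Next I reindex the sums in \eqref{eq:resolved_mixtures} along $a\times s$. This gives $\boldsymbol\sigma'(w',g\cdot z) = \sum_k \lambda_k\, a_* w_{k,\Sigma} = a_*\boldsymbol\sigma'(w,z)$, by linearity of $a_*$. Similarly $\mathbf q'(w',g\cdot z) = s_*\mathbf q'(w,z)$, and $\mathbf d(w',g\cdot z) = \mathbf d(w,z)$ because directions are untouched. The remaining step is the tape formula \eqref{eq:cycle_map_formulas}. Each output square is a $\mathbf d$-weighted mixture of input squares $\boldsymbol\sigma_{i+d}$ and of $\boldsymbol\sigma'$, with coefficients $\mathbf d(d)$ that are unchanged by recoding. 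Applying $a_*$, which is linear and commutes with mixtures, gives $(a_*\boldsymbol\sigma'_i)$ for the recoded output. Combining the state and tape parts yields $F_{g\cdot w}(g\cdot z) = g\cdot F_w(z)$. Blank squares outside the window are preserved because $a(\blank)=\blank$, so the identity is compatible with the window assembly of \cref{rem:window_compat}.

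I expect no real obstacle, only bookkeeping. The one point needing care is that the lookup weights are order-independent, so permuting the tuples by $a\times s$ leaves the cycle map's form intact. This is exactly where the staged weights $\nu^{\mathrm{staged}}_j = \lambda_j\prod_{l>j}(1-\lambda_l)$ fail, since their products depend on the scan order, consistent with \cref{rem:equivariance_machines}.
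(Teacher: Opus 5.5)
Your proposal is correct and follows essentially the same route as the paper's proof. Both read off the closed form of \cref{prop:cycle_factor} with $\nu=\lambda$ and $\nu_X=0$, index the tuples by $K$, and reindex along $a\times s$ to get $a_*\boldsymbol\sigma'$, $s_*\mathbf q'$ and an unchanged $\mathbf d$; then both push $a_*$ through the $\mathbf d$-mixture of the tape formula. Your detour through \cref{cor:lookup_srp} is harmless but unnecessary, since the paper applies \cref{prop:cycle_factor} directly.
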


\begin{proof}
For the lookup machine, $\nu_j=\lambda_j$ and
$\nu_X=0$, so by \cref{prop:cycle_factor} one period is determined by the
mixtures
\[
  \boldsymbol\sigma'_{w,z}=\sum_{k\in K}\lambda_k\,w_{k,\Sigma},\qquad
  \mathbf q'_{w,z}=\sum_{k\in K}\lambda_k\,w_{k,Q},\qquad
  \mathbf d_{w,z}=\sum_{k\in K}\lambda_k\,w_{k,D},
\]
where $z=((\boldsymbol\sigma_i)_i,\mathbf q)$, tuples are indexed by their input pairs
$k\in K$, and $\lambda_{(\sigma,r)}=\boldsymbol\sigma_0(\sigma)\,\mathbf q(r)$.  Fix a
recoding $(a,s)$.  By \cref{def:recoding_action_ext},
$\bigl((a, s) \cdot w\bigr)_{(\sigma,r),\Sigma}=a_*\,w_{(a^{-1}(\sigma),s^{-1}(r)),\Sigma}$.
The match weights of the recoded configuration $(a, s) \cdot z$ are
$\lambda_{(\sigma,r)}((a, s) \cdot z)=(a_*\boldsymbol\sigma_0)(\sigma)\,(s_*\mathbf q)(r)
=\boldsymbol\sigma_0(a^{-1}(\sigma))\,\mathbf q(s^{-1}(r))$, so,
reindexing $K$ by the bijection $(\sigma,r)\mapsto(a(\sigma),s(r))$,
\begin{align*}
  \boldsymbol\sigma'_{(a, s) \cdot w,\,(a, s) \cdot z}
  &=\sum_{(\sigma,r)\in K}\boldsymbol\sigma_0(a^{-1}(\sigma))\,\mathbf q(s^{-1}(r))\,
     a_*\,w_{(a^{-1}(\sigma),s^{-1}(r)),\Sigma}\\
  &=\sum_{(\sigma,r)\in K}\boldsymbol\sigma_0(\sigma)\,\mathbf q(r)\,
     a_*\,w_{(\sigma,r),\Sigma}
  =a_*\,\boldsymbol\sigma'_{w,z},
\end{align*}
for arbitrary bijections $a$ and $s$: the inverses in
\cref{def:recoding_action_ext} and in the pushforwards cancel against the reindexing. The same computation in the other two fields gives $\mathbf q'_{(a, s) \cdot w,\,(a, s) \cdot z}=s_*\,\mathbf q'_{w,z}$ and, the
direction values being untouched by the action,
$\mathbf d_{(a, s) \cdot w,\,(a, s) \cdot z}=\mathbf d_{w,z}$.  Now apply
\eqref{eq:cycle_map_formulas}: each output cell of $F^{\mathrm{lookup}}_{(a, s) \cdot w}((a, s) \cdot z)$ is the $\mathbf d$-mixture of the input
cells of $(a, s) \cdot z$ and
$a_*\,\boldsymbol\sigma'_{w,z}$, and the output state is
$s_*\,\mathbf q'_{w,z}$.  Since pushforward commutes with mixtures
with the same weights, these are exactly the $a$-images of the cells and the
$s$-image of the state of $F^{\mathrm{lookup}}_w(z)$:
\[
  (a, s) \cdot F^{\mathrm{lookup}}_{w}(z)
  =
  F^{\mathrm{lookup}}_{(a, s) \cdot w}\bigl((a, s) \cdot z\bigr),
\]
which is \cref{def:recoding_equivariance}.
\end{proof}

\section{Details of the experiments}\label{app:experiment_details}

This appendix expands \cref{sec:empirical_sus} and the experiments of
\cref{sec:experiments}.  It first checks that the rank bound and the
equivariance survive replacing the squared error by the log-loss.  It
justifies the Dirichlet localiser and gives it a Bayesian reading, isolates the
divergence of its slice evaluations at $\alpha < 1$, and shows that the
localised susceptibilities, with their equivariance and rank bound,
survive renormalisation and standardisation.  It then describes the GRLD sampler and its calibration,
the embedding parameters and compute, and the input kernels of the
running examples.  It closes with the parities of the eigenmatrices, the
cluster validation of \cref{fig:umap_main}, the weights of the binary
classifier, the temperature sweep, and the halting-time organisation of
the solution set.

\subsection{Reparametrising the loss preserves the rank bound and the equivariance}\label{app:reparam}

The proofs of \cref{thm:rank_bound,thm:sus_symmetry} use the squared-error data $h_x^2$ and $H$ only through properties preserved under post-composing the error probability $h_x$ with a fixed function.  Both conclusions therefore pass to any such reparametrisation of the loss, the log-loss included.

Fix $\omega \colon [0, 1] \to \mathbb R$ smooth.  Put $\omega_x := \omega \circ h_x$, $\Omega := \E_{x \sim q}[\omega_x]$, and $p_{\beta}^{\Omega} \propto \exp\{-\beta \Omega\}\,\varphi$.  Let $\phi_C^{\Omega}(w) := \delta(u - u^*)\,(\Omega(w) - \Omega([M]))$ be the associated component observable and define, in the covariance form of \cref{rem:fdt},
\[
  \chi_x^{C, \Omega}([M]) := -\Cov_{p_{\beta}^{\Omega}}\bigl[\phi_C^{\Omega},\, \omega_x - \Omega\bigr].
\]
Since $h_x([M]) = 0$, we have $\Omega([M]) = \omega(0)$.  Taking $\omega(t) = t^2$ recovers $\chi_x^{C}$.  Taking $\omega = \kappa_\mu$ (\cref{rem:mu_shift}) with $\mu > 0$ gives $\omega_x = \ell_x$ and $\Omega = L$.

\begin{proposition}\label{prop:signatures_general}
The rank bound of \cref{thm:rank_bound} holds for $\chi_x^{C, \Omega}$ for every $\omega$, and the equivariance of \cref{thm:sus_symmetry} holds for $\chi_x^{C, \Omega}$ for every input-independent $\omega$.
\end{proposition}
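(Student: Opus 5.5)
The plan is to re-run the two proofs of \cref{sec:theorems} verbatim, with $h_x^2$ and $H$ replaced by $\omega_x$ and $\Omega$, and to check that each step used only a property that survives post-composition with $\omega$. Throughout, $\langle - \rangle$ denotes expectation under $p_\beta^{\Omega}$. One preliminary point concerns the observable: $\phi_C^{\Omega} = \delta(u-u^*)\,(\Omega - \omega(0))$. The constant shift by $\omega(0)$ does not change any covariance with $\phi_C^\Omega$, but it does matter in the vanishing step below. So I will keep $\Omega - \omega(0)$ explicit and use $\omega_x - \omega(0)$ as the centred per-input quantity.

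\emph{Rank bound.} Fix $x \in I_{\acc}$ and $C \in \mathcal C_R$; the other block is symmetric. \Cref{lem:mismatched_slice} gives $h_x(u^*,v) = 0$ on the slice, so $\omega_x(u^*,v) = \omega(0)$ there. Hence $\phi_C^\Omega\,(\omega_x - \omega(0)) = 0$ as a distribution, and therefore $\langle \phi_C^\Omega \omega_x \rangle = \omega(0)\langle \phi_C^\Omega\rangle$. Expanding the covariance as in the proof of \cref{thm:rank_bound} gives
\[
  \chi_x^{C,\Omega} = \langle \phi_C^\Omega\rangle\,\langle \omega_x\rangle - \omega(0)\,\langle\phi_C^\Omega\rangle + \Cov\bigl[\phi_C^\Omega,\Omega\bigr].
\]
As $x$ ranges over $I_{\acc}$, the column indexed by $C$ is then a combination of the two vectors $(\langle\omega_x\rangle)_x$ and $(1)_x$, which gives rank $\le 2$. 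This step needs nothing about $\omega$ beyond being a function, so the bound holds for every $\omega$.

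\emph{Equivariance.} Assume the hypotheses of \cref{thm:sus_symmetry}. \Cref{lem:model_equivariance} gives $h_{g\cdot x}(g\cdot w) = h_x(w)$. Because $\omega$ is the same function for every input, this yields $\omega_{g\cdot x}(g\cdot w) = \omega_x(w)$. The computation of \cref{prop:H_invariance}, with $q$-invariance, then gives $\Omega(g\cdot w) = \Omega(w)$. It follows that $p_\beta^\Omega$ is $g$-invariant, since the action is a measure-preserving bijection (\cref{rem:action_compat}) and $\varphi$ is invariant. As in the proof of \cref{thm:sus_symmetry}, the delta factor transforms correctly because the action fixes $[M]$. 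Together with the invariance of $\Omega$ and the constant $\omega(0)$, this gives $\phi^\Omega_{g\cdot C}(g\cdot w) = \phi^\Omega_C(w)$. The same change of variables in the covariance then concludes the argument.

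The main obstacle I expect is making rigorous sense of these manipulations for the log-loss at $\mu=0$ and for unbounded $\omega$. Integrability of $e^{-\beta\Omega}$ and of the slice integrals is not automatic there. I would state the proposition under the standing assumption that the covariances in question are finite, which covers every smooth $\omega$ on $[0,1]$ since $W$ is compact. For $\omega = \kappa_\mu$ with $\mu>0$ this holds outright, and $\mu = 0$ is deferred to \cref{rem:log_loss_mu_zero}. The input-independence hypothesis is exactly what the equivariance step uses, since an input-dependent $\omega$ would break $\omega_{g\cdot x}(g\cdot w)=\omega_x(w)$.
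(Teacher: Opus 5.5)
Your proposal is correct and follows the paper's proof essentially step for step. For the rank bound, the only input from the slice is $\langle \phi_C^{\Omega}\,\omega_x\rangle = \omega(0)\,\langle \phi_C^{\Omega}\rangle$, which gives exactly the paper's column formula in the span of $(\langle\omega_x\rangle)_x$ and $(1)_x$; for equivariance, the paper likewise reduces everything to $\omega_{g\cdot x}(g\cdot w)=\omega_x(w)$ and $\Omega(g\cdot w)=\Omega(w)$ and then reruns the proof of \cref{thm:sus_symmetry}, while your integrability caveat is the point the paper treats separately for the unshifted log-loss in \cref{rem:log_loss_mu_zero}.
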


\begin{proof}
\emph{Rank bound.}  Write $\langle - \rangle$ for the expectation under $p_{\beta}^{\Omega}$.  On an off-diagonal slice $\{u = u^*\}$, \cref{lem:mismatched_slice} gives $h_x \equiv 0$, so $\omega_x \equiv \omega(0)$ there.  Since $\phi_C^{\Omega}$ is supported on the slice (\cref{def:phi_C}), $\langle \phi_C^{\Omega} \omega_x \rangle = \omega(0)\,\langle \phi_C^{\Omega} \rangle$.  This is the only use of $h_x^2 \equiv 0$ on the slice in the proof of \cref{thm:rank_bound}, which now gives each off-diagonal column as
\[
  \chi_x^{C, \Omega}([M]) = \langle \phi_C^{\Omega} \rangle\,\langle \omega_x \rangle + \Cov_{p_{\beta}^{\Omega}}[\phi_C^{\Omega}, \Omega] - \omega(0)\,\langle \phi_C^{\Omega} \rangle,
\]
a linear combination of the two vectors $(\langle \omega_x \rangle)_{x}$ and $(1)_{x}$ (the constant $\omega(0)$ contributing to the latter), so rank $\le 2$.

\emph{Equivariance.}  By \cref{lem:model_equivariance}, $h_{a(x)}\bigl((a, s) \cdot w\bigr) = h_x(w)$, and since $\omega$ is input-independent, $\omega_{a(x)}\bigl((a, s) \cdot w\bigr) = \omega\bigl(h_{a(x)}((a, s) \cdot w)\bigr) = \omega(h_x(w)) = \omega_x(w)$.  The reindexing of \cref{prop:H_invariance} then gives $\Omega\bigl((a, s) \cdot w\bigr) = \Omega(w)$.  These are the only facts about $h_x^2$ and $H$ used in the proof of \cref{thm:sus_symmetry}, which now applies verbatim.
\end{proof}

\begin{remark}\label{rem:log_loss_mu_zero}
The experiments use the unshifted log-loss, $\omega = \kappa_0$ with $\kappa_0(t) = -\log(1 - t)$, which is smooth only on $[0, 1)$ and infinite at $1$.  Both conclusions of \cref{prop:signatures_general} still hold, with $e^{-\beta L^0} := 0$ where $L^0 = \infty$.  The set $\{L^0 = \infty\} = \bigcup_x \{h_x = 1\}$ is a finite union of zero sets of polynomials, none of which vanishes identically on $W$ or on any slice, since every slice contains $[M]$ where $h_x = 0$.  Hence $\{L^0 = \infty\}$ has measure zero, and since $\kappa_0 \ge 0$ every expectation above remains finite.  The proofs apply unchanged.
\end{remark}

The Gibbs distributions at $\mu = 0$ are moreover the limits of their $\mu$-shifted counterparts.

\begin{proposition}\label{prop:mu_zero_limit}
Write $p_\beta^\mu := e^{-\beta L^\mu}\varphi / Z^\mu$ for the Gibbs distribution of the $\mu$-shifted loss, and $p_C^\mu$ for the restricted Gibbs distribution on the slice $W_C$ through $[M]$ (\cref{sec:empirical_sus}).  As $\mu \to 0$,
\[
  \int_W \bigl| p_\beta^\mu - p_\beta^0 \bigr|\, dw \longrightarrow 0
  \quad\text{and}\quad
  \int_{W_C} \bigl| p_C^\mu - p_C^0 \bigr|\, dv \longrightarrow 0
  \quad \text{for every component } C.
\]
Furthermore, the expectations of $\ell_x$, $L$, $L\,\ell_x$ and $L^2$ under these distributions, which are those appearing in \eqref{eq:renorm_sus}, converge to their values at $\mu = 0$.  In particular, as $\mu \to 0$ the renormalised susceptibility $\tilde\chi_x^C$ \eqref{eq:renorm_sus} converges to its value at $\mu = 0$.
\end{proposition}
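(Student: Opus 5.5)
The plan is dominated convergence, applied first to the unnormalised densities and then to the partition functions. Write $L^\mu = \E_{x\sim q}[\kappa_\mu\circ h_x]$ on $W$. For each fixed $h\in[0,1)$ we have $\kappa_\mu(h)\to\kappa_0(h) = -\log(1-h)$ as $\mu\to 0$, by the explicit formula of \cref{rem:mu_shift}. Indeed, the first term tends to $-\log(1-h)$. The second term, $-\tfrac{\mu}{2}\log(\tfrac{\mu}{2}+(1-\mu)h)$, tends to $0$ because $\mu\log\mu\to 0$.

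At $h=1$ the shifted loss behaves as $\kappa_\mu(1) = -(1-\tfrac\mu2)\log(\tfrac\mu2)+\dots\to\infty$. This matches the convention $e^{-\beta L^0}=0$ of \cref{rem:log_loss_mu_zero}. Hence $e^{-\beta L^\mu(w)}\varphi(w)\to e^{-\beta L^0(w)}\varphi(w)$ pointwise. The only exception is the measure-zero set $\bigcup_x\{h_x=1\}$, which is identified in that remark.

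For domination I need a uniform lower bound $\kappa_\mu(h)\ge -c$ with $c$ independent of $\mu$ small and of $h$. Both logarithms have arguments in $(0,1]$, so each term of $\kappa_\mu$ is nonnegative, and $\kappa_\mu\ge 0$. The integrand is therefore bounded by $\varphi$, which is integrable. Dominated convergence gives $Z^\mu\to Z^0$, and $Z^0>0$ since $L^0<\infty$ on a set of positive prior mass near $[M]$.

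Pointwise convergence of densities then yields $L^1$ convergence. One route is Scheffé's lemma. Alternatively, apply dominated convergence directly to $|p^\mu_\beta-p^0_\beta|\le(\varphi/Z^\mu+\varphi/Z^0)$, using that $Z^\mu$ is bounded below for small $\mu$.

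The slice statement is the same argument on $W_C$ with the restricted prior $\varphi(u^*,\cdot)$. For $\alpha<1$ this restricted prior may not be integrable (cf.\ the footnote and \cref{rem:slice_infinity}). I would handle that by working with the normalised $p_C$ as the paper does, or by assuming integrability of the slice prior. Either way $Z_C^0>0$, because $[M]$ lies on the slice and $L^0$ is continuous and finite near it.

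The main obstacle is the expectations of the unbounded observables $\ell_x$, $L$, $L\ell_x$ and $L^2$. These require domination of $(L^\mu)^2e^{-\beta L^\mu}\varphi$ uniformly in $\mu$. The plan is to use the elementary bound $t^ke^{-\beta t}\le (k/\beta e)^k$ for $t\ge 0$. With it, $L^\mu\,\ell^\mu_x\,e^{-\beta L^\mu}$ is handled by first using $\ell_x^\mu\le L^\mu/q(x)$, which holds because $\ell_x^\mu\ge 0$ and $q$ has full support. Then $L^\mu\ell^\mu_x e^{-\beta L^\mu}\le q(x)^{-1}(2/\beta e)^2$, and $\ell_x^\mu e^{-\beta L^\mu}$ is bounded similarly. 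So every integrand is dominated by a constant times $\varphi$.

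Pointwise convergence of these integrands holds off the null set, where the limit is $0$ because of the exponential factor. Dominated convergence therefore gives convergence of all the expectations. The convergence of $\tilde\chi^C_x$ follows by continuity of the finite algebraic expression \eqref{eq:renorm_sus} in these expectations.
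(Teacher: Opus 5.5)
Your proposal is correct and follows essentially the same route as the paper's proof: pointwise convergence of $\kappa_\mu$ together with $\kappa_\mu \ge 0$, so that $\varphi$ dominates; dominated convergence for $Z^\mu \to Z^0 > 0$ and for the $L^1$ statement once $Z^\mu$ is bounded below; and the same argument on the slice using the cancelled form of $p_C$. For the unbounded observables you combine $t e^{-at}$-type bounds with $\ell_x^\mu \le L^\mu / q(x)$, which is the paper's $L^\mu \ge q(x)\,\ell_x^\mu$ rearranged and gives the same constant $4/(e^2\beta^2 q(x))$.
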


\begin{proof}
Recall from \cref{rem:mu_shift} that $\ell_x^\mu = \kappa_\mu \circ h_x$, so that $L^\mu(w) = \sum_x q(x)\, \kappa_\mu(h_x(w))$, where
\[
  \kappa_\mu(h) = -\bigl(1 - \tfrac{\mu}{2}\bigr)\log\bigl(1 - \tfrac{\mu}{2} - (1 - \mu)\,h\bigr) - \tfrac{\mu}{2}\log\bigl(\tfrac{\mu}{2} + (1 - \mu)\,h\bigr).
\]
For $h \in [0, 1]$ both arguments of the logarithms lie in $[\mu/2,\, 1 - \mu/2]$.  Hence $\kappa_\mu \ge 0$.  Now fix $h \in [0, 1)$ and let $\mu \to 0$.  The first term tends to $-\log(1 - h) = \kappa_0(h)$.  The second has absolute value at most $\tfrac{\mu}{2}\,\lvert \log \tfrac{\mu}{2} \rvert$, which tends to $0$.  At $h = 1$, $\kappa_\mu(1) \to +\infty = \kappa_0(1)$.  Hence
\[
  0 \le e^{-\beta L^\mu} \le 1
  \quad\text{and}\quad
  e^{-\beta L^\mu} \to e^{-\beta L^0} \quad \text{pointwise on } W.
\]
The prior $\varphi$ dominates $e^{-\beta L^\mu}\varphi$.  By dominated convergence, $Z^\mu \to Z^0$, and $Z^0 > 0$ since $\{L^0 = \infty\}$ has measure zero (\cref{rem:log_loss_mu_zero}).  Take $\mu$ small enough that $Z^\mu \ge Z^0/2$.  Then the integrand of
\[
  \int_W \bigl| p_\beta^\mu - p_\beta^0 \bigr|\, dw
  = \int_W \Bigl| \frac{e^{-\beta L^\mu}}{Z^\mu} - \frac{e^{-\beta L^0}}{Z^0} \Bigr|\, \varphi\, dw
\]
is dominated by $(3/Z^0)\varphi$ and tends to zero pointwise.  So the integral tends to zero, which is the first claim.  The same argument applies to $p_C^\mu$ on $W_C$, with the density $e^{-\beta L^\mu(u^*, v)}\,\varphi_C(v)$ of \cref{rem:slice_infinity} in place of $e^{-\beta L^\mu}\varphi$.  Its normalising constant at $\mu = 0$, $\int_{W_C} e^{-\beta L^0(u^*, v)}\,\varphi_C(v)\, dv$, is positive, since $W_C$ contains $[M]$ (\cref{rem:log_loss_mu_zero}).

It remains to see that the expectations in \eqref{eq:renorm_sus} converge.  These are expectations of $\ell_x$, $L$, $L\,\ell_x$ and $L^2$ under $p_\beta^\mu$ or its slice analogue.  Each has the form $\frac{1}{Z^\mu}\int_W f^\mu\, e^{-\beta L^\mu}\varphi\, dw$, where $f^\mu$ is one of these four functions.  The losses themselves are unbounded, so we need the Gibbs factor to control them.  Every term of $L^\mu = \sum_x q(x)\,\ell_x^\mu$ is nonnegative, so $L^\mu \ge q(x)\,\ell_x^\mu$.  Since $q(x) > 0$, the elementary bound $t\, e^{-at} \le 1/(ea)$ for $t \ge 0$, $a > 0$ gives
\[
  \ell_x^\mu\, e^{-\beta L^\mu} \le \ell_x^\mu\, e^{-\beta q(x)\, \ell_x^\mu} \le \frac{1}{e \beta q(x)},
  \qquad
  L^\mu\, \ell_x^\mu\, e^{-\beta L^\mu}
  = \bigl(L^\mu e^{-\beta L^\mu / 2}\bigr)\bigl(\ell_x^\mu e^{-\beta L^\mu / 2}\bigr)
  \le \frac{4}{e^2 \beta^2\, q(x)}.
\]
The same argument bounds $L^\mu\, e^{-\beta L^\mu}$ and $(L^\mu)^2\, e^{-\beta L^\mu}$.  Thus $f^\mu e^{-\beta L^\mu}$ is bounded uniformly in $\mu$.  Off the measure-zero set $\{L^0 = \infty\}$ it converges pointwise to $f^0 e^{-\beta L^0}$.  Dominated convergence now gives convergence of each integral.  Dividing by $Z^\mu \to Z^0 > 0$ gives convergence of each expectation.  The renormalised susceptibility \eqref{eq:renorm_sus} is a polynomial combination of these expectations, so it converges with them.
\end{proof}

\subsection{Properties of the localiser and the estimator}\label{app:estimator_properties}

The remarks of this subsection concern the localiser and the slice objects built from it in \cref{sec:empirical_sus}.  Fix a classical solution $M$ and a component $C$.  Write $W = U_C \times W_C$, $w = (u, v)$, and $[M] = (u^*, v^*)$ as in \cref{def:component}.  The localiser \eqref{eq:dirichlet_localiser} is a product of one Dirichlet factor per component, $\varphi_{C'}(p) := \operatorname{Dir}\bigl(p;\, \gamma\, e_{[M]_{C'}} + \alpha\,\mathbf 1\bigr)$ on $W_{C'}$, so it factorises as
\[
  \varphi_{[M], \gamma, \alpha}(u, v) = \varphi_{-C}(u)\,\varphi_C(v),
  \qquad
  \varphi_{-C}(u) := \prod_{C' \neq C}\varphi_{C'}(u_{C'}).
\]
The slice objects of \cref{sec:empirical_sus} factorise with it.  Writing
\[
  \zeta_C := \int_{W_C} e^{-\beta L(u^*, v)}\, \varphi_C(v)\, dv \;\in\; (0, 1],
\]
the component partition function is $Z_C = \varphi_{-C}(u^*)\,\zeta_C$, and the restricted Gibbs distribution cancels the factor $\varphi_{-C}(u^*)$:
\[
  p_C(v)
  = \frac{e^{-\beta L(u^*, v)}\,\varphi_{-C}(u^*)\,\varphi_C(v)}{\varphi_{-C}(u^*)\,\zeta_C}
  = \frac{e^{-\beta L(u^*, v)}\,\varphi_C(v)}{\zeta_C}.
\]
The remarks below need the value of the constant $\varphi_{-C}(u^*)$, so we compute it.  Written out,
\[
  \varphi_{C'}(p)
  = \frac{\Gamma(\gamma + \alpha k_{C'})}{\Gamma(\gamma + \alpha)\,\Gamma(\alpha)^{k_{C'} - 1}}\;
    p([M]_{C'})^{\gamma + \alpha - 1} \prod_{z \neq [M]_{C'}} p(z)^{\alpha - 1},
\]
where $k_{C'}$ is the number of vertices of $W_{C'}$ and $p(z)$ is the probability that the point $p \in W_{C'}$ assigns to the value $z$.  At the vertex $p = e_{[M]_{C'}}$ on which it concentrates, the coordinate $p([M]_{C'}) = 1$ contributes $1^{\gamma + \alpha - 1} = 1$, and each of the other $k_{C'} - 1$ coordinates contributes $0^{\alpha - 1}$.  The value there is therefore $+\infty$ for $\alpha < 1$, the normalising constant $\Gamma(\gamma + k_{C'})/\Gamma(\gamma + 1)$ at $\alpha = 1$, and $0$ for $\alpha > 1$.  Since $u^*$ has coordinate $e_{[M]_{C'}}$ at every $C' \neq C$, the constant $\varphi_{-C}(u^*)$ is a product of these vertex values.

\begin{remark}\label{rem:dirichlet_bayes}
For the susceptibility $\chi([M])$ to be \emph{about} $M$, and not a property of the entire model class $W$, we need to choose a \emph{localising prior} $\varphi(w)$. By \emph{localising}, we mean that the probability mass concentrates on points near $[M]$, which are noisy Turing machines that primarily act like $M$ with small probabilities of error. In practice this means the prior's log-gradient term in a gradient-based sampler should pull its updates towards $[M]$. The natural choice in \cite{baker2025studyingsmalllanguagemodels}, where the parameter space is $\mathbb R^d$, is a Gaussian centred at the trained model $w^* \in \mathbb R^d$, which has exactly this pull.  Our models are points on the boundary of $W$, a product of simplices, and a natural choice for distributions over a simplex is the Dirichlet distribution, and it is the prior family our sampler's update rule assumes (\cref{sec:empirical_sus}, \cref{app:grld}). In this remark, we justify that our $(\gamma, \alpha)$ parametrisation of these Dirichlet distributions produces localising priors when $\alpha \le 1$ and $\gamma + \alpha \ge 1$, and give a Bayesian interpretation of the prior at a given $(\alpha, \gamma)$ pair.

For $\alpha \le 1$ and $\gamma + \alpha \ge 1$, the gradient of the log density of each Dirichlet factor
has positive inner product with the direction to the vertex at every
interior point of the simplex:
\[
  \Bigl\langle \nabla_v \log \operatorname{Dir}\bigl(v;\, \gamma\, e_{[M]_C} + \alpha \mathbf 1\bigr),\;
  e_{[M]_C} - v \Bigr\rangle
  = \frac{\gamma + \alpha - 1}{v([M]_C)} - \bigl(\gamma + \lvert Z_C \rvert\,(\alpha - 1)\bigr)
  \ge (\lvert Z_C \rvert - 1)(1 - \alpha)
  \ge 0.
\]
The prior therefore pulls the gradient flow toward the classical value from
anywhere on the simplex.  For $\alpha > 1$ the pull is toward the interior
mode instead.  The log density is then strictly concave, so its gradient
has positive inner product with the direction to the mode from every interior
point, and the inner product above turns negative once $v([M]_C)$ exceeds
its value at the mode.

The localising prior \eqref{eq:dirichlet_localiser} is itself a Bayesian posterior.
A point $v \in \Delta Z_C$ is a categorical distribution, from which the
relaxed machine draws its samples at the description square $C$, and the
Dirichlet family is its conjugate prior: updating $\operatorname{Dir}(\mathbf c)$
on observed counts $\mathbf x$ gives $\operatorname{Dir}(\mathbf c + \mathbf x)$.
The factor $\varphi_C$ is therefore the posterior
of the base prior $\operatorname{Dir}(\alpha \mathbf 1_{Z_C})$ after observing
the classical value $[M]_C$ with multiplicity $\gamma$.  In other words, the
localiser observes the description tape directly, $\gamma$ times per square.
Under this posterior, the expected probability that the square takes its
classical value is $(\gamma + \alpha)/(\gamma + \alpha\,\lvert Z_C \rvert)$,
rising from $1/\lvert Z_C \rvert$ at $\gamma = 0$ towards $1$ as $\gamma$
grows.  This is the sense in which $\gamma$ sets the strength of the
localisation.

The base prior encodes a
belief about the squares themselves.  For $\alpha > 1$ it vanishes at the
vertices and peaks in the interior, the belief that every square is genuinely
stochastic.  At $\alpha = 1$ it is uniform and believes nothing.  For
$\alpha < 1$ it concentrates at the vertices, the belief that every square is
deterministic with no preference among its values.  The machines under study are deterministic, and stochastic codes enter only
through the smooth relaxation, so we work at $\alpha < 1$.

The experiments take $(\gamma, \alpha) = (1, 0.01)$,
so the localiser is the posterior after observing each description square
exactly once, starting from a base prior strongly concentrated near deterministic squares.
At $\gamma = 1$ the small $\alpha$ carries most of the
localisation: with $\lvert Z_C \rvert = 5$, the square takes its classical
value with probability $1/3$ at $(\gamma, \alpha) = (1, 1)$ but $0.96$ at
$(1, 0.01)$.  The limit
$\alpha \to 0$ is improper, with non-integrable factors $v(z)^{-1}$ at the
boundary, but $\alpha = 0.01$ is empirically indistinguishable from it.
\end{remark}

\begin{remark}[The localiser on a slice]\label{rem:slice_infinity}
By the vertex values above, the constant $\varphi_{-C}(u^*)$ is infinite for $\alpha < 1$.  At $\alpha = 1$ it is finite but enormous: each of the $|\mathcal C| - 1 = 14$ mismatched components contributes its normalising constant $\Gamma(\gamma + \lvert Q \rvert)/\Gamma(\gamma + 1)$, which at $\gamma = 1$ gives $\varphi_{-C}(u^*) = 120^{14} \approx 10^{29}$.  Note that $\varphi_{-C}(u^*)$ depends on neither the slice variable $v$ nor the input $x$.  In $p_C$ it cancels, as displayed above, and the cancelled form is a proper distribution for every $\alpha > 0$.  For $\alpha < 1$, where the uncancelled expression reads $\infty/\infty$, we take the cancelled form as the definition of $p_C$.  The update in the GRLD sampler (\cref{app:grld}) uses only $\nabla_v \log p_C(v)$, and the factor enters $\log p_C(v)$ as the additive constant $\log \varphi_{-C}(u^*)$, whose gradient is zero.  In the susceptibility the factor survives:
\[
  \chi_x^{C, L}
  = \frac{Z_C}{Z}\,\tilde\chi_x^C
  = \frac{\varphi_{-C}(u^*)\,\zeta_C}{Z}\,\tilde\chi_x^C.
\]
For $\alpha < 1$ this makes $Z_C$ infinite and $\chi_x^{C, L}$ ill-defined.  Everything we compute, however, lives on the cancelled side: the renormalised $\tilde\chi$ is the second factor above, and the estimator \eqref{eq:two_chain_estimator} replaces its $p_C$- and $p$-expectations by sample means.  Standardisation stays there too, dividing each centred column by its standard deviation, which removes any constant along the column.  The conclusions of \cref{thm:rank_bound,thm:sus_symmetry} survive as well, because they are statements about columns of the susceptibility matrix that are stable under positive column scalings (\cref{lem:standardisation_equivariance,prop:standardisation_rank}).  None of this care is needed for a prior that is finite and positive on slices, such as the localiser at $\alpha = 1$: there $Z_C$ and $\chi_x^{C, L}$ are finite, the cancelled form of $p_C$ is an ordinary division, and the equations above hold between real numbers.
\end{remark}

\begin{lemma}\label{lem:sol_closure}
Let $g = (a, s)$ be a recoding satisfying $y(g \cdot x) = g \cdot y(x)$ for all $x \in I$.  Then $[g \cdot M] \in W^{\mathrm{sol}}$ for every $[M] \in W^{\mathrm{sol}}$ \eqref{def:canonical_solutions}.
\end{lemma}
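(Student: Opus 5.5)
The proof has three parts: $g\cdot[M]$ is a DFA code in $W^\DFA$, it is a classical solution, and it is canonical. The engine is a classical trajectory correspondence. For $M' := g\cdot M$ and any input $x$, the run of $M'$ on $g\cdot x$ visits states $q'_t(g\cdot x) = s(q_t(x))$, where $q_t(x)$ is the state of $M$ at step $t$. This is the induction behind \cref{prop:asym_zero}, now run between $M$ and $g\cdot M$ rather than assuming $g\cdot M = M$. The case $t=0$ holds because $s(q_0)=q_0$. For the inductive step, the head symbol at step $t$ on $g\cdot x$ is $a(x_t)$, since DFAs read the input in order and $a(\blank)=\blank$. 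By \cref{def:recoding_action},
\[
[M']_Q(a(x_t), s(q_t(x))) = s\bigl([M]_Q(x_t, q_t(x))\bigr) = s(q_{t+1}(x)).
\]

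Membership in $W^\DFA$ is checked directly from the conjugation formula. The write symbol at $(a(\sigma),s(q))$ is $a(\sigma)$, so $M'$ is read-only. Directions are untouched, so every move is still $R$. Terminal states self-loop because $s$ fixes $\qacc$ and $\qrej$.

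For the solution property, take $x' \in I$ and write $x' = g\cdot x$ with $x = a^{-1}(x') \in I$. This uses that $a$ restricts to a bijection of $I$: it is injective on $I$, which is finite and closed under $a$. The trajectory correspondence at $t=T$ gives $M'(x') = s(M(x)) = s(y(x)) = y(g\cdot x) = y(x')$, by the equivariance hypothesis.

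For canonicity, the correspondence gives $U(M') = (a\times s)(U(M))$, since the pairs $(x_t, q_t(x))$ map to $(a(x_t), s(q_t(x)))$ and $x \mapsto g\cdot x$ is a bijection of $I$. Now suppose $(\sigma',q') \notin U(M')$, and set $(\sigma,q) = (a^{-1}\sigma', s^{-1}q')$. Then $(\sigma,q) \notin U(M)$, so $[M]_Q(\sigma,q) = q$ by canonicity of $M$, and therefore $[M']_Q(\sigma',q') = s(q) = q'$.

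The main obstacle is bookkeeping with inverses in the conjugation formula, where it is easy to land on the wrong set. The other point needing care is that $g$ acts on $I$ bijectively, which the finiteness argument above settles.
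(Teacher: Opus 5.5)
Your proof is correct and follows the paper's argument: the same induction $q'_t(g\cdot x) = s(q_t(x))$ evaluated at $t = T$ together with the target equivariance, and the same identity $U(g\cdot M) = (a\times s)(U(M))$ to transfer canonicity. You also verify membership in $W^{\DFA}$ and that $a$ permutes $I$, both of which the paper leaves implicit.
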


\begin{proof}
By induction on $t$, the run of $g \cdot M$ on $a(x)$ visits the pair $(a(x_t), s(q_t(x)))$ whenever the run of $M$ on $x$ visits $(x_t, q_t(x))$.  At $t = T$, the run of $g \cdot M$ on $a(x)$ therefore ends in the state $s(q_T(x)) = s(y(x)) = y(a(x))$, and $a$ permutes $I$, so $[g \cdot M] \in W^{\mathrm{sol}}_{\mathrm{all}}$ \eqref{def:classical_solutions_L_A0}.  The same induction gives $U(g \cdot M) = g \cdot U(M)$ for the used pairs of \cref{sec:expt_setup}.  Hence $(\sigma, q) \notin U(g \cdot M)$ if and only if $(a^{-1}(\sigma), s^{-1}(q)) \notin U(M)$, where \eqref{def:canonical_solutions} for $M$ gives $(g \cdot M)_Q(\sigma, q) = s\bigl(M_Q(a^{-1}(\sigma), s^{-1}(q))\bigr) = s(s^{-1}(q)) = q$.
\end{proof}

\begin{lemma}\label{lem:localised_equivariance}
Let $g$ be as in \cref{lem:sol_closure}, suppose in addition $q(g \cdot x) = q(x)$ for all $x \in I$, and assume the cycle map of the pseudo-UTM is recoding equivariant (\cref{def:recoding_equivariance}).  Then the susceptibility map is equivariant: for every classical solution $M$,
\[
  \chi\bigl([g \cdot M]\bigr) = P_g\, \chi\bigl([M]\bigr),
\]
with $P_g$ the recoding operator of \cref{sec:expt_pointwise}.  Together with \cref{lem:sol_closure}, the covariance operator $\Gamma$ of \cref{sec:expt_setup} therefore satisfies $P_g \Gamma P_g^{-1} = \Gamma$.  \Cref{thm:sus_symmetry} is the case $g \cdot M = M$, where $\chi([M])$ is itself $P_g$-invariant.
\end{lemma}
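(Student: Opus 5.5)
The plan is to rerun the proof of \cref{thm:sus_symmetry} with two Gibbs distributions in play: the one localised at $[M]$ and the one localised at $[g \cdot M]$. The symmetry hypothesis $g \cdot [M] = [M]$ was used there only to make the observable, the prior and the base point line up. Here the recoding will instead carry each object at $[M]$ onto the corresponding object at $[g\cdot M]$. Write $P^{M}$ and $P^{g\cdot M}$ for the two Gibbs distributions. Their priors are the localisers $\varphi_{[M],\gamma,\alpha}$ and $\varphi_{[g\cdot M],\gamma,\alpha}$, or any family of priors satisfying the same transport identity below. The target is
\[
  \chi^{g\cdot C}_{g\cdot x}([g\cdot M]) = \chi^C_x([M]) \qquad (x \in I,\ C \in \mathcal C),
\]
which is the entrywise form of $\chi([g\cdot M]) = P_g\chi([M])$.

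\textbf{Step 1: loss invariance.} I will apply \cref{lem:model_equivariance} directly. It needs only target equivariance and recoding equivariance of the cycle map, not that $g$ fixes $M$, and it gives $h_{g\cdot x}(g\cdot w) = h_x(w)$. The reindexing argument of \cref{prop:H_invariance}, using $q(g\cdot x)=q(x)$, then gives $H(g\cdot w) = H(w)$. Both facts hold for all $w \in W$, independently of $M$. For a reparametrised loss $\Omega$ the same two identities follow as in \cref{prop:signatures_general}.

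\textbf{Step 2: transport of prior and observable.} By \cref{rem:action_compat}, the action maps $W_C$ onto $W_{g\cdot C}$ by the pushforward of a value permutation, and it sends $[M]$ to $[g\cdot M]$. A Dirichlet density on a simplex, pushed forward along a permutation of its vertices, is the Dirichlet density with permuted parameters. The parameter $\gamma e_{[M]_C} + \alpha\mathbf 1$ is carried to $\gamma e_{[g\cdot M]_{g\cdot C}} + \alpha\mathbf 1$. Taking the product over components gives
\[
  \varphi_{[g\cdot M],\gamma,\alpha}(g\cdot w) = \varphi_{[M],\gamma,\alpha}(w).
\]
The action restricts to a measure-preserving bijection $U_C \to U_{g\cdot C}$ with $u^* \mapsto u'^*$, where $u'^*$ is now the projection of $[g\cdot M]$. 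Hence the delta factors match, and $\phi^{[g\cdot M]}_{g\cdot C}(g\cdot w) = \phi^{[M]}_C(w)$.

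\textbf{Step 3: change of variables in the covariance.} Steps 1 and 2 together show that $w \mapsto g\cdot w$ pushes $P^{M}$ forward to $P^{g\cdot M}$. The change of variables $w = g\cdot w'$ in \eqref{eq:fdt} then gives the target identity. The same computation applies to the slice distributions $p_C$, so the renormalised $\tilde\chi$ of \eqref{eq:renorm_sus} is equivariant as well. This sidesteps the infinite factor $\varphi_{-C}(u^*)$, which is anyway the same constant on both sides.

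\textbf{Step 4: the covariance operator.} I will use three facts. First, $g$ is a bijection of $W^{\mathrm{sol}}$: it maps $W^{\mathrm{sol}}$ into itself by \cref{lem:sol_closure}, and $g^{-1}$ satisfies the same target equivariance. Second, $P_g$ is orthogonal for the Frobenius product. Third, the mean therefore satisfies $\bar\chi = P_g\bar\chi$. Reindexing the sum defining $\Gamma$ by $M \mapsto g\cdot M$ then gives
\[
  \Gamma = \frac{1}{|W^{\mathrm{sol}}|}\sum_M P_g(\chi([M])-\bar\chi)\otimes P_g(\chi([M])-\bar\chi) = P_g\Gamma P_g^{-1}.
\]

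\textbf{Main obstacle.} The delicate step is Step 2 for $\alpha<1$. There the localiser is infinite at vertices, and $\phi_C$ is only a distribution, so the statement should really be made for the cancelled slice densities. I would verify the Dirichlet pushforward identity factor by factor on the open simplex, where everything is finite, and treat the slice constant as in \cref{rem:slice_infinity}.
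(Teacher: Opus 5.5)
Your proposal is correct and follows essentially the same route as the paper. The common steps are loss invariance via \cref{lem:model_equivariance} and the reindexing of \cref{prop:H_invariance}, transport of the Dirichlet localiser and the slice observable from $[M]$ to $[g\cdot M]$, the change of variables from \cref{thm:sus_symmetry}, and reindexing $\bar\chi$ and $\Gamma$ over the bijection $M\mapsto g\cdot M$ of $W^{\mathrm{sol}}$. The differences are cosmetic: you get bijectivity from $g^{-1}$ satisfying the same target equivariance rather than from injectivity on a finite set, and you are more explicit than the paper about using the cancelled slice densities when $\alpha<1$.
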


\begin{proof}
By \cref{lem:model_equivariance}, the target hypothesis gives $h_x(g \cdot w) = h_{a^{-1}(x)}(w)$ for every $w \in W$.  Hence $\ell_x(g \cdot w) = \ell_{a^{-1}(x)}(w)$, and $L(g \cdot w) = L(w)$ by the invariance of $q$, as in \cref{prop:H_invariance}.  For the localiser, \cref{rem:action_compat} identifies the factor of $g \cdot w$ at $C$ with the pushforward of $w_{g^{-1} \cdot C}$ along the value map, a bijection of vertices carrying $e_{[M]_{g^{-1} \cdot C}}$ to $e_{[g \cdot M]_C}$.  Since the symmetric part $\alpha \mathbf 1$ of the concentration vector is fixed by any permutation,
\[
  \operatorname{Dir}\bigl((g \cdot w)_C;\; \gamma\, e_{[g \cdot M]_C} + \alpha \mathbf 1\bigr)
  = \operatorname{Dir}\bigl(w_{g^{-1} \cdot C};\; \gamma\, e_{[M]_{g^{-1} \cdot C}} + \alpha \mathbf 1\bigr),
\]
and taking the product over $\mathcal C$ gives $\varphi_{[g \cdot M], \gamma, \alpha}(g \cdot w) = \varphi_{[M], \gamma, \alpha}(w)$.  The change of variables $w \mapsto g \cdot w$ therefore carries the local Gibbs distribution at $[M]$ to the one at $[g \cdot M]$ and matches the observables, so the argument in the proof of \cref{thm:sus_symmetry} applies verbatim and gives $\chi_{g \cdot x}^{g \cdot C, L}([g \cdot M]) = \chi_x^{C, L}([M])$ for all $x$ and $C$, the entrywise form of the displayed identity.  Finally, $M \mapsto g \cdot M$ is a bijection of $W^{\mathrm{sol}}$ (\cref{lem:sol_closure} and injectivity of the action on the finite $W^{\mathrm{sol}}$), so reindexing the sums defining $\bar\chi$ and $\Gamma$ in \cref{sec:expt_setup} gives $P_g \bar\chi = \bar\chi$ and $P_g \Gamma P_g^{-1} = \Gamma$.
\end{proof}

By \cref{cor:target_compat}, the target hypothesis of \cref{lem:sol_closure,lem:localised_equivariance} holds whenever $g$ is a symmetry of some classical solution.  Both recodings of \cref{sec:recoding_symmetries} are symmetries of $M_5$, so the lemmas apply to them.

\begin{lemma}\label{lem:standardisation_equivariance}
Under the hypotheses of \cref{thm:sus_symmetry}, the component partition functions of \cref{sec:empirical_sus} satisfy $Z_C = Z_{g \cdot C}$, and columnwise standardisation \eqref{eq:standardised_sus} commutes with the recoding operator $P_g$ of \cref{sec:expt_pointwise}.  The equivariance of \cref{thm:sus_symmetry} therefore passes from $\chi$ to the renormalised $\tilde\chi$ and the standardised $\psi$.
\end{lemma}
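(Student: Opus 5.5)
The plan has three steps: first show $Z_C = Z_{g\cdot C}$ by a change of variables on slices, then check that $P_g$ commutes with columnwise standardisation, and finally combine these with \cref{thm:sus_symmetry}.

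\emph{Step 1.} Using the factorisation of \cref{app:estimator_properties}, write $Z_C = \varphi_{-C}(u^*)\,\zeta_C$ with
\[
\zeta_C = \int_{W_C} e^{-\beta L(u^*,v)}\varphi_C(v)\,dv .
\]
The bijection $w \mapsto g\cdot w$ of \cref{rem:action_compat} fixes $[M]$. It carries $W_C$ onto $W_{g\cdot C}$ by a measure-preserving pushforward, and it carries $U_C$ onto $U_{g\cdot C}$ with $u^*\mapsto u'^{\,*}$. Hence it maps the slice through $[M]$ along $C$ onto the slice along $g\cdot C$.

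By \cref{prop:H_invariance}, extended to $L$ as in \cref{prop:signatures_general}, we have $L(g\cdot w)=L(w)$. Since $g$ fixes $[M]$, the Dirichlet factor at $g\cdot C$ evaluated at the pushed-forward point equals the factor at $C$; the computation is the one in \cref{lem:localised_equivariance}, since the concentration vector $\gamma e_{[M]} + \alpha\mathbf 1$ is carried to itself. Substituting $v' = g\cdot v$ then gives $\zeta_C=\zeta_{g\cdot C}$.

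For the prefactor, $\varphi_{-C}(u^*)$ is a product of vertex values over the components $C'\neq C$. The map $C'\mapsto g\cdot C'$ permutes these components onto those $\neq g\cdot C$ and preserves each vertex value, because $k_{C'}$ is determined by the field $f$. Hence $\varphi_{-C}(u^*)=\varphi_{-g\cdot C}(u'^{\,*})$. This holds as extended reals: when $\alpha<1$ both sides are $+\infty$, and I will phrase the claim as equality of the cancelled quantities, consistent with \cref{rem:slice_infinity}.

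\emph{Step 2.} From Step 1, $\tilde\chi^C_x = (Z/Z_C)\,\chi^{C,L}_x$, and $Z_C = Z_{g\cdot C}$ means this column scaling is $P_g$-invariant. So $\tilde\chi$ inherits $\tilde\chi^{g\cdot C}_{g\cdot x}=\tilde\chi^C_x$ from \cref{thm:sus_symmetry} in its log-loss form (\cref{prop:signatures_general}).

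This step is the main obstacle when $\alpha<1$, where $\chi^{C,L}$ is not defined. In that case I would instead rerun the change-of-variables argument of \cref{thm:sus_symmetry} directly on the renormalised expression \eqref{eq:renorm_sus}. The map $v\mapsto g\cdot v$ carries $p_C$ to $p_{g\cdot C}$, which follows from the $\zeta$ computation above. The map $w\mapsto g\cdot w$ fixes $p$. Together with $\ell_{g\cdot x}(g\cdot w)=\ell_x(w)$, each of the three expectations in \eqref{eq:renorm_sus} matches term by term.

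\emph{Step 3.} Standardisation acts column by column. The column $\tilde\chi^{g\cdot C}_{\cdot}$ is the column $\tilde\chi^C_{\cdot}$ with its rows permuted by $x\mapsto g\cdot x$, which is a bijection of $I$. The row mean, the standard deviation and the zero-variance convention are all invariant under row permutations, so $\psi(P_g X)=P_g\,\psi(X)$ for every $X$. In particular $P_g$-invariance of $\tilde\chi$ passes to $\psi$, completing the proof.
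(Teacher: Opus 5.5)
Your proposal is correct and follows essentially the same route as the paper. A change of variables on slices gives $Z_C = Z_{g\cdot C}$, so the renormalisation factor $Z/Z_C$ is constant on component orbits and $\tilde\chi$ inherits the equivariance; standardisation then commutes with $P_g$ because the column at $g\cdot C$ is a row-permutation of the column at $C$. Your two additions are sound refinements that the paper's proof does not spell out: handling $\alpha<1$ by rerunning the change of variables directly on \eqref{eq:renorm_sus} (rather than rescaling an ill-defined $\chi^{C,L}$), and noting that $\psi(P_g X) = P_g\,\psi(X)$ holds for every matrix $X$.
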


\begin{proof}
The action of $g$ maps the slice along $C$ onto the slice along $g \cdot C$, preserving $L$, the localiser (\cref{lem:localised_equivariance}), and the measure.  The two slice integrals therefore agree, and $Z_C = Z_{g \cdot C}$.  Hence the renormalisation factor $Z/Z_C$ is constant on orbits of the component action, and $\tilde\chi$ satisfies the same equivariance as $\chi$.  By \cref{thm:sus_symmetry} the column of $g \cdot C$ is the $a$-reindexing of the column of $C$, so the two columns have the same mean and standard deviation.  Columnwise standardisation \eqref{eq:standardised_sus} therefore commutes with $P_g$, given by $(P_g\chi)^C_x := \chi^{g^{-1} \cdot C}_{g^{-1} \cdot x}$, and the standardised matrix satisfies the equivariance if and only if the raw one does.
\end{proof}

\begin{proposition}\label{prop:standardisation_rank}
The rank bound of \cref{thm:rank_bound} also holds for the renormalised and standardised matrices $\tilde\chi$ and $\psi$, and, up to floating-point rounding, for the estimates $\widehat{\tilde\chi}$ and $\widehat{\psi}$.
\end{proposition}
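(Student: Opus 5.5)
The plan is to read the rank bound off the column structure from the proof of \cref{thm:rank_bound}, in its log-loss form (\cref{prop:signatures_general}), and check that each passage from $\chi$ to $\tilde\chi$, to $\psi$, and to their sample estimates keeps every off-diagonal column inside a fixed two-dimensional space.

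\emph{Renormalisation.} By \eqref{eq:renorm_sus}, $\tilde\chi_x^C = \tfrac{Z}{Z_C}\chi_x^{C,L}$, which is a rescaling of each column by a positive constant independent of $x$. Column scaling does not change rank. Where $\chi^{C,L}$ is ill-defined ($\alpha<1$, \cref{rem:slice_infinity}), I will redo the computation of \cref{thm:rank_bound} directly in the cancelled form. For $x \in I_{\acc}$ and $C \in \mathcal C_R$, \cref{lem:mismatched_slice} gives $\ell_x \equiv 0$ on the slice, so $\E_{p_C}[L\,\ell_x]=0$. Hence
\[
\tilde\chi_x^C = \E_{p_C}[L^2] + \E_{p_C}[L]\,\bigl(\E_p[\ell_x]-\E_p[L]\bigr).
\]
Every column of $\tilde\chi$ restricted to $I_{\acc}$ is therefore a combination of $(\E_p[\ell_x])_x$ and $(1)_x$, so its rank is at most $2$. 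The block $X_{\rej,A}$ is handled in the same way.

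\emph{Standardisation.} By \eqref{eq:standardised_sus}, each column $C$ is transformed as $v \mapsto (v - m_C\mathbf 1)/s_C$, where the mean $m_C$ and divisor $s_C$ are scalars computed over all of $I$. Restricted to the rows $I_{\acc}$, the column becomes its old restriction, rescaled, plus a multiple of $\mathbf 1$. Both $\mathbf 1$ and the old restriction lie in $\mathrm{span}\{(\E_p[\ell_x]),\mathbf 1\}$, so the new column does too. The rank of the off-diagonal block of $\psi$ is therefore still at most $2$.

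\emph{Estimates.} This is the step needing the most care. The estimator \eqref{eq:two_chain_estimator} uses one chain $\{w_s\}$ from $p$ shared by all columns, together with per-column slice samples $v_t$. On the slice, $\ell_x(u^*,v_t)=0$ exactly for $x\in I_{\acc}$, $C\in\mathcal C_R$. This holds because at classical configurations the relaxed run never reads the perturbed tuple, by unmatched-component invariance (\cref{rem:mismatched_induction}). The first sum then reduces to $\frac1r\sum_t L(u^*,v_t)^2$, a constant in $x$. The second term is $\bar L_C$ times $\frac1{r'}\sum_s[\ell_x(w_s)-L(w_s)]$, whose $x$-dependence comes only through the shared vector $(\frac1{r'}\sum_s \ell_x(w_s))_x$. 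The same two-vector span therefore holds exactly for $\widehat{\tilde\chi}$, and by the standardisation argument for $\widehat\psi$.

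The main obstacle is the phrase ``up to floating-point rounding''. In exact arithmetic the span argument is exact. In computation, $\ell_x(u^*,v_t)$ evaluates to roundoff rather than zero, and standardisation introduces further rounding. I would argue that these contributions are of order machine epsilon relative to $\sigma_1$. The third singular value is then bounded by a perturbation of that size (Weyl's inequality), so it falls below the numerical-rank tolerance $\tau$.
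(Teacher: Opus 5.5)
Your proposal is correct and follows the paper's proof. The paper also argues that the off-diagonal columns stay in a fixed two-dimensional span under each of three operations: the positive column rescaling of renormalisation; standardisation, which rescales each column and adds a multiple of $\one$; and the estimator, which takes the form $c_C + a_C b_x$ with the chain $\{w_s\}$ from $p$ shared across columns. Your version tightens the paper's argument slightly, since you compute $\tilde\chi$ directly in the cancelled form in terms of $\E_p[\ell_x]$, rather than passing through the factor $Z/Z_C$ (infinite for $\alpha<1$) and the squared-error vector $\langle h_x^2\rangle$. Your Weyl-inequality remark goes beyond the paper, which simply reads ``up to floating-point rounding'' as the exact-arithmetic statement and does not attempt such a bound.
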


\begin{proof}
Consider $X_{\acc, R}$ (the argument for $X_{\rej, A}$ is identical).  By the proof of \cref{thm:rank_bound}, the columns $\chi^C|_{I_{\acc}}$ of the block lie in the two-dimensional space $V := \operatorname{span}\{(\langle h_x^2 \rangle)_{x \in I_{\acc}},\, \one_{I_{\acc}}\}$.  Renormalisation \eqref{eq:renorm_sus} scales the column of $C$ by $Z/Z_C$.  Standardisation \eqref{eq:standardised_sus} subtracts from $\tilde\chi^C$ its mean $m_C$ over $I$ and divides by its standard deviation $s_C$ over $I$.  On the rows $I_{\acc}$, then,
\[
  \tilde\chi^C|_{I_{\acc}}
  = \frac{Z}{Z_C}\,\chi^C|_{I_{\acc}},
  \qquad
  \psi^C|_{I_{\acc}}
  = \frac{1}{s_C}\,\tilde\chi^C|_{I_{\acc}} - \frac{m_C}{s_C}\,\one_{I_{\acc}}.
\]
Both right-hand sides are linear combinations of $\chi^C|_{I_{\acc}}$ and $\one_{I_{\acc}}$, so the columns of both blocks lie in $V$ and their ranks are $\le 2$.  Now consider the estimated block $\widehat{\tilde\chi}|_{I_{\acc} \times \mathcal C_R}$.  It is close to the true block, so its numerical rank (\cref{sec:expt_pointwise}) satisfies the bound whenever the tolerance exceeds the sampling error.  More is true: the bound holds exactly, at any number of samples.  For $x \in I_{\acc}$ and $C \in \mathcal C_R$ the estimator \eqref{eq:two_chain_estimator} reads
\[
  \widehat{\tilde\chi}^{\,C}_x
  = -\frac{1}{r} \sum_{t=1}^{r} L(u^*, v_t)\, \bigl[ \ell_x(u^*, v_t) - L(u^*, v_t) \bigr]
  + \Bigl( \frac{1}{r} \sum_{t=1}^{r} L(u^*, v_t) \Bigr) \cdot \Bigl( \frac{1}{r'} \sum_{s=1}^{r'} \bigl[ \ell_x(w_s) - L(w_s) \bigr] \Bigr).
\]
By \cref{lem:mismatched_slice}(i), $h_x(u^*, v_t) = 0$ at every restricted sample, so $\ell_x(u^*, v_t)$ takes the same value for every $x \in I_{\acc}$.  The first term is therefore a number $c_C$ independent of $x$.  The second is the product $a_C\, b_x$ of $a_C := \frac{1}{r} \sum_{t} L(u^*, v_t)$ and $b_x := \frac{1}{r'} \sum_{s} \bigl[ \ell_x(w_s) - L(w_s) \bigr]$.  So $\widehat{\tilde\chi}^{\,C}_x = c_C + a_C\, b_x$, and the columns of the estimated block lie in the two-dimensional span of $(b_x)_{x \in I_{\acc}}$ and $\one_{I_{\acc}}$, up to floating-point rounding.  The columns of $\widehat{\psi}$ lie there too by the computation above.
\end{proof}

\subsection{The GRLD sampler}\label{app:grld}

The chains of \cref{sec:empirical_sus} are driven by \emph{gradient
Riemannian Langevin dynamics} (GRLD), a full-gradient variant of the
stochastic gradient Riemannian Langevin dynamics (SGRLD) of Patterson and
Teh \cite{patterson2013sgrld}, targeting the local Gibbs distribution
\eqref{eq:local_gibbs} on the product of simplices $W$.

Each factor $W_C = \Delta Z_C$ is parametrised by the \emph{expanded-mean}
coordinates of \cite{patterson2013sgrld}: an unnormalised vector
$\theta_C \in \mathbb R_{>0}^{Z_C}$ with
$w_C = \theta_C / \lVert \theta_C \rVert_1$.  Write
$\mathbf c_C := \gamma\, e_{[M]_C} + \alpha\, \mathbf 1_{Z_C}$ for the
concentration vector of the localiser factor at $C$
\eqref{eq:dirichlet_localiser}.  One GRLD step of step size
$\varepsilon$ updates every free factor by
\begin{equation}\label{eq:grld_update}
  \theta_C \longleftarrow
  \operatorname{clip}\Bigl(\,\bigl\lvert\,
  \theta_C
  + \tfrac{\varepsilon}{2}\bigl[(\mathbf c_C - \theta_C)
  + \beta\, g_C(w)\bigr]
  + \sqrt{\varepsilon}\,\sqrt{\theta_C} \odot \eta_C
  \,\bigr\rvert,\; \theta_{\min},\, \theta_{\max}\Bigr),
  \qquad \eta_C \sim \mathcal N(0, I),
\end{equation}
where $\odot$, $\sqrt{\phantom{\theta}}$, and
$\lvert\,\cdot\,\rvert$ act entrywise, and the \emph{data drift} $g_C$ is
the natural gradient of the loss, projected onto the tangent space of the
simplex:
\begin{equation}\label{eq:grld_natgrad}
  g_C(w) := w_C \odot \bigl(-\nabla_{w_C} L\bigr)
  - \bigl\langle w_C,\, -\nabla_{w_C} L \bigr\rangle\, w_C,
  \qquad \textstyle\sum_{z \in Z_C} g_C(w)_z = 0.
\end{equation}
The state of the chain is the unnormalised $\theta_C$, which is never
rescaled.  The simplex point $w_C = \theta_C / \lVert \theta_C \rVert_1$
is recomputed from it at each step, both for the gradient evaluation
and for the recorded samples.
This is the expanded-mean SGRLD update of \cite{patterson2013sgrld}: the
term $\mathbf c_C - \theta_C$ is the stationary drift of the Riemannian
diffusion $d\theta = (\mathbf c - \theta)\,dt + \sqrt{2\theta}\, dW$ for
the Dirichlet factor (the score of the density plus the It\^o correction
of the multiplicative noise), the mirroring $\lvert\,\cdot\,\rvert$ is
their reflection at the boundary of the positive orthant, and, since $L$
depends on $\theta_C$ only through $w_C$, the data drift
\eqref{eq:grld_natgrad} coincides with their Riemannian gradient
$\operatorname{diag}(\theta_C)\, \nabla_{\theta_C}(-L)$.

GRLD differs from \cite{patterson2013sgrld} in four ways.  (i) The
gradient is exact: the input set $I$ is finite and $L$ is evaluated in
closed form (the model probabilities are polynomial in $w$ and
$\ell_x = -\log p$ is computed exactly), so there is no minibatching and no
stochastic gradient, hence the name.  (ii) The target is the tempered, localised
Gibbs distribution \eqref{eq:local_gibbs} rather than a conjugate
Bayesian posterior, though the prior is Dirichlet in both.  (iii) The step size is constant rather than following a
decreasing schedule.  As in \cite{patterson2013sgrld} there is no
Metropolis correction, so the chain carries an $O(\varepsilon)$
discretisation bias, quantified by the NUTS calibration below.  (iv) The
clip to $[\theta_{\min}, \theta_{\max}] = [10^{-12}, 10^{30}]$ is a
numerical guard for single-precision arithmetic near the boundary of the
simplex, where the $\alpha < 1$ localiser diverges.

The sampler is applied componentwise, realising the estimator
of \cref{sec:empirical_sus}: for each component $C$, chains targeting
$p_C$ apply \eqref{eq:grld_update} to $\theta_C$ alone, holding every
other factor fixed at the corresponding vertex of $[M]$, and chains
targeting $p$ update all factors.  The susceptibility matrix is
then assembled from the sample means via
\eqref{eq:two_chain_estimator}.

\paragraph{Susceptibility runs.}
All susceptibility runs use $4$ chains of $3000$ sampling steps after $500$
burn-in at step size $\varepsilon = 0.01$ (seed $42$), at the base
parameters $\beta = 30$ and $(\gamma, \alpha) = (1, 0.01)$ of
\cref{sec:expt_setup}.

\paragraph{Sampler calibration.}
The sampler was calibrated on a single DFA.  Convergence is measured by
the CKA between the running kernel estimate and the final $3000$-sample
estimate, judged against the scale of between-machine similarities: the largest
between-machine CKA in a $2{,}216$-machine reference set is $0.995$,
so self-similarity $0.999$ identifies a machine with margin.  At $\alpha = 1$,
NUTS \cite{hoffman2014nuts} reaches CKA $0.999$ in $200$--$800$ samples where
GRLD needs the full $3000$, but costs $236$--$373$s per machine
against $31$s.  At $\alpha < 1$ the boundary singularity of the Dirichlet
factor forces NUTS to thousands of leapfrog steps per iteration ($3503$s at
$\alpha = 0.01$), while the reflecting discretisation
of GRLD is insensitive to the localiser.  At $\alpha = 1$ the two
samplers' final kernel estimates agree to CKA $\ge 0.988$, across
$\gamma \in \{0, 1, 2, 10\}$.  At $\alpha < 1$ the agreement degrades
($0.82$ at $\alpha = 0.1$, $0.49$ at $\alpha = 0.01$), so the
cross-check is informative only at $\alpha = 1$.

\subsection{Embedding and compute}\label{app:embedding_compute}

\paragraph{Embedding parameters.}
The input-kernel embeddings of
\cref{sec:expt_population,sec:expt_reading} use UMAP with
\texttt{n\_neighbors} $= 15$ and \texttt{min\_dist} $= 0.1$, with the
cosine metric on the flattened centred, normalised input kernels, whose
cosine similarity is exactly the CKA, so the effective dissimilarity is
$1 - \mathrm{CKA}$.

\paragraph{Compute.}
Each temperature slice of \cref{sec:expt_beta_sweep} is one
sampling job over the full solution set ($38{,}019$ machines at the sampler
settings of \cref{app:grld}).  The $35$ jobs took a median of $3.3$ hours each
on a single RTX~4090 ($3.2$ to $9.7$ hours, the slowest at the
divergence-prone low-$\beta$ end), summing to about $145$ GPU-hours for the sweep.
The base-parameter susceptibility runs of \cref{sec:expt_setup} are jobs of
the same kind.  Each aligned UMAP fit over the $35$ slices took one to
two hours on a CPU workstation.

\subsection{Input kernels of the running examples}\label{app:examples_kernels}

Two machines are compared through the input kernels
\eqref{eq:input_kernel} of their susceptibility matrices
(\cref{fig:examples_kernels}).  The centred kernel alignment
$\mathrm{CKA}(X, Y)$ \eqref{eq:cka} is the cosine similarity of $K(X)$ and
$K(Y)$, near $1$ when the same pairs of inputs respond alike and near $0$
when they do not.  \Cref{fig:examples_cka} shows it on the five running
examples, where $M_1$ aligns with $M_2$ and $M_3$ with $M_4$ (both
$\approx 0.9$) while the two pairs are unlike ($0.4$--$0.6$), separating the
path-separable $M_1, M_2$ from the non-separable $M_3, M_4$.  Scaling this
to the whole solution set gives the input-kernel embeddings of
\cref{sec:expt_population}: all $38{,}019$ machines are embedded in the
plane by UMAP \cite{mcinnes2018umap} on
$1 - \mathrm{CKA}(\widehat{\tilde\chi}_M, \widehat{\tilde\chi}_N)$.

\begin{figure}[H]
\centering
\includegraphics[width=\linewidth]{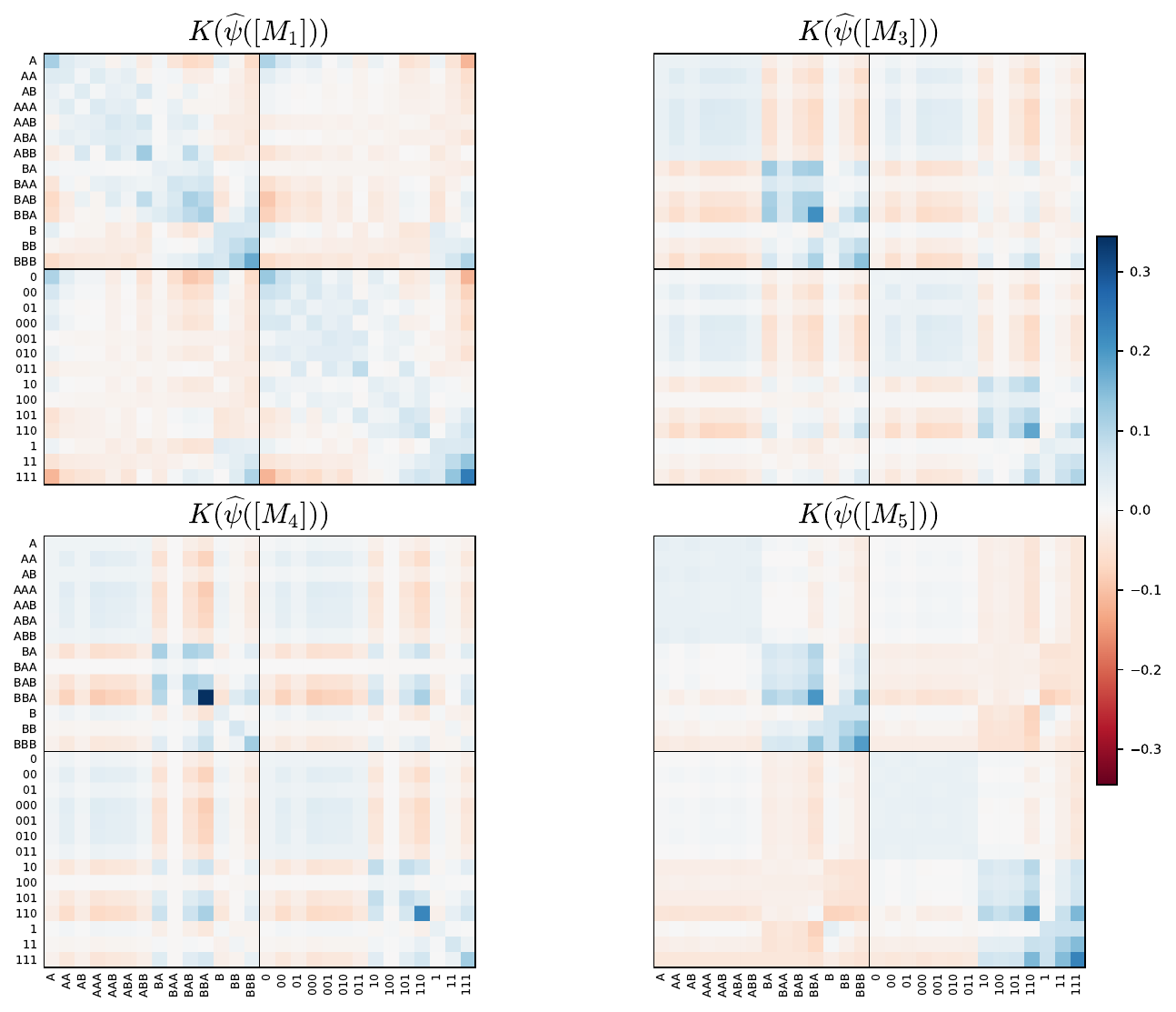}
\caption{\textbf{Input kernels of $M_1, M_3, M_4, M_5$}. For inputs $x, y \in \Sigma^*$, the value at $K(\widehat{\psi}([M_i]))_{x,y}$ is proportional to the cosine-similarity between rows $\widehat{\psi}([M_i])_x$ and $\widehat{\psi}([M_i])_y$, with a strong positive value indicating similar response-vectors. Visual similarity between two kernels indicates greater centred-kernel-alignment between susceptibilities as representations of $I$.}
\label{fig:examples_kernels}
\end{figure}

\begin{figure}[H]
\centering
\includegraphics[width=0.62\linewidth]{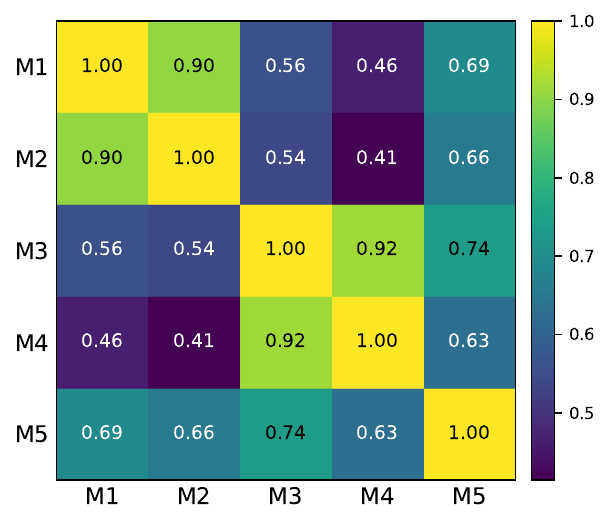}
\caption{\textbf{Pairwise CKA of $\widehat{\psi}$ on the DFAs
$M_1$--$M_5$}.}
\label{fig:examples_cka}
\end{figure}

\subsection{Eigenmatrix parities}\label{app:sd_spectrum}

\Cref{fig:sd_spectrum} shows the symmetry defects of the leading $32$
of the $420$ eigenmatrices of $\Gamma$.  Each eigenmatrix $v_k$ has
two dots: its defect at $(\theta, \id_Q)$ and its defect at
$(\theta, s_1{\leftrightarrow}s_2)$.  A defect of $0$ means the recoding
fixes $v_k$, and a defect of $1$ means it negates $v_k$.  Both dots at $0$
means $v_k$ is fixed by both recodings, as for $v_1$.  Both dots at $1$
means $v_k$ is negated by both, as for $v_7$.  One dot at $0$ and one
at $1$ means $v_k$ is negated by exactly one of the two recodings, as
for $v_2$ and $v_3$.  A dot strictly between $0$ and $1$ means the axis
has no definite parity.  This occurs from the ninth eigenmatrix on.  The
bottom panel gives the fraction of the variance carried by each axis.

\begin{figure}[H]
\centering
\includegraphics[width=\linewidth]{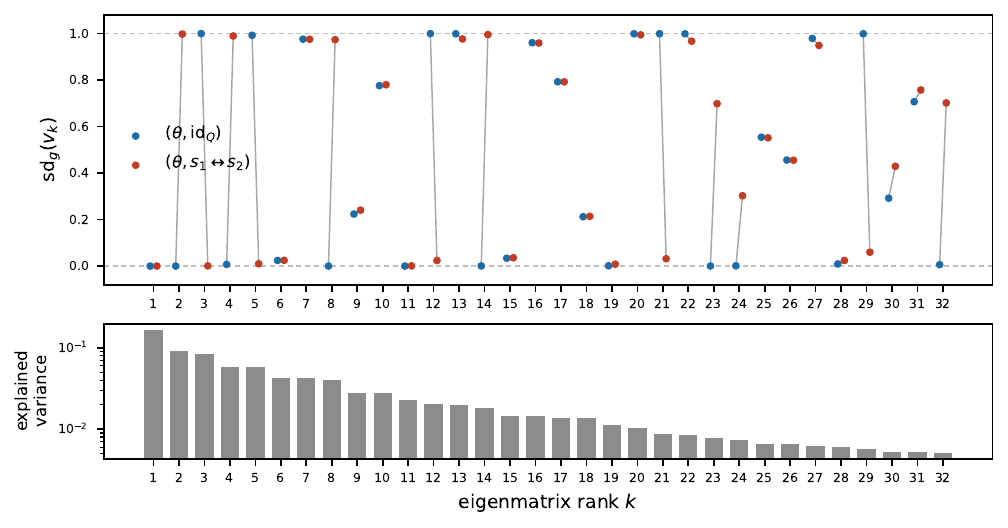}
\caption{\textbf{Parities of the leading eigenmatrices.}  Top: the
symmetry defect \eqref{eq:sym_defect} of the $k$-th eigenmatrix of
$\Gamma$ at both recodings.  A defect of $0$ means the recoding fixes
the eigenmatrix (even), and $1$ that it negates it (odd).  Bottom: the
fraction of the variance carried by each axis (log scale).  The first
eight axes have definite parity.}
\label{fig:sd_spectrum}
\end{figure}

\subsection{Cluster validation}\label{sec:expt_cluster_validation}

Planar embedding algorithms can manufacture artificial structure, so we
test whether the $\PSV_{\min}$ label, binary ($= 0$ versus $> 0$)
or six-class (which of its six values on this solution set), can be
read off the susceptibility data alone.  Each machine is represented as
$\widehat{\tilde\chi}$ or $\widehat{\psi}$, flattened or through
its input kernel $K(\cdot)$, and a linear SVM (class-balanced,
five-fold cross-validation) is scored by class-balanced accuracy on
held-out machines.  \Cref{fig:cluster_svm} shows the result: the
binary label is recovered essentially perfectly in every representation
(on the standardised matrices all $38{,}019$ held-out predictions are
correct, against $0.50$ for label-shuffled controls), and the six-class
label at $0.99$ on the standardised matrices, every error mistaking the
$\PSV_{\min}$ value for an adjacent one.  This gives us confidence in the
clustering by colour of \cref{fig:umap_main}: the labels can be read
off the susceptibility data directly, without the embedding.

\begin{figure}[H]
\centering
\includegraphics[width=\linewidth]{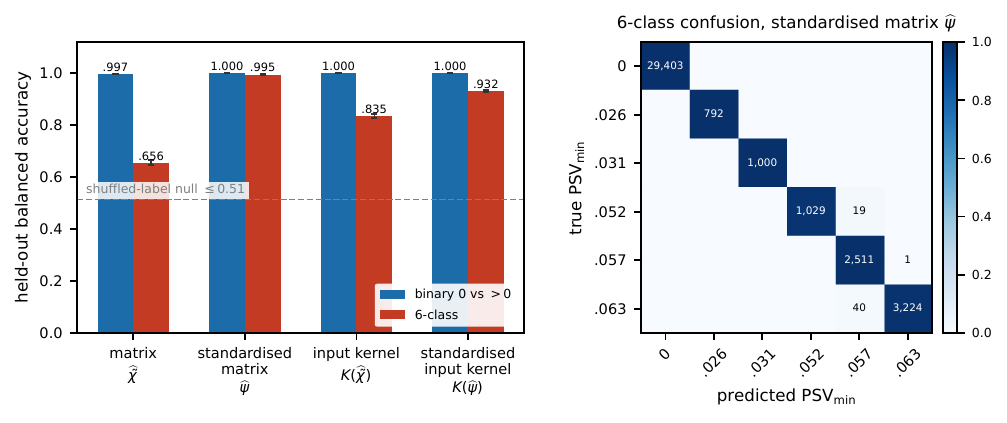}
\caption{\textbf{The $\PSV_{\min}$ classes are linear readouts of the
susceptibility matrix.}
Left: class-balanced accuracy, on held-out machines, of the linear
classifiers of \cref{sec:expt_cluster_validation} recovering the binary and
six-class $\PSV_{\min}$ labels in the four representations.  Error bars are
$95\%$ bootstrap intervals over machines, and the dashed line is the best of
ten label-shuffled controls.  Right: the errors of the six-class model
on the standardised matrices: entry $(i, j)$ is the fraction of the
machines with true label $i$ that are predicted as $j$, so a perfect
model is the identity matrix.}
\label{fig:cluster_svm}
\end{figure}

\subsection{The weights of the binary classifier}\label{app:svm_weights}
\Cref{fig:svm_weights} shows the weights of the binary
($\PSV_{\min} = 0$ versus $> 0$) classifier of
\cref{sec:expt_cluster_validation} on the standardised matrices, refitted
once on the full solution set (the per-fold fits agree with this one to cosine
similarity $\ge 0.98$) and reshaped to the input $\times$ component grid of
$\widehat{\tilde\chi}$.  Measured by squared weight, the classifier
concentrates on the rejecting inputs ($43\%$ of the weight on $21\%$ of the
entries), and above all on the block
$I_{\rej} \times \mathcal C_{\{s_1, s_2\}}$ ($37\%$ of the weight on $14\%$
of the entries), the off-diagonal block $X_{\rej, A}$ of
\cref{thm:rank_bound} at the partition $A = \{\qacc, s_1, s_2\}$
that organises the solution set (\cref{sec:expt_population}).  The
$\{\texttt{A},\texttt{B}\}$-class and $\{\texttt{0},\texttt{1}\}$-class rows show the same pattern: the
solution set is closed under the alphabet involution (\cref{lem:sol_closure}) and the labels are
invariant under it, so the learned readout is approximately equivariant.

\begin{figure}[H]
\centering
\includegraphics[width=0.6\linewidth]{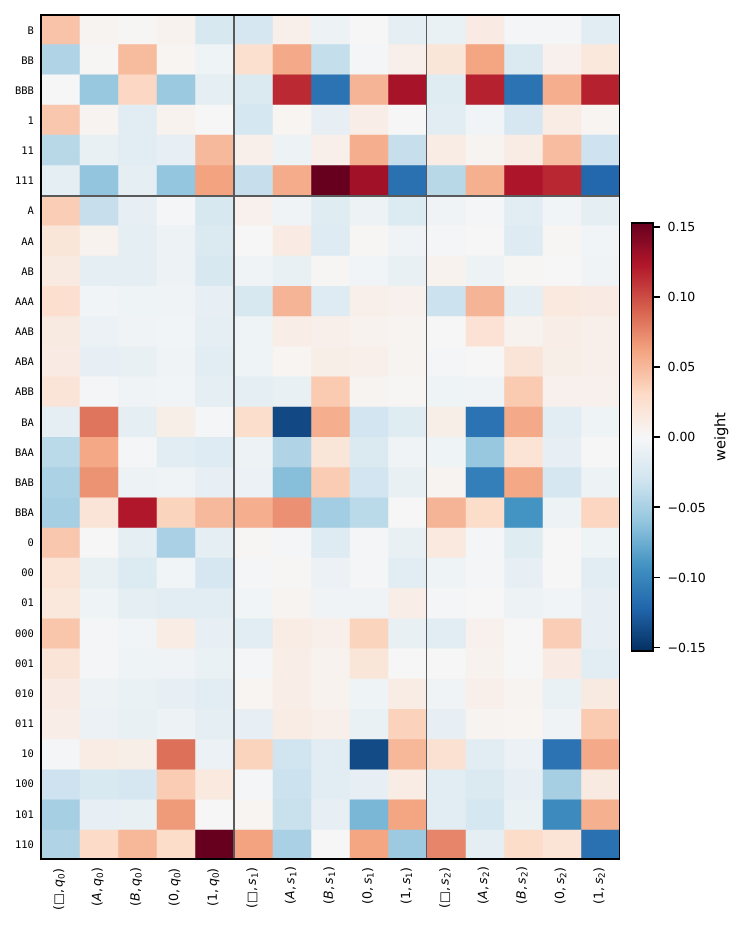}
\caption{Weights of the binary $\PSV_{\min}$ classifier on the standardised
matrices, refitted on the full solution set and reshaped to the input $\times$
component grid (rows grouped rejecting/accepting, columns by state).}
\label{fig:svm_weights}
\end{figure}

\subsection{The temperature sweep}\label{app:beta_sweep}\label{sec:expt_beta_sweep}

We recompute the susceptibilities (renormalised values $\widehat{\tilde \chi}$) of the full solution set at $35$ values
of the inverse temperature $\beta \in [0, 1000]$, one sampling run per
value.  The $35$
checkpoints are embedded jointly by aligned UMAP \cite{mcinnes2018umap}, so that
a machine's position is comparable across $\beta$. \Cref{fig:beta_sweep} shows nine of the $35$ slices, coloured by $\PSV_{\min}$.
Early on, we notice there are many blue points ($\PSV_{\min} > 0$, but still low) mixed in with the cluster of purple points ($\PSV_{\min}= 0$), and across $\beta \in [0, 17]$ these points slowly separate out from the purple cluster. As $\beta$ increases beyond $17$, we see the two blue clusters separate from the rest of the points on the right. Finally, as we sweep $\beta$ between $10$ and $100$, we see the lower half of the purple cluster separate out into a tightly packed disk, while the upper half becomes much more dispersed.

\paragraph{Aligned UMAP.}
The temperature sweeps of \cref{sec:expt_beta_sweep,sec:expt_reading}
are embedded with \texttt{AlignedUMAP} \cite{mcinnes2018umap}: the $35$
slices are optimised jointly, at the parameters above, with a relation
regularisation $\lambda$ penalising the displacement of each machine's
position between consecutive slices (window size $2$). The strength $\lambda = 5 \times 10^{-3}$
was selected by fitting the full sweep at
$\lambda \in \{10^{-3}, 5 \times 10^{-3}, 10^{-2}\}$, and noting that $\lambda = 5 \times 10^{-3}$ was sufficiently weak that the clustering was nearly unaffected in comparison to an unaligned UMAP at $\beta=30$, while still preventing large-scale rotations.

\begin{figure}[H]
\centering
\umappanel{$\beta = 0$}{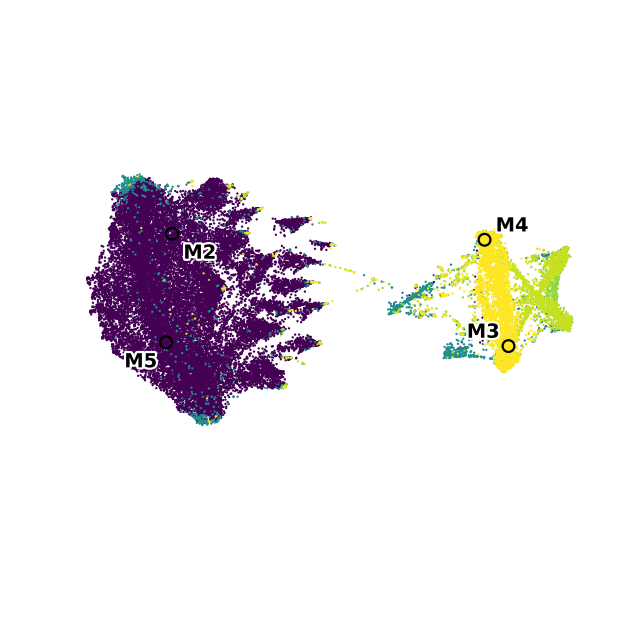}\hfill
\umappanel{$\beta = 1$}{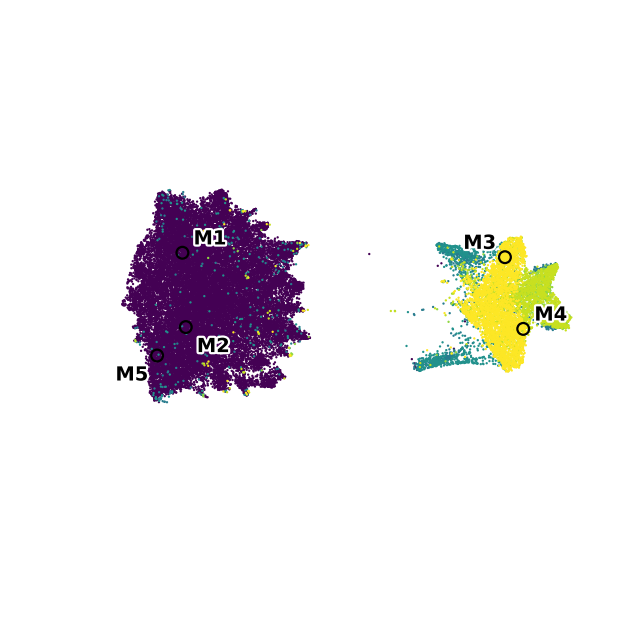}\hfill
\umappanel{$\beta = 3.2$}{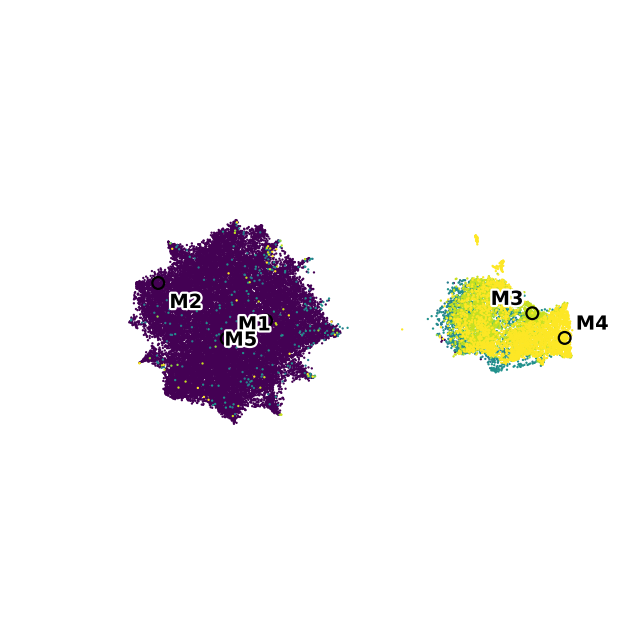}\\[10pt]
\umappanel{$\beta = 5.6$}{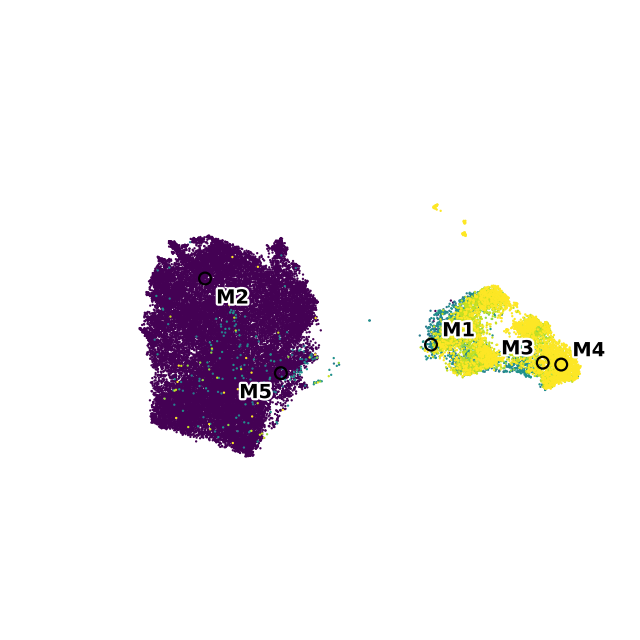}\hfill
\umappanel{$\beta = 10$}{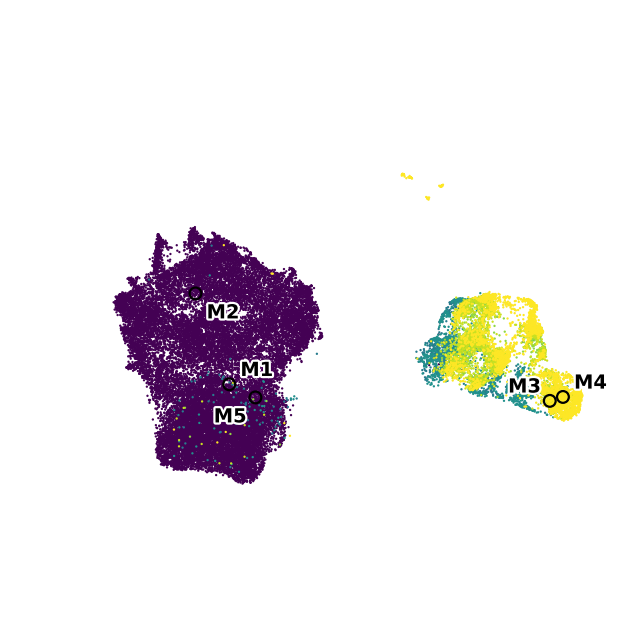}\hfill
\umappanel{$\beta = 17$}{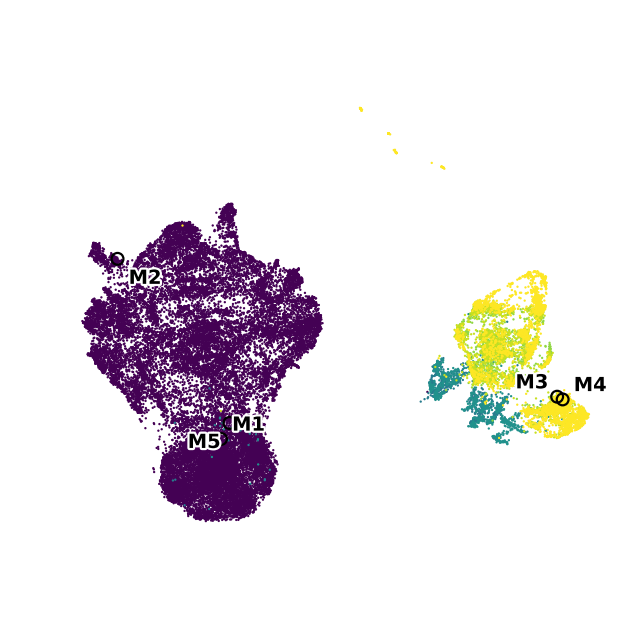}\\[10pt]
\umappanel{$\beta = 30$}{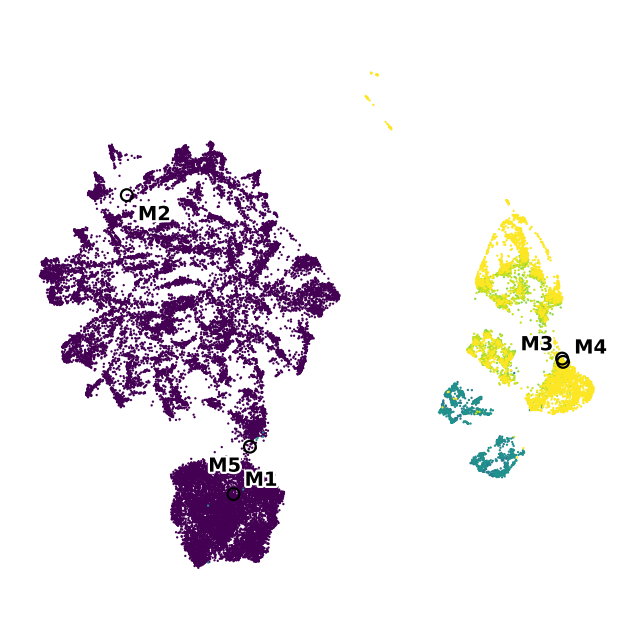}\hfill
\umappanel{$\beta = 100$}{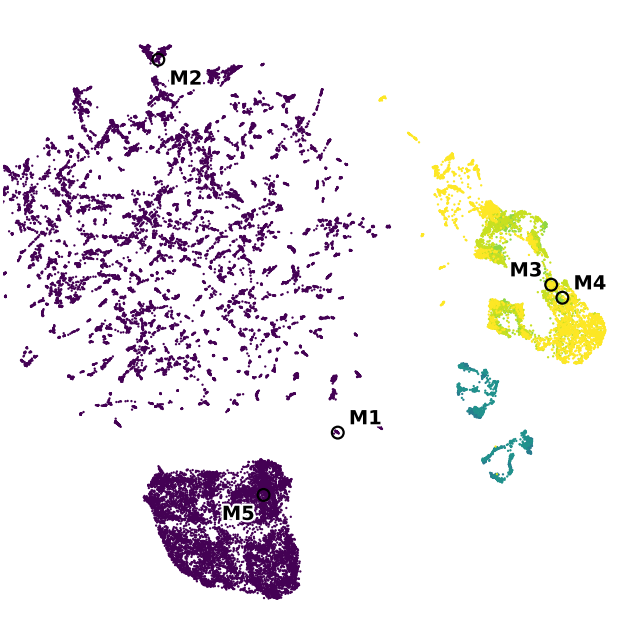}\hfill
\umappanel{$\beta = 1000$}{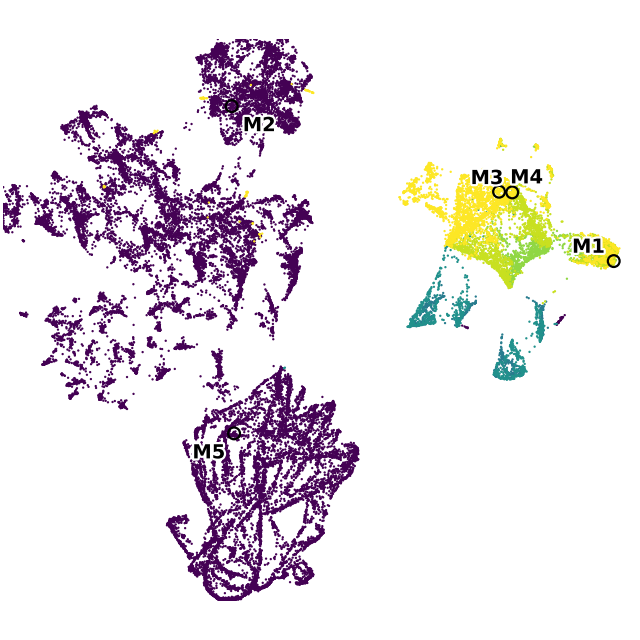}
\caption{\textbf{Aligned UMAP embedding of DFA susceptibilities over an inverse temperature sweep $\beta \in [0, 1000]$ coloured by $\PSV_{\min}$.}
}
\label{fig:beta_sweep}
\end{figure}

\subsection{Halting-time organisation}\label{app:halting_time}

For a classical solution $M$ and input $x$, the \emph{halting time}
$t_M(x)$ is the first step at which the run of $M$ on $x$ reaches the
target state $y(x)$.  The terminal states absorb
(\cref{sec:expt_task}), so the run remains there.  Write
$t_{\texttt{0}} := t_M(\texttt{0})$ and $t_{\texttt{A}} := t_M(\texttt{A})$ for the halting times of the two
single-symbol accepting inputs.  Both lie in $\{1, 2, 3\}$. Since these are immediately
recognisable as accepted strings from the first time step, their halting time is a proxy for how long the machine delays reaching the accept state by traversing intermediate states.
\Cref{fig:read_lat} shows an aligned embedding, from the CKA of
the standardised matrices $\widehat{\psi}$, coloured
by the pair $(t_{\texttt{0}}, t_{\texttt{A}})$.  This organisation is present from the outset
and stable: the fraction of a machine's $20$ nearest neighbours sharing
its $(t_{\texttt{0}}, t_{\texttt{A}})$ class is
$0.89$ at $\beta = 1$, $0.88$ at $\beta = 67$, and $0.90$ at
$\beta = 610$, against a largest-class share of $0.44$.  The class
counts are symmetric in $(t_{\texttt{0}}, t_{\texttt{A}})$ (legend): the alphabet involution
exchanges the $(i, j)$ and $(j, i)$ cells.

\begin{figure}[H]
\centering
\umappanel{$\beta = 1$}{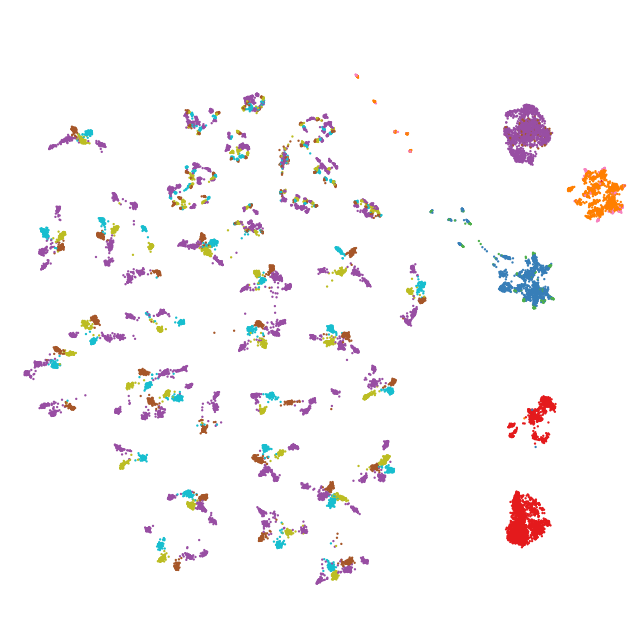}\hfill
\umappanel{$\beta = 67$}{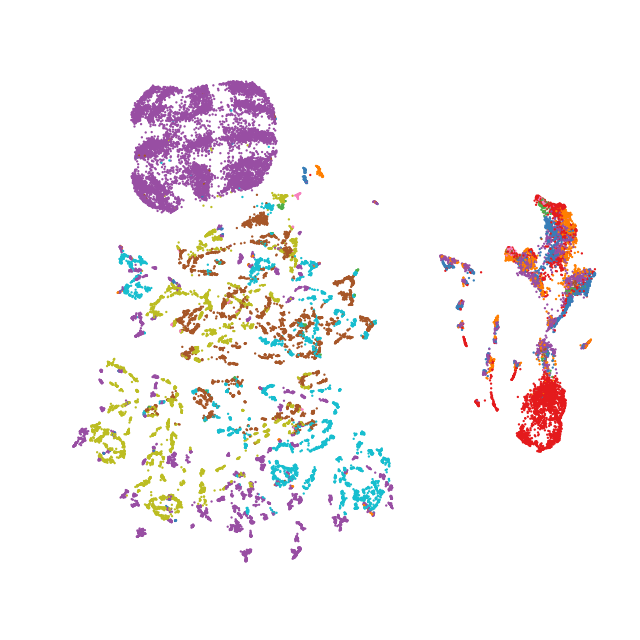}\hfill
\umappanel{$\beta = 610$}{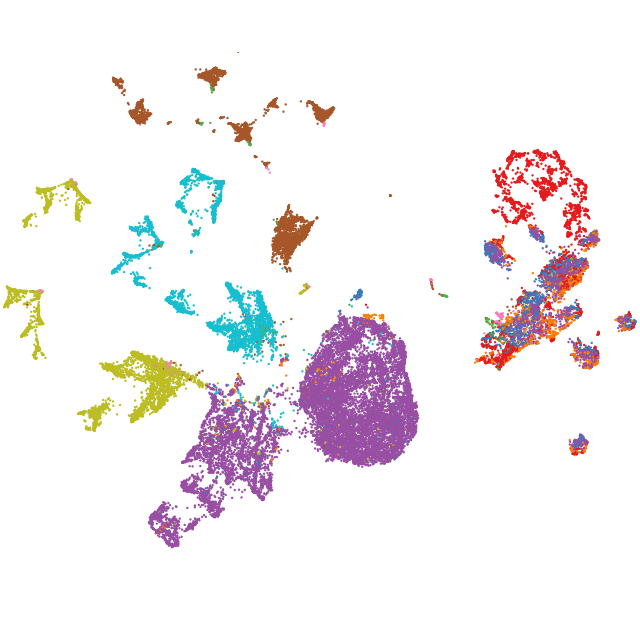}\\[10pt]
{\small\setlength{\tabcolsep}{5pt}\renewcommand{\arraystretch}{1.12}
\begin{tabular}{@{}c ccc@{}}
 & $t_{\texttt{A}} = 1$ & $t_{\texttt{A}} = 2$ & $t_{\texttt{A}} = 3$ \\
$t_{\texttt{0}} = 1$ & \textcolor[HTML]{E41A1C}{\rule{1.5ex}{1.5ex}} $3{,}243$ & \textcolor[HTML]{377EB8}{\rule{1.5ex}{1.5ex}} $1{,}572$ & \textcolor[HTML]{4DAF4A}{\rule{1.5ex}{1.5ex}} $258$ \\
$t_{\texttt{0}} = 2$ & \textcolor[HTML]{FF7F00}{\rule{1.5ex}{1.5ex}} $1{,}572$ & \textcolor[HTML]{984EA3}{\rule{1.5ex}{1.5ex}} $16{,}582$ & \textcolor[HTML]{17BECF}{\rule{1.5ex}{1.5ex}} $4{,}802$ \\
$t_{\texttt{0}} = 3$ & \textcolor[HTML]{F781BF}{\rule{1.5ex}{1.5ex}} $258$ & \textcolor[HTML]{BCBD22}{\rule{1.5ex}{1.5ex}} $4{,}802$ & \textcolor[HTML]{A65628}{\rule{1.5ex}{1.5ex}} $4{,}930$ \\
\end{tabular}}

\caption{\textbf{Halting-time clusters are present in susceptibility space across $\beta$.}  The embedding at three
values of $\beta$, coloured by the halting-time pair $(t_{\texttt{0}}, t_{\texttt{A}})$.  The
legend gives the class colours and machine counts.}
\label{fig:read_lat}
\end{figure}

\section{Minimal theoretical example: What is the simplest thing susceptibilities see that the Hessian does not?}\label{app:absorbing_example}

This appendix works out a minimal example in which susceptibilities see structure that the Hessian does not: the reduction of the smooth cycle map for noisy DFAs, the exact error probabilities and Hessian blindness, the asymptotics of the Gibbs moments underlying the susceptibility values, and an extension with noisy writes illustrating flat directions.

Throughout this appendix we drop the component superscript from $\chi^{C_w}_{\texttt{A}^n}$: because there is only one noisy parameter, the single component $C_w$ coincides with the full parameter space, and we write $\chi_{\texttt{A}^n}$ for the resulting susceptibility.

\subsection{The noisy absorbing DFA}\label{subsection:absorbing_model}

Fix a number of time steps $T \ge 2$, set of states $Q=\{q_0,q_1\}$ with $q_0$ the initial/reject state and $q_1$ the accept state, alphabet $\Sigma = \{\blank, \texttt{A}\}$, and input set
\[
I = \{\texttt{A}^1, \ldots, \texttt{A}^T\}.
\]
The noisy transition function $w_Q : \Sigma \times Q \longrightarrow \Delta Q$ is given in \cref{fig:absorbing_dfa}.
For an input $\texttt{A}^n$ we run the machine for exactly $T$ steps, padding the remaining $T - n$ steps with blanks $\blank$. The start state is $q_0$, and the output is the final state after $T$ steps.

\begin{figure}[H]
\centering
\begin{tikzpicture}[
    >=Stealth,
    node distance=4.5cm,
    every node/.style={font=\small},
    state/.style={circle, draw=Accent!90!black, very thick, minimum size=1.25cm, inner sep=0pt},
    noisy/.style={draw=gray!75, dashed, line width=1.2pt}
  ]
  \node[state] (q0) {$q_0$};
  \node[state, double, right=of q0] (q1) {$q_1$};

  \draw[->, Accent!90!black, line width=1.25pt] ([xshift=-1.4cm]q0.west) -- (q0.west)
      node[midway, above] {start};

  \draw[->, Accent!90!black, line width=1.2pt] (q0) to[bend left=14]
      node[midway, above] {$\texttt{A},\,1-w$} (q1);
  \draw[->, Accent!90!black, line width=1.2pt] (q0) to[bend right=18]
      node[midway, below] {$\blank,\,1$} (q1);
  \draw[->, Accent!90!black, line width=1.2pt] (q1) edge[loop right]
      node {$\texttt{A},\,1\; ;\; \blank,\,1$} (q1);
  \draw[->, noisy] (q0) edge[loop above]
      node[text=gray!80] {$\texttt{A},\,w$} (q0);
\end{tikzpicture}
\caption{The single-parameter noisy absorbing DFA. The only noisy transition is the dashed self-loop on $q_0$: when the machine reads $\texttt{A}$ in state $q_0$, it fails to absorb into $q_1$ with probability $w$. The first blank acts as a repair boundary: it sends either state to $q_1$, after which the run is locked into the correct sink.}
\label{fig:absorbing_dfa}
\end{figure}

In this appendix we use the lookup pseudo-UTM \cref{sec:lookup_utm} to simulate machines. We remark that since its cycle map is the naive probabilistic extension of the eval cycle circuit (\cref{cor:lookup_srp}), it suffices to work only with $\Delta\texttt{cycle}_\mathrm{eval}$ (whose equations are given by \cref{prop:cycle_factor} with $\nu=\nu^{\mathrm{lookup}}$) and drop consideration of the UTM in the following analysis.\footnote{Indeed this was one reason for choosing the lookup UTM in the first place.}

We first simplify the general update equations for $\Delta\texttt{cycle}_\mathrm{eval}$ for noisy DFAs as follows. From \cref{prop:cycle_factor}, $\Delta\texttt{cycle}_\mathrm{eval}: W \times (\Delta \Sigma)^{\mathbb Z,\blank} \times \Delta Q \longrightarrow (\Delta\Sigma)^{\mathbb Z,\blank} \times \Delta Q $ is given by \[\Delta\texttt{cycle}_\mathrm{eval}(w, (\mathbf{y}_i)_{i\in\mathbb Z},\mathbf q) = ((\mathbf{y}'_i)_{i\in\mathbb Z},\mathbf q'),\] where
\begin{align}
  \mathbf q'(q)
    &=\sum_{\sigma^s, q^s} \mathbf{y}_0(\sigma^s)\mathbf{q}(q^s) w_{(\sigma^s,q^s),Q}(q)
      &&(q\in Q), \nonumber\\
  \boldsymbol\sigma'(\sigma)
    &=\sum_{\sigma^w, q^w} \mathbf{y}_0(\sigma^w)\mathbf{q}(q^w) w_{(\sigma^w,q^w),\Sigma}(\sigma)
      &&(\sigma\in\Sigma),\\
  \mathbf d(d)
    &=\sum_{\sigma^d, q^d} \mathbf{y}_0(\sigma^d)\mathbf{q}(q^d) w_{(\sigma^d,q^d),D}(d)
      &&(d\in D), \nonumber\\
  \mathbf{y}'_i(\sigma)
    &=\sum_{d\in D}\mathbf d(d)
      \bigl\{\one[i\neq -d]\,\mathbf{y}_{i+d}(\sigma)
      +\one[i=-d]\,\boldsymbol\sigma'(\sigma)\bigr\}
      &&(i\in\mathbb Z,\ \sigma\in\Sigma). \nonumber
\end{align}

A \emph{noisy DFA} (cf.\ \cref{def:DFA}) is a noisy Turing machine $w = (w_\Sigma, w_Q, w_D) : \Sigma \times Q \longrightarrow \Delta \Sigma \times \Delta Q \times \Delta D$ for which $w_{(\sigma, q), \Sigma}= e_\sigma$ (write what you read) and $w_{(\sigma,q),D} = e_R$ (always move right). So its noisy transition function is the data purely given by $w_Q \colon \Sigma \times Q \longrightarrow \Delta Q$.

\begin{proposition}\label{prop:dfa_matrix}
Let $w$ be a noisy DFA and suppose the tape holds a deterministic string, $\mathbf{y}_k = e_{\sigma_{k+1}}$ for $k \ge 0$, with an arbitrary state distribution $\mathbf q \in \Delta Q$.  Then $\Delta\texttt{cycle}_\mathrm{eval}(w, -)$ shifts the tape, $\mathbf{y}'_i = \mathbf{y}_{i+1}$, and acts linearly on the state,
\[
\mathbf{q}' = T_{\sigma_1}\, \mathbf{q},
\qquad
(T_\sigma)_{q, q^s} := w_{(\sigma,q^s),Q}(q),
\]
with $T_\sigma$ column-stochastic.  Iterating,
\[
(\Delta\texttt{cycle}_\mathrm{eval}(w,-))^n\bigl((\mathbf{y}_i)_{i \in \mathbb{Z}},\, \mathbf{q}\bigr) = \bigl((\mathbf{y}_{i + n})_{i \in \mathbb{Z}},\, T_{\sigma_n} \cdots T_{\sigma_1}\, \mathbf{q}\bigr).
\]
\end{proposition}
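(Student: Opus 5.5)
The proof is a direct substitution of the noisy-DFA constraints into the update equations for $\Delta\texttt{cycle}_\mathrm{eval}$ displayed just above (the $\nu=\nu^{\mathrm{lookup}}$ case of \cref{prop:cycle_factor}), followed by an induction on $n$. I will compute the three resolved mixtures first, then the tape, then iterate.

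\emph{Resolved mixtures.} The head cell is deterministic, $\mathbf y_0 = e_{\sigma_1}$, so every sum over $\sigma^s,\sigma^w,\sigma^d$ collapses to the single term $\sigma_1$. For the state this gives
\[
  \mathbf q'(q) = \sum_{q^s}\mathbf q(q^s)\,w_{(\sigma_1,q^s),Q}(q) = (T_{\sigma_1}\mathbf q)(q).
\]
For the write symbol, $w_{(\sigma_1,q^w),\Sigma} = e_{\sigma_1}$ for every $q^w$, so $\boldsymbol\sigma' = \sum_{q^w}\mathbf q(q^w)\,e_{\sigma_1} = e_{\sigma_1}$, since $\mathbf q$ sums to $1$. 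In the same way $\mathbf d = e_R$. Column-stochasticity of $T_\sigma$ is immediate: column $q^s$ of $T_\sigma$ is the distribution $w_{(\sigma,q^s),Q}\in\Delta Q$.

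\emph{Tape.} With $\mathbf d = e_R$ (shift $+1$), the tape formula reduces to $\mathbf y'_i = \one[i\neq -1]\,\mathbf y_{i+1} + \one[i=-1]\,\boldsymbol\sigma'$. At $i=-1$ the new cell is $\boldsymbol\sigma' = e_{\sigma_1} = \mathbf y_0$, which is again $\mathbf y_{i+1}$. Hence $\mathbf y'_i = \mathbf y_{i+1}$ for all $i$. The overwrite is invisible precisely because the machine writes back the symbol it read.

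\emph{Iteration.} The shifted tape again has a deterministic head cell $\mathbf y'_0 = \mathbf y_1 = e_{\sigma_2}$ and is deterministic on $k\ge 0$, so the hypothesis is preserved with the string advanced by one symbol. The state distribution is arbitrary in the hypothesis, so $T_{\sigma_1}\mathbf q$ is admissible. Induction on $n$ then gives the tape $(\mathbf y_{i+n})_i$ and the state $T_{\sigma_n}\cdots T_{\sigma_1}\mathbf q$.

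Every step is routine. The only point needing attention is the $i=-1$ cell, where the overwrite must be identified with the shift. Beyond that, I must check that the negative-index cells, which need not be deterministic, never enter the head mixtures. They do not, because the head only moves right.
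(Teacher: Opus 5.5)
Your proof is correct and follows the paper's own argument. You substitute the noisy-DFA constraints into the closed-form update equations, read off $\mathbf q' = T_{\sigma_1}\mathbf q$ and the tape shift, and induct because the shifted tape again satisfies the hypothesis; your explicit check of the overwritten cell $i=-1$ (where $\boldsymbol\sigma' = e_{\sigma_1} = \mathbf y_0$) spells out what the paper compresses into the collapsed sum $\sum_{q^w}\mathbf q(q^w)=1$.
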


\begin{proof}
Since $d = 1$ in every summand of the tape update, and the normalised sums
\[
  \sum_{\sigma^d, q^d}\mathbf{y}_0(\sigma^d)\,\mathbf{q}(q^d) = 1,
  \qquad
  \sum_{q^w}\mathbf{q}(q^w) = 1
\]
collapse, the tape update reduces to $\mathbf{y}'_i = \mathbf{y}_{i+1}$: the tape shifts left in head-relative coordinates and remains deterministic.  Using $\mathbf{y}_0 = e_{\sigma_1}$ in the state update gives
\[
\mathbf{q}'(q) = \sum_{q^s \in Q} \mathbf{q}(q^s)\, w_{(\sigma_1,q^s),Q}(q) = (T_{\sigma_1}\, \mathbf{q})(q),
\]
and $T_\sigma$ is column-stochastic because each $w_{(\sigma,q^s),Q}$ is a probability distribution.  The iterated formula follows by induction: the shifted tape again satisfies the hypotheses, with the string $\sigma_2 \sigma_3 \cdots$ under the head.
\end{proof}

In the example of the noisy absorbing DFA, its noisy transition function is given by
\[
w_{(\texttt{A}, q_0),Q} = w \cdot q_0 + (1 - w) \cdot q_1,
\qquad
w_{(\texttt{A}, q_1),Q} = q_1,
\qquad
w_{(\blank, q),Q} = q_1 \text{ for all } q,
\]
with the abuse of notation denoting $w\in[0,1]$ as the probability of staying in $q_0$ when reading $\texttt{A}$ in state $q_0$,
which yields the transition matrices
\[
T_A(w) = \begin{pmatrix} w & 0 \\ 1-w & 1 \end{pmatrix},
\qquad
T_{\blank} = \begin{pmatrix} 0 & 0 \\ 1 & 1 \end{pmatrix}.
\]
Thus on symbol $\texttt{A}$, the machine stays in $q_0$ with probability $w$ and otherwise moves to $q_1$.  On a blank, the machine moves to $q_1$ with probability $1$.  Once the machine reaches $q_1$, it remains there forever. We think of small $w$ as perturbing away from the (classical) DFA given by $w=0$, so the ``true'' machine absorbs into $q_1$ immediately on every step. Hence the correct output is $q_1$ for every input in $I$.

\subsection{Exact error probabilities and Hessian blindness}\label{subsection:absorbing_hessian}

Recall from \cref{sec:bayes_setup} and \cref{def:model_utm} that the statistical model is given by
\begin{align*}
p(y|x,w)=\Delta\step^T(x,w)_y
&= (\pi_{\Delta Q} ((\Delta \texttt{cycle}_\mathrm{eval}(w,-))^T(e_{\tau_x}, e_{q_0})))_y\\
&= \langle e_y, T_{\blank}^{\,T-n}\, T_{\sigma_n}\dots T_{\sigma_1} e_{q_0} \rangle,
\end{align*}
where $\tau_x$ is the tape with deterministic input $x=\sigma_1\sigma_2\dots\sigma_n$ and the last equality is \cref{prop:dfa_matrix}. The true distribution is given by $q(y|x) = \one[y=y(x)] = p(y|x,0)$ with target map $y:I \to Q$ defined by $y(\texttt{A}^n) = q_1$ for all $1\leq n \leq T$. The error probability (\cref{def:error_probability}) is thus
\[
h_n(w) := h_{\texttt{A}^n}(w) = p(q_0|\texttt{A}^n, w),
\]
the probability that the machine is still in the wrong state $q_0$ after running on $\texttt{A}^n$ for $T$ steps given the possibility of errors $(\texttt{A},q_0) \to q_0$ occurring with probability $w$ during execution.

\begin{proposition}\label{prop:absorbing_error}
For the one-parameter absorbing DFA,
\[
h_n(w) = 0 \quad (1 \le n < T),
\qquad
h_T(w) = w^T.
\]
\end{proposition}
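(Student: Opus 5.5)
The plan is to compute $h_n(w) = \langle e_{q_0}, T_{\blank}^{\,T-n}\, T_A(w)^n e_{q_0}\rangle$ directly from the matrix product formula obtained from \cref{prop:dfa_matrix}, as displayed just above the statement. First I will record the two structural facts about the transition matrices. The row $e_{q_0}^\top T_{\blank}$ vanishes, since $T_{\blank}$ sends every state to $q_1$; equivalently $T_{\blank}\mathbf q = e_{q_1}$ for every $\mathbf q \in \Delta Q$. Second, $T_A(w)$ is lower triangular with $(q_0,q_0)$ entry $w$, and $e_{q_1}$ is a fixed vector of it.

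Next I will compute $T_A(w)^n e_{q_0}$ by induction on $n$. The claim is that $T_A(w)^n e_{q_0} = w^n e_{q_0} + (1-w^n) e_{q_1}$. The inductive step applies $T_A(w)$ and uses $T_A(w) e_{q_0} = w e_{q_0} + (1-w)e_{q_1}$ together with $T_A(w) e_{q_1} = e_{q_1}$, so the $q_0$-mass is multiplied by $w$ and the rest stays in $q_1$.

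The two cases then split on whether any blank step occurs. For $1 \le n < T$ we have $T - n \ge 1$, so at least one factor $T_{\blank}$ is applied. Since $T_{\blank}$ maps every probability vector to $e_{q_1}$, and $e_{q_1}$ is fixed by $T_{\blank}$, the final state is $e_{q_1}$ and $h_n(w) = 0$. For $n = T$ there are no blank factors, so $h_T(w)$ is the $q_0$-coordinate of $T_A(w)^T e_{q_0}$, which equals $w^T$.

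I expect no real obstacle here. The only point needing care is confirming the bookkeeping in \cref{prop:dfa_matrix} for the padded run: the initial tape $\tau_x$ has $\texttt{A}$ at positions $0,\dots,n-1$ and blanks beyond, so the symbols read over $T$ steps are $\texttt{A}^n$ followed by $\blank^{T-n}$. This gives the product in the order $T_{\blank}^{\,T-n}T_A(w)^n$, with the head reading $\tau_x(k)$ at step $k+1$. I will check this indexing against the hypothesis $\mathbf y_k = e_{\sigma_{k+1}}$ of \cref{prop:dfa_matrix}.
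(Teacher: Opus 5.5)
Your proof is correct. It rests on the same two facts as the paper's proof:
\begin{itemize}
\item the first blank sends every state distribution to $e_{q_1}$;
\item each $\texttt{A}$-step multiplies the $q_0$-mass by $w$.
\end{itemize}

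The difference is one of formalism. The paper explicitly declines to compute with the matrices of \cref{prop:dfa_matrix} and argues on trajectories instead. For $n<T$ it notes that the first blank deterministically sends the run to $q_1$. For $n=T$ it notes that the run ends in $q_0$ only if it takes the noisy self-loop at all $T$ steps, which has probability $w^T$ by conditional independence along the unique $q_0$-path.

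You evaluate $e_{q_0}^\top T_{\blank}^{\,T-n}T_A(w)^n e_{q_0}$ directly, proving $T_A(w)^n e_{q_0} = w^n e_{q_0} + (1-w^n)e_{q_1}$ by induction. The paper's version is shorter and more intuitive. Yours is slightly more self-contained: it derives the ``independence along the path'' step from the closed form of the naive probabilistic extension in \cref{prop:dfa_matrix}, rather than from the probabilistic reading of that extension.

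Your planned indexing check is exactly what is needed to justify the factor order $T_{\blank}^{\,T-n}T_A(w)^n$. That check is $\mathbf y_k = e_{\sigma_{k+1}}$, with $\sigma_{k+1}=\blank$ for $k\ge n$.
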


\begin{proof}
Rather than compute with the transition matrices of \cref{prop:dfa_matrix}, we argue directly due to the simplicity of the setup.

If $n < T$, then after the $n$ many $\texttt{A}$-steps there is at least one blank step. Regardless of the state at that moment, the first blank sends the machine to $q_1$ deterministically, so the final state cannot be $q_0$. Hence $h_n(w) = 0$.

If $n = T$, there are no blank steps. The machine ends in $q_0$ exactly when it takes the noisy self-loop on every one of the $T$ consecutive $\texttt{A}$-steps. Those events are conditionally independent along the unique $q_0$-path, so the error probability is $w^T$.
\end{proof}

Let $q_n := q(\texttt{A}^n)$ with $\sum_{n=1}^T q_n = 1$. Specialising the loss to the one-parameter slice,
\[
H(w) := \sum_{n=1}^T q_n h_n(w)^2,
\]
\cref{prop:absorbing_error} yields the exact formula
\begin{equation}\label{eq:H_absorbing}
H(w) = q_T\, w^{2T}.
\end{equation}
Thus only the terminal input $\texttt{A}^T$ contributes any loss at all in this reduced model.

Differentiating \eqref{eq:H_absorbing} yields
\[
H'(w) = 2T q_T w^{2T-1},
\qquad
H''(w) = 2T(2T-1) q_T w^{2T-2}.
\]
For $T = 1$ the model is regular, with $H''(0) = 2 q_1$. For $T \ge 2$, however,
\[
H''(0) = 0.
\]
In the singular regime $T \ge 2$ and $q_T > 0$, the zero set is just $\{0\}$, yet the Hessian still vanishes. So the Hessian completely misses an error direction.

\begin{remark}
The parameter $w$ measures a \emph{persistent failure to absorb} on symbol $\texttt{A}$. A single error is not enough to change the output, and this is all the Hessian can see. There needs to be $T$ errors in this transition to result in an incorrect final state of the machine. The error probability therefore starts at order $w^T$, and the squared loss starts at order $w^{2T}$. For $T \ge 2$ there is no quadratic term for the Hessian to see.
\end{remark}

\subsection{The Gibbs distribution and monomial asymptotics}\label{subsection:absorbing_gibbs}

For tractability we take the prior $\varphi(w)$ on the one-parameter slice $w \in [0,1]$ to be uniform. The choice is inessential: for any prior positive and continuous at $w = 0$, the factor $\varphi(0)$ cancels between the partition function and every moment of the Gibbs distribution to leading order, so the coefficients below, and hence the signs and relative magnitudes of $\chi_{\texttt{A}^n}$, are unchanged.

The Gibbs distribution is
\begin{equation}\label{eq:gibbs_absorbing}
p^{\beta}(w) = \frac{1}{Z_{\beta}}\, \exp\bigl(-\beta\, q_T\, w^{2T}\bigr)\, \mathbf{1}_{[0,1]}(w)\, dw,
\qquad
Z_{\beta} = \int_0^1 e^{-\beta q_T w^{2T}}\, dw.
\end{equation}
The asymptotic moments of the squared-error loss $H$ under this distribution are controlled by the lower incomplete gamma function
\[
\gamma(s, x) := \int_0^x u^{s-1} e^{-u}\, du.
\]

\begin{lemma}\label{lemma:absorbing_gamma}
For the Gibbs distribution~\eqref{eq:gibbs_absorbing} with uniform prior,
\begin{equation}\label{eq:EH_absorbing}
\langle H \rangle = \frac{1}{2T\,\beta} + o(\beta^{-1}),
\qquad
\Var(H) = \frac{1}{2T \beta^2} + o(\beta^{-2}).
\end{equation}
\end{lemma}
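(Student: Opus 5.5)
The plan is a direct computation with the substitution $u = \beta q_T w^{2T}$, which turns every moment integral into a lower incomplete gamma function. Write $k := 2T$ and $a := \beta q_T$. Then $w = (u/a)^{1/k}$, $dw = \frac{1}{k} a^{-1/k} u^{1/k - 1}\, du$, and the upper limit $w=1$ becomes $u = a$. For $m \ge 0$ we have $H^m = q_T^m w^{km} = q_T^m (u/a)^m$, so
\[
  \int_0^1 H^m e^{-\beta H}\, dw = \frac{q_T^m}{k\, a^{m + 1/k}}\, \gamma\bigl(m + \tfrac1k,\ a\bigr).
\]
Dividing by $Z_\beta$, which is the case $m=0$, gives the exact identity
\[
  \langle H^m \rangle = \frac{q_T^m}{a^m}\,\frac{\gamma(m + 1/k, a)}{\gamma(1/k, a)} = \beta^{-m}\,\frac{\gamma(m + 1/k, \beta q_T)}{\gamma(1/k, \beta q_T)}.
\]

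As $\beta \to \infty$ we have $a \to \infty$, and $\gamma(s, a) = \Gamma(s) - \Gamma(s, a)$ with an exponentially small upper tail $\Gamma(s,a) = O(a^{s-1}e^{-a})$. The ratios therefore converge to $\Gamma(m+1/k)/\Gamma(1/k)$ up to exponentially small errors, which are certainly $o(1)$.

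Using $\Gamma(s+1) = s\Gamma(s)$, the first two moments are
\[
  \Gamma(1+1/k)/\Gamma(1/k) = 1/k, \qquad \Gamma(2+1/k)/\Gamma(1/k) = (1+1/k)(1/k).
\]
This gives $\langle H\rangle = \frac{1}{k\beta} + o(\beta^{-1}) = \frac{1}{2T\beta} + o(\beta^{-1})$, and
\[
  \langle H^2\rangle = \beta^{-2}\Bigl(\tfrac1k + \tfrac1{k^2}\Bigr) + o(\beta^{-2}).
\]
Hence $\Var(H) = \langle H^2\rangle - \langle H\rangle^2 = \beta^{-2}/k + o(\beta^{-2}) = \frac{1}{2T\beta^2} + o(\beta^{-2})$. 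Note that the $o(\beta^{-1})$ error in $\langle H\rangle$ is in fact exponentially small relative to $\beta^{-1}$, so squaring it contributes only $o(\beta^{-2})$.

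The only point needing care is the error bookkeeping in the variance. Its leading term $1/k^2$ cancels, so one must be sure the corrections to each moment are $o(\beta^{-2})$ after multiplication, not merely $o(\beta^{-1})$. Writing the moments exactly via incomplete gamma ratios, as above, handles this cleanly, because all corrections are $O(\beta^{c} e^{-\beta q_T})$. Everything else is the routine change of variables.
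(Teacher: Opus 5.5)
Your proposal is correct and follows essentially the same route as the paper: the substitution $u = \beta q_T w^{2T}$ writes each moment $\langle H^m\rangle$ exactly as $\beta^{-m}\gamma(m+1/(2T),\beta q_T)/\gamma(1/(2T),\beta q_T)$. The paper then uses the exponentially small tail $\gamma(s,x)=\Gamma(s)+O(x^{s-1}e^{-x})$ and the recurrence $\Gamma(s+1)=s\Gamma(s)$ to read off the first two moments and subtract, exactly as you do. Your extra care over the variance is harmless but stronger than needed: an $o(\beta^{-1})$ error in $\langle H\rangle$ already contributes only $o(\beta^{-2})$ to $\langle H\rangle^2$, through both the cross term and the square.
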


\begin{proof}
Write $p := 2T$ for brevity. Recall that $\gamma(s, x) = \Gamma(s) + O(x^{s-1} e^{-x})$ as $x \to \infty$, where $\Gamma(s) := \int_0^\infty u^{s-1} e^{-u}\, du$: the complementary upper incomplete gamma satisfies $\Gamma(s, x) = x^{s-1} e^{-x}(1 + O(x^{-1}))$, so in Laplace integrals over the finite interval $[0, 1]$, replacing the upper limit by $\infty$ changes the leading term only by an exponentially small tail,
\begin{equation}\label{eq:lowertail_absorbing}
\gamma(s, x) = \Gamma(s) + O\bigl(x^{s-1} e^{-x}\bigr) \qquad (x \to \infty).
\end{equation}

For the partition function, substitute $u = \beta q_T w^p$, so that $w = (u/(\beta q_T))^{1/p}$ and $dw = \tfrac{1}{p}\, (\beta q_T)^{-1/p}\, u^{1/p - 1}\, du$, giving
\[
Z_{\beta} = \int_0^1 e^{-\beta q_T w^p}\, dw = \frac{1}{p}\, (\beta q_T)^{-1/p}\, \gamma\!\left(\tfrac{1}{p},\, \beta q_T\right).
\]
The same substitution yields, for each integer $k \ge 0$,
\[
\int_0^1 w^{kp} e^{-\beta q_T w^p}\, dw = \frac{1}{p}\, (\beta q_T)^{-k - 1/p}\, \gamma\!\left(k + \tfrac{1}{p},\, \beta q_T\right),
\]
and hence
\[
\langle w^{kp} \rangle = (\beta q_T)^{-k}\, \frac{\gamma\!\left(k + \tfrac{1}{p},\, \beta q_T\right)}{\gamma\!\left(\tfrac{1}{p},\, \beta q_T\right)}.
\]
Multiplying by suitable powers of $q_T$ gives the exact moments of $H = q_T w^p$:
\[
\langle H \rangle = \frac{1}{\beta}\, \frac{\gamma\!\left(1 + \tfrac{1}{p},\, \beta q_T\right)}{\gamma\!\left(\tfrac{1}{p},\, \beta q_T\right)},
\qquad
\langle H^2 \rangle = \frac{1}{\beta^2}\, \frac{\gamma\!\left(2 + \tfrac{1}{p},\, \beta q_T\right)}{\gamma\!\left(\tfrac{1}{p},\, \beta q_T\right)}.
\]
Applying~\eqref{eq:lowertail_absorbing} and the gamma recurrence $\Gamma(s+1) = s\, \Gamma(s)$, and using
\[
\frac{\Gamma(1 + 1/p)}{\Gamma(1/p)} = \frac{1}{p},
\qquad
\frac{\Gamma(2 + 1/p)}{\Gamma(1/p)} = \frac{1 + 1/p}{p},
\]
we obtain
\[
\langle H \rangle = \frac{1}{p\,\beta} + o(\beta^{-1}),
\qquad
\langle H^2 \rangle = \frac{1 + 1/p}{p \beta^2} + o(\beta^{-2}).
\]
Subtracting the square of the mean yields
\[
\Var(H) = \frac{1 + 1/p}{p \beta^2} - \frac{1}{p^2 \beta^2} + o(\beta^{-2}) = \frac{1}{p \beta^2} + o(\beta^{-2}).
\]
Setting $p = 2T$ gives~\eqref{eq:EH_absorbing}.
\end{proof}

The proof also gives the local learning coefficient of this one-parameter model,
\[
\lambda = \frac{1}{2T},
\]
since $Z_{\beta} \asymp \beta^{-1/(2T)}$, though this can also be read off $H$ directly.

\begin{remark}
This value of $\lambda$ agrees with, and in fact saturates, the general bound
\[
\lambda([M], q) \le \frac{d}{2(C + 1)}
\]
proved in \cite{murfet2025pas} for a Turing machine $M$ that corrects every error syndrome of weight $\le C$. In the one-parameter slice we have $d = 1$, and the DFA produces the correct output on every input in $I$ under any error syndrome of weight strictly less than $T$: only when all $T$ consecutive $A$-reads of input $\texttt{A}^T$ fail does the fault reach the output. So $C = T - 1$, and the bound reduces to $\lambda \le 1/(2T)$, which is attained with equality here.
\end{remark}

\subsection{Susceptibilities}\label{subsection:absorbing_chi}

For input $\texttt{A}^n$ define the centred per-input squared loss
\[
\Delta_n(w) := h_n(w)^2 - H(w).
\]
Using \cref{prop:absorbing_error} and \eqref{eq:H_absorbing},
\begin{equation}\label{eq:Delta_absorbing}
\Delta_n(w) =
\begin{cases}
-H(w), & 1 \le n < T,\\[1mm]
\dfrac{1 - q_T}{q_T}\, H(w), & n = T.
\end{cases}
\end{equation}
The per-input susceptibility is $\chi_{\texttt{A}^n} := -\Cov[H, \Delta_n]$, with the covariance taken under \eqref{eq:gibbs_absorbing}. (This matches \cref{def:susceptibility}, which uses the per-input squared error.  Were the negative log-likelihood used instead, \cref{rem:mu_shift} gives $\ell_x - \kappa_\mu(0) = \tfrac12 \kappa''_\mu(0)\, h_x^2 + O(h_x^3)$ near $[M]$, so the two observables agree to leading order up to the input-independent constant $\tfrac12 \kappa''_\mu(0) > 0$, and the signs and relative magnitudes below transfer to the negative log-likelihood susceptibilities unchanged.)  Since $\Delta_n$ is a scalar multiple of $H$, \eqref{eq:EH_absorbing} immediately gives the asymptotics.

\begin{proposition}\label{prop:chi_absorbing}
For the one-parameter noisy absorbing DFA,
\begin{equation}\label{eq:chi_absorbing}
\chi_{\texttt{A}^n} =
\begin{cases}
\dfrac{1}{2T \beta^2} + o(\beta^{-2}), & 1 \le n < T, \\[3mm]
-\dfrac{1 - q_T}{2 T q_T\, \beta^2} + o(\beta^{-2}), & n = T.
\end{cases}
\end{equation}
\end{proposition}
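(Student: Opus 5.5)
The proof is a direct substitution: each $\Delta_n$ is a constant multiple of $H$, so every susceptibility reduces to a multiple of $\Var(H)$, and \cref{lemma:absorbing_gamma} supplies its asymptotics. Since the single component $C_w$ is the whole parameter space, the delta factor in $\phi_C$ is trivial. So $\phi_{C_w} = H - H(0) = H$, using $H(0)=0$ from \eqref{eq:H_absorbing}. By \cref{rem:fdt} the susceptibility is then $\chi_{\texttt{A}^n} = -\Cov_{p^\beta}[H, \Delta_n]$ under the Gibbs distribution \eqref{eq:gibbs_absorbing}.

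I will split into the two cases of \eqref{eq:Delta_absorbing}.
\begin{itemize}
\item For $1 \le n < T$, \cref{prop:absorbing_error} gives $h_n \equiv 0$, so $\Delta_n = -H$. Hence $\chi_{\texttt{A}^n} = \Cov[H,H] = \Var(H)$.
\item For $n = T$, we have $h_T^2 = w^{2T} = H/q_T$, so $\Delta_T = \bigl(\tfrac{1}{q_T} - 1\bigr)H = \tfrac{1-q_T}{q_T}H$. Bilinearity of covariance then gives $\chi_{\texttt{A}^T} = -\tfrac{1-q_T}{q_T}\Var(H)$.
\end{itemize}

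Finally, I substitute $\Var(H) = \tfrac{1}{2T\beta^2} + o(\beta^{-2})$ from \cref{lemma:absorbing_gamma} into both cases. The constant factor $\tfrac{1-q_T}{q_T}$ multiplies the error term, which therefore stays $o(\beta^{-2})$. This yields \eqref{eq:chi_absorbing}.

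There is no real obstacle here; all the analytic work (the incomplete-gamma asymptotics) is already done in \cref{lemma:absorbing_gamma}. The only point needing care is the normalisation. Definition~\ref{def:susceptibility} carries a $1/\beta$ prefactor on the derivative, but the covariance form \eqref{eq:fdt} that we use has no extra $\beta$ factor. The stated $\beta^{-2}$ scaling therefore matches $\Var(H)$ directly.
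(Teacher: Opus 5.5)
Your proposal is correct and follows the paper's own proof: each $\Delta_n$ is a constant multiple of $H$ by \eqref{eq:Delta_absorbing}, so $\chi_{\texttt{A}^n} = -\Cov[H,\Delta_n]$ equals $\Var(H)$ for $n<T$ and $-\tfrac{1-q_T}{q_T}\Var(H)$ for $n=T$. Substituting the asymptotics of $\Var(H)$ from \cref{lemma:absorbing_gamma} then gives \eqref{eq:chi_absorbing}, and your checks on the trivial delta factor and the $1/\beta$ normalisation are consistent with \cref{rem:fdt}.
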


\begin{proof}
If $n < T$, then $\Delta_n = -H$, so by \cref{lemma:absorbing_gamma}
\[
\chi_{\texttt{A}^n} = -\Cov[H, -H] = \Var(H) = \frac{1}{2T \beta^2} + o(\beta^{-2}).
\]
If $n = T$, then $\Delta_T = \tfrac{1 - q_T}{q_T}\, H$, and the same lemma gives
\[
\chi_{\texttt{A}^T} = -\tfrac{1 - q_T}{q_T}\, \Var(H) = -\frac{1 - q_T}{2 T q_T\, \beta^2} + o(\beta^{-2}).
\]
\end{proof}

\subsection{Computational interpretation}\label{subsection:absorbing_interpretation}

The machine has a single failure mode: it can fail to leave $q_0$ while reading $\texttt{A}$'s, and $w$ is the probability of this failure on a single $\texttt{A}$-step.

For an input $\texttt{A}^n$ padded to length $T$, the run splits into two phases:
\[
\underbrace{A \cdots A}_{n\text{ steps}}\; \underbrace{\blank \cdots \blank}_{T - n\text{ steps}}.
\]
During the $\texttt{A}$-processing phase, a faulty trajectory can remain in $q_0$. Once the first blank is read, both states map deterministically to $q_1$. So the first blank is a \emph{repair boundary}: after that point, the faulty and correct trajectories merge and the future evolution is identical. This is why \cref{prop:absorbing_error} has the dichotomy $h_n(w) = 0$ for $n < T$ and $h_T(w) = w^T$: if $n < T$, the run crosses the repair boundary before the output is read.  If $n = T$, there is no blank-reset phase before halting, so the failure remains visible.

The sign of $\chi_{\texttt{A}^n}$ records which side of the repair boundary the output is read on. For $n < T$ the computation corrects every error pattern, so the likelihood of $\texttt{A}^n$ carries no information about $w$, and only $\texttt{A}^T$ constrains the Gibbs distribution in the $w$-direction. Upweighting $\texttt{A}^T$ increases the coefficient of the only error term $w^{2T}$ in the potential, so the Gibbs distribution narrows in the $w$-direction and the expected loss decreases: $\chi_{\texttt{A}^T} < 0$. Upweighting a shorter input downweights $\texttt{A}^T$, so the Gibbs distribution broadens and $\chi_{\texttt{A}^n} > 0$. In summary:
\begin{center}
\begin{tabular}{@{}lll@{}}
\toprule
Input & Sign & Interpretation \\
\midrule
$\texttt{A}^T$ & $\chi_{\texttt{A}^T} < 0$ & exposes the fault, and the Gibbs distribution concentrates \\
$\texttt{A}^n$, $n < T$ & $\chi_{\texttt{A}^n} > 0$ & corrects the fault, and the Gibbs distribution broadens \\
\bottomrule
\end{tabular}
\end{center}

As for the magnitudes in~\eqref{eq:chi_absorbing}: since each $\Delta_n$ is a scalar multiple of $H$ \eqref{eq:Delta_absorbing}, each $\chi_{\texttt{A}^n}$ is the corresponding multiple of $-\Var(H)$, whose leading order is the common factor $1/(2T \beta^2)$ (\cref{lemma:absorbing_gamma}). The common factor decreases when $\beta$ grows (the Gibbs distribution concentrates) and when $T$ grows (a longer uninterrupted run of noisy $\texttt{A}$-steps is needed for the fault to reach the output). Every input with $n < T$ has the same limit $2T \beta^2 \chi_{\texttt{A}^n} \to 1$ as $\beta \to \infty$: once a blank appears, the fault is erased, and the number of remaining blanks is irrelevant. For $\texttt{A}^T$, $2T \beta^2 \chi_{\texttt{A}^T} \to -(1 - q_T)/q_T$, which is large in magnitude when $q_T$ is small.

\subsection{Extension: noisy writes and flat directions}\label{subsection:absorbing_writes}

We extend the example to a model that is no longer strictly a noisy DFA, keeping the machine as before but allowing the write distribution $w_\Sigma : \Sigma \times Q \to \simplex\Sigma$ to be noisy as well.  Rename the transition parameter of the previous subsections from $w$ to $a$, freeing $w$ for the full parameter $w = (a, b)$.  The classical machine writes what it reads, $w_\Sigma(\sigma, q) = e_\sigma$, and we parameterise deviations by
\[
b_{(\sigma, q)} := w_{(\sigma, q),\Sigma}(\bar\sigma) \;\in\; [0, 1] \qquad \text{for } (\sigma, q) \in \Sigma \times Q,
\]
where $\bar\sigma$ is the other symbol: $b_{(\sigma,q)}$ is the probability of writing the wrong symbol, the classical code has all four $b_{(\sigma,q)} = 0$, and the parameter space has dimension $d = 5$.

Since the head moves right at every step, a noisy write lands on the square behind the head and is never re-read, while the tape ahead of the head remains a deterministic shift.  The state evolution therefore depends only on the input string and on $a$, so
\[
h_n(a, b) = h_n(a),
\qquad
H(w) = q_T\, a^{2T},
\]
unchanged from \cref{prop:absorbing_error} and \eqref{eq:H_absorbing}: the four write parameters parametrise a four-dimensional family of distinct noisy machines computing the same input--output map.

The write components accordingly have zero susceptibility.  On the slice through $[M]$ along a write component $C_b$ only $b_{(\sigma,q)}$ varies and $a = 0$, so $H \equiv 0$ there, $\phi_{C_b} \equiv 0$, and $\chi^{C_b}_x = 0$ for every input $x \in I$.  The susceptibilities along $a$ and the local learning coefficient $\lambda = 1/(2T)$ are unchanged, the integration over $b \in [0,1]^4$ contributing only a constant factor to the partition function.  Geometrically, the loss factors through the projection $(a, b) \mapsto a$, so $W_0 = \{a = 0\} \times [0,1]^4$ is a four-dimensional flat fibre over the singular point of the one-parameter problem, and the $b$-parameters are free directions tangent to $W_0$ at $[M]$.  The example thus separates two ways a parameter direction can fail to be regular: along a curved direction ($a$, where the Hessian vanishes but the loss is nonzero at higher order) the per-input susceptibilities are nonzero with informative signs, while along a free direction (the writes) they vanish for every input.

\end{document}